\theoremstyle{plain}
\newtheorem{theorem}{Theorem}[section]
\newtheorem{lemma}[theorem]{Lemma}
\def\beq#1\eeq{\begin{equation}#1\end{equation}}
\def\baa#1\eaa{\begin{eqnarray}#1\end{eqnarray}}
\def\bal#1\eal{\begin{align}#1\end{align}}
\DeclareMathOperator*{\argmin}{arg\,min}
\DeclareMathOperator*{\argmax}{arg\,max}
\def\##1\#{\begin{align}#1\end{align}}
\def\$#1\${\begin{align*}#1\end{align*}}
\newcommand{\RNum}[1]{\uppercase\expandafter{\romannumeral #1\relax}}
\newtheorem{thm}{Theorem}
\newtheorem{corollary}{Corollary}
\newtheorem{assump}{Assumption}
\newcommand{\vast}{\bBigg@{3.5}}
\newcommand{\Vast}{\bBigg@{5}}
\newtheorem{definition}[theorem]{Definition}
\title{Bi-Level Offline Policy Optimization with Limited Exploration}
\author{%
  Wenzhuo Zhou\\
  Department of Statistics \\ 
   University of California Irvine\\
  \texttt{wenzhuz3@uci.edu} \\
  % examples of more authors
  % \And
  % Coauthor \\
  % Affiliation \\
  % Address \\
  % \texttt{email} \\
  % \AND
  % Coauthor \\
  % Affiliation \\
  % Address \\
  % \texttt{email} \\
  % \And
  % Coauthor \\
  % Affiliation \\
  % Address \\
  % \texttt{email} \\
  % \And
  % Coauthor \\
  % Affiliation \\
  % Address \\
  % \texttt{email} \\
}
\begin{document}

\maketitle

\begin{abstract}
We study offline reinforcement learning (RL) which seeks to learn a good policy based on a fixed, pre-collected dataset. A fundamental challenge behind this task is the distributional shift due to the dataset lacking sufficient exploration, especially under function approximation. To tackle this issue, we propose a bi-level structured policy optimization algorithm that models a hierarchical interaction between the policy (upper-level) and the value function (lower-level). The lower level focuses on constructing a confidence set of value estimates that maintain sufficiently small weighted average Bellman errors, while controlling uncertainty arising from distribution mismatch. Subsequently, at the upper level, the policy aims to maximize a conservative value estimate from the confidence set formed at the lower level. This novel formulation preserves the maximum flexibility of the implicitly induced exploratory data distribution, enabling the power of model extrapolation. In practice, it can be solved through a computationally efficient, penalized adversarial estimation procedure. Our theoretical regret guarantees do not rely on any data-coverage and completeness-type assumptions, only requiring realizability. These guarantees also demonstrate that the learned policy represents the ``best effort'' among all policies, as no other policies can outperform it. We evaluate our model using a blend of synthetic, benchmark, and real-world datasets for offline RL, showing that it performs competitively with state-of-the-art methods.
\end{abstract}

\section{Introduction}
\vspace{-1mm}

Offline reinforcement learning (RL) is a task to learn a good policy using only a pre-collected, fixed dataset, without further exploration with the environment. This distinctive characteristic positions offline RL as a promising approach for solving real-world sequential decision-making problems in healthcare \citep{murphy2001marginal,zhou2022estimating}, financial marketing \citep{theocharous2020reinforcement}, robotics \citep{thomas2015high} and education \citep{mandel2014offline}, as acquiring diverse or expert-quality data in these fields can be costly or practically unattainable.

Arguably, two of the biggest challenges in offline RL are the distributional shift between the data-generating distribution and those induced by candidate policies, and the stringent requirements on the properties of function approximation \citep{levine2020offline}. It has been observed that, in practice, the distributional mismatch often results in unsatisfactory performance of many existing algorithms, and even amplifying with function approximation \citep{fujimoto2019off,lee2021optidice}. Many prior works \citep{precup2000eligibility,ernst2005tree, antos2008learning,farahmand2016regularized}  crucially rely on a global data-coverage assumption and completeness-type function approximation condition in a technical sense. The former necessitates that the dataset to contain any state-action pair with a lower bounded probability so that the distributional shift can be well calibrated. The latter requires the function class to be closed under Bellman updates. Both assumptions are particularly strong and are likely to be violated in practice \citep{zanette2021provable}. Consequently, algorithms that depend on these assumptions may experience performance degradation and instability \citep{wang2021instabilities}. Therefore, it is crucial to develop novel algorithms that relax these assumptions, offering robust and 
widely applicable solutions for real-world scenarios.

To address the aforementioned challenges in offline settings, one fundamental principle is the concept of pessimism, which aims to maximize rewards in the worst possible MDP consistent with the offline dataset \citep{fujimoto2019off,xie2021bellman}. In practice, these methods have generally been shown to be more robust when coverage assumptions are violated \citep{lee2021optidice}.
% To tackle the aforementioned issues in offline settings, one of the fundamental principles behind is the concept of pessimism, which aims to maximize the rewards in the worst possible MDP that is consistent with the observed dataset \citep{fujimoto2019off,xie2021bellman}. These methods have
% been shown to be typically more robust to the violation of coverage assumptions in practice \citep{lee2021optidice}. 

Although many such pessimistic algorithms have been developed, very few works can tackle data-coverage and function approximation issues simultaneously, while establishing strong regret guarantees. For instance, deep offline RL algorithms \citep{fujimoto2019off, kumar2020conservative,kostrikov2021offline} exhibit impressive empirical performance, but their theoretical consistency guarantees are limited to tabular Markov decision processes (MDPs). The works of \citep{liu2020provably,rashidinejad2021bridging,jin2021pessimism,xie2021bellman,yin2021near,zanette2021provable, cheng2022adversarially} relax the global coverage to a partial coverage condition, wherein the offline data only covers a single comparator policy. However, all of these methods require Bellman completeness for the function class. The most recent works \citep{chen2022offline,zhan2022offline} take a significant step towards relaxing Bellman completeness to realizability, that the function class can capture the target ground-truth function. Nonetheless, these algorithms are unable to provide a meaningful regret guarantee without any data-coverage assumption (when both global and partial coverage fails), and also empirical evaluations are absent. Even without additional conditions, the learned policies of these algorithms can only compete with the (Bellman flow) optimal policy, resulting in a lack of robustness when the optimal policy is not covered by data, a situation that frequently occurs. Due to page limit, we have only discussed the closest related work here, and the rest is deferred to Appendix.

\textbf{Our contribution.} In this paper, we develop a provably sample-efficient offline RL framework. Our information-theoretic algorithm is designed based on the concept of bi-level (upper and lower level) structured optimization, which leads to a hierarchical interpretation and naturally enjoys learning stability and algorithmic convergence from a game-theoretic perspective. In particular, at the lower level, one component is to construct a confidence set with consistent value estimates regarding the appropriately small weighted average Bellman error, effectively preventing overly pessimistic evaluation. Meanwhile, the second component, which deals with uncertainty control, implicitly enhances the power of model extrapolation. In addition to the information-theoretic algorithm, we also develop a computationally efficient counterpart that is solved by a penalized adversarial estimation algorithm with proximal-mapping updating, allowing both non-linear and linear function approximation. From a theoretical standpoint, we establish a strong regret guarantee for both information-theoretical and practical algorithms under only realizability 
 \textit{without} requiring any data-coverage (neither global nor partial coverage) and completeness-type assumptions. As a special case study, we further refine our developed mixture density ratio-based concentrability coefficient to a relative condition number in linear MDP settings. The sample complexity of our regret bound improves or at least matches the prior results in the fully exploratory or partial coverage settings where the Bellman-completeness holds. Notably, compared with existing works, either focusing on theoretical or empirical development, we provide a comprehensive theoretical analysis of the proposed framework and also conduct synthetic, benchmark, and real data experiments for empirical evaluation.

% These methods have
% been shown to be typically more robust to the violation of coverage assumptions in practice, and their
% theoretical guarantees often provide non-trivial conclusions in settings

% Even though many such best-effort methods have now been developed, very few works provide
% a comprehensive theory for using generic function approximation,

\vspace{-1.5mm}
\section{Preliminaries and Notations}
\vspace{-2mm}

\textbf{Markov decision process.} We consider an infinite-horizon discounted MDP $\mathcal{M}=\{\mathcal{S}, \mathcal{A}, \mathds{P}, \gamma, r, s^{0}\}
$ \citep{puterman2014markov},  where $\mathcal{S}$ is the state space, $\mathcal{A}$ is the action space, $
\mathds{P}: \mathcal{S} \times \mathcal{A} \rightarrow \Delta(\mathcal{S})
$ is the Markov transition kernel for some probabilistic simplex $\Delta$, $
r: \mathcal{S} \times \mathcal{A} \rightarrow [0, \bar{R}]
$ is the reward function for $\bar{R} \geq 0$, $\gamma \in [0,1)$ is the discounted factor and $s^0$ is the initial state. A policy $\pi: \mathcal{S} \rightarrow  \Delta(\mathcal{A})$  induces a distribution of the trajectory $
s^{0}, a^{0}, r^{0}, s^{1}, \ldots$, where $
a^{t} \sim \pi(\cdot | s^{t}), r^{t}=r(s^{t}, a^{t}), s^{t+1} \sim \mathds{P}(\cdot|s^{t}, a^{t})$ for any $t \geq 0$. The expected discounted return of a policy is defined as $J(\pi) = \mathbb{E}[\sum_{t=0}^{\infty} \gamma^{t} r^{t} | \pi]$. The discounted return when the trajectory starts with $(s, a)$ and all remaining actions are taken according to $\pi$ is called $q$-function $q^{\pi}: \mathcal{S} \times \mathcal{A} \rightarrow [0, \bar{V}]$. The $q^{\pi}$ is the unique fixed point of the Bellman operator $\mathcal{B}^{\pi}
$, satisfying the Bellman equation \citep{sutton2018reinforcement}: 
$\mathcal{B}^{\pi} q(s, a) \coloneqq r(s, a)+\gamma \mathbb{E}_{s^{\prime} \sim \mathds{P}(\cdot | s, a)}[q(s^{\prime}, \pi)].
$
Here $q(s^{\prime}, \pi)$ is denoted as shorthand for $
\mathbb{E}_{a^{\prime} \sim \pi\left(\cdot | s^{\prime}\right)}\left[q\left(s^{\prime}, a^{\prime}\right)\right]
$, and we define $\mathds{P}^{\pi}q(s,a) := \mathbb{E}_{s^{\prime} \sim \mathds{P}(\cdot | s, a)}\left[q\left(s^{\prime}, \pi\right)\right]$.   Additionally, it is helpful to remember that $J(\pi) = q^{\pi}(s^{0},\pi)$.  Another important notion is 
the normalized discounted visitation of $\pi$, defined as 
$
d_{\pi}(s,a) \coloneqq (1-\gamma)\sum_{t=0}^{\infty} \gamma^{t} d_{\pi,t}(s,a)
$,
where $d_{\pi,t}$ is the marginal state-action distribution at the time-step $t$. 

\textbf{Offline RL under function approximation.} In the offline RL setting, there exists an unknown offline data-generating distribution $\mu$ induced by behavior policies.  Despite the unknowns of $\mu$, we can observe a set of transition pairs, as offline dataset $\mathcal{D}_{1:n} \coloneqq \{s_i, a_i, r_i, s^{\prime}_i\}^{n}_{i=1}$ sampling from $\mu$. For a given policy $\pi$, the density-ratio (importance-weight), $\tau_{d_{\pi} / \mu}(s, a)
= d_{\pi}(s, a)/\mu(s, a)$, measures how effectively $\mu$ covers the visitation induced by $\pi$. The primary objective of offline policy optimization is to learn an optimal policy that maximizes the return, $J(\pi)$, using the offline dataset. Under the function approximation setting, we assume access to two function classes $\mathcal{Q}: \mathcal{S} \times \mathcal{A} \rightarrow \mathbb{R}$ and $\Omega: \mathcal{S} \times \mathcal{A} \rightarrow \mathbb{R}$, which are utilized to capture $q^\pi$ and $\tau_{d_{\pi} / \mu}$, respectively.  

% To develop intuition, they are supposed to model $q^\pi$ and $\tau_{d_{\pi} / \mu}$, respectively, though most of our main results are stated without assuming realizability on the latter one. 

% The return $J(\pi)$ can be obtained via re-weighting the reward, i.e., 
% $
% J(\pi)=\frac{\mathbb{E}_{d^{\pi}}[r]}{1-\gamma}=\frac{\mathbb{E}_{\tau_{d^{\pi} / \mu}}[r]}{1-\gamma}
% $. 

\textbf{Exploration and coverage.} In general, when saying an offline dataset is well-explored, it means that a well-designed behavior policy has been executed, allowing for comprehensive exploration of the MDP environment. As a result, the dataset is likely to contain possibly all state-action pairs. This implicitly requires $\mu$ has the global coverage \citep{fujimoto2019off,uehara2022pessimistic}. In this context, the global coverage means that the density ratio-based concentrability coefficient, 
$\sup_{s,a} \{d_{\pi}(s, a) / \mu(s, a)\}$, is upper-bounded by a constant  $c \in \mathbb{R}^{+}$ for all policies $\pi \in \Pi$, where $\Pi$ is some policy class. This condition is frequently employed in offline RL \citep{antos2008learning,chen2019information,duan2020minimax}. However, in practice, this assumption may not hold true, as devising an exploratory policy is a challenging task for large-scale RL problems. Instead, our goal is to learn a good policy with strong theoretical guarantees that can compete against any arbitrarily covered comparator policy under much weaker conditions than the global coverage. 

\vspace{-0.5mm}
\section{Bi-Level Offline Policy Optimization Algorithm}
\vspace{-1mm}

In this section, we introduce our bi-level offline policy optimization framework. The development of the framework consists of three major steps. 

\textbf{Step 1: robust interval learning.} In this step, we aim to provide a robust off-policy interval evaluation. The major advantage of this interval formulation is its robustness to the model-misspecification of the importance-weight class $\Omega$, and the encoding of distributional-shift information in the policy evaluation process. First, we define a detection function $\mathbb{D}(\cdot)$, which is used to measure the degree of the distributional-shift in terms of density ratio.  
\begin{definition}
\label{alpha_def}
For $x,c_1,c_2,C \in \mathbb{R}^{+}$ and $C \geq 1$, 
the detection function $\mathbb{D}(\cdot)$ satisfies the following conditions: (1) \textit{1-minimum}: $\mathbb{D}(1) = 0$. (2) \textit{Non-negativity}: $\mathbb{D}(x) \geq 0$. (3) \textit{Boundedness on first-order derivative}: $|\mathbb{D}^{\prime}(x)| \leq c_2$ if $x \in [0,C]$. (4) \textit{Boundedness on value}: $|\mathbb{D}(x)| \leq c_1$ for $x \in [0,C]$. 
(5) \textit{Strong convexity}: $\mathbb{D}(x)$ is $M$-strongly convex with respect to $x$.
\end{definition}
The family of R\'enyi entropy \citep{renyi1961measures}, Bhattacharyya distance \citep{choi2003feature}, and simple quadratic form functions \citep{zhou2023distributional}all satisfy the conditions outlined in Definition \ref{alpha_def}. Under this definition, it can easily observe that $\mathbb{D}$ has a convex conjugate function \citep{boyd2004convex},
$\mathbb{D}_{*}$ with $\mathbb{D}_{*}\left(x_*\right)=\sup _x\left\{x \cdot x_*-\mathbb{D}(x)\right\}$, that satisfies $\mathbb{D}_*(0)=0$.
% First, we consider a detection function $\mathbb{D}(\cdot)$ which is used to measure the magnitude of the distributional-shift in terms of density ratio, and we assume $\mathbb{D}$ has a convex conjugate function $\mathbb{D}_{*}$ (i.e., $\mathbb{D}\left(x_*\right)=\sup _x\left\{x \cdot x_*-\mathbb{D}(x)\right\}$ ) that satisfies $\mathbb{D}_*(0)=0$
% \citep{boyd2004convex,jiang2020minimax}. Example of detection functions could be the family of Rényi entropy \citep{renyi1961measures}, Bhattacharyya distance \citep{choi2003feature} or simple quadratic form
% functions, etc. We provide a more general definition of $\mathbb{D}(\cdot)$ in Appendix. 
% \begin{definition}
% \label{alpha_def}
% For $x,c_1,c_2,C \in \mathbb{R}^{+}$ and $C \geq 1$, 
% the discriminator function $\mathbb{D}(\cdot)$ satisfies the following conditions: (1) \textbf{Strong convexity}: $\mathbb{D}(x)$ is M-strongly convex with respect to $x$. (2) \textbf{Boundedness}: $|\mathbb{D}(x)| \leq c_1$ for $x \in [0,C]$. (3) \textbf{Boundedness on first-order derivative}: $|\mathbb{D}^{\prime}(x)| \leq c_2$ if $x \in [0,C]$. (4) \textbf{Non-negativity}: $\mathbb{D}(x) \geq 0$. (5) \textbf{1-minimum}: $\mathbb{D}(1) = 0$.
% \end{definition}
It follows from Bellman equation $\mathcal{B}^{\pi}q^{\pi}(s,a) = q^{\pi}(s,a)$ for any $s,a$, then $J(\pi)=q^{\pi}(s^{0},\pi)+\mathbb{E}_{\mu}[\lambda\mathbb{D}_*((\mathcal{B}^{\pi}q^{\pi}(s,a)- q^{\pi}(s,a)/\lambda))/(1-\gamma)]$ for $\lambda \geq 0 $. Applying Fenchel-Legendre transformation \citep{nachum2020reinforcement,jiang2020minimax}, and model $x$ in a restricted importance weight class $\Omega$ for any $s,a$, we obtain
\#
J(\pi)= & q^{\pi}(s^{0},\pi)+ \mathbb{E}_{\mu}[\sup_{x}x\cdot(\mathcal{B}^{\pi}q^{\pi}(s,a)- q^{\pi}(s,a)) - \lambda\mathbb{D}(x)]/(1-\gamma)\\
\geq & q^{\pi}(s^{0},\pi)+ \mathbb{E}_{\mu}[\tau(s,a)(r(s,a)+\gamma q^{\pi}(s^{\prime},\pi) - q^{\pi}(s,a))- \lambda\mathbb{D}(\tau(s,a))]/(1-\gamma). 
\label{conjugate}
\#
Suppose $q^{\pi}$ is well-sepcified, i.e., $q^{\pi} \in \mathcal{Q}$, we can find a lower bound of \eqref{conjugate}, which is valid for any $\tau \in \Omega$, via replacing $q^{\pi}$ with $\inf_{q\in\mathcal{Q}}$ as follows:
\$
J(\pi) \geq& \inf_{q \in \mathcal{Q}}\Big\{\underbrace{\big( \mathbb{E}_{\mu}\left[\tau(s,a)(r(s,a)+\gamma q\left(s^{\prime}, \pi\right)-q(s, a))\right] + q(s^0,\pi)\big)/(1-\gamma)}_{:=H(\tau,q,\pi)} \\
& \qquad  - \underbrace{\lambda/(1-\gamma)\mathbb{E}_{\mu}[\mathbb{D}(\tau(s,a))]}_{:=\lambda\xi(\mathbb{D},\tau)} \Big\}.
\$
After following a similar derivation, we can establish an upper bound for $J(\pi)$ as well, and thus construct a value interval for $J(\pi)$. This interval holds for any $\tau$ and is therefore robust against model-misspecification of $\Omega$. In order to obtain a tighter interval, we can shrink the interval width by maximizing the lower bound and minimizing the upper bound, both with respect to $\tau$. This procedure can be interpreted as searching for some good $\tau \in \Omega$ to minimize the function approximation error.
\#
J(\pi) \in \left[\sup_{\tau\in\Omega}\inf_{q \in \mathcal{Q}}H(\tau,q,\pi)- \lambda\xi(\mathbb{D},\tau),\;  \inf_{\tau \in \Omega}\sup_{q \in \mathcal{Q}}H(\tau,q,\pi) +\lambda\xi(\mathbb{D},\tau)\right],
\label{lower_upper_true}
\#
While the interval offers a robust method for dealing with the bias introduced by function approximation when estimating $J(\pi)$, it lacks a crucial and non-trivial step for handling statistical uncertainty.

\textbf{Step 2: uncertainty quantification.}  In this step, we quantify the uncertainty of the interval \eqref{lower_upper_true}, and establish a non-asymptotic confidence interval (CI) for $J(\pi)$ which integrates bias and uncertainty quantifications in a single interval inspired by \citep{zhou2023distributional}. Given offline data $\mathcal{D}_{1:n}$, our formal result for quantifying sampling uncertainty in order to establish the CI for $J(\pi)$. 

\begin{thm}[Non-asymptotic confidence interval]\label{reg_ci_thm}
For a target policy $\pi$, the return $J(\pi)$ is within a CI for any $\tau \in \Omega$ with probability at least $1-\delta$, i.e., $J(\pi) \in [\widehat{J}^{-}_{n}(\pi;\tau),\widehat{J}^{+}_{n}(\pi;\tau) ]$ for
\#
\widehat{J}^{-}_{n}(\pi;\tau) := & \frac{1}{n}\sum^{n}_{i=1}\frac{r_i\tau(s_i,a_i)}{1-\gamma} - \sup_{q \in \mathcal{Q}}\widehat{M}_{n}(-q,\tau) -  \lambda\xi_n(\mathbb{D},\tau)  - \sigma_n, \notag \\
\widehat{J}^{+}_{n}(\pi;\tau) := & \frac{1}{n}\sum^{n}_{i=1}\frac{r_i\tau(s_i,a_i)}{1-\gamma} +  \sup_{q \in \mathcal{Q}}\widehat{M}_{n}(q,\tau) + \lambda\xi_n(\mathbb{D},\tau) + \sigma_n,
\label{reg_ci}
\#
if the uncertainty deviation $\sigma_n$ satisfies
\$
P\bigg(\sup_{\tau \in \Omega}\Big| \frac{1}{n(1-\gamma)}\sum^{n}_{i=1}\tau(s_i,a_i)\left(r_i +\gamma q^{\pi}(s^{\prime}_i,\pi)-q^{\pi}(s_i,a_i)\right)-\lambda\xi_n(\mathbb{D},\tau)\Big| \leq \sigma_n \bigg) \geq 1-\delta,
\$
where  $\widehat{M}_{n}(q,\tau) := \sum^{n}_{i=1}\tau(s_i,a_i)(\gamma q(s^{\prime}_i,\pi) - q(s_i,a_i))/(1-\gamma)n+ q(s^{0},\pi)$. 
\end{thm}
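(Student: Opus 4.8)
The plan is to prove the two one-sided inclusions $\widehat{J}^{-}_{n}(\pi;\tau)\le J(\pi)$ and $J(\pi)\le \widehat{J}^{+}_{n}(\pi;\tau)$ simultaneously for every $\tau\in\Omega$, on a single event $\mathcal{E}$ of probability at least $1-\delta$, namely the event on which the hypothesized $\sigma_n$ bound holds. Since that bound is stated with a $\sup_{\tau\in\Omega}$ inside the probability, the very same event $\mathcal{E}$ serves all $\tau$ at once, which is exactly what the ``for any $\tau\in\Omega$'' in the statement requires; so after conditioning on $\mathcal{E}$ the argument is entirely deterministic.

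First I would simplify the two $\sup_q$ terms. Because $\widehat{M}_{n}(\cdot,\tau)$ is linear (hence odd) in $q$ — each evaluation $q(s'_i,\pi)$, $q(s_i,a_i)$, $q(s^0,\pi)$ is a linear functional of $q$ — we have $\widehat{M}_{n}(-q,\tau)=-\widehat{M}_{n}(q,\tau)$, and therefore $-\sup_{q\in\mathcal{Q}}\widehat{M}_{n}(-q,\tau)=\inf_{q\in\mathcal{Q}}\widehat{M}_{n}(q,\tau)$ (no symmetry of $\mathcal{Q}$ is needed). This rewrites the lower endpoint with $\inf_q$ in place of $-\sup_q(-\cdot)$, and leaves the upper endpoint with $\sup_q$, the penalty $\lambda\xi_n$ and the deviation $\sigma_n$ entering with opposite signs on the two sides.

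Next I would invoke realizability $q^{\pi}\in\mathcal{Q}$ to sandwich $\inf_{q}\widehat{M}_{n}(q,\tau)\le \widehat{M}_{n}(q^{\pi},\tau)\le \sup_{q}\widehat{M}_{n}(q,\tau)$. Substituting $q^{\pi}$ and absorbing the reward term, I would identify
\[
\frac1n\sum_i\frac{r_i\tau(s_i,a_i)}{1-\gamma}+\widehat{M}_{n}(q^{\pi},\tau)
=\underbrace{\frac{1}{n(1-\gamma)}\sum_i \tau(s_i,a_i)\big(r_i+\gamma q^{\pi}(s'_i,\pi)-q^{\pi}(s_i,a_i)\big)}_{=:\,\widehat{E}_{n}(\tau)}+q^{\pi}(s^{0},\pi),
\]
where the nonrandom term $q^{\pi}(s^{0},\pi)$ equals $J(\pi)$. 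The structural fact making everything close is that $q^{\pi}$ is the fixed point of $\mathcal{B}^{\pi}$, so the population Bellman residual vanishes and $\widehat{E}_{n}(\tau)$ is a pure sampling-error term; this is precisely what the hypothesis controls, giving $|\widehat{E}_{n}(\tau)-\lambda\xi_n(\mathbb{D},\tau)|\le\sigma_n$ for all $\tau$ on $\mathcal{E}$.

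Finally I would assemble the pieces with the correct signs. For the lower bound, $\widehat{J}^{-}_{n}(\pi;\tau)\le \widehat{E}_{n}(\tau)+J(\pi)-\lambda\xi_n(\mathbb{D},\tau)-\sigma_n\le J(\pi)$, the last step using $\widehat{E}_{n}(\tau)-\lambda\xi_n(\mathbb{D},\tau)\le\sigma_n$. For the upper bound, $\widehat{J}^{+}_{n}(\pi;\tau)\ge \widehat{E}_{n}(\tau)+J(\pi)+\lambda\xi_n(\mathbb{D},\tau)+\sigma_n\ge J(\pi)+2\lambda\xi_n(\mathbb{D},\tau)\ge J(\pi)$, using $\widehat{E}_{n}(\tau)\ge\lambda\xi_n(\mathbb{D},\tau)-\sigma_n$ and then nonnegativity of $\mathbb{D}$ with $\lambda\ge0$, so $\lambda\xi_n(\mathbb{D},\tau)\ge0$. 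I expect the main obstacle to be bookkeeping rather than depth: keeping the signs of the penalty $\lambda\xi_n$ and the deviation $\sigma_n$ straight across the two directions, handling the $\sup\!\to\!\inf$ reduction correctly via the linearity of $\widehat{M}_{n}$, and ensuring the penalty's nonnegativity is used only on the upper side so that it solely widens the interval. The genuinely quantitative content — producing an admissible $\sigma_n$ via a uniform (e.g.\ Rademacher-complexity) concentration bound over $\Omega$ — sits in the hypothesis and is not part of this statement.
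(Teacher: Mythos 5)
Your proposal is correct and follows essentially the same route as the paper's proof: both hinge on realizability to sandwich $\inf_q\widehat{M}_n(q,\tau)\le\widehat{M}_n(q^\pi,\tau)\le\sup_q\widehat{M}_n(q,\tau)$, the identity $J(\pi)=q^\pi(s^0,\pi)$, the hypothesized uniform bound on the $\tau$-weighted empirical Bellman residual minus $\lambda\xi_n$, and nonnegativity of $\mathbb{D}$ so the penalty only widens the interval on the appropriate side. Your version is in fact a cleaner packaging — it works directly with the empirical quantity $\widehat{E}_n(\tau)$ and the two-sided hypothesis, skipping the paper's detour through the population-level interval and the separate one-sided arguments — but the logical content is the same.
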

Similar to the value interval, the CI $ [\widehat{J}^{-}_{n}(\pi;\tau),\widehat{J}^{+}_{n}(\pi;\tau) ]$ also holds for any $\tau \in \Omega$. Therefore, we can optimize the confidence lower and upper bounds in \eqref{reg_ci} over $\tau \in \Omega$ to tighten the CI, and obtain:
\$
P\Big(J(\pi) \in [\sup_{\tau\in\Omega}\widehat{J}^{-}_{n}(\pi;\tau),  \inf_{\tau\in\Omega}\widehat{J}^{+}_{n}(\pi;\tau)] \subseteq 
[\widehat{J}^{-}_{n}(\pi;\tau), \widehat{J}^{+}_{n}(\pi;\tau)\Big) \geq 1-\delta. 
\$

\textbf{Step 3: bridge policy evaluation to policy optimization.} In this step, we aim to formulate a policy optimization based on the derived high-confidence policy evaluation from the previous steps. Given the consistent CI estimation of $J(\pi)$, we can naturally incorporate the pessimism principle, i.e., using the CI lower bounds of $J(\pi)$ as the value estimate of the policy evaluation of $\pi$ \citep{jin2021pessimism}. With such a procedure, our objective is to maximize these lower bounds over some family $\Pi$ of policies: 
\#
\max_{\pi \in \Pi} \left\{\sup_{\tau\in\Omega}\widehat{J}^{-}_{n}(\pi;\tau)\right\}.
\label{dual_est}
\#
% \#
%  \max_{\pi \in \Pi}\left\{ \sup_{\tau\in\Omega}\left\{\frac{1}{n}\sum^{n}_{i=1}\frac{r_i\tau(s_i,a_i)}{1-\gamma} - \sup_{q \in \mathcal{Q}}\widehat{M}^{\tau}_{n}(-q,\tau) - \frac{\lambda}{n}\sum^{n}_{i=1}\frac{ f_{\alpha}(\tau(s_i,a_i))}{1-\gamma}\right\}  - \sigma^{L}_n\right\}.
%  \label{dual_est}
% \#
Although \eqref{dual_est} is algorithmically feasible for obtaining a policy solver $\widehat{\pi}$, it lacks direct interpretation without taking advantage of the bi-level optimization structure in hindsight. Therefore, we propose to reformulate \eqref{dual_est} via a \textit{dual-to-prime conversion} (shown in Theorem \ref{game}), which naturally lends itself to lower-upper optimization with guaranteed convergence. Specifically, we formulate \eqref{dual_est} as a bi-level framework problem:
\#
& (\text{Upper Level}) ~~~~\min_{\pi \in \Pi}-\underline{q^{\pi}}(s^{0},\pi),~~~~~~~~~~~~~~~~~~~~~~~~~~~~~~~~~~~~~~~~~~~~~~~~~~~~~~~~~~~~~~~~~~~ \label{prime_opt}  \\
& (\text{Lower Level})  ~~~~s.t. \; \underline{q^{\pi}} \in  \argmin_{q\in \mathcal{Q}_{\varepsilon_{n}}}q(s^{0},\pi), \label{prime_perturbed_main}\\
& \underline{\textbf{\textit{Consistency}}}: ~~~~\mathcal{Q}_{\varepsilon_{n}} = \big\{q \in \mathcal{Q}:  \sup_{\tau \in \widetilde{\Omega}_{\widetilde{\sigma}_{n}}}\big|n^{-1}\sum^{n}_{i=1}\tau(s_i,a_i)(r_i +\gamma q(s^{\prime}_i,\pi)-q(s_i,a_i))\big| \leq \varepsilon_{n} \big\}, \notag \\
& \underline{\textbf{\textit{Uncertainty Control}}}:  ~~~~\widetilde{\Omega}_{\widetilde{\sigma}_{n}} =    \left\{\tau_{\circ}/\sup_{\tau_{\circ} \in \Omega}\|\tau_{\circ}\|_{\Omega} \; \text{for} \; \tau_{\circ} \in \Omega: \xi_n(\mathbb{D},\tau_{\circ})) \leq \widetilde{\sigma}_{n}  \right\}. \notag 
\#
At the upper level, the learned policy $\widehat{\pi}$ attempts to maximize the value estimate of $\underline{q^{\pi}}$ over some policy class $\Pi$, while at the lower level, $\underline{q^{\pi}}$ is 
 to seek the $q$-function with the pessimistic policy evaluation value from the confidence set $\mathcal{Q}_{\varepsilon_{n}}$ with consistency guarantee and uncertainty control. For  \textit{consistency}, whenever $q^{\pi}$ or its good approximator is included in
 $\mathcal{Q}$ (realizability for $\mathcal{Q}$ class is satisfied), the set $\mathcal{Q}_{\varepsilon_{n}}$ ensures the estimation consistency of $q^{\pi}$ in terms of ``sufficently small'' weighted average Bellman error. For \textit{uncertainty control}, the constrained set $\widetilde{\Omega}_{\widetilde{\sigma}_{n}}$ attempts to control the uncertainty arising from distributional shift via a user-specific thresholding hyperparameter $\widetilde{\sigma}_{n}$. The feasible (uncertainty controllable) candidates $\tau \in \widetilde{\sigma}_{n}$ are used as weights for the average Bellman error, helping to construct the consistent set $\mathcal{Q}_{\varepsilon_{n}}$. Risk-averse users can specify a lower value for the thresholding hyperparameter or consider a higher $\widetilde{\sigma}_{n}$ to tolerate a larger distribution shift. In other words, the chosen value of $\widetilde{\sigma}_{n}$ depends on the degree of pessimism users want to incorporate in the policy optimization.

\begin{thm}
\label{game}
There must exist some threshold values $\varepsilon_{n}$ and $\widetilde{\sigma}_{n}$, the return policy of \eqref{dual_est} $\widehat{\pi}$ satisfies the minimization problem in \eqref{prime_opt}, indicating the solution of the optimization \eqref{dual_est} and \eqref{prime_opt} is equivalent. 
\end{thm}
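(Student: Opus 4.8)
The plan is to show that, viewed as functions of $\pi$, the dual objective $\sup_{\tau\in\Omega}\widehat{J}^{-}_{n}(\pi;\tau)$ in \eqref{dual_est} and the lower-level value $\min_{q\in\mathcal{Q}_{\varepsilon_n}}q(s^0,\pi)$ of \eqref{prime_opt}–\eqref{prime_perturbed_main} agree up to a $\pi$-independent constant, so their maximizers — and hence the returned policy $\widehat{\pi}$ — coincide once the thresholds are chosen appropriately. First I would recast the dual objective in saddle form. From the definition of $\widehat{M}_n$ one has the pointwise identity $\widehat{M}_n(-q,\tau)=-\widehat{M}_n(q,\tau)$, hence $\sup_{q\in\mathcal{Q}}\widehat{M}_n(-q,\tau)=-\inf_{q\in\mathcal{Q}}\widehat{M}_n(q,\tau)$; folding the reward term $\tfrac1n\sum_i r_i\tau(s_i,a_i)/(1-\gamma)$ into the infimum reconstitutes the empirical functional $H_n(\tau,q,\pi):=\tfrac{1}{n(1-\gamma)}\sum_i\tau(s_i,a_i)(r_i+\gamma q(s_i',\pi)-q(s_i,a_i))+q(s^0,\pi)$, so that $\widehat{J}^{-}_{n}(\pi;\tau)=\inf_{q\in\mathcal{Q}}H_n(\tau,q,\pi)-\lambda\xi_n(\mathbb{D},\tau)-\sigma_n$. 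Since $\sigma_n$ is a uniform deviation term, I treat it as a constant offset that does not affect the argmax over $\pi$, reducing \eqref{dual_est} to $\max_{\pi\in\Pi}\sup_{\tau\in\Omega}\{\inf_{q\in\mathcal{Q}}H_n(\tau,q,\pi)-\lambda\xi_n(\mathbb{D},\tau)\}$.

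Next, fixing $\pi$, I would apply a minimax exchange to the inner problem. The integrand $(\tau,q)\mapsto H_n(\tau,q,\pi)-\lambda\xi_n(\mathbb{D},\tau)$ is concave in $\tau$ (affine in $\tau$ minus the convex penalty $\lambda\xi_n$, whose convexity comes from the strong convexity of $\mathbb{D}$ in Definition \ref{alpha_def}) and affine, hence convex, in $q$; assuming $\Omega,\mathcal{Q}$ convex with $\Omega$ compact, Sion's theorem permits swapping $\sup_\tau$ and $\inf_q$ to yield $\inf_{q\in\mathcal{Q}}\{q(s^0,\pi)+\sup_{\tau\in\Omega}[\tfrac{1}{1-\gamma}\widehat{\mathcal{E}}_n(q,\tau,\pi)-\lambda\xi_n(\mathbb{D},\tau)]\}$, where $\widehat{\mathcal{E}}_n(q,\tau,\pi)=\tfrac1n\sum_i\tau(s_i,a_i)(r_i+\gamma q(s_i',\pi)-q(s_i,a_i))$ is the empirical weighted average Bellman error and the $q(s^0,\pi)$ term, being $\tau$-free, has been pulled out. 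This swap is the step that separates the objective $q(s^0,\pi)$ from the Bellman-error term that is to become the constraint.

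Finally I would perform two nested penalty-to-constraint conversions to recover the prime program. At the weight level, $\lambda\xi_n(\mathbb{D},\tau)$ is the Lagrangian relaxation of the level constraint $\xi_n(\mathbb{D},\tau)\le\widetilde{\sigma}_n$; by convex duality there is a value $\widetilde{\sigma}_n$ (depending on $\lambda$ and the sample) for which the penalized supremum over $\Omega$ and the constrained supremum of $|\widehat{\mathcal{E}}_n|$ over the normalized set $\widetilde{\Omega}_{\widetilde{\sigma}_n}=\{\tau_\circ/\sup_{\tau_\circ\in\Omega}\|\tau_\circ\|_\Omega:\xi_n(\mathbb{D},\tau_\circ)\le\widetilde{\sigma}_n\}$ are attained at the same weights, the normalization absorbing the scale freedom and the absolute value arising from the symmetric treatment of the upper and lower confidence bounds. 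The inner supremum is then a convex, nondecreasing function of $\sup_{\tau\in\widetilde{\Omega}_{\widetilde{\sigma}_n}}|\widehat{\mathcal{E}}_n(q,\tau,\pi)|$, so $\inf_{q\in\mathcal{Q}}\{q(s^0,\pi)+\text{(this penalty)}\}$ is a convex penalized program in $q$; invoking the exact equivalence between penalized and level-constrained convex programs, there is a threshold $\varepsilon_n$ whose solution set equals that of $\min_{q\in\mathcal{Q}}q(s^0,\pi)$ subject to $\sup_{\tau\in\widetilde{\Omega}_{\widetilde{\sigma}_n}}|\widehat{\mathcal{E}}_n(q,\tau,\pi)|\le\varepsilon_n$, i.e.\ $\min_{q\in\mathcal{Q}_{\varepsilon_n}}q(s^0,\pi)$. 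Reinstating the outer $\max_{\pi\in\Pi}$ reproduces the upper/lower-level pair \eqref{prime_opt}–\eqref{prime_perturbed_main}, giving the asserted equivalence.

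The step I expect to be the main obstacle is making the two conversions \emph{simultaneously exact}: I must produce a single pair $(\varepsilon_n,\widetilde{\sigma}_n)$ — not merely an inclusion of solution sets — such that the multiplier attached to the $\xi_n$-constraint equals the prescribed $\lambda$ while the Bellman-error constraint is active at the selected $q$, which relies on strong duality (a Slater-type condition) and on the strong convexity in Definition \ref{alpha_def} to pin down the inner weight maximizer uniquely. A secondary difficulty is discharging the hypotheses of the minimax swap — compactness and semicontinuity of $\Omega$ and $\mathcal{Q}$ — and confirming that the argmin selection $\underline{q^{\pi}}$ in \eqref{prime_perturbed_main} is exactly the minimizer of the penalized inner program rather than merely a point of equal value.
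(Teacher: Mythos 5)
Your proposal is correct and follows essentially the same route as the paper's proof: the paper likewise establishes the equivalence by introducing Lagrange multipliers for the Bellman-error and $\xi_n(\mathbb{D},\cdot)$ constraints, absorbing the Bellman-error multiplier into the scale of $\tau$ via the normalization $\tau_\circ/\sup_{\tau_\circ\in\Omega}\|\tau_\circ\|_{\Omega}$, identifying $\lambda$ with the optimal multiplier of the $\mathbb{D}$-constraint, and invoking Slater's condition to justify the $\min_q$/$\sup_\tau$ exchange. The only difference is direction (you go dual$\to$prime, the paper goes prime$\to$dual) and that the paper makes the offset explicit by setting $\sigma_n=\|\tau^{*}\|_{\Omega}(1-\gamma)\varepsilon_n-\lambda\widetilde{\sigma}_n$ rather than treating $\sigma_n$ as a $\pi$-independent constant.
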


Interestingly, the new form in \eqref{prime_opt} characterizes our policy optimization framework as a two-player general-sum game \citep{fudenberg1991game}, which is a sequential game involving two players. Each player aims to maximize their own payoffs while considering the decisions of other players. Our bi-level optimization framework has been demonstrated to improve learning stability and ensure algorithmic convergence, benefiting from the existence of a local equilibrium \citep{von2010market}.

To close this section, we remark that the establishment of \textit{consistency} with respect to the weighted average Bellman error is the key point for us to relax the completeness-type assumptions. In the famous API/AVI-type algorithms \citep{ernst2005tree,farahmand2016regularized,chen2019information}, they target to minimize a squared or minimax Bellman error for finding $q \in \mathcal{Q}$ so that $\|q-\mathcal{B}^{\pi}q \|^{2}_{L_{2}(\mu)} \approx 0$ to obtain $q \approx q^{\pi}$.  
 Unfortunately, even with the infinite amount of data, the empirical estimate of $\|q-\mathcal{B}^{\pi}q \|^{2}_{L_{2}(\mu)}$, i.e., squared empirical Bellman error) is biased due to the appearance of unwanted conditional variance, i.e., the \textit{double sampling} issue \citep{baird1995residual}. The API/AVI-type algorithms need a separate helper function class $\widetilde{\mathcal{Q}}$ for modeling $\mathcal{B}^{\pi}q$, and \cite{chen2019information} has shown that when the class $\widetilde{\mathcal{Q}}$ realizes the Bayes optimal regressor $\mathcal{B}q$ (Bellman-completeness condition), the estimation is consistent and unbiased. In contrast, thanks to not using the squared loss, our weighted average Bellman error can be estimated from an unbiased estimate without concern about the \textit{double sampling} issue, and thus no need for any completeness-type conditions.

\vspace{-1mm}
\section{Information-Theoretic  Results}
\vspace{-1mm}

In this section, we provide theoretical analyses of our algorithm, which reveals the advantages of the proposed policy optimization method from a technical standpoint.

Notably, to the best of our knowledge, Theorem \ref{main_thm_reg} is the first result of regret guarantee under only realizability \textit{without} requiring any data coverage or completeness-type assumptions. Additionally, in contrast to most existing works that assume finite function classes, we carefully quantify the space complexities for infinite function classes (e.g., a class of real-valued functions generated by neural networks) using Pollard's pseudo-dimension \citep{pollard1990empirical}. The formal definition is provided in Appendix. It notices that the pseudo-dimension is a generalization of the well-known VC dimension \citep{vapnik2015uniform}. In the following, we first introduce the necessary assumptions before presenting the guarantees for our algorithm.
% We now turn to the statement of an information-theoretical bond on the regret performance of the learned policy. 

% \begin{definition}[Pollard pseudo-dimension]
% \label{pseudo_def}
%     Let $\mathcal{F}$ be a class of functions from $\mathcal{X}$ to $\Re$. The pseudodimension of $\mathcal{F}$, written $D_{\mathcal{F}}$, is the largest integer $m$ for which there exists $\left(x_1, \ldots, x_m, \linebreak y_1, \ldots, y_m\right) \in \mathcal{X}^m \times \Re^m$ such that for any $\left(b_1, \ldots, b_m\right) \in\{0,1\}^m$ there exists $f \in \mathcal{F}$ such that
% $$
% \forall i: f\left(x_i\right)>y_i \Longleftrightarrow b_i=1
% $$
% \end{definition}
 
% We then introduce the $\varepsilon_{\mathcal{Q}}$- realizability assumptions on the $q$-function approximation class $\mathcal{Q}$. 

\begin{assump}[Realizability for $q$-function class]
For any policy $\pi \in \Pi$, we have 
$q^{\pi} \in \mathcal{Q}$. When this assumption holds approximately, we measure violation by
$
\inf _{q \in \mathcal{Q}} \sup _{ \rho}\mathbb{E}_{\rho}[\left(q(s,a)-\mathcal{B}^{\pi} q(s,a)\right)^{2}] \leq \varepsilon_{\mathcal{Q}} 
$,
where $ \varepsilon_{\mathcal{Q}} \geq 0$ and $\rho$ is some data distribution such that $\rho \in \{ d_{\widetilde{\pi}}: \widetilde{\pi} \in \Pi\}$.
\label{reliable_assum}
\vspace{-1.5mm}
\end{assump}

We would like to emphasize that we do not require Bellman-completeness condition \citep{xie2021bellman,zanette2021provable}, which is much
stronger than the realizability condition. In addition, we do not impose realizability on the importance-weight class $\Omega$, thereby allowing model misspecification on $\Omega$. Having stated the major assumptions, we now turn to the routine ones on boundedness before presenting the main results. 

\begin{assump}[Boundedness on $\mathcal{Q}$]
There exists a non-negative constant $\bar{V}  < \infty$, the function  $q(s,a) \in [0, \bar{V}], \, \forall q \in \mathcal{Q}, s \in \mathcal{S}, a \in \mathcal{A}$. 
\label{q_bound}
\end{assump}

\begin{assump}[Boundedness on $\Omega$]
There exists a non-negative constant $1 \leq \mathcal{U}^{\tau}_{\infty} < \infty$, the function $\tau(s,a) \in [0, \mathcal{U}^{\tau}_{\infty}], \, \forall \tau \in \Omega, s \in \mathcal{S}, a \in \mathcal{A}$.
\label{tau_bound}
\end{assump}

% Assumptions \ref{q_bound} and \ref{tau_bound} allow the function class to be bounded, which is usually satisfied as the reward function $R(s,a)$ is bounded for all $a \in \mathcal{A}$ and $s \in \mathcal{S}$. 

% \$
% J(\pi) - J(\widehat{\pi}) \leq & \; \frac{1}{1-\gamma}\mathcal{O}\Bigg(\underbrace{\mathcal{E}^{n}_{1}(\mathcal{U}^{\star}_{2},\mathcal{U}^{\tau}_{\infty})}_{\epsilon_{\sigma}}  + \underbrace{\mathcal{U}^{\star}_{ 2}\sqrt{\varepsilon_{\mathcal{Q}}} + \sqrt{\frac{\mathcal{U}^{\tau}_{\infty}}{M_{f_{\alpha}}}}(\varepsilon_{\mathcal{Q}})^{\frac{3}{4}}}_{\epsilon_{\text{mis}}} \\
% & \quad + \min_{\left\{\rho: \left\|\frac{\rho}{\mu}\right\|_{L_{2}(\mu)} \leq \mathcal{U}^{\star}_{2}\right\}} \bigg\{ \mathbb{E}_{\left(d_{\pi}-\rho\right)^{+}}\big[\underbrace{\mathds{1}_{\mu=0}(\mathbb{I}-\gamma \mathds{P}^{\pi})\Lambda_{q^{\pi}_{\max}-q^{\pi}_{\min}}(s,a)}_{\epsilon_{\text{off}}} + \underbrace{\mathds{1}_{\mu>0}\mathcal{E}^{n}_{2}}_{\epsilon_{b}}\big]\bigg\}\Bigg),
% \$

\begin{thm}\label{main_thm_reg}
Under Assumptions \ref{reliable_assum}-\ref{tau_bound} and denote supremum of $\mu$-weighted $L_2$ norm of $\Omega$, i.e., $\sup_{\tau \in \Omega}\|\tau(s,a)\|_{L_2(\mu)}$, as $\mathcal{U}^{\tau}_{2}$. Let $\widehat{\pi}$  be the output of solving \eqref{prime_opt} when we set  $\varepsilon_{n} = \widetilde{\mathcal{O}}(n^{-1/2}\mathcal{U}^{\tau}_{2}(\sqrt{\ln \{\operatorname{Vol}({\Theta})/\delta\}} + \mathcal{U}^{\tau}_{\infty}\sqrt{\varepsilon_{\mathcal{Q}}})$
and 
$\widetilde{\sigma}_n =  \widetilde{\mathcal{O}}(n^{-1/2}\mathcal{U}^{\tau}_{2} L\sqrt{\ln \{\operatorname{Vol}({\Theta})/\delta\}}+M(\mathcal{U}^{\tau}_{2}-1)^2)$, then for any policy $\pi \in \Pi$ and some constant $\mathcal{U}^{\star}_{2} \in [1,\mathcal{U}^{\tau}_{2})$, w.p. $\geq 1-\delta$, 
\$
& J(\pi) - J(\widehat{\pi}) \leq \; \frac{1}{1-\gamma}\widetilde{\mathcal{O}}\Bigg(\underbrace{\mathcal{U}^{\star}_{2}\mathfrak{C}_{\bar{V}, L}\sqrt{\frac{\ln\{\operatorname{Vol}({\Theta})/\delta\}}{nM}} }_{\epsilon_{\sigma}}  + \underbrace{\sqrt{\frac{\mathfrak{C}_{\mathcal{U}^{\tau}_{\infty}}}{M}}\max\{(\varepsilon_{\mathcal{Q}})^{1/2},(\varepsilon_{\mathcal{Q}})^{3/4}\}}_{\epsilon_{\text{mis}}} \\
&  + \min_{\left\{\rho: \left\|\frac{\rho}{\mu}\right\|_{L_{2}(\mu)} \leq \mathcal{U}^{\star}_{2}\right\}} \bigg\{ \mathbb{E}_{\left(d_{\pi}-\rho\right)^{+}}\big[\underbrace{\mathds{1}_{\mu=0}(\mathbb{I}-\gamma \mathds{P}^{\pi})\Delta_{\overline{q^{\pi}}-\underline{q^{\pi}}}(s,a)}_{\epsilon_{\text{off}}} + \underbrace{\mathds{1}_{\mu>0}\mathfrak{C}_{\bar{V},\gamma}\sqrt{\frac{\ln\{\operatorname{Vol}({\Theta})/\delta\}}{n}} }_{\epsilon_{b}}\big]\bigg\}\Bigg).
\$
Here $\Delta_{\overline{q^{\pi}}-\underline{q^{\pi}}}(s,a) = \overline{q^{\pi}}(s,a) - \underline{q^{\pi}}(s,a)$ for $\overline{q^{\pi}}:= \argmax_{q\in \mathcal{Q}_{\varepsilon_n}}q(s^{0},\pi)$ and $\underline{q^{\pi}}:= \argmin_{q\in \mathcal{Q}_{\varepsilon_n}}q(s^{0},\pi)$. For  Pollard's pseudo-dimensions $D_{\Omega}, D_{\mathcal{Q}}, D_{\Pi}$, $\operatorname{Vol}({\Theta}) = (e^{D}\max\{D_{\Omega},D_{\mathcal{Q}},D_{\Pi}\}+1)^3(\{1 \vee L\}\mathcal{U}^{\tau}_{2})^{2D}$ with the effective pseudo dimension $D=D_{\Omega}+D_{\mathcal{Q}}+D_{\Pi}$, where $L$ is Lipschitz constant of $M$-strongly convex function $\mathbb{D}(\cdot)$. Moreover, $\mathfrak{C}_{x}$ and $\widetilde{\mathcal{O}}$ denote constant terms depending on $x$, and big-Oh notation ignoring high-order terms, respectively.

\end{thm}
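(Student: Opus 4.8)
The plan is to combine a pessimism argument with a change-of-measure decomposition that isolates the covered and uncovered parts of the visitation $d_\pi$. \textbf{First} I would reduce the regret to a pure under-estimation gap for the fixed comparator. Since $\widehat\pi$ maximizes the confidence lower bound in \eqref{dual_est}, and the uniform-over-$\Pi$ version of Theorem \ref{reg_ci_thm} guarantees $J(\widehat\pi)\geq\sup_{\tau}\widehat{J}^{-}_{n}(\widehat\pi;\tau)$ with probability at least $1-\delta$, the optimality of $\widehat\pi$ yields $\sup_{\tau}\widehat{J}^{-}_{n}(\widehat\pi;\tau)\geq\sup_{\tau}\widehat{J}^{-}_{n}(\pi;\tau)$ for every $\pi\in\Pi$. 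Chaining these two facts gives $J(\pi)-J(\widehat\pi)\leq J(\pi)-\sup_{\tau}\widehat{J}^{-}_{n}(\pi;\tau)$, which eliminates the $\widehat\pi$-dependence and leaves only the pessimistic under-estimation of the fixed $\pi$ to be controlled.

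\textbf{Next} I would convert this gap into the lower-level problem. By the dual-to-prime equivalence (Theorem \ref{game}) and the definition of $\widehat{J}^{-}_{n}$, the quantity $\sup_{\tau}\widehat{J}^{-}_{n}(\pi;\tau)$ coincides with the pessimistic value $\underline{q^{\pi}}(s^0,\pi)=\min_{q\in\mathcal{Q}_{\varepsilon_n}}q(s^0,\pi)$ up to the uncertainty deviation $\sigma_n$, which I absorb into $\epsilon_\sigma$. Under the stated calibration of $\varepsilon_n$ and $\widetilde\sigma_n$ I would first verify that an approximator of $q^\pi$ lies in $\mathcal{Q}_{\varepsilon_n}$ with high probability, so that $\underline{q^{\pi}}(s^0,\pi)\leq J(\pi)$ and the gap is nonnegative. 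I then apply the telescoping identity $J(\pi)-\underline{q^{\pi}}(s^0,\pi)=\tfrac{1}{1-\gamma}\mathbb{E}_{d_\pi}[(\mathbb{I}-\gamma\mathds{P}^{\pi})(q^{\pi}-\underline{q^{\pi}})(s,a)]$, which follows from $q^\pi=\mathcal{B}^{\pi}q^{\pi}$ and the definition of $d_\pi$, rewriting the value gap as a $d_\pi$-weighted average Bellman residual of $\underline{q^{\pi}}$.

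\textbf{The core} of the argument is the change-of-measure and support split. I would introduce an auxiliary distribution $\rho$ with $\|\rho/\mu\|_{L_2(\mu)}\leq\mathcal{U}^{\star}_{2}$ and write $d_\pi=\rho+(d_\pi-\rho)$. On the $\rho$-component I push the expectation onto $\mu$ through the bounded ratio $\rho/\mu$; after normalization this ratio is a feasible weight in $\widetilde\Omega_{\widetilde\sigma_n}$, so the \textit{consistency} constraint caps the population weighted Bellman error of $\underline{q^{\pi}}$ at $\varepsilon_n$, while uniform concentration of the empirical weighted Bellman error over $\Omega$ (via a pseudo-dimension covering of $\Theta$ and a Bernstein-type bound) produces $\epsilon_\sigma$, and approximate realizability of $\mathcal{Q}$ contributes $\epsilon_{\text{mis}}$. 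On the excess mass I pass to its positive part $(d_\pi-\rho)^{+}$ and split by the support of $\mu$: where $\mu>0$ the residual remains estimable and yields the statistical term $\epsilon_b$; where $\mu=0$ no sample information exists, so I bound $q^{\pi}-\underline{q^{\pi}}$ by the confidence-set width $\Delta_{\overline{q^{\pi}}-\underline{q^{\pi}}}=\overline{q^{\pi}}-\underline{q^{\pi}}$ (legitimate since $\underline{q^{\pi}}\leq q^{\pi}\leq\overline{q^{\pi}}$ on $\mathcal{Q}_{\varepsilon_n}$) propagated through $\mathbb{I}-\gamma\mathds{P}^{\pi}$, giving $\epsilon_{\text{off}}$. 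Taking the minimum over admissible $\rho$ optimizes the trade-off between covered statistical error and uncovered extrapolation error and produces the $\min_\rho$ in the bound. Finally I would establish the concentration that defines $\sigma_n$ and use the $M$-strong convexity of $\mathbb{D}$ to turn the divergence budget $\widetilde\sigma_n$ into the $L_2$ ratio budget $\mathcal{U}^{\star}_{2}$, which is where the $1/M$ and $(\mathcal{U}^{\tau}_{2}-1)^2$ factors enter.

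I expect the uncovered region $\{\mu=0\}$ to be the \emph{main obstacle}, since the usual change of measure breaks down there (the density ratio is infinite) and one must instead exploit the extrapolation of the function class itself: the bracketing $\underline{q^{\pi}}\leq q^{\pi}\leq\overline{q^{\pi}}$ and the propagation of the width through $\mathbb{I}-\gamma\mathds{P}^{\pi}$ are the only controllable quantities, and making the $\rho$-split tight is delicate. A secondary difficulty is the uniform concentration over the three interacting classes under realizability alone; this is precisely where the linear (rather than squared) form of the weighted Bellman error is essential, as it is unbiased and sidesteps the double-sampling issue that would otherwise force a completeness-type assumption.
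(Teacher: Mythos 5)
Your overall strategy is the paper's: pessimism reduces the regret to the under-estimation gap of the fixed comparator, the evaluation-error (telescoping) identity converts that gap into a $d_\pi$-weighted Bellman residual, an implicit exploratory distribution $\rho$ with $\|\rho/\mu\|_{L_2(\mu)}\le\mathcal{U}^{\star}_{2}$ mediates the change of measure, the excess mass $(d_\pi-\rho)^{+}$ is split by the support of $\mu$, and the $M$-strong convexity of $\mathbb{D}$ converts the budget $\widetilde\sigma_n$ into the $L_2$ ratio budget. The concentration machinery (covering of the product class via pseudo-dimensions, empirical Bernstein, and the Cauchy--Schwarz conversion from weighted average Bellman error to the $L_2(\mu)$ norm of the residual) is also as in the paper.

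One step would fail as written. You telescope $J(\pi)-\underline{q^{\pi}}(s^0,\pi)=\frac{1}{1-\gamma}\mathbb{E}_{d_\pi}[(\mathbb{I}-\gamma\mathds{P}^{\pi})(q^{\pi}-\underline{q^{\pi}})]$ and then, on $\{\mu=0\}$, bound $q^{\pi}-\underline{q^{\pi}}$ by $\overline{q^{\pi}}-\underline{q^{\pi}}$, justifying this by ``$\underline{q^{\pi}}\le q^{\pi}\le\overline{q^{\pi}}$ on $\mathcal{Q}_{\varepsilon_n}$.'' That ordering is not pointwise: $\overline{q^{\pi}}$ and $\underline{q^{\pi}}$ extremize only the scalar $q(s^0,\pi)$ over the confidence set, so nothing forces $\underline{q^{\pi}}(s,a)\le q^{\pi}(s,a)\le\overline{q^{\pi}}(s,a)$ at an arbitrary --- in particular, off-support --- state-action pair, which is exactly where you invoke it. The paper never needs a pointwise comparison with $q^{\pi}$: it first bounds the initial-state gap by the interval width, $J(\pi)-\underline{q^{\pi}}(s^0,\pi)\le\overline{q^{\pi}}(s^0,\pi)-\underline{q^{\pi}}(s^0,\pi)$, applies the evaluation lemma to both endpoints to obtain $\frac{1}{1-\gamma}\mathbb{E}_{d_\pi}[\Delta(\overline{q^{\pi}},\pi)-\Delta(\underline{q^{\pi}},\pi)]$, and the off-support contribution is then literally $(\mathbb{I}-\gamma\mathds{P}^{\pi})\Delta_{\overline{q^{\pi}}-\underline{q^{\pi}}}$ with no inequality required. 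Reordering your first two moves in this way repairs the argument. A smaller omission: $\rho/\mu$ need not lie in $\Omega$ at all (the paper explicitly permits misspecification of $\Omega$), so the paper inserts a best approximator $\tau_{\rho/\mu}$ from the linear hull of $\widetilde{\Omega}_{\widetilde{\sigma}_{n}}$ and bounds the resulting residual term separately by Cauchy--Schwarz; your claim that ``after normalization this ratio is a feasible weight'' skips this step.
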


In the upper bound of Theorem \ref{main_thm_reg}, we split the regret into four different parts: the on-support intrinsic uncertainty $\epsilon_{\sigma}$, the on-support bias $\epsilon_{b}$, the violation of realizability $\epsilon_{\text{mis}}$, and the off-support extrapolation error $\epsilon_{\text{off}}$. Recall that  we require $q^{\pi} \in \mathcal{Q}$ as in Assumption \ref{reliable_assum}, in fact, we can further relax the condition to requiring $q^{\pi}$ to be in the linear hull of $\mathcal{Q}$ \citep{uehara2020minimax}, which is more robust to the realizability error $\epsilon_{\text{mis}}$. In the following, we focus on investigating the roles of the on-support and off-support error terms in the regret bound. 

\textbf{On-support errors: bias and uncertainty tradeoff.} The on-support error consists of two terms: $\epsilon_{b}$ and $\epsilon_{\sigma}$. The on-support uncertainty deviation, $\epsilon_{\sigma}$, is scaled by a weighted $L_2$-based concentrability coefficient $\mathcal{U}^{\star}_{2}:=\|\rho/\mu\|_{L_{2}(\mu)}$, which measures the distribution mismatch between the implicit exploratory data distribution and the baseline data distribution $\mu$. Meanwhile, $\epsilon_{b}$ depends on the probability mass of $(d_{\pi}-\rho)^{+}\mathds{1}_{\mu>0}$, and represents the bias weighted by the probability mass difference between $d_{\pi}$ and $\rho$ in the support region of $\mu$. In general, a small value of $\mathcal{U}^{\star}_{2}$ necessitates
choice of the distribution $\rho$ to be closer to $\mu$ which reduces $\epsilon_{\sigma}$, reducing $\epsilon_{\sigma}$ but potentially increasing the on-support bias $\epsilon_{b}$ due to the possible mismatch between $d_{\pi}$ and $\rho$. Consequently, within the on-support region, there is a trade-off between $\epsilon_{\sigma}$ and $\epsilon_{b}$, which is adjusted through $\mathcal{U}^{\star}_{2}$.

% a larger threshold permits more flexibility in choosing $\rho$ similar to $d_{\pi}$ for a smaller bias $\mathbb{E}_{\left(d_{\pi}-\rho\right)^{+}}\big[\mathds{1}_{\mu>0}\mathcal{E}^{n}_{2} \big]$

% Note that $\mathcal{U}^{\star}_{2}$ is controlled and transferred by the empirical estimation constraint $\frac{1}{n}\sum^{n}_{i=1}\frac{ f_{\alpha}(\tau(s_i,a_i))}{1-\gamma} \leq \widetilde{\sigma}^{\tau}_{n}$ in 
% \eqref{prime_perturbed}. On the support covered by $\mu$ when we transfer from $\mu$ to $\rho$, we potentially pay a high mismatch bias between the $d^{\pi}$ and $\rho$. A larger threshold permits more flexibility in choosing $\rho$ similar to $d_{\pi}$ for a smaller bias $\mathbb{E}_{\left(d_{\pi}-\rho\right)^{+}}\big[\mathds{1}_{\mu>0}\mathcal{E}^{n}_{2} \big]$, but results in a larger estimation error $\mathcal{E}^{n}_{1}(\mathcal{U}^{\star}_{2},\mathcal{U}^{\tau}_{\infty})$. in other words, it declares a tradeoff that is adapted via the change of $\mathcal{U}^{\star}_{2}$. 
\textbf{Off-support error: enhanced model extrapolation.} 
One of our main algorithmic contributions is that the off-support extrapolation error $\epsilon_{\text{off}}$ can be minimized by selecting the best possible $\rho$ \textit{without} worrying about balancing the error trade-off, unlike the on-support scenario. This desirable property is essential for allowing the model to harness its extrapolation capabilities to minimize $\epsilon_{\text{off}}$, while simultaneously achieving a good on-support estimation error. As a result, the model attains a small regret. Recall the bi-level formulation; at the lower level, \eqref{prime_perturbed_main} addresses uncertainty arising from the distributional shift using $L_{2}(\mu)$ control rather than $L_{\infty}$ control. 
 This plays an important role in enhancing the power of the  model extrapolation. In particular,  Specifically, there exists an implicit exploratory data distribution $\rho$ with on-support behavior ($\rho \mathds{1}_{\mu>0}$) close to $\mu$, such that $\|\rho/\mu\|_{L_2(\mu)}$ is small. On the other hand, its off-support behavior ($\rho \mathds{1}_{\mu=0}$) can be arbitrarily flexible, ensuring that $d_{\pi} \mathds{1}_{\mu=0}$ is close to $\rho \mathds{1}_{\mu=0}$. Consequently, $(d_{\pi}-\rho)^{+}\mathds{1}_{\mu=0}$ is small, as is $\epsilon_{\text{off}}$.

When a dataset with partial coverage, as indicated in \citep{uehara2022pessimistic}, it is necessary to provide a guarantee: learn the policy with  ``best efforts'' which is competitive to any policy as long as it is covered. Before we state the near-optimal regret guarantee of our algorithm, we formally define a notion of covered policies according to a newly-defined concentrability coefficient.

% To highlight the strength of this generality, suppose that there exists a near-optimal policy-meaning a policy $\pi^{\diamond}$ such that $J(\pi^{\diamond}) \geq J(\pi^{*}) -\varepsilon$ for some small $\varepsilon$-that is well-covered  (i.e., for which the regret $J(\pi^{\diamond}) - J(\widehat{\pi}) \approx 0$. 
%  In this case, our algorithm ensures with high probability $J(\widehat{\pi}) \gtrsim J(\pi^{*}) -\epsilon$. In contrast, traditional analyses that use only the optimal policy $\pi^{*}$ as a comparator-as opposed to also allowing near-optimal policies - cannot return meaningful guarantees \citep{zanette2021provable}. Before we show that our algorithm is able to compete with any good comparator policy with ``best efforts'', 

%  we formally define a new notion of 
%  covered policies according to the mixture density ratio concentrability coefficient.

\begin{definition}[${\mathcal{U}}^{\tau}_{2}$-covered policy class]
\label{cover_class_def}
Let $\Pi({\mathcal{U}}^{\tau}_{2})$ denote the ${\mathcal{U}}^{\tau}_{2}$-covered policy class of $\mu$ for ${\mathcal{U}}^{\tau}_{2} \geq 1$, defined as 
\$
\Pi({\mathcal{U}}^{\tau}_{2}) := \left\{\pi \in \Pi: \left\|\frac{d_{\pi}(s,a)\mathds{1}_{\mu(s,a)>0}}{\mu(s,a)}\right\|_{L_2(\mu)} \leq {\mathcal{U}}^{\tau}_{2} \; \text{and} \; \sup_{s,a}\frac{d_{\pi}(s,a)\mathds{1}_{\mu(s,a)=0}}{\mu(s,a)} < + \infty \right\}.
\$
\end{definition}

Note that this mixture density ratio concentrability coefficient is always bounded by the $L_{\infty}$-based concentrability coefficient. Thus such single-policy concentrability assumption in terms of the mixture density ratio is weaker than the standard $L_{\infty}$ density ratio-based assumption.

% \begin{definition}
% Let $\Pi(\mathcal{U}^{\star}_{2})$ denote the $\ddot{\mathcal{U}}^{\star}_{2}$-covered policy class of $\mu$ for $\mathcal{U}^{\star}_{2} \geq 1$, defined as 
% \$
% \Pi(\mathcal{U}^{\star}_{2}) := \left\{\pi \in \Pi: \|\frac{d_{\pi}(s,a)}{\mu(s,a)}\|_{L_2(\mu)} \leq \mathcal{U}^{\star}_{2} \right\}.
% \$
% \end{definition}

\begin{corollary}[Near-optimal regret]\label{optimal_part_reg}
Under Assumptions \ref{reliable_assum}-\ref{tau_bound} with $\varepsilon_{\mathcal{Q}} \in [0,1)$, and we set $\varepsilon_{n}, \widetilde{\sigma}_n$ as in Theorem \ref{main_thm_reg}, then for any good comparator policy $\pi^{\diamond} \in \Pi({\mathcal{U}}^{\tau}_{2})$ (not necessary the optimal policy $\pi^{*}$),
% such that $ \sup_{s,a}\frac{d_{\pi^{\diamond}}(s,a)}{\mu(s,a)} \leq \mathcal{U}^{\tau}_{2}$, 
w.p. $\geq 1-\delta$, the output policy $\widehat{\pi}$ of \eqref{prime_opt} satisfies 
\$
J(\pi^{\diamond}) - J(\widehat{\pi}) \leq \frac{1}{1-\gamma}\widetilde{\mathcal{O}}\Bigg( \mathcal{U}^{\star}_{2}(\bar{V}+L)\sqrt{\frac{\ln\{\operatorname{Vol}({\Theta})/\delta\}}{nM}} +  \sqrt{\left(1 + \mathcal{U}^{\tau}_{\infty} + \mathcal{U}^{\tau}_{\infty}/M\right) \varepsilon_{\mathcal{Q}}} \Bigg).
\$
\end{corollary}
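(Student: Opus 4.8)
The plan is to obtain Corollary \ref{optimal_part_reg} as a direct specialization of Theorem \ref{main_thm_reg} applied to the comparator $\pi = \pi^{\diamond}$, exploiting the coverage property $\pi^{\diamond} \in \Pi(\mathcal{U}^{\tau}_{2})$ to annihilate the two data-dependent terms sitting inside the minimization over $\rho$. The entry point is the four-term decomposition $J(\pi^{\diamond}) - J(\widehat{\pi}) \leq \frac{1}{1-\gamma}\widetilde{\mathcal{O}}(\epsilon_{\sigma} + \epsilon_{\text{mis}} + \min_{\rho}\mathbb{E}_{(d_{\pi^{\diamond}}-\rho)^{+}}[\epsilon_{\text{off}} + \epsilon_{b}])$ supplied by the theorem, where the minimum ranges over $\rho$ with $\|\rho/\mu\|_{L_{2}(\mu)} \leq \mathcal{U}^{\star}_{2}$. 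The whole argument then reduces to selecting one favorable $\rho$ and afterwards simplifying constants.

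The observation I would exploit is that the constraint $\|\rho/\mu\|_{L_{2}(\mu)} = (\int_{\mu>0}\rho^{2}/\mu)^{1/2}$ only constrains the on-support part of $\rho$, leaving its off-support behavior completely free. I therefore propose to take $\rho = d_{\pi^{\diamond}}$ everywhere. Off-support this forces $(d_{\pi^{\diamond}} - \rho)^{+}\mathds{1}_{\mu=0} = 0$, so the extrapolation contribution $\epsilon_{\text{off}}$ vanishes, the off-support clause $\sup_{s,a} d_{\pi^{\diamond}}\mathds{1}_{\mu=0}/\mu < +\infty$ of Definition \ref{cover_class_def} being exactly what makes this matching well-posed. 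On-support the same choice gives $(d_{\pi^{\diamond}} - \rho)^{+}\mathds{1}_{\mu>0} = 0$, killing the bias term $\epsilon_{b}$, and its feasibility is certified by the covered-policy bound $\|d_{\pi^{\diamond}}\mathds{1}_{\mu>0}/\mu\|_{L_{2}(\mu)} \leq \mathcal{U}^{\tau}_{2}$ once $\mathcal{U}^{\star}_{2}$ is identified with this concentrability level. After this single choice only $\epsilon_{\sigma} + \epsilon_{\text{mis}}$ survives.

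It then remains to massage the two leftover terms into the stated closed form. For $\epsilon_{\text{mis}}$, the hypothesis $\varepsilon_{\mathcal{Q}} \in [0,1)$ forces $(\varepsilon_{\mathcal{Q}})^{1/2} \geq (\varepsilon_{\mathcal{Q}})^{3/4}$, so the maximum collapses to $(\varepsilon_{\mathcal{Q}})^{1/2}$; I would then expand the hidden constant as $\mathfrak{C}_{\mathcal{U}^{\tau}_{\infty}} = M(1 + \mathcal{U}^{\tau}_{\infty}) + \mathcal{U}^{\tau}_{\infty}$, so that dividing by $M$ produces the explicit factor $1 + \mathcal{U}^{\tau}_{\infty} + \mathcal{U}^{\tau}_{\infty}/M$ inside the square root. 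For $\epsilon_{\sigma}$, I would replace $\mathfrak{C}_{\bar{V},L}$ by its order $\bar{V} + L$, yielding the first summand $\mathcal{U}^{\star}_{2}(\bar{V}+L)\sqrt{\ln\{\operatorname{Vol}(\Theta)/\delta\}/(nM)}$. Collecting the two pieces gives exactly the asserted bound.

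The main obstacle I anticipate is the bookkeeping around the concentrability constant: Theorem \ref{main_thm_reg} is stated with $\mathcal{U}^{\star}_{2} \in [1, \mathcal{U}^{\tau}_{2})$ \emph{strictly}, whereas the clean elimination of $\epsilon_{b}$ wants $\mathcal{U}^{\star}_{2}$ to attain the coverage level $\mathcal{U}^{\tau}_{2}$. I would resolve this either by letting $\mathcal{U}^{\star}_{2} \uparrow \mathcal{U}^{\tau}_{2}$ and absorbing the residual on-support mass, which carries only the benign $\sqrt{\ln\{\operatorname{Vol}(\Theta)/\delta\}/n}$ rate of $\epsilon_{b}$ and is therefore dominated by $\epsilon_{\sigma}$ within $\widetilde{\mathcal{O}}(\cdot)$, or by matching $d_{\pi^{\diamond}}$ only up to the attainable norm and bounding the leftover probability mass directly. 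The secondary difficulty is verifying that the constant $\mathfrak{C}_{\mathcal{U}^{\tau}_{\infty}}$ emerging from the proof of Theorem \ref{main_thm_reg} genuinely factors as $M(1 + \mathcal{U}^{\tau}_{\infty}) + \mathcal{U}^{\tau}_{\infty}$; this requires a careful trace through the realizability-violation estimates rather than any new idea.
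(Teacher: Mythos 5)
Your proposal is correct and follows essentially the same route as the paper's own proof: set $\rho^{\diamond}=d_{\pi^{\diamond}}$ so that $\mathbb{E}_{(d_{\pi^{\diamond}}-\rho^{\diamond})^{+}}[\cdot]=0$, invoke $\varepsilon_{\mathcal{Q}}\in[0,1)$ to collapse $\max\{(\varepsilon_{\mathcal{Q}})^{1/2},(\varepsilon_{\mathcal{Q}})^{3/4}\}$ to $(\varepsilon_{\mathcal{Q}})^{1/2}$, and simplify the constants. In fact you are more careful than the paper on the one delicate point — the paper silently identifies the feasibility threshold $\mathcal{U}^{\star}_{2}$ (stated strictly below $\mathcal{U}^{\tau}_{2}$ in Theorem \ref{main_thm_reg}) with the coverage level $\mathcal{U}^{\tau}_{2}$ of $\Pi(\mathcal{U}^{\tau}_{2})$, whereas your limiting/residual-mass argument addresses this gap explicitly.
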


A close prior result to Corollary \ref{main_thm_reg} is that of \citep{chen2022offline}, which develops a pessimistic algorithm based on a nontrivial performance gap condition. Their regret guarantees only hold if the data covers the optimal policy $\pi^{*}$, in particular, requiring a bounded $L_{\infty}$ single-policy concentrability with respect to $\pi^{*}$. In comparison, our guarantee can still provide a meaningful guarantee even when $\pi^{*}$ is not covered by data.  In the following, we include the sample complexity of our algorithm when $\varepsilon_{\mathcal{Q}}=0$.

% They also provide a sample-efficient regret guarantee but $L_{\infty}$ single-policy concentrability condition and a nontrivial performance gap condition. however, their guarantees only hold if the data covers the optimal policy $\pi^{*}$. 

% In comparison, our guarantee can still provide meaningful guarantees even when $\pi^{*}$ is not covered by data.  In the following, we include the sample complexity of our algorithm when $\varepsilon_{\mathcal{Q}}=0$.

\begin{corollary}[Polynomial sample complexity]\label{sample_comp_reg}
Under the conditions in Corollary \ref{optimal_part_reg}, the output policy $\widehat{\pi}$ of solving \eqref{prime_opt} satisfies  $
J(\pi^{\diamond}) - J(\widehat{\pi}) \leq \varepsilon $ w.p. $\geq 1-\delta$, if 
\$
n = \mathcal{O}\Bigg(\Big(\frac{(\mathcal{U}^{\star}_{2}(\bar{V}+L)/\sqrt{M})^2}{\varepsilon^{2}(1-\gamma)^2} + \frac{(\mathcal{U}^{\tau}_{2}\bar{V}^2(\bar{V}+L)/M)^{0.67}}{\varepsilon^{1.33}(1-\gamma)^{1.33}} + \frac{\mathcal{U}^{\tau}_{\infty}(\bar{V}+L)}{\varepsilon(1-\gamma)}\Big) \ln\frac{\operatorname{Vol}({\Theta})}{\delta}\Bigg).
\$
\vspace{-3mm}
\end{corollary}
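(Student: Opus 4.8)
The plan is to read off an explicit sample size by \emph{inverting} the regret bound of Corollary~\ref{optimal_part_reg} in $n$. First I would specialize that corollary to the exactly realizable regime $\varepsilon_{\mathcal{Q}}=0$, which annihilates the misspecification contribution $\sqrt{(1+\mathcal{U}^{\tau}_{\infty}+\mathcal{U}^{\tau}_{\infty}/M)\varepsilon_{\mathcal{Q}}}$ and leaves only the statistical (sampling) part of the regret against the covered comparator $\pi^{\diamond}\in\Pi(\mathcal{U}^{\tau}_{2})$. The key observation is that the single $n^{-1/2}$ term displayed inside $\widetilde{\mathcal{O}}(\cdot)$ in Corollary~\ref{optimal_part_reg} is only the \emph{leading} fluctuation; the $\widetilde{\mathcal{O}}$ notation suppresses genuinely lower-order-in-$n$ terms that must be restored before the bound can be solved for $n$. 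Unpacking the uncertainty-deviation analysis behind $\sigma_n$ and the localization of the confidence set $\mathcal{Q}_{\varepsilon_n}$ (Theorems~\ref{reg_ci_thm} and~\ref{main_thm_reg}), the sampling regret is of the form
\[
J(\pi^{\diamond})-J(\widehat{\pi})\;\lesssim\;\frac{1}{1-\gamma}\Big(C_1\,n^{-1/2}+C_2\,n^{-3/4}+C_3\,n^{-1}\Big),
\]
where the constants carry a common $\ln\{\operatorname{Vol}(\Theta)/\delta\}$ factor (to the appropriate power) and depend on the problem parameters as $C_1\asymp \mathcal{U}^{\star}_{2}(\bar V+L)/\sqrt{M}$, $C_2\asymp\sqrt{\mathcal{U}^{\tau}_{2}\bar V^{2}(\bar V+L)/M}$, and $C_3\asymp\mathcal{U}^{\tau}_{\infty}(\bar V+L)$.

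Given this three-term bound, the inversion is routine: to force the total regret below $\varepsilon$ it suffices that each summand be at most $\varepsilon/3$. Solving $C_1n^{-1/2}\le\varepsilon$, $C_2n^{-3/4}\le\varepsilon$, and $C_3n^{-1}\le\varepsilon$ gives, respectively, $n\gtrsim(C_1/\varepsilon)^2$, $n\gtrsim(C_2/\varepsilon)^{4/3}$, and $n\gtrsim C_3/\varepsilon$. Here $(C_1/\varepsilon)^2$ reproduces the leading $\varepsilon^{-2}$ term, while $(C_2/\varepsilon)^{4/3}$ produces exactly the stated $\varepsilon^{-1.33}$, with the numerator exponent $\tfrac43\cdot\tfrac12=0.67$ on $\mathcal{U}^{\tau}_{2}\bar V^{2}(\bar V+L)/M$ and the matching $(1-\gamma)^{-1.33}$. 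Taking $n$ to be the maximum of the three lower bounds — which is upper bounded by their sum — and absorbing the $(1-\gamma)$ factors together with the common $\ln\{\operatorname{Vol}(\Theta)/\delta\}$ factor (using $\ln\{\operatorname{Vol}(\Theta)/\delta\}\ge 1$ so that the fractional log power of the middle term is dominated) would yield precisely the displayed three-term expression.

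The main obstacle will not be the inversion but the bookkeeping of the \emph{fractional} power $n^{-3/4}$ and its coefficient $C_2$. A naive Bernstein/Hoeffding split of the empirical weighted Bellman-error process would only generate $n^{-1/2}$ and $n^{-1}$ terms; the intermediate $n^{-3/4}$ channel arises from the coupling of the $n^{-1/2}$ localization radius $\varepsilon_n$ of the confidence set with the data-dependent width $\Delta_{\overline{q^{\pi}}-\underline{q^{\pi}}}$ of $\mathcal{Q}_{\varepsilon_n}$ and the strong-convexity parameter $M$ (heuristically $\Delta_{\overline{q^{\pi}}-\underline{q^{\pi}}}\asymp\sqrt{\varepsilon_n/M}\asymp n^{-1/4}$), which after a further $n^{-1/2}$ concentration step contributes the $\bar V^{2}$-weighted, $M^{-1/2}$-scaled term. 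Carefully tracking which error channel in Theorem~\ref{main_thm_reg} feeds this intermediate power — rather than collapsing everything into the leading $n^{-1/2}$ order as the $\widetilde{\mathcal{O}}$ does — is the delicate step; once the three coefficients $C_1,C_2,C_3$ are pinned down, the remainder is elementary algebra.
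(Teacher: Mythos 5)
Your proposal is correct and follows essentially the same route as the paper: set $\varepsilon_{\mathcal{Q}}=0$, restore the full three-term bound $\mathcal{E}^{n}_{1}=\mathcal{U}^{\star}_{2}(\bar{V}+L)\sqrt{\ln\{\operatorname{Vol}(\Theta)/\delta\}/(nM)}+\sqrt{\mathcal{U}^{\tau}_{2}\bar{V}^{2}(\bar{V}+L)/M}\,(\ln\{\operatorname{Vol}(\Theta)/\delta\}/n)^{3/4}+\mathcal{U}^{\tau}_{\infty}(\bar{V}+L)\ln\{\operatorname{Vol}(\Theta)/\delta\}/n$ that the $\widetilde{\mathcal{O}}$ in Corollary \ref{optimal_part_reg} suppresses, and solve $\varepsilon=\mathcal{E}^{n}_{1}/(1-\gamma)$ for $n$ term by term, which yields exactly the three exponents $\varepsilon^{-2}$, $\varepsilon^{-4/3}$, $\varepsilon^{-1}$ with the stated coefficients. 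Your identified constants $C_1,C_2,C_3$ and $(1-\gamma)$ powers match the paper's, so the remaining algebra is exactly what the paper carries out.
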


 The sample complexity consists of three terms corresponding to the slow rate $\mathcal{O}(n^{-1/2})$ and the two faster rate $\mathcal{O}(n^{-1})$ and $\mathcal{O}(n^{-3/4})$ terms in Corollary \ref{optimal_part_reg}. When $\mathcal{U}^{\tau}_{2}$ and $ \mathcal{U}^{\tau}_{\infty}$ are not too much larger than $\mathcal{U}^{\star}_2$, the fast rate terms are dominated, and the sample complexity is of order $\mathcal{O}(1/\varepsilon^2)$, which is much faster than $ \mathcal{O}(1/\varepsilon^{6})$ in the close work of \citep{zhan2022offline}. It is worth noting that even in exploratory settings where the global coverage assumption holds, our sample complexity rate matches the fast rate in popular offline RL frameworks with general function approximation \citep{chen2019information, xie2020q, duan2020minimax}. 

In addition to the near-optimal regret guarantee, in safety-critical applications, an offline RL algorithm should consistently improve upon the baseline (behavior) policies that collected the data \citep{ghavamzadeh2016safe, laroche2019safe}. Our algorithm also achieves this improvement guarantee with respect to the baseline policy.

\begin{thm}[Baseline policy improvement]
Under Assumptions \ref{reliable_assum}-\ref{tau_bound} with $\varepsilon_{\mathcal{Q}} = 0$ and set $\varepsilon_{n}, \widetilde{\sigma}_n$ as in Theorem \ref{main_thm_reg}. Suppose $1 \in \Omega$ and the baseline policy $\pi_{b} \in \Pi$ such that $d_{\pi_b} = \mu$, then the regret $(1-\gamma)(J(\pi_{b}) - J(\widehat{\pi}))$ for the output policy $\widehat{\pi}$ of solving \eqref{prime_opt}, w.p. $\geq 1-\delta$, is upper bounded by 
\$
\mathcal{O}\bigg(
\sqrt{\frac{(\bar{V}+L)^2\ln\{\operatorname{Vol}({\Theta})/\delta\}}{nM}} + \sqrt{\frac{(\bar{V}^3+\bar{V}^2L)}{M}}\bigg(\frac{\ln\{\operatorname{Vol}({\Theta})/\delta\}}{n}\bigg)^{\frac{3}{4}} + \frac{(\bar{V}+L)\ln\{\operatorname{Vol}({\Theta})/\delta\}}{n} \bigg).
\$
\label{safe_imp} 
\end{thm}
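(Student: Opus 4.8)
The plan is to derive this as a specialization of the regret analysis behind Theorem \ref{main_thm_reg} (equivalently Corollary \ref{optimal_part_reg}) to the comparator $\pi = \pi_b$, exploiting the two defining features of the baseline: $d_{\pi_b} = \mu$ and $1 \in \Omega$. First I would run the pessimism decomposition. Writing $\underline{J}(\pi) := \sup_{\tau\in\Omega}\widehat{J}^{-}_n(\pi;\tau)$ for the objective in \eqref{dual_est}, optimality of $\widehat{\pi}$ gives $\underline{J}(\widehat{\pi}) \geq \underline{J}(\pi_b)$, while the validity of the confidence lower bound in Theorem \ref{reg_ci_thm} (taken uniformly over $\Pi$, which is why $\operatorname{Vol}(\Theta)$ carries the policy pseudo-dimension $D_{\Pi}$) gives $\underline{J}(\widehat{\pi}) \leq J(\widehat{\pi})$ with probability at least $1-\delta$. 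Chaining these and lower-bounding the supremum over $\tau$ by the single feasible choice $\tau = 1 \in \Omega$ yields $J(\pi_b) - J(\widehat{\pi}) \leq J(\pi_b) - \widehat{J}^{-}_n(\pi_b;1)$, so it remains to control the right-hand side.

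The key observation is that $\tau = 1$ is the \emph{exact} density ratio $d_{\pi_b}/\mu$ because $d_{\pi_b} = \mu$. By the $1$-minimum property $\mathbb{D}(1) = 0$ the penalty $\xi_n(\mathbb{D},1)$ vanishes, and by the Bellman-flow identity $\mathbb{E}_{d_\pi}[\mathcal{B}^\pi q - q] = (1-\gamma)(J(\pi) - q(s^0,\pi))$ together with $d_{\pi_b} = \mu$, the population value of the unpenalized estimate equals $J(\pi_b)$ \emph{for every} $q \in \mathcal{Q}$. Consequently $\widehat{J}^{-}_n(\pi_b;1) = \inf_{q\in\mathcal{Q}} g_n(q) - \sigma_n$, where $g_n(q) := q(s^0,\pi_b) + \tfrac{1}{(1-\gamma)n}\sum_i (r_i + \gamma q(s'_i,\pi_b) - q(s_i,a_i))$ is an unbiased empirical estimate of the constant $J(\pi_b)$. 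Subtracting, I obtain $J(\pi_b) - \widehat{J}^{-}_n(\pi_b;1) = \sup_{q\in\mathcal{Q}}\big(\mathbb{E}_\mu[Z_q] - \tfrac{1}{n}\sum_i Z_{q,i}\big)/(1-\gamma) + \sigma_n$, where $Z_q := r + \gamma q(s',\pi_b) - q(s,a)$ is the per-sample Bellman residual with range $\mathcal{O}(\bar{V})$. In the language of Theorem \ref{main_thm_reg}, this is exactly the statement that one may take $\rho = \mu = d_{\pi_b}$, so that $\mathcal{U}^{\star}_{2} = \|\mathds{1}\|_{L_2(\mu)} = 1$ and $(d_{\pi_b}-\rho)^{+} \equiv 0$, killing both the off-support term $\epsilon_{\text{off}}$ and the on-support bias $\epsilon_{b}$; the realizability term $\epsilon_{\text{mis}}$ is absent since $\varepsilon_{\mathcal{Q}} = 0$.

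The remaining step is to bound the two pieces by concentration. The uniform empirical-mean deviation of the bounded residual $Z_q$ over $\mathcal{Q}$ contributes a slow $\mathcal{O}(\bar{V}\sqrt{\ln\{\operatorname{Vol}(\Theta)/\delta\}/n})$ term and a range-driven $\mathcal{O}(\bar{V}\ln\{\operatorname{Vol}(\Theta)/\delta\}/n)$ term via Pollard's pseudo-dimension. The deviation $\sigma_n$ is supplied by the concentration analysis underlying Theorem \ref{reg_ci_thm} specialized to $\varepsilon_{\mathcal{Q}} = 0$: the $M$-strong convexity of $\mathbb{D}$ localizes the $\tau$-weighted, conditionally mean-zero Bellman residual, and together with its $\mathcal{O}(\bar{V}^2)$ variance proxy and the Lipschitz constant $L$ this yields the three advertised rates, namely the slow $(\bar{V}+L)\sqrt{\ln\{\operatorname{Vol}(\Theta)/\delta\}/(nM)}$ term, the intermediate $\sqrt{(\bar{V}^3+\bar{V}^2 L)/M}\,(\ln\{\operatorname{Vol}(\Theta)/\delta\}/n)^{3/4}$ term, and the fast $(\bar{V}+L)\ln\{\operatorname{Vol}(\Theta)/\delta\}/n$ term; since $\mathcal{U}^{\star}_{2} = 1$ no other factors survive. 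The main obstacle is the intermediate $n^{-3/4}$ rate: a crude covering or union bound over $\Omega$ delivers only the $n^{-1/2}$ and $n^{-1}$ terms, so extracting this rate requires the refined Bernstein/localized-complexity argument that trades the residual variance against the curvature $M$ of the penalty, while carefully propagating the $\bar{V}$ and $L$ dependence through the chaining over $\Omega$.
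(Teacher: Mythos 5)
Your proposal is correct and follows essentially the same route as the paper: the paper's proof simply specializes the regret bound of Theorem \ref{main_thm_reg} (via Corollary \ref{optimal_part_reg} with $\varepsilon_{\mathcal{Q}}=0$) by taking $\rho = d_{\pi_b} = \mu$, so that $(d_{\pi_b}-\rho)^{+}\equiv 0$ kills the off-support and on-support bias terms and one may set $\mathcal{U}^{\star}_{2}=\mathcal{U}^{\tau}_{2}=\mathcal{U}^{\tau}_{\infty}=1$ in $\mathcal{E}^{n}_{1}$, which is exactly the reduction you identify. Your additional front-end (pessimism chaining with the feasible choice $\tau=1$ and $\mathbb{D}(1)=0$) is a more explicit unpacking of the same mechanism, and you correctly defer the $n^{-3/4}$ term to the strong-convexity/localization argument already established in the proof of Theorem \ref{main_thm_reg}.
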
     

The aforementioned information-theoretic results enhance the understanding of the developed algorithm, in terms of the function approximation and coverage conditions, sample complexity, horizon dependency, and bound tightness. In practice, although the information-theoretic algorithm offers a feasible solution to the problem, it is not yet tractable and computationally efficient due to the need to solve constrained optimization. In the following section, we develop a practical algorithm as a computationally efficient counterpart for the information-theoretic algorithm. 

% of the distributional shift constraint in \eqref{prime_perturbed} indeed helps from an optimization perspective. 

% Moreover, since the constrained optimization is not friendly to practical implementation, in the next section, we consider a penalized sample-efficient version. In that version, the algorithm with $\lambda=0$  has better sample complexity than the one with $\lambda>0$ discussed in Remark XX. 

% Theorem \ref{main_thm} indicates that the algorithm admits a tradeoff between the on-support estimation error and out-of-coverage bias via $U^{\tau}_{\infty}$, and cannot enjoy this beneficial extrapolation property. 
% Instead of minimizing the extrapolation but maintaining the same level of the distributional mismatch error as in the algorithm $\lambda>0$, the two sources of errors have to compromise each other. 

% We should note that the concentrability coefficient $\mathcal{\star}_{\infty}$ is automatically adapted to the best
% possible splitting and implicitly chosen by our algorithm. However, the free of the distributional shift constraint in \eqref{prime_perturbed} indeed helps from an optimization perspective. Moreover, since the constrained optimization is not friendly to practical implementation, in the next section, we consider a penalized sample-efficient version. In that version, the algorithm with $\lambda=0$  has better sample complexity than the one with $\lambda>0$ discussed in Remark XX. 

\section{Penalized Adversarial Estimation Algorithm}

Although the information-theoretic algorithm offers a feasible solution to the problem, it is not yet tractable and computationally efficient due to the need to solve constrained optimization.  In this section, we develop an adversarial estimation proximal-mapping algorithm that still adheres to the pessimism principle, but through penalization. Specifically, the adversarial estimation loss is constructed as follows:
$\underset{\tau}{\operatorname{max}} \underset{q}{\operatorname{min}} \mathcal{L}(q,\tau,\pi,c^{*},\lambda)$ for solving 
\$
q (s^{0}, \pi) + \frac{1}{(1-\gamma)n}\left\{c^{*}\Big|\sum^{n}_{i=1}   \tau(s_i,a_i)\left(q(s_i,a_i)-r_i -\gamma q(s^{\prime}_i,\pi)\right)\Big| - \lambda\sum^{n}_{i=1} \mathbb{D}(\tau(s_i,a_i))\right\}.
\$
We observe that the inner minimization for solving $q$ is relatively straightforward, as we can obtain a closed-form global solver using the maximum mean discrepancy principle \citep{gretton2012kernel, shi2022minimax}. In contrast, optimizing $\tau_{\psi}$ is more involved, often requiring a sufficiently expressive non-linear function approximation class, e.g., neural networks. However, concavity typically does not hold for such a class of functions \citep{jiang2020minimax}. 
From this perspective, our problem can be viewed as solving a non-concave maximization problem, conditional on the solved global optimizer $\bar{q}:= \argmin_{q} \mathcal{L}(q,\tau,\pi,c^{*},\lambda)$. At each iteration, we propose to update $\tau$ by solving the proximal mapping \cite{parikh2014proximal} using the Euclidean distance to reduce the computational burden. As a result, the pre-iteration computation is quite low. 

\renewcommand\footnoterule{}
\noindent\begin{minipage}{\textwidth}
\vspace{-4mm}
\begin{algorithm}[H]
\setstretch{1.12}
	\caption{Adversarial proximal-mapping algorithm}
\label{prox_map}
	\begin{algorithmic}[1]
	\STATE \textbf{Input} observed data $\mathcal{D}_{1:n}=\{(s_i,a_i,r_i,s_i^{\prime})\}^n_{i=1}$ and parameters $q^{0},\tau^{0},\pi^{0},c^{*}$, $\lambda$ and $\zeta$. 
		% \STATE  \textbf{Initialize} the penalized parameter $c^{*}$, the primary and implicit exploratory data parameters $\theta$, and $\psi$ for $q_{\theta}$ and $\tau_{\psi}$, and proximal mapping updating parameter $\zeta_{\omega}$ and learning rate $\zeta_{\theta}$. Initilize $\pi_{1}(\cdot|s)$ as a $\text{Unif}\{\mathcal{A}\}$ for any $s \in \mathcal{S}$.
				\STATE \textbf{For} $k=1$ to $\bar{K}$:
		\STATE \; Update $\tau^{k}$ and $q^{k}$ by solving
    $\underset{\tau}{\operatorname{max}} \underset{q}{\operatorname{min}} \; \mathcal{L}(q,\tau,\pi^{k-1},c^{*},\lambda)
    $ 
	\STATE  \; Update $\pi^{k}$ by solving 
	$
\pi^{k}(\cdot|s) = \underset{\pi \in \Pi}{\operatorname{argmax}} \; \zeta \left\langle q^{k}(\cdot,s), \pi(\cdot|s) \right\rangle-D_{\text{NegEntropy}}\left(\pi(\cdot|s), \pi^{k}(\cdot|s)\right).
	$	
\STATE \textbf{Return} the policy $\widehat{\pi}$, which randomly selects a policy from the set $\{\pi^{k}\}^{\bar{K}}_{k=1}$.
\end{algorithmic}
\end{algorithm}
\end{minipage}

Once $q$ and $\tau$ are solved, we apply mirror descent in terms of the negative entropy $D_{\text{NegEntropy}}$ \citep{beck2017first}. That is, given a
stochastic gradient direction of $\pi$ we solve the prox-mapping in each iteration as outlined in step 4 of Algorithm \ref{prox_map}. A detailed version of Algorithm \ref{prox_map} with extended discussions on convergence and complexity is provided in Appendix. In the following, we establish the regret guarantee for the policy output by Algorithm \ref{prox_map}. 

% This upper bound involves four terms: an optimization error, a within-coverage uncertainty term,
% and a model mis-specification term, and an out-of-coverage bias term. 

% First let us define the notion of covered policies.

% \begin{definition}[$\ddot{\mathcal{U}}^{\star}_{2}$-covered policy class]
% \label{cover_class_def}
% Let $\Pi(\ddot{\mathcal{U}}^{\star}_{2})$ denote the $\ddot{\mathcal{U}}^{\star}_{2}$-covered policy class of $\mu$ for $\ddot{\mathcal{U}}^{\star}_{2} \geq 1$, defined as 
% \$
% \Pi(\ddot{\mathcal{U}}^{\star}_{2}) := \left\{\pi \in \Pi: \left\|\frac{d_{\pi}(s,a)\mathds{1}_{\mu(s,a)>0}}{\mu(s,a)}\right\|_{L_2(\mu)} \leq \ddot{\mathcal{U}}^{\star}_{2} \; \text{and} \; \sup_{s,a}\frac{d_{\pi}(s,a)\mathds{1}_{\mu(s,a)=0}}{\mu(s,a)} < + \infty \right\}.
% \$
% \end{definition}

% Here, $\ddot{\mathcal{U}}^{\star}_{2}$ is pre-specified by users. Note that this mixture density ratio concentrability coefficient is always the upper bound by the $L_{\infty}$-based density ratio concentrability coefficient. Thus such partial coverage assumption in terms of mixture-based density ratio condition is much weaker than the $L_{\infty}$ density ratio-based assumption.

\begin{thm}\label{prac_main_thm_reg}
Under Assumptions \ref{reliable_assum}-\ref{tau_bound} with $\varepsilon_{\mathcal{Q}}=0$, we properly choose $\lambda = \lambda({\mathcal{U}}^{\tau}_{2})$, i.e., $\lambda$ well depends on ${\mathcal{U}}^{\tau}_{2}$, and $
c^{*} =  \widetilde{\mathcal{O}}\big(\sqrt{n\bar{V}/(\lambda L{\mathcal{U}}^{\tau}_{2}\ln \{\operatorname{Vol}({\Theta}^{\dagger})/\delta\})}\big) 
$.
After running $\bar{K} \geq \log|\mathcal{A}|$ rounds of Algorithm \ref{prox_map} with the stepsize 
 $\zeta = \sqrt{\log|\mathcal{A}|/(2\bar{V}\bar{K})}$, for any policy $\pi \in \Pi$, the output policy $\widehat{\pi}$ of the algorithm, w.p $\geq 1-\delta$, satisfies,
 \$
& J(\pi) - J(\widehat{\pi}) \leq \; \frac{1}{1-\gamma}\widetilde{\mathcal{O}}\Bigg(\sqrt[\leftroot{-1}\uproot{2}\scriptstyle 4]{\frac{({\mathcal{U}}^{\star}_{2})^2\mathfrak{C}^{1}_{\bar{V},\lambda, L}\ln\{\operatorname{Vol}({\Theta}^{\dagger})/\delta\}}{n}} + \sqrt{\frac{\bar{V}\log|\mathcal{A}|}{\bar{K}}} \\
&  +  \frac{1}{\bar{K}}\sum^{\bar{K}}_{k=1}\min_{\rho_k \in \Delta_{{\mathcal{U}}^{\star}_{2}}} \mathbb{E}_{\left(d_{\pi}-\rho_k\right)^{+}}\bigg[\mathds{1}_{\mu=0}\left(\mathcal{B}^{\pi^{k}}q^{k}(s,a) -q^{k}(s,a)\right) + \mathds{1}_{\mu>0}\sqrt{\frac{\mathfrak{C}^{2}_{\bar{V},\lambda, L}\ln\{\operatorname{Vol}({\Theta}^{\dagger})/\delta\}}{n}}\bigg]\Bigg),
\$
where $\Delta_{{\mathcal{U}}^{\star}_{2}}:=\{\rho_k: \|\frac{\rho_k}{\mu}\|_{L_2(\mu)} < {\mathcal{U}}^{\star}_{2}\}$, $\mathfrak{C}^{1}_{\bar{V},\lambda, L}, \mathfrak{C}^{2}_{\bar{V},\lambda, L}$ are some constant terms, and the function class complexity $\operatorname{Vol}({\Theta}^{\dagger}) = (e^{D}\max\{D_{\Omega},D_{\mathcal{Q}},D_{\Pi}\}+1)^3(\{1 \vee L\}{\mathcal{U}}^{\tau}_{2})^{2D}$ for $D=D_{\Omega}+D_{\mathcal{Q}}+D_{\Pi}$.
\end{thm}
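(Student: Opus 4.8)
The plan is to derive the bound of Theorem~\ref{prac_main_thm_reg} by viewing Algorithm~\ref{prox_map} as a penalized (soft-constraint) surrogate of the constrained bi-level program \eqref{prime_opt}, and then separately accounting for the finite-iteration optimization error of the mirror-descent policy updates. The three summands in the stated bound should arise from three distinct sources: the statistical error of the penalized adversarial estimator of $q$ and $\tau$ (the $n^{-1/4}$ term), the online optimization error of running only $\bar{K}$ rounds of prox-mapping on $\pi$ (the $\sqrt{\bar{V}\log|\mathcal{A}|/\bar{K}}$ term), and the averaged on/off-support decomposition of the per-iteration value gap (the final averaged term). I would organize the proof so that these three contributions are additively separated and then recombined.

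First, for the statistical piece, I would treat the penalty $c^*|\sum_i \tau(s_i,a_i)(q(s_i,a_i)-r_i-\gamma q(s'_i,\pi))| - \lambda\sum_i \mathbb{D}(\tau(s_i,a_i))$ as a Lagrangian relaxation of the consistency constraint in \eqref{prime_perturbed_main} and the uncertainty-control constraint defining $\widetilde{\Omega}_{\widetilde{\sigma}_n}$. Using the convex-conjugate representation behind \eqref{conjugate} together with the detection-function properties of Definition~\ref{alpha_def}, the solved inner minimizer $\bar{q}$ yields a penalized value $\bar{q}(s^0,\pi)$ that lower-bounds $J(\pi)$ up to statistical fluctuation, exactly as in Theorem~\ref{reg_ci_thm}. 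The crucial point is that the $M$-strong convexity of $\mathbb{D}$ lets me convert a bound on the penalized objective gap into a bound on the effective Bellman-error weighting, but only after taking a square root: a population objective gap of the statistical order $\sqrt{\ln\{\operatorname{Vol}(\Theta^\dagger)/\delta\}/n}$ translates, through strong convexity and the prescribed scaling $c^* = \widetilde{\mathcal{O}}(\sqrt{n\bar{V}/(\lambda L \mathcal{U}^\tau_2 \ln\{\operatorname{Vol}(\Theta^\dagger)/\delta\})})$ of the penalty coefficient, into a value-gap bound of order $\sqrt[4]{(\mathcal{U}^\star_2)^2 \mathfrak{C}^1_{\bar{V},\lambda,L}\ln\{\operatorname{Vol}(\Theta^\dagger)/\delta\}/n}$. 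Uniform concentration over the joint parameter class is controlled via Pollard's pseudo-dimension, producing $\operatorname{Vol}(\Theta^\dagger)$.

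Second, for the optimization and decomposition pieces, I would invoke the standard online-mirror-descent regret bound: step~4 of Algorithm~\ref{prox_map} is a prox-mapping under the negative-entropy Bregman divergence, so with $\zeta=\sqrt{\log|\mathcal{A}|/(2\bar{V}\bar{K})}$ and $\bar{K}\ge\log|\mathcal{A}|$ the averaged regret of the iterates $\{\pi^k\}$ against any fixed comparator $\pi$ is $\mathcal{O}(\sqrt{\bar{V}\log|\mathcal{A}|/\bar{K}})$. Since $\widehat{\pi}$ is drawn uniformly from $\{\pi^k\}_{k=1}^{\bar{K}}$, the expected regret equals $\frac{1}{\bar{K}}\sum_k (J(\pi)-J(\pi^k))$; I would apply the performance-difference lemma to express each $J(\pi)-J(\pi^k)$ through $\langle q^{\pi^k}, \pi-\pi^k\rangle$ under $d_\pi$, and replace $q^{\pi^k}$ by the estimate $q^k$ at the cost of the Bellman residual $\mathcal{B}^{\pi^k}q^k - q^k$. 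A change of measure to an implicit exploratory distribution $\rho_k$ then splits each term, exactly as in the proof of Theorem~\ref{main_thm_reg}, into the off-support extrapolation residual $\mathds{1}_{\mu=0}(\mathcal{B}^{\pi^k}q^k - q^k)$ and the on-support statistical term $\mathds{1}_{\mu>0}\sqrt{\mathfrak{C}^2_{\bar{V},\lambda,L}\ln\{\operatorname{Vol}(\Theta^\dagger)/\delta\}/n}$, weighted by $(d_\pi-\rho_k)^+$ and minimized over $\rho_k\in\Delta_{\mathcal{U}^\star_2}$.

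The main obstacle I anticipate is rigorously controlling the penalized estimator in the first step: because Algorithm~\ref{prox_map} enforces the consistency and uncertainty constraints only softly through fixed $c^*$ and $\lambda$, rather than exactly as the information-theoretic algorithm does, I must quantify how the non-smooth absolute-value penalty (handled via the MMD closed-form inner solver) together with the strong-convexity conversion propagates through the performance decomposition without invalidating the lower bound on each $J(\pi^k)$. Pinning down the exponent is precisely the delicate rate-balancing calculation: the square root forced by strong convexity, combined with the $\sqrt{n}$-scaling of $c^*$, is what degrades the leading statistical rate from the $n^{-1/2}$ of Theorem~\ref{main_thm_reg} to $n^{-1/4}$ here, and tracking the constants $\mathfrak{C}^1_{\bar{V},\lambda,L}$ and $\lambda=\lambda(\mathcal{U}^\tau_2)$ through this conversion is the technically most demanding part.
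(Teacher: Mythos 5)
Your architecture matches the paper's: the regret $J(\pi)-J(\widehat\pi)=\frac{1}{\bar K}\sum_k(J(\pi)-J(\pi^k))$ is split into exactly the three pieces you name --- a per-iterate statistical/evaluation error, a mirror-descent optimization regret of order $\sqrt{\bar V\log|\mathcal A|/\bar K}$ (the paper's Lemmas \ref{supp_lm1}--\ref{iden_mdpret}), and a $(d_\pi-\rho_k)^{+}$-weighted on/off-support split obtained by change of measure and averaged over $k$. The paper phrases the first and third pieces through a surrogate MDP in which $q^k$ is the \emph{exact} value function of $\pi^k$ (Lemma \ref{iden_mdp}, with perturbed reward $r_k$), but that is algebraically equivalent to your performance-difference-plus-Bellman-residual-swap, so this is a presentational rather than substantive difference. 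The one point where your account diverges from the paper's actual mechanism is the origin of the $n^{-1/4}$ rate. In the paper it does not come primarily from the square root forced by strong convexity; it comes from the fact that the penalty enforces the weighted Bellman error of $q^k$ only up to a slack $\bar V/c^{*}$ (the penalized objective is itself bounded by $\bar V$, so dividing through by $c^{*}$ leaves this residual; Lemma \ref{iden_concen}), while comparing $q^k$ against the realizable $\widetilde q^{\pi^k}$ costs $c^{*}$ times the concentration error (Lemma \ref{pik_return_lemma}); balancing $c^{*}\sqrt{\ln\{\operatorname{Vol}(\Theta^{\dagger})/\delta\}/n}$ against $\bar V/c^{*}$ is what yields the fourth root. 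The strong convexity of $\mathbb D$ is used for a different job: Lemmas \ref{prime_bound_tau} and \ref{reg_tau_l2_bound} convert the $\lambda$-penalty on $\mathbb D(\tau)$ into an $L_2(\mu)$ bound on the maximizing $\tau$, which is how the constrained uncertainty-control set (and hence the constant $\mathcal U^{\star}_{2}$ appearing in the bound) is recovered from the soft formulation. Your plan would still go through, but you should separate these two roles explicitly; conflating them is where the prescribed scaling of $c^{*}$ and the constants $\mathfrak C^{1}_{\bar V,\lambda,L}$, $\mathfrak C^{2}_{\bar V,\lambda,L}$ would come out wrong.
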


\textbf{Trajectory-adaptive exploratory data distribution.} Similar to Theorem \ref{main_thm_reg}, the penalized algorithm also exhibits a desirable extrapolation property for minimizing extrapolation error while simultaneously preserving small on-support estimation errors. This is achieved through adaptations of the implicit exploratory data distributions, $\rho_{k}$ for $k \in [\bar{K}]$. In contrast to the information-theoretic algorithm, the automatic splitting by $\rho_{k}$ now depends on the optimization trajectory. At each iteration $k$, the penalized algorithm allows each implicit exploratory data distribution $\rho_{k}$ to adapt to the comparator policy $\pi$. This results in a more flexible adaptation than the one in the information-theoretic algorithm, either for balancing the trade-off between on-support bias and uncertainty incurred by the distributional mismatch between $d_{\pi}$ and $\rho_{k}$, or for selecting the best implicit exploratory to minimize model extrapolation error.

\textbf{Opimization error.} Blessed by the reparametrization in the proximal-mapping policy update, which projects the mixture policies into the parametric space $\Pi_{\omega}$, the complexity of the restricted policy class is independent of the class of $\mathcal{Q}$ and the horizon optimization trajectory $\bar{K}$. As a result, the optimization error $\mathcal{O}(\sqrt{\bar{V}\log|\mathcal{A}|/\bar{K}})$ can be reduced arbitrarily by increasing the maximum number of iterations, $\bar{K}$, without sacrificing overall regret to balance statistical error and optimization error. This allows for the construction of tight regret bounds. This distinguishes our algorithm from API-style algorithms, which do not possess a policy class that is independent of $\mathcal{Q}$ \citep{antos2008learning,scherrer2012use,xie2021bellman}.

\vspace{-2mm}
\subsection{An Application to Linear MDPs with Refined Concentrability Coefficient}
\vspace{-1.5mm}

In this section, we conduct a case study in  linear MDPs with insufficient data coverage. The concept of the linear MDP is initially developed in the fully exploratory setting \citep{yang2020reinforcement}. Let $\phi: \mathcal{S} \times \mathcal{A} \rightarrow \mathbb{R}^d$ be a $d$-dimensional feature mapping. We assume throughout that these feature mappings are normalized, such that
$\|\phi(s,a)\|_{L_2} \leq 1$ uniformly for all  $(s,a) \in \mathcal{S} \times \mathcal{A}$. We focus on action-value functions that are linear in $\phi$ and consider families of the following form:
$
\mathcal{Q}_{\theta}:= \left\{(s, a) \mapsto\langle\phi(s, a), \theta\rangle \mid\|\theta\|_{L_2} \leq c_{\theta}\right\}
$,
where $c_{\theta} \in [0, \bar{V}]$. For stochastic policies, we consider the soft-max policy class
$
\Pi_{\omega}:= \{\pi_{\omega}(a|s)  \propto  e^{\langle\phi(s, a), \omega\rangle} \mid\|\omega\|_{L_2} \leq c_\omega\},
$
where $c_\omega \in (0, \infty)$. Note that the softmax policy class is consistent with the implicit policy class produced by the mirror descent updates with negative entropy in Algorithm \ref{prox_map}, where the exponentiated gradient update rule is applied in each iteration. For the importance-weight class, we also consider the following form:
$
\Omega_{\psi} := \left\{(s, a) \mapsto\langle\phi(s, a), \psi\rangle \mid\|\psi\|_{L_2} \leq c_{\psi}\right\}
$ where
$c_{\psi} \in (0, \infty)$. 
To simplify the analysis, we assume the  realizability condition for $\mathcal{Q}_{\theta}$ is exactly met. In this linear MDP setting, we further refine the density ratio to a relative condition number to characterize partial coverage. This concept is recently introduced in the policy gradient literature \citep{agarwal2020optimality} and is consistently upper-bounded by the $L_{\infty}$-based density ratio concentrability coefficient.

\begin{definition}[Relative condition number]
\label{def_relative_cond}
For any policy $\pi \in \Pi_{\omega}$ and behavior policy $\pi_{b}$ such that $d_{\pi_{b}}=\mu$,  the relative condition number is defined as
$
\iota(d_{\pi},\mu) = \sup _{x \in \mathbb{R}^d} \frac{x^T \mathbb{E}_{ d_{\pi}}\left[\phi(s, a) \phi(s, a)^{\top}\right] x}{x^{\top} \mathbb{E}_{\mu}\left[\phi(s, a) \phi(s, a)^{\top}\right] x} .
$
\end{definition}

% Note that these relative condition number \citep{agarwal2020optimality} based quantifiers are always tighter than the $L_{\infty}$ density ratio based
% concentrability coefficients, i.e., $\|d_{\pi}(s,a)/\mu(s,a)\|_{\infty} < \iota(d_{\pi},\mu)$. 

% Intuitively, the relative condition number measures the worst-case density
% ratio between the occupancy distribution of comparator policy and the data generating distribution.

\begin{assump}[Bounded relative condition number]
\label{rcn}
For any $\pi \in \Pi_{\omega}$,
$
\iota(d_{\pi},\mu) < \infty
$.
\end{assump}

Intuitively, this implies that as long as a high-quality comparator policy exists, which only visits the subspace defined by the feature mapping $\phi$ and is covered by the offline data, our algorithm can effectively compete against it \citep{uehara2022pessimistic}. This partial coverage assumption, in terms of the relative condition number, is considerably weaker than density ratio-based assumptions. In the following, we present our main near-optimal guarantee in linear MDPs. In addition, we design and conduct numerical experiments to empirically validate Theorem \ref{lr_opt} in terms of the regret rate of convergence. 

\begin{thm}
\label{lr_opt}
Under Assumption \ref{rcn}, if we set propertly choose $\lambda = \lambda(c_{\psi})$ and $c^{*} = \widetilde{\mathcal{O}}\big(\sqrt[4]{n/d\ln\{(1+e\sqrt{n}(1\vee L)\bar{V}c_{\psi}c_\omega)/\delta\}}\big) $, and suppose $\widehat{\pi}^{\text{lr}}$ is returned by Algorithm \ref{prox_map} with linear function approxiamiton after running $\bar{K} \gg \log|\mathcal{A}|$ rounds, then for any policy in $\pi \in \Pi_{\omega}(\mathcal{U}^{\text{lr}}_{2})$ for $\mathcal{U}^{\text{lr}}_{2} \geq 1$, w.p. $\geq 1-\delta$, $J(\pi) - J(\widehat{\pi}^{\text{lr}})$ is bounded by 
 \$
\widetilde{\mathcal{O}}\bigg(\frac{\sqrt{\min\{\kappa^2
c^{2}_{\psi}\{\mathcal{U}^{\text{lr}}_{2}\}d^2,\iota(d_{\pi},\mu)d  \}}}{1-\gamma}
\sqrt[4]{\frac{\mathfrak{C}_{\bar{V},\lambda, L}d\ln \{(1+e\sqrt{n}(1\vee L)\bar{V}c_{\psi}c_\omega)/\delta\}}{n}}\bigg),
 \$
where $\kappa = \text{trace}(\mathbb{E}_{\mu}[\phi(s,a)\phi(s,a)^{\top}])$ and 
$
c_{\psi}\{\mathcal{U}^{\text{lr}}_{2}\} = \sup_{\{\psi: \|\phi(s,a)^{\top}\psi \|_{L_2(\mu)} = \mathcal{U}^{\text{lr}}_{2}\}}\|\psi\|_{L_\infty}$.
\end{thm}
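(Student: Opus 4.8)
The plan is to instantiate the general practical guarantee of Theorem \ref{prac_main_thm_reg} on the linear classes $\mathcal{Q}_\theta,\Pi_\omega,\Omega_\psi$ and then sharpen its $L_2$-concentrability factor using the linear-MDP geometry. Since the realizability condition for $\mathcal{Q}_\theta$ is assumed exactly met, I set $\varepsilon_{\mathcal{Q}}=0$, which removes the $\epsilon_{\text{mis}}$-type contribution; and since $\bar{K}\gg\log|\mathcal{A}|$, the optimization term $\sqrt{\bar{V}\log|\mathcal{A}|/\bar{K}}$ is dominated by the statistical $n^{-1/4}$ term. First I would bound the three Pollard pseudo-dimensions: for $d$-dimensional linear-in-$\phi$ functions and the soft-max class induced by linear logits, each of $D_{\mathcal{Q}},D_\Omega,D_\Pi$ is $\mathcal{O}(d)$, so $D=\mathcal{O}(d)$ and $\ln\{\operatorname{Vol}(\Theta^\dagger)/\delta\}$ collapses to the explicit $\mathcal{O}(d\ln\{(1+e\sqrt{n}(1\vee L)\bar{V}c_\psi c_\omega)/\delta\})$ form in the statement; this is the source of the extra factor $d$ inside the fourth root. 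Simultaneously I would check that the prescribed $\lambda=\lambda(c_\psi)$ and $c^*=\widetilde{\mathcal{O}}(\sqrt[4]{n/(d\ln\{\cdots\})})$ are precisely the linear-MDP instantiations of the general $\lambda,c^*$ demanded by Theorem \ref{prac_main_thm_reg}.

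\textbf{Eliminating the off-support error.} Because Assumption \ref{rcn} gives $\iota(d_\pi,\mu)<\infty$ for every $\pi\in\Pi_\omega$, the second-moment domination $\Sigma_{d_\pi}\preceq\iota(d_\pi,\mu)\,\Sigma_\mu$, with $\Sigma_\mu:=\mathbb{E}_\mu[\phi(s,a)\phi(s,a)^\top]$ and $\Sigma_{d_\pi}:=\mathbb{E}_{d_\pi}[\phi(s,a)\phi(s,a)^\top]$, forces $d_\pi$ to place no feature mass on directions uncovered by $\mu$. Since every Bellman residual $q^{k}-\mathcal{B}^{\pi^{k}}q^{k}$ lies in the span of $\phi$ (by the linear transition structure together with exact realizability), I can select each implicit $\rho_k$ to coincide with $d_\pi$ off the support of $\mu$ at zero cost, so the off-support contribution $\mathds{1}_{\mu=0}(\mathcal{B}^{\pi^{k}}q^{k}-q^{k})$ vanishes and only the on-support factor survives in the trajectory-averaged term of Theorem \ref{prac_main_thm_reg}.

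\textbf{Refining the concentrability.} The crux is to replace the surviving on-support concentrability $\mathcal{U}^\star_2$ by $\sqrt{\min\{\kappa^2 c_\psi^2\{\mathcal{U}^{\text{lr}}_2\}d^2,\ \iota(d_\pi,\mu)d\}}$ with $\kappa=\text{trace}(\Sigma_\mu)$. Writing each residual as $\phi^\top\beta_k$, the plan is to produce two complementary estimates and take their minimum. The first is a relative-condition-number transfer: $\mathbb{E}_{d_\pi}[|\phi^\top\beta_k|]\le\sqrt{\beta_k^\top\Sigma_{d_\pi}\beta_k}\le\sqrt{\iota(d_\pi,\mu)}\,\|\phi^\top\beta_k\|_{L_2(\mu)}$, after which converting the adversarially controlled $\tau$-weighted average Bellman error into the full $L_2(\mu)$ residual norm across all $d$ feature directions contributes the $\iota(d_\pi,\mu)d$ branch. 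The second is an importance-weight-class estimate that bounds the feature concentrability directly through $\kappa=\text{trace}(\Sigma_\mu)$ and the $L_\infty$-radius $c_\psi\{\mathcal{U}^{\text{lr}}_2\}$ of the $\psi$'s realizing a prescribed $L_2(\mu)$ weight norm, yielding the $\kappa^2 c_\psi^2\{\mathcal{U}^{\text{lr}}_2\}d^2$ branch, which is preferable when $\iota$ is large but $\mu$ is spread across coordinates. Substituting the minimum into the specialized statistical term of Theorem \ref{prac_main_thm_reg} produces the claimed rate.

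\textbf{Main obstacle.} I expect the refinement step to be the hard part. The adversarial estimator only controls the $\tau$-weighted scalar quantity $\mathbb{E}_\mu[\tau(q^{k}-\mathcal{B}^{\pi^{k}}q^{k})]$ over $\tau\in\Omega_\psi$, whereas the relative-condition-number transfer requires the full quadratic $\|\phi^\top\beta_k\|_{L_2(\mu)}$; bridging these while losing only a polynomial-in-$d$ factor, and doing so through two genuinely distinct routes so that taking the minimum is legitimate, is the technical heart of the proof. The remaining pieces—pseudo-dimension accounting, the $\varepsilon_{\mathcal{Q}}=0$ and large-$\bar{K}$ simplifications, and the off-support elimination—are bookkeeping that follows directly from Theorem \ref{prac_main_thm_reg} and Assumption \ref{rcn}.
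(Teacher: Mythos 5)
Your overall architecture matches the paper's: specialize to the linear classes, replace the pseudo-dimension bookkeeping by explicit covering numbers of Euclidean balls (giving the $d\ln\{(1+e\sqrt{n}(1\vee L)\bar{V}c_\psi c_\omega)/\delta\}$ factor), use $\bar{K}\gg\log|\mathcal{A}|$ and exact realizability to kill the optimization and misspecification terms, and obtain the final concentrability as a minimum of a relative-condition-number bound and an importance-weight-radius bound. The one substantive difference is in how the crux is executed. The paper does not invoke Theorem \ref{prac_main_thm_reg} as a black box; it re-runs the $\text{err}_1/\text{err}_2/\text{err}_3$ decomposition of Theorem \ref{prac_main_thm_reg}'s proof with linear covering numbers, and for $\text{err}_3$ it never splits into on-/off-support $\rho_k$ pieces at all. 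Instead, writing the residual as $\phi^\top\mathbb{G}^{\pi^k}$, it whitens by the empirical feature covariance, $|\phi^\top\mathbb{G}^{\pi^k}|\le\|\phi\|_{\Sigma_n^{-1}}\,\|\Sigma_n^{1/2}\mathbb{G}^{\pi^k}\|_{L_2}$, controls $\|\Sigma_n^{1/2}\mathbb{G}^{\pi^k}\|_{L_2}$ from the adversarially bounded $\tau$-weighted average Bellman error via the alignment argument of Lemma \ref{L_2MMD} (legitimate here because $\Omega_\psi$ and the residual are both linear in $\phi$, so the aligning weight is realizable), and then gets $\mathbb{E}_{d_\pi}[\|\phi\|_{\Sigma_n^{-1}}]\le\sqrt{\operatorname{trace}(\Sigma_{d_\pi}\Sigma_n^{-1})}\le\sqrt{\iota(d_\pi,\mu)\,d}$; the $d$ in that branch comes from this trace inequality, not from summing over feature directions as you suggest. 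The $\kappa^2c_\psi^2\{\mathcal{U}^{\text{lr}}_2\}d^2$ branch then comes from bounding $\|\phi^\top\psi\|_{L_2(\mu)}$ through $\operatorname{trace}(\Sigma_\mu)$ and the $L_\infty$-radius of the admissible $\psi$, exactly as you describe. So your plan is sound, and the step you flag as the main obstacle is precisely where the paper's whitening-plus-trace argument does the work; your direct second-moment domination $\mathbb{E}_{d_\pi}[|\phi^\top\beta|]\le\sqrt{\iota}\,\|\phi^\top\beta\|_{L_2(\mu)}$ would also close it (and is in fact no looser), but as written your proposal leaves that bridge as a sketch rather than a proof.
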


To the best of our knowledge, this is the first result PAC guarantees for an offline model-free RL algorithm in linear MDPs, requiring only realizability and single-policy concentrability. The regret bound we obtain is at least linear and, at best, sub-linear with respect to the feature dimension $d$. Our approach demonstrates a sample complexity improvement in terms of feature dimension compared to prior work by \citep{jin2021pessimism}, with a complexity of $\mathcal{O}(d^{1/2})$ versus $\mathcal{O}(d)$. It is worth noting that \citep{jin2021pessimism} only establishes results that compete with the optimal policy, and when specialized to linear MDPs, assumes the offline data has global coverage. Another previous study by \cite{xie2021bellman} achieves a similar sub-linear rate in $d$ as our approach; however, their algorithm is computationally intractable, relying on a much stronger Bellman-completeness assumption and requiring a small action space.

\vspace{-2mm}
\section{Experiments}
\vspace{-1mm}

In this section, we evaluate the performance of our 
practical algorithm by comparing to the model-free offline RL baselines including CQL \citep{kumar2020conservative}, BEAR \citep{kumar2019stabilizing}, BCQ \citep{fujimoto2019off}, OptiDICE \citep{lee2021optidice}, ATAC \citep{cheng2022adversarially}, IQL \citep{kostrikov2021offline}, and TD3+BC \cite{fujimoto2021minimalist}. We also compete with a popular model-based approach COMBO \citep{yu2021combo}. 

\textbf{Synthetic data.} We consider two synthetic environments: a synthetic CartPole environment from the OpenAI Gym \citep{brockman2016openai} and a simulated environment. Detailed discussions on the experimental designs are deferred to the Appendix. In both settings, following \citep{uehara2020minimax}, we first learn a sub-optimal policy using DQN \citep{mnih2015human} and then apply softmax to its $q$-function, divided by a temperature parameter $\alpha$ to set the action probabilities to define a behavior policy $\pi_{b}$.
   \begin{wrapfigure}{r}{0.58\textwidth} %this figure will be at the right
        \vspace{-3.5mm}
    \centering   
\includegraphics[width=0.58\textwidth]{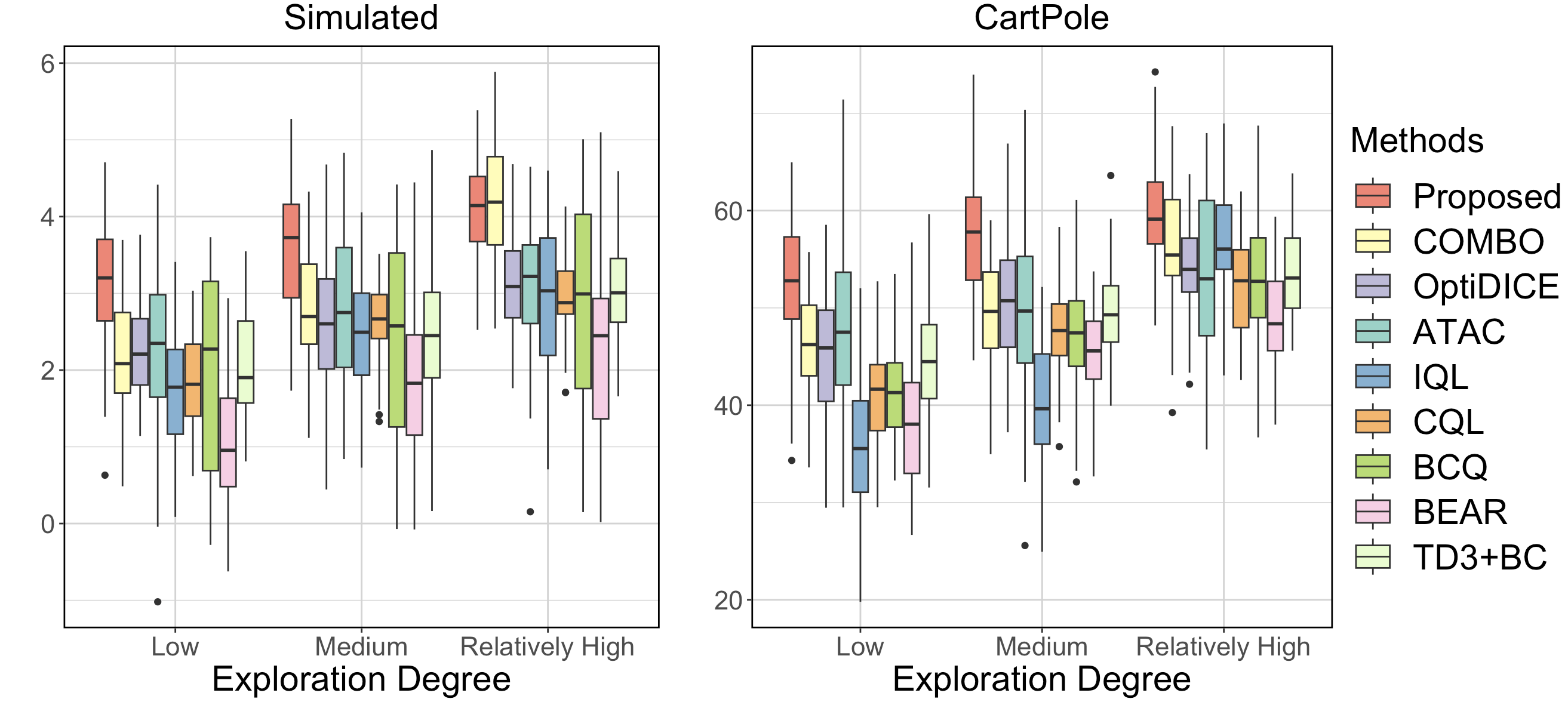}
        \vspace{-5mm}
    \caption{The boxplot of the discounted return over $50$ repeated experiments.}
    \vspace{-4mm}
     \label{fig:simu}
\end{wrapfigure}
A smaller $\alpha$ implies $\pi_{b}$ is less explored, and thus the support of $\mu=d_{\pi_{b}}$ is relatively small. We vary different values of $\alpha$ for evaluating the algorithm performance in ``low'', ``medium'' and ``relatively high'' offline data exploration scenarios.  We use $\gamma=0.95$ with the sample-size $n=1500$ in all experiments. Tuning parameter selection is an open problem in offline policy optimization. Fortunately, Theorem \ref{lr_opt} suggests an offline selection rule for hyper-parameters $\lambda$ and $c^{*}$. In the following experiments, we set the hyper-parameters satisfying the condition $\mathcal{O}(\frac{n^{1/4}}{d\log(\bar{V}\sqrt{n})})$. 
Figure \ref{fig:simu} shows that our algorithm almost consistently outperforms competing methods in different settings. This performance mainly benefits from the advantages exposed in our theoretical analysis, such as model extrapolation enhancement, relaxation of completeness-type assumptions on function approximation, etc. The only exception is the slightly poorer performance compared to COMBO in a high exploration setting, where COMBO may learn a good dynamic model with relatively sufficient exploration. We provide the experiment details in Appendix due to page limit.

\textbf{Benchmark data.} We evaluate our proposed approach on the D4RL benchmark of OpenAI Gym locomotion (walker2d, hopper, halfcheetah) and Maze2D tasks \citep{fu2020d4rl}, which encompasses a variety of dataset settings
and domains and positions our algorithm within the existing baselines. We take the results of COMBO, OptiDICE and ATAC from their original papers for Gym locomotion, and run COMBO and ATAC using author-provided implementations for Maze2D. The results of BCQ, BEAR methods from the D4RL original paper. In addition, CQL, IQL and TD3+BC are re-run to ensure a fair evaluation process for all tasks. As shown in Table \ref{benc}, the proposed algorithm achieves the best performance in 7 tasks and is comparable to the baselines in the remaining tasks. In addition to the evaluation of the policy performance, we also conduct sensitivity analyses on the hyperparameter-tuning and study the regret rate of convergence. 

\vspace{-2mm}

\begin{table}  [ht]
\scriptsize
\setlength{\tabcolsep}{0.08em}
\renewcommand{\arraystretch}{1.06}
		\centering
    \caption{The normalized score of the policy at
the last iteration of training, averaged over $5$ random seeds. The highest performing scores
are highlighted. The \textit{med}, \textit{med-rep}, and \textit{med-exp} is shorthand for \textit{medium}, \textit{medium-replay}, and \textit{medium-expert}, respectively. }
  \vspace{-1mm}
\begin{tabular}{c|ccccccccc }
\hline Tasks & Proposed & COMBO & BCQ & BEAR & OptiDICE & ATAC & CQL & IQL & TD3+BC \\
\hline  walker2d-med  & $80.8 \pm 5.1$ & $81.9 \pm 2.8$ & $53.1$ & $59.1$ & $21.8 \pm 7.1$  & $\mathbf{89.6}$  & $77.2 \pm 4.2$  & $78.3 \pm 4.3$ & $81.7 \pm 2.3$ \\
        hopper-med  & $94.9 \pm 4.3$ & $97.2 \pm 2.2$ & $54.5$ & $52.1$ & $94.1 \pm 3.7$  & $85.6$  & $74.3 \pm 5.8$  & $66.3 \pm 6.4$ & $\mathbf{98.4} \pm  1.6$ \\
         halfcheetah-med  & $\mathbf{58.1} \pm 1.4$ & $54.2 \pm 1.5$ & $40.7$ & $41.7$ & $38.2 \pm 0.1$  & $53.3$  & $37.2 \pm 0.3$  & $47.4 \pm 1.1$ &  $27.8 \pm 0.7$\\
         \hline
        walker2d-med-rep  & $\mathbf{99.6} \pm 2.9$ & $56.0 \pm 8.6$ & $15.0$ & $19.2$ & $21.6 \pm 2.1$  & $92.5$  & $20.8 \pm 1.6$  & $73.9 \pm 2.8$ & $34.4 \pm 4.2$ 
        \\
        hopper-med-rep  & $\mathbf{113.0} \pm 2.1$ & $89.5 \pm 1.8$ & $33.1$ & $33.7$ & $36.4 \pm 1.1$ & $102.5$  & $32.6 \pm 1.9$  & $94.7 \pm 1.5$ & $44.4 \pm 3.7$ \\
       halfcheetah-med-rep & $49.3 \pm 2.1$ & $\mathbf{55.1} \pm 1.0$ & $38.2$ & $38.6$ & $39.8 \pm 0.3$ & $48.0$  & $41.9 \pm 1.1$  & $44.2 \pm 2.5$ & $48.3 \pm 0.7$\\
       \hline
        walker2d-med-exp  & $108.2 \pm 7.4$ & $103.3 \pm 5.6$ & $57.5$ & $40.1$ & $74.8 \pm 9.2$  & $\mathbf{114.2}$  & $103.8 \pm 6.9$  & $109.6 \pm 7.0$ & $100.5 \pm 8.9$ \\
        hopper-med-exp & $117.8 \pm 1.9$ & $111.1 \pm 2.9$ & $110.9$ & $96.3$ & $111.5 \pm 0.6$ & $\mathbf{119.2}$  & $111.4 \pm 1.2$  & $91.5 \pm 2.2$ & $112.4 \pm 0.3$\\  
       halfcheetah-med-exp  & $\mathbf{98.5} \pm 3.8$ & $90.0 \pm 5.6$ & $64.7$ & $53.4$ & $91.1 \pm 3.7$ & $94.8$  & $66.7 \pm 8.9$  & $86.7 \pm 3.6$ & $95.9 \pm 3.9$\\    
       \hline
        walker2d-random  & $\mathbf{11.2} \pm 3.8$ & $7.0 \pm 3.6$ & $4.9$ & $7.3$ & $9.9 \pm 4.3$  & $6.8$  & $4.7 \pm 1.5$  & $5.8 \pm 2.8 $ & $3.4 \pm 1.7$\\   
        hopper-random  & $\mathbf{18.7} \pm 1.5$ & $17.9 \pm 1.4$ & $10.6$ & $11.4$ & $11.2 \pm 1.1$ & $17.5$  & $10.7 \pm 0.1$  & $10.8 \pm 0.6$ & $11.1 \pm 0.2$ \\
        halfcheetah-random  & $37.6 \pm 2.4$ & $\mathbf{38.8} \pm 3.7$ & $2.2$ & $25.1$ & $11.6 \pm 1.2$ & $3.9$  & $26.7 \pm 1.4$  & $22.4 \pm 1.8$ & $26.1 \pm 1.8$\\
        \hline  
       maze2d-umaze   & $96.5 \pm 27.8$ & $34.2 \pm 8.6$ & $12.8$ & $3.4$ & $\mathbf{111.0} \pm 8.3$  & $84.4 \pm 24.8$  & $50.5 \pm 7.9$  & $41.5 \pm 4.7$ &  $13.8 \pm 22.8$\\
        maze2d-med   & $137.5 \pm 18.9$ & $49.9 \pm 13.9$ & $8.3$ & $29.0$ & $145.2 \pm 17.5$ & $\mathbf{152.3} \pm 34.6$  & $28.6 \pm 9.2$  & $38.5 \pm 4.2$ & $59.1 \pm 44.7$\\
        maze2d-large  & $\mathbf{187.8} \pm 15.2$ & $128.2 \pm 17.3$ & $6.2$ & $4.6$ & $155.7 \pm 33.4$ & $142.1 \pm 33.8$  & $46.2 \pm 16.2$  & $54.2 \pm 18.1$ & $87.6 \pm 15.4$\\    
        \hline 
\end{tabular} 
\label{benc}
\end{table}

% \begin{figure}[h]
%      \vspace{-3mm}
%     \centering 
%      \includegraphics[width=0.96\textwidth]{}
%      \vspace{-2mm}
%     \caption{The boxplot of the discounted return over $50$ repeated experiments. }
%      \label{fig:simu}
%      \vspace{-3mm}
% \end{figure}

\vspace{-1mm}

\textbf{Real-world application.} The Ohio Type 1 Diabetes (OhioT1DM) dataset \citep{marling2020ohiot1dm} comprises a cohort of patients with Type-1 diabetes, where each patient exhibits different dynamics and 8 weeks of life-event data, including health status measurements and insulin injection dosages. Clinicians aim to adjust insulin injection dose levels \citep{marling2020ohiot1dm, bao2011improving} based on a patient's health status in order to maintain glucose levels within a specific range for safe dose recommendations. The state variables consist of health status measurements, and the action space is a bounded insulin dose range. The glycemic index serves as a reward function to assess the quality of dose suggestions. Since the data-generating process is unknown, we follow \citep{luckett2020estimating,li2023quasi} to utilize the Monte Carlo approximation of the estimated value function on the initial state of each trajectory to evaluate the performance of each method. The mean and standard deviation of the improvements on the Monto Carlo discounted returns are presented in Table \ref{real_data}. As a result, our algorithm achieves the best performance for almost all patients, except for Patient $552$. The main reason for the desired performance in real data is from the enhanced model extrapolation and relaxed function approximation requirements and outperforms the competing methods. This finding is consistent with the results
in the synthetic and benchmark datasets, demonstrating the potential applicability of the proposed algorithm in real-world environments.

% As shown in Table \ref{real_data}, the proposed method achieves the best performance among almost all patients and thus confirms the potential application ability in a real-world environment. 

\begin{table}  [ht]
\setlength{\tabcolsep}{0.25em}
\scriptsize
\vspace{-1mm}
  \caption{The baseline policy improvements over 50 repeated experiments in the OhioT1DM dataset.}
  \vspace{-1mm}
  \renewcommand{\arraystretch}{1.06}
		\centering
\begin{tabular}{c|ccccccccc}
\hline Patient ID & Proposed & COMBO & BCQ & BEAR & OptiDICE & ATAC & CQL & IQL & TD3+BC \\
\hline  
        $596$  & $\mathbf{6.5} \pm 1.1$ & $4.1 \pm 0.8$ & $3.8 \pm 0.9$ & $2.7 \pm 1.1$ & $4.7 \pm 1.1$ & $5.1 \pm 2.0$  & $4.6 \pm \mathbf{0.6}$  & $3.4 \pm 0.7$ & $4.8 \pm 1.3$\\
            $584$  & $\mathbf{33.1} \pm 1.8$ & $27.0 \pm 1.3$ & $20.3 \pm 1.2$ & $22.9 \pm 1.6$ & $27.7 \pm 1.9$  & $26.9 \pm 2.6$  & $21.6 \pm \mathbf{1.2}$  & $22.7 \pm 1.3$  & $22.4 \pm 1.7$ \\
        $567$  & $\mathbf{36.9} \pm \mathbf{1.3}$ & $30.6 \pm 2.0$ & $24.3 \pm 1.4$ & $25.6 \pm 1.4$ & $28.8 \pm 2.2$  & $29.7 \pm 2.8$  & $26.5 \pm 1.4$  & $25.8 \pm 1.4$ & $ 
 27.8 \pm 1.5 $ \\
                    $552$  & $7.9 \pm 0.9$ & $6.8 \pm 0.7$ & $5.7 \pm 0.5$ & $5.0 \pm  0.8$ & $\mathbf{8.1} \pm 0.9$  & $7.2 \pm 1.5$  & $6.7 \pm \mathbf{0.4}$  & $6.1 \pm 0.5$ & $ 7.4 \pm 0.8 $ \\
                            $544$  & $\mathbf{13.2} \pm 1.9$ & $9.8 \pm 1.5$ & $7.5 \pm 2.5$ & $5.9 \pm \mathbf{0.8}$ & $10.3 \pm 1.8$  & $10.1 \pm 2.1$  & $8.7 \pm 1.0$  & $7.8 \pm 0.9$ & $9.7 \pm 0.8$\\
$540$  & $\mathbf{20.4} \pm \mathbf{0.5}$ & $17.5 \pm 0.9$ & $14.3 \pm 0.6$ & $12.7 \pm 0.5$ & $17.9 \pm 0.9$  & $18.2 \pm 1.4$  & $16.5 \pm 0.5$  & $14.0 \pm 0.6$ & $17.1 \pm 0.8$ \\
\hline
\end{tabular} \label{real_data}
\end{table}

\vspace{-3mm}
\section{Conclusion}
\vspace{-1mm}

We study offline RL with limited exploration in function approximation settings. We propose a bi-level policy optimization framework, which can be further solved by a computationally practical penalized adversarial estimation algorithm, offering strong theoretical and empirical guarantees. Regarding limitations and future work, while the penalized adversarial estimation is more computationally efficient than the previously constrained problem, it may still be more challenging to solve than single-stage optimization problems. Another future direction is to explore environments with unobservable confounders. It will be interesting to address these limitations in future works.

% We study offline RL with limited explanation under function approximation settings. We propose a bi-level policy optimization framework, which can be further solved by a computationally practical, penalized adversarial estimation algorithm with both strong theoretical and empirical guarantees. 
% As for limitations and future work, although the penalized adversarial estimation is more computationally efficient than the previously constrained problem. However, it is still potentially harder to solve than the ones on single-stage optimization. Another future
% direction is to work in environments with unobservable confounders. It will be interesting to address these limitations in future works. 

\section{Acknowledments}

The author is grateful to the five anonymous reviewers and the area chair for their valuable comments and suggestions. 

% \bibliographystyle{plain}
% \bibliography{mycite}  

\bibliographystyle{plain}
\bibliography{mycite}  

\newpage

\appendix

{\center \Large
\textbf{Supplementary Material to 
\center 
``Bi-Level Offline Policy Optimization with Limited Exploration''}
}

% \bibliographystyle{plain}
% \bibliography{mycite}  

\counterwithin{figure}{section}
\counterwithin{table}{section}
\counterwithin{equation}{section}
\counterwithin{assump}{section}
\counterwithin{algorithm}{section}

{
  \hypersetup{linkcolor=black}
  \tableofcontents
}
  \addtocontents{toc}{\protect\setcounter{tocdepth}{3}}

\section{Discussion on Algorithm 1}
\label{detailed_algo_sec}

In this section, we provide a pronounced discussion on Algorithm 1 in maintext by offering more details on solving adversarial estimation over $q$ and $\tau$, as well as establishing the theoretical convergence guarantee in Theorem \ref{conv_thm}. The convergence of the step provides a basis for us to use the mirror descent for policy updating. The detailed version of Algorithm 1 in maintext is summarized in Algorithm \ref{prox_map_detail}. 

In function approximation settings, the $\Omega, \mathcal{Q}, \Pi$ are often represented by compact parametric functions in practice, either in linear or non-linear function classes \cite{sutton2018reinforcement}. In the following, we denote these parameters as $\psi$ and $\theta$ and $\omega$ corresponding to $\Omega_{\psi}$ and $\mathcal{Q}_{\theta}$, and $\Pi_{\omega}$ respectively. 

Under this parametric setting, we focus on solving the adversarial loss ${\mathcal{L}^{\circ}}(q,\tau,\pi,c^{*},\lambda)$, which can be expressed as:
\#
 {\mathcal{L}}(q_{\theta},\tau_{\psi},\pi_{\omega},c^{*},\lambda)   = & q_{\theta} (s^{0}, \pi) + \frac{1}{(1-\gamma)n}\bigg\{c^{*}\Big|\sum^{n}_{i=1}   \tau_{\psi}(s_i,a_i)\left(q_{\theta}(s_i,a_i)-r_i -\gamma q_{\theta}(s^{\prime}_i,\pi_{\omega})\right)\Big| \notag \\
& \qquad \qquad  \qquad \qquad -  \lambda\sum^{n}_{i=1} \mathbb{D}(\tau_{\psi}(s_i,a_i))\bigg\}.
\label{algo_eq1}
\#
As stated in Algorithm 1, at each iteration, we aim to solve 
$\max_{\psi} \min_{\theta}  {\mathcal{L}^{\circ}}(q_{\theta},\tau_{\psi}, \pi_{\omega},c^{*},\lambda)$, 
which forms a saddle-point formulation, and we denote the saddle point as $(\psi^{*},\theta^{*})$ (should depend on $\pi_{\omega}$, but we omit here for simplifying the notation). At the same time, we denote the population loss as 
\#
 {\mathcal{L}^{\circ}}(q_{\theta},\tau_{\psi},\pi_{\omega},c^{*},\lambda)   = & q_{\theta}(s^{0}, \pi) + \frac{1}{(1-\gamma)}\bigg\{c^{*}\Big|\mathbb{E}_{\mu} \left[  \tau_{\psi}(s,a)\left(q_{\theta}(s,a)-r(s,a) -\gamma q_{\theta}(s^{\prime},\pi_{\omega})\right)\right]\Big| \notag \\
& \qquad \qquad  \qquad \qquad  - \lambda \mathbb{E}_{\mu}[\mathbb{D}(\tau_{\psi}(s,a))]\bigg\}.
\#
In the following, we omit the arguments $\pi_{\omega},c^{*},\lambda$ in the expression for simplicity, and thus using 
\#
{\mathcal{L}^{\circ}}(q_{\theta},\tau_{\psi})
\label{algo_eq2}
\#
to denote the population loss for  ${\mathcal{L}^{\circ}}(q_{\theta},\tau_{\psi},\pi_{\omega},c^{*},\lambda)$. 

We can observe that the inner minimization problem is relatively easy to solve. 
In addition to the closed-form solution as discussed in maintext, the feature mapping class is sufficient for modeling $\mathcal{Q}$, as demonstrated in \citep{dai2020coindice}. The feature mapping class simplifies the optimization, making it efficiently solvable by various algorithms as discussed in \cite{sriperumbudur2011universality}. In contrast, the more challenging aspect is optimizing $\tau_{\psi}$. Due to its complex structure, it demands a sufficiently flexible non-linear function approximation class, e.g., deep neural networks, for optimization \citep{jiang2020minimax}. Unfortunately, concavity typically does not hold for non-linear function approximation classes, and thus the outer maximization of $
\max_{\psi} \min_{\theta}  {\mathcal{L}^{\circ}}(q_{\theta},\tau_{\psi})$ is also affected. As a result, we need to develop a more efficient and convergent algorithm. Therefore, we regard solving a non-concave maximization problem, conditional on the solved global optimizer $\bar{q}_{\theta} := \argmin_{\theta} {\mathcal{L}^{\circ}}(q_{\theta},\tau_{\psi})$. Under this framework, we first study the gradients of the objective function with respect to ${\psi}$. Define $\bar{{\mathcal{L}^{\circ}}}(\tau_{\psi}) = {\mathcal{L}^{\circ}}( \bar{q}_{\theta},\tau_{\psi})$, then the gradient of $\bar{{\mathcal{L}^{\circ}}}(\tau_{\psi})$ with respect to $\psi$ satisfies 
\# \nabla_{\psi}\bar{{\mathcal{L}^{\circ}}}(\tau_{\psi}) = &  \frac{\mathbb{E}_{\mu}[(r(s,a)+q_{\theta}(s, a)-\gamma q_{\theta}(s^{\prime}, \pi))(\tau_{\psi}(s,a)/|\tau_{\psi}(s,a)|)\nabla_{\psi}\tau_{\psi}(s,a)]}{1-\gamma} \\
& - \frac{\lambda\mathbb{E}_{\mu}[\mathbb{D}^{\prime}(\tau(s,a))\nabla_{\psi}\tau_{\psi}(s,a)]}{1-\gamma}.
\label{gradient_theory}
\#
With the gradients provided in \eqref{gradient_theory}, we propose a stochastic approximation algorithm to update $\tau_{\psi}$. At each iteration, we update $\tau_{\psi}$ by solving the proximal mapping \cite{parikh2014proximal}: 
\#
\text{Proj}_{\psi}(\psi^{*}, \nabla; D_{Berg}):= \argmax_{\psi} \{ \langle \psi, \nabla \rangle - D_{Berg}(\psi^{*}, \psi)\},
\label{algo_eq3}
\#
where $\psi^{*}$ can be viewed as the current update of the parameter, $D_{Berg}(\cdot, \cdot)$ denotes the Bregman divergence as discussed in \cite{reem2019re}, and $\nabla$ represents the scaled stochastic gradient of the parameter of interest. In practice, we may consider using the Euclidean distance to reduce the computational burden. Once $q$ and $\tau$ are solved, we apply mirror descent in terms of the negative entropy $D_{\text{NegEntropy}}$ \citep{beck2017first}. That is, given a
stochastic gradient direction of $\pi$ we solve the prox-mapping in each iteration. Note that, it follows from \cite{parikh2014proximal}, step 4 in Algorithm 1 (step 13 in Algorithm \ref{prox_map_detail}) has a closed-form exponential updating rule, particularly with the negative entropy $D_{\text{NegEntropy}}$, as 
\$
\pi_{w^{k}}(\cdot|s) \propto \pi_{w^{k-1}}\exp(\zeta q^{k}(s,\cdot)),
\$
for any $s$. The detailed version of the proposed optimization algorithm is presented in Algorithm \ref{prox_map_detail}. 

\renewcommand\footnoterule{}
\noindent\begin{minipage}{\textwidth}
\vspace{-4mm}
\begin{algorithm}[H]
\setstretch{1.12}
	\caption{Adversarial proximal-mapping algorithm (detailed version)}
\label{prox_map_detail}
	\begin{algorithmic}[1]
	\STATE \textbf{Input} observed data $\mathcal{D}_{1:n}=\{(s_i,a_i,r_i,s_i^{\prime})\}^n_{i=1}$ and and the initial state $s^{0}$.  
 	\STATE \textbf{Initialize} 
the parameters $\theta^{(0)},\psi^{(0)},\omega^{(0)},c^{*}$, $\lambda$, $\zeta$, $\eta^{0}$, $\bar{K}$ and $\bar{T}$. 
				\STATE \textbf{For} $k=1$ to $\bar{K}$:
		\STATE \; \textbf{Update} $\psi^{(k)}$ and $\theta^{(k)}$:
  \STATE  \;  \;   \; 
 \textbf{Initialize} $\psi^{0} = \psi^{(k-1)}$ and $\theta^{0} = \theta^{(k-1)}$ and $\eta^{0} = \eta^{0}$.
    			\STATE  \;  \;  \;  \textbf{For} $t=1$ to $t = T$:
   	\STATE \; \;  \;  \;  \ Update $\theta^{t}$ by solving 
${\mathcal{L}}(q_{\theta},\tau_{\psi^{t-1}},\pi_{\omega^{k-1}},c^{*},\lambda)$ in \eqref{algo_eq1}. 
    \STATE \; \;  \;  \; \ Decay the stepsize $\eta^{t}$ of the rate $\mathcal{O}(t^{-1/4})$. 
      \STATE \; \;  \;  \; \ Compute the stochastic gradient with respect to $\psi$ as $\widetilde{\nabla}_{\psi}{\mathcal{L}^{\circ}}(\tau_{\psi},q_{\theta^{t}})$ in \eqref{algo_eq2}. 
       \STATE \; \;  \;  \; \ Update $\psi^{t}$ by solving: $\psi^{t} = \text{Proj}_{\psi}(\psi^{t-1}, \eta^{t}\widetilde{\nabla}_{\psi}{\mathcal{L}^{\circ}}(\tau_{\psi},q_{\theta^{t}});D_{Berg})$ in \eqref{algo_eq3}.
       \STATE \;  \;  \; \textbf{End for}
       \STATE \;  \;  \; \textbf{Output} $\psi^{(k)} =  \psi^{T}$ and  $\theta^{(k)} = \theta^{T}$.
	\STATE \; \textbf{Update}  $\omega^{k}$ by solving 
	$\underset{\omega}{\operatorname{argmax}} \; \zeta \left\langle q^{k}(s,\cdot), \pi_{\omega}(\cdot|s) \right\rangle-D_{\text{NegEntropy}}\left(\pi_{\omega}(\cdot|s), \pi_{\omega^{k-1}}(\cdot|s)\right).
	$	
\STATE \textbf{Return} the policy $\widehat{\pi}$, which randomly selects a policy from the set $\{\pi^{k}\}^{\bar{K}}_{k=1}$.
\end{algorithmic}
\end{algorithm}
\end{minipage}

In the following, we demonstrate that our algorithm is convergent with a sublinear rate even under non-linear (non-concave) settings regarding solving the steps $6$ to $12$ in Algorithm \ref{prox_map_detail}. Before we state our convergence guarantee, we make the following regular assumptions as stated in \citep{zhou2023distributional}. 

\begin{assump}[$L_0$-Lipschitz continuity on gradient]
\label{lip_tau}
For any $\tau_{\psi} \in \Omega_{\psi}$, $\tau_{\psi}$ is differentiable (not necessarily convex or concave), bounded from below,
$
\| \nabla_{\psi}\tau_{\psi_{1}}(s,a) - \nabla_{\psi}\tau_{\psi_{2}}(s,a) \| \leq L_{0}\| \psi_{1} - \psi_{2}\|, \; \text{for any} \; s,a 
$,
where $L_{0} < \infty$ is some universal Lipschitz constant and $\|\cdot\|$ denotes the Euclidean norm. 
\end{assump}

Assumption \ref{lip_tau} imposes the first-order smoothness condition on the specified function class. 

% For any $q_{\theta} \in \mathcal{Q}_{\theta}$, it satisfies that

\begin{assump}[Smooth function class]
\label{lip_q}
$
    | q_{\theta_1}(s,a) - q_{\theta_2}(s,a) | \leq L_{0}\| \theta_1 - \theta_2\|,  \; \text{for any} \; s,a, \; \text{and} \; q_{\theta} \in \mathcal{Q}_{\theta}$.
\end{assump}
Assumption \ref{lip_q} holds for a wide range of function approximation classes, including feature mapping space with smooth basis functions, non-linear approximation classes, DNNs with Leaky ReLU activation function, or spectral normalization on ReLU activation \citep{izmailov2018averaging}.

\begin{assump}
\label{bound_gradient}
The gradient of function $\tau_{\psi}(\cdot)$ evaluated at saddle point $\psi^{*}$ is bounded above; i.e., 
$\nabla_{\psi}\tau_{\psi^{*}}(s,a) < c_3$ uniformly over $(s,a)$ for some finite and positive constant $c_3$. 
\end{assump}

Assumption \ref{bound_gradient} is a much weaker assumption compared to the bounded variance of stochastic gradients assumption which is commonly made 
in the existing literature \citep{reddi2016stochastic,mokhtari2020unified}. In the following, we derive the convergence rate, which holds for non-concave function approximation class $\Omega_{\psi}$.

\begin{thm}[Convergence to a stationary point \citep{zhou2023distributional}]
\label{conv_thm}
Under Assumption 3 in maintext, and Assumptions \ref{lip_tau}-\ref{bound_gradient} above, suppose the steps $6$-$12$ in Algorithm \ref{prox_map_detail} runs $T \geq1$ rounds with stepsize 
\$
\eta^{t} = \min\{\sqrt[\leftroot{-1}\uproot{2}\scriptstyle 4]{tT4\mathbb{G}^2/\sigma^4_{\max}C_1},1/C_1\},
\$ 
for $t=1,...,T$ and Euclidean distance is used for Bergman divergence. If we pick up the solution output $\psi^{T^{\star}}$ following the probability mass function
\$
P(T^{\star}=t) = \frac{2\eta^{t}-(\eta^{t})^2C_1}{\sum^{T}_{t=1}(2\eta^{t}-(\eta^{t})^2C_1)},
\$
then it follows that 
\#
\mathbb{E}[\|\nabla_{\psi}\bar{{\mathcal{L}^{\circ}}}(\tau_{\psi^{T^{\star}}}) \|^2] \leq  \sqrt{\frac{2\mathbb{G} C_1\sigma^2_{\max}}{T}}  + \frac{\sqrt{2\mathbb{G} C_1\sigma^2_{\max}}}{T^{3/4}} + \frac{2\mathbb{G} C_1}{T} ,
\label{conv_rate}
\#
where $\bar{{\mathcal{L}^{\circ}}}(\tau_{\psi})$ is defined in \eqref{gradient_theory} and $\mathbb{G}:= \bar{{\mathcal{L}^{\circ}}}(\tau_{\psi^{0}}) -  \min_{\psi}\bar{{\mathcal{L}^{\circ}}}(\tau_{\psi})$ measures the distance of the initial and optimal solution, $C_1$ is Lipschitz constant depending on $c^{*},c_2,c_3,M, L_0, \bar{V}$ and $\lambda$. Recall that $c_2$ and $M$ are from the definition of $\mathbb{D}$. Here the variance of the stochastic gradient is bounded above by $
\sigma_{\max} := \max_{t\in 1:T}\sqrt{ c_4\|\widetilde{\theta}(\psi^{t}) - \theta^{*}\|^2 + c_5\|\psi^{t} - \psi^{*}\|^2}$,
for some constants $c_4,c_5$ depending on $c^{*}, c_2,c_3, L_0,\bar{V},\lambda$ and $\gamma$. Here, $\widetilde{\theta}(\psi^{t})$ is the optimizer for ${\mathcal{L}^{\circ}}(q_\theta,\tau_{\psi^{t}})$.
\end{thm}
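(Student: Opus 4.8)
The plan is to recast the inner loop (steps $6$--$12$ of Algorithm \ref{prox_map_detail}) as stochastic gradient \emph{ascent} on the deterministic surrogate $\bar{\mathcal{L}}^{\circ}(\tau_{\psi}) = \mathcal{L}^{\circ}(\bar{q}_{\theta}, \tau_{\psi})$ defined through the inner minimizer $\widetilde{\theta}(\psi)$, and then invoke the standard first-order convergence machinery for non-convex stochastic optimization. Because the Euclidean distance is used for the Bregman divergence, the proximal update \eqref{algo_eq3} collapses to the vanilla ascent step $\psi^{t} = \psi^{t-1} + \eta^{t}\,\widetilde{\nabla}_{\psi}\mathcal{L}^{\circ}(\tau_{\psi}, q_{\theta^{t}})$, so the entire analysis reduces to the familiar unconstrained SGD template applied to a function that is maximized rather than minimized.

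First I would establish that $\bar{\mathcal{L}}^{\circ}$ has a $C_1$-Lipschitz gradient in $\psi$, with the gradient given in closed form by \eqref{gradient_theory}. Controlling its variation requires three ingredients: (i) the smoothness of $\tau_{\psi}$ and $\nabla_{\psi}\tau_{\psi}$ from Assumption \ref{lip_tau}; (ii) the Lipschitzness of $q_{\theta}$ from Assumption \ref{lip_q} together with the boundedness supplied by Assumptions \ref{q_bound}--\ref{tau_bound} and $|\mathbb{D}'|\le c_2$, $\nabla_{\psi}\tau_{\psi^{*}}<c_3$ from Definition \ref{alpha_def} and Assumption \ref{bound_gradient}; and (iii) the fact that the inner minimizer $\widetilde{\theta}(\psi)$ is itself Lipschitz in $\psi$. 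Ingredient (iii) is where the $M$-strong convexity of $\mathbb{D}$ enters decisively: it renders the inner objective strongly convex so that $\widetilde{\theta}(\psi)$ is unique and, by an implicit-function/Danskin argument, Lipschitz, which in turn lets me differentiate through $\bar{q}_{\theta}$ via the envelope theorem. Assembling these pieces yields $C_1$ as an explicit function of $c^{*}, c_2, c_3, M, L_0, \bar{V}, \lambda$.

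Next I would write the ascent inequality: for a $C_1$-smooth function and the step above, $\bar{\mathcal{L}}^{\circ}(\tau_{\psi^{t}}) \ge \bar{\mathcal{L}}^{\circ}(\tau_{\psi^{t-1}}) + \eta^{t}\langle \nabla_{\psi}\bar{\mathcal{L}}^{\circ}(\tau_{\psi^{t-1}}), \widetilde{\nabla}\rangle - \tfrac{C_1(\eta^{t})^2}{2}\|\widetilde{\nabla}\|^2$. Taking conditional expectation, using unbiasedness together with a variance bound $\mathbb{E}\|\widetilde{\nabla}-\nabla\|^2 \le \sigma_{\max}^2$ (derived from the same smoothness/boundedness estimates, and capturing both sampling noise and the inexactness of the inner solve, which produces the stated dependence on $\|\widetilde{\theta}(\psi^{t})-\theta^{*}\|$ and $\|\psi^{t}-\psi^{*}\|$), gives $\mathbb{E}[\bar{\mathcal{L}}^{\circ}(\tau_{\psi^{t}})\mid \psi^{t-1}] \ge \bar{\mathcal{L}}^{\circ}(\tau_{\psi^{t-1}}) + (\eta^{t}-\tfrac{C_1(\eta^{t})^2}{2})\|\nabla_{\psi}\bar{\mathcal{L}}^{\circ}(\tau_{\psi^{t-1}})\|^2 - \tfrac{C_1(\eta^{t})^2}{2}\sigma_{\max}^2$. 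Summing over $t=1,\dots,T$, telescoping the left side against $\mathbb{G}=\bar{\mathcal{L}}^{\circ}(\tau_{\psi^{0}})-\min_{\psi}\bar{\mathcal{L}}^{\circ}$, and dividing by $\sum_{t}(2\eta^{t}-(\eta^{t})^2C_1)$ bounds the $\eta$-weighted average of $\mathbb{E}\|\nabla_{\psi}\bar{\mathcal{L}}^{\circ}\|^2$. The randomized index $T^{\star}$ is drawn from exactly the pmf that identifies this weighted average with $\mathbb{E}\|\nabla_{\psi}\bar{\mathcal{L}}^{\circ}(\tau_{\psi^{T^{\star}}})\|^2$, and substituting $\eta^{t}=\min\{\sqrt[4]{tT4\mathbb{G}^2/\sigma_{\max}^4 C_1},\,1/C_1\}$ and simplifying the resulting sums yields the three-term rate in \eqref{conv_rate}.

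The hard part will be step one, the smoothness constant $C_1$: both the absolute-value term $|\mathbb{E}_{\mu}[\tau_{\psi}(q_{\theta}-r-\gamma q_{\theta}')]|$ and the implicit dependence of $\bar{q}_{\theta}$ on $\psi$ obstruct a naive gradient-Lipschitz estimate. I would handle the absolute value by noting that, away from its measure-zero kink, its subdifferential is a bounded sign factor that (combined with boundedness of $\tau_{\psi}$, $q_{\theta}$, and $\mathbb{D}'$) does not destroy Lipschitzness of the product, and I would handle the implicit minimizer $\widetilde{\theta}(\psi)$ through the envelope theorem, leaning on $M$-strong convexity of the inner problem to guarantee a Lipschitz optimizer. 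Everything downstream is the routine non-convex SGD accounting.
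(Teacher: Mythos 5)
First, note that the paper does not actually prove this theorem: it is stated as adapted from Theorem 6.5 of \citep{zhou2023distributional}, so there is no in-paper derivation to compare against step by step. That said, your outline is the standard randomized-stochastic-gradient analysis for smooth non-convex objectives (establish a gradient-Lipschitz constant $C_1$ for $\bar{{\mathcal{L}^{\circ}}}$, write the one-step ascent inequality, telescope against $\mathbb{G}$, choose the output index $T^{\star}$ from exactly the pmf that converts the $\eta$-weighted sum into $\mathbb{E}\|\nabla_{\psi}\bar{{\mathcal{L}^{\circ}}}(\tau_{\psi^{T^{\star}}})\|^2$, then substitute the stated stepsize), and this is precisely the template that yields a three-term bound of the form \eqref{conv_rate}. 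The overall architecture is sound and is surely what the cited proof does.

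Two points in your ingredient list need repair. First, you misattribute the role of strong convexity: $\mathbb{D}$ is $M$-strongly convex in $\tau$, i.e.\ it acts in the $\psi$-direction, while the inner minimization over $\theta$ contains no $\mathbb{D}$ term at all --- it is $q_{\theta}(s^{0},\pi)$ plus $c^{*}$ times the absolute value of a quantity linear in $q_{\theta}$, hence convex but not strongly convex in $\theta$. Uniqueness and Lipschitzness of $\widetilde{\theta}(\psi)$ therefore do not follow from your envelope/implicit-function argument. The theorem sidesteps this by folding the inner-solve inexactness into the variance proxy $\sigma_{\max}$, which explicitly depends on $\|\widetilde{\theta}(\psi^{t})-\theta^{*}\|$ and $\|\psi^{t}-\psi^{*}\|$; you should do the same rather than trying to prove the argmin map is Lipschitz. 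Second, your appeal to unbiasedness is delicate: the population objective wraps an absolute value around an expectation, whereas the stochastic gradient differentiates the absolute value of a sample average, so the estimator is not unbiased for $\nabla_{\psi}\bar{{\mathcal{L}^{\circ}}}$ near sign changes; the gradient formula \eqref{gradient_theory} implicitly fixes the sign factor, and an honest proof must either assume the sign is stable along the trajectory or absorb the resulting bias into $\sigma_{\max}$ as well. Neither issue changes the final rate, but both are exactly the places where the ``routine non-convex SGD accounting'' is not quite routine here.
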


Theorem \ref{conv_thm} is adapted
from Theorem 6.5 in \citep{zhou2023distributional} on local convergence. Theorem \ref{conv_thm} implies that the steps 6-12 in Algorithm \ref{prox_map_detail} can converge sublinearly to a stationary point if the $\sigma_{\max}$ is sufficiently small. The rate of convergence is also affected by the smoothness of the class $\Omega_{\psi}$ and the distance of the initial and optimal solution. 

\section{Experiment Details}

We include our source code for experiments and algorithm, and the guideline for access to the OhioT1DM dataset in this \href{https://anonymous.4open.science/r/bilevel-offline-policy-optimization-2685}{GitHub repository}.

\subsection{Environment Settings}

\textbf{Simulated environment.} For the simulated environment setting, the system dynamics are given by
$$
\begin{aligned}
s^{t+1} & =\left(\begin{array}{cc}
0.75\left(2 a^{ t}-1\right) & 0 \\
0 & 0.75\left(1-2 a^{ t}\right)
\end{array}\right) s^{t}+\left(\begin{array}{cc}
0 & 1 \\
1 & 0
\end{array}\right)\odot
s^{t}{s^{t}}^{\top}\mathbb{I}_{2\times 1} + 
\varepsilon^t, \\
r^{t} & ={s^{t+1}}^{\top}\left(\begin{array}{l}
2 \\
1
\end{array}\right)-\frac{1}{4}\left(2 a^{ t}-1\right) + ({s^{t+1}}^{\top}s^{t+1})^{\frac{3}{2}}\odot\left(\begin{array}{c}
0.25 \\
0.5 
\end{array}\right),
\end{aligned}
$$
for $t \geq 0$, where $\odot$ denotes the Hadamard product, $\mathbb{I}$ is the identity matrix, the noise $\left\{\varepsilon^t\right\}_{t \geq 0} \stackrel{i i d}{\sim} N\left({0}_{2\times 1}, 0.25\mathbb{I}_{2\times 2}\right)$ and the initial state variable $s^{0} \sim N\left({0}_{2\times 1}, 0.25\mathbb{I}_{2\times 2}\right)$. The transition dynamic mainly follows the design in \cite{shi2020statistical}, but the reward function we consider here is more complex. In this setting, we consider a binary action space $a^{t} = \{0,1\}$. 

\textbf{CartPole environment.} 
We utilize the CartPole environment from OpenAI Gym \cite{brockman2016openai}, a standard benchmark in RL for evaluating policies. The 4-dimensional state space in this environment is represented as 
 $s^{t} = (s^{t}_{[1]}, s^{t}_{[2]}, s^{t}_{[3]}, s^{t}_{[4]})$, encompassing both the cart's position and velocity and the pole's angle and angular velocity. The action space is binary, with actions $\{0, 1\}$, representing pushes to the left or right, respectively. To enhance the differentiation between various policy values, we adopt a modified reward function, as in \cite{shi2022minimax,zhou2023distributional}. The reward function is defined as:
\$
r^{t}=-1+\left|2-\frac{s^{t}_{[1]}}{{s^{t}_{[1]}}({\text{clip}})}\right|\left|2-\frac{s^{t}_{[3]}}{{s^{t}_{[3]}}({\text{clip}})}\right|.
\$
Here, $s^{t}_{[1]}$ and $s^{t}_{[3]}$ represent the cart's position and the pole's angle, respectively. The terms ${s^{t}_{[1]}}({\text{clip}})$ and ${s^{t}_{[3]}}({\text{clip}})$ denote the thresholds at which the episode terminates (done = True) if either $|s^{t}_{[1]}| \geq {s^{t}_{[1]}}({\text{clip}})$ or $ |s^{t}_{[3]}| \geq {s^{t}_{[3]}}({\text{clip}})$ is satisfied. Under this definition, a higher reward is obtained when the cart is closer to the center and the pole's angle is closer to the perpendicular position.

\textbf{D4RL benchmark environments.} We use Maze2D and Gym-locomotion environments of D4RL benchmark \citep{fu2020d4rl,lee2021optidice} to evaluate the proposed algorithm in continuous control tasks. We summarize the descriptions of different task settings in \citep{fu2020d4rl} in the following: 

\textbf{Maze2D} is a navigation task set within a 2D state space where the agent aims to reach a predetermined goal location. By leveraging previously collected trajectories, the agent's objective is to determine the shortest path to the destination. The complexity of the mazes increases in the sequence: "maze2d-umaze," "maze2d-medium," and "maze2d-large."

\textbf{Gym-locomotion.} For each task within the Gym-locomotion continuous controls set, which includes \{hopper, walker2d, halfcheetah\}. We refer the readers to \citep{fu2020d4rl} for detailed background for the above-mentioned tasks. In our experiments, data is generated and collected in the following manners:

\begin{itemize}
\item random: This dataset is produced using a policy initialized at random for each task.
\item medium: This dataset is derived from a policy trained with the SAC algorithm in \citep{haarnoja2018softac}. The training is stopped prematurely through early stopping.
    \item medium-replay: This combines two subsets. The ``replay'' subset consists of samples collected during the training of the policy for the ``medium'' dataset. Therefore, the "medium-replay" dataset encompasses both the ``medium'' and "replay" data.
    \item 
medium-expert: This dataset supplements an equal number of expert trajectories with suboptimal trajectories. The suboptimal samples are sourced either from a uniformly random policy or from a medium-performance policy. 
\end{itemize}

\textbf{Real world enviroment: OhioT1DM offline dataset. } 

We applied the proposed algorithm on the Ohio Type 1 Diabetes Mobile Health (OhioT1DM) study \citep{marling2020ohiot1dm}. This dataset comprises six patients with type 1 diabetes, each contributing eight weeks of life-event data—spanning health status measurements to insulin injection dosages. Given the unique glucose dynamics of each patient, we treat each patient's data as an individual dataset, in line with \cite{zhu2020causal}. Thus, daily data is seen as an individual trajectory. Data points are aggregated over 60-minute intervals, ensuring a maximum horizon length of 24. After the exclusion of missing samples and outliers, the total number of transition pairs for each patient's dataset approximates $n=360$. The state variable $s^{t}$ is set to be a three-dimensional vector including the average blood glucose levels $s^{t}_{[1]}$, the average heart rate $s^{t}_{[2]}$ and the total carbohydrates $s^{t}_{[3]}$ intake during the period time $[t-1,t]$. Here, the reward is defined as the average of the index of glycemic control \citep{rodbard2009interpretation,li2023quasi} between time $t-1$ and $t$, measuring the health status of the patient's glucose level. That is
\$
{r^{t}}=-\frac{\mathbb{I}({s^{t}_{[1]}}>140)|{s^{t}_{[1]}}-140|^{1.10} + \mathbb{I}({s^{t}_{[1]}}<80)({s^{t}_{[1]}} - 80)^{2}}{30},
\$
which implies that reward $r^{t}$ is non-positive and a larger value is preferred. Then we estimate the optimal policy by treating each day as an independent sample.
We study the individualized dose-finding problem
by selecting the optimal continuous dose level for intervention options. For model performance evaluation, since the data-generating process is unknown, we follow \cite{luckett2020estimating} to utilize the Monte Carlo approximation of the estimated function of the initial state of each trajectory to evaluate the performance of each method. To better evaluate the stability and performance of each method, we randomly select $20$ trajectories from each individual based on available trajectories $50$ times and apply all methods to the selected data.  The mean and standard deviation of the improvements on the Monto Carlo discounted returns are presented in Table 2 in maintext.

\subsection{Implementation Details}

In the synthetic environments,  we first learn a sub-optimal policy using DQN \citep{mnih2015human} and then apply softmax to its $q$-function, divided by a temperature parameter $\alpha$ to set the action probabilities to define a behavior policy $\pi_{b}$. In particular, we set $\alpha=0.1,0.5,1$ for the three degree of exploration ``Low'', ``Medium'', and ``Relatively High'', respectively. For the implementation, we set the detection function as a quadratic form, i.e., $\mathbb{D}(x) = \frac{1}{2}(x-1)^2$, which satisfies the definition of $\mathbb{D}(x)$ in Definition 3.1 in maintext. To evaluate the policy obtained from the proposed method in synthetic experiments, we generate 100 independent trajectories, each with a length of 100 based on the learned policy. We sample each action by the learned policy $\pi(a|s)$ and calculate the discounted sum of reward for each trajectory. We compare the discounted return of each method and output the results in maintext.

For function approximation in $\mathcal{Q}_{\theta}$ class in our practical implementation, we set the function spaces $\mathcal{Q}_{\theta}$ to RKHSs to facilitate the computation. For function modeling in $\Omega_{\psi}$, we model $\Omega_{\psi}$ by feedforward neural networks to handle the complex behavior of $\tau$.  The radius of the function class is selected to be sufficiently large to ensure the flexibility of the $\Omega_{\psi}$. For the feedforward neural networks modeling, we are parameterized by a two-layer neural network with a layer width 
$256$ and using ReLU as activation functions.  For the RKHS modeling, we use the Gaussian RBF kernel. RBF kernel, for any sample $x$ and $x^{\prime}$
\$
K\left(x;x^{\prime}\right):=\exp \left(-\frac{\left\|x-x^{\prime}\right\|^2}{2 \text{bw}^2}\right),
\$
where $\text{bw}$ is the bandwidth. In our numerical experiments, we use Silverman's rule of thumb for bandwidth selection \citep{silverman1986density}. In particular, we apply the finite representer theorem in RKSH to model $\theta \in \mathcal{Q}$ as 
$
q_{\theta}(s,a) =\sum^{n}_{i=1}K(\{s,a\},\{s_i,a_i\})\theta_i,
$
for the parameters of interest $\{\theta_i\}^{n}_{i=1}$. In step 7 in Algorithm \ref{prox_map_detail}, we optimize $\theta^{t}$ with a fixed $\psi^{t-1}$ using stochastic gradient descent with learning rate $5\times10^{-3}$, and set the stepsize $\eta^{0} = 1\times10^{-3}$. We set the decay learning rate $\eta^{t} $ for the $t$th iteration be $\frac{\eta^0}{1+0.3\cdot t^{1/4}}$, where $\alpha_0$ is the learning rate of the initial iteration for optimizating $\psi^{t}$. For updating the policy, we model the policy class $\Pi_{\omega}$ by a softmax policy class or Gaussian distribution a two-layer neural network with a layer width $64$. The updating rate $\zeta$ is also set to $3\times10^{-3}$. The class $\Omega_{\omega}$ and $\Pi_{\omega}$ and $\mathcal{Q}_{\theta}$ are optimized with Adam \citep{kingma2014adam}. For hyperparameters-tuning, we set hyper-parameters satisfying the condition $\lambda = c^{*} = \frac{2\cdot n^{1/4}}{3\cdot d\log(\bar{V}\sqrt{n})}$ via a offline selection rule inspired from Theorem 5.2.

% In the practical implementation of our algorithm, in step 7, we model $\mathcal{Q}$ in a bounded reproducing kernel Hilbert space (RKHS) equipped with a positive definite kernel $K(\cdot, \cdot)$, i.e.,  $\mathcal{Q}_{RKHS}(\bar{V}) \coloneqq \{q \in RKHS: \| q \|_{RKHS} \leq \bar{V} \}$, where $\|\cdot\|_{RKHS}$ denotes the kernel norm and the constant $\bar{V} >0$. 

For the implementation of competing methods, we implement the methods BEAR, CQL, IQL, BCQ, and COMBO mainly based on the popular offline deep reinforcement learning library \citep{seno2022d3rlpy}. For the general optimization and function approximation settings,  we use a multi-layer perceptron (MLP) with 2 hidden layers, each with 256 units for function approximation. We set the batch size to be 64, and use ReLU function as the activation function. In addition to the explicitly mentioned in the following, we choose the learning rate from the set of $\{3\times10^{-4},1\times10^{-4},3\times10^{-5}\}$. We use Adam as the optimizer for learning the neural network parameters. Specifically, for BEAR, the MMD constraint parameter is tuned over the candidate set $\{0.1, 0.25, 0.5, 0.75, 1\}$
 as in \citep{kumar2019stabilizing}. The samples of MMD 
 is tuned over the set $5,10,15$. The KL-control baseline uses automatic temperature tuning as in \citep{kumar2019stabilizing}. For CQL, 
we follow the author-released default settings but we modify the actor learning rate and use a fixed $\alpha$ instead of the Lagrange variant. This modification is to match the 
hyperparameters defined in their paper as \cite{fujimoto2021minimalist} found
the original hyperparameters performed better. For IQL, we use cosine schedule for the actor learning rate. For COMBO, we selected the conservative coefficient from the set $\{0.5, 1, 2.5\}$ and found $1$ is the best. We choose $\rho(s,a)$ in \citep{yu2021combo} as the soft-maximum of the $q$-values and estimated with log-sum-exp. In addition, we set up the learning rate for policy and value function updates as $1 \times 10^{-4}$ and $3 \times 10^{-5}$, respectively. For the implementation of the methods, ATAC and OptiDICE, we use the source code provided by the authors  \cite{cheng2022adversarially} and \cite{lee2021optidice}. In particular, we follow the basic implementation for OptiDICE setup in \citep{lee2021optidice}, we model the value function class, the advantage function class and the policy class using  fully-connected MLPs with two hidden layers and ReLU
activations, where the number of hidden units on each layer is equal to 256.  For the optimization of each network, we use Adam optimizer and its learning rate $0.0003$. The batch size is set to be $32$. We select the regularization coefficient to be $0.1$. Before training neural networks, we
preprocess the dataset $\mathcal{D}_{1:n}$ by standardizing observations and rewards. In terms of the details for implementing ATAC, we follow \cite{cheng2022adversarially}, employing separate 3-layer fully connected neural networks for realizing the policy and the critics. Each hidden layer comprises 256 neurons and utilizes a ReLU activation function, while the output layer employs a linear function. We use a softmax policy class for the policy. Optimization is performed using Adam with a minibatch size of 64, and we set the two-timescale stepsizes in \cite{cheng2022adversarially} as $\eta_{\text {fast }}=0.0005$ and $\eta_{\text {slow }}=10^{-3} \eta_{\text {fast }}$,  with values $\eta_{\text {fast }} = 5 \times 10^{-4}$ and $\eta_{\text {slow }}=5 \times 10^{-5}$. The mixing weights in a combination of the temporal difference (TD) losses of the critic and its delayed targets are set to $w = 0.5$ to ensure stability. Finally, for TD3+BC, we follow the default implementation in the original paper but we make a flexible choice on the hyperparameter $\lambda$ not fix $\lambda = \alpha$ in the original paper. We set and implement $\lambda=\frac{\alpha}{\frac{1}{n}\sum_{(s, a)}|q(s, a)|}$, which decreases the value of $\lambda$ when the function estimate is divergent due to
extrapolation error \citep{fujimoto2021minimalist,fujimoto2019off}. We found this setup helps to improve the performance of the algorithm.

\subsection{Additition Experiments Results}

\textbf{Sensitivity Analyses}  Tuning parameter selection is an open problem in offline policy optimization. Fortunately, our algorithm has desired robustness to choices of hyperparameters, when we set the hyperparameters satisfying the conditions in Theorem 5.2, i.e., $\mathcal{O}(\frac{n^{1/4}}{ d\log(\bar{V}\sqrt{n})})$. To validate the robustness of the proposed algorithm with respect to the hyperparameter-tuning, we conduct sensitivity analyses on the walker2d, hopper, and halfcheetah datasets. Figure \ref{fig:sen} shows that the policy performance is robust over a wide value range of $c^{*}$ and $\lambda$ ($\lambda,c^{*}$ in $[1,0.01]$), and the performance of under our choice $(c^{*}=0.1,\lambda=0.1)$ shown in Table \ref{sen_va} is close the best.

In Tables \ref{hop}-\ref{maze}, we report the results of the experiments for sensitivity analyses on the values of the hyperparameters vs policy performance on the additional D4RL benchmarks (hopper, walker2d, maze2d), in addition to the results (halfcheetah) we previously presented. Each number in the following tables is the normalized score of the policy at the last iteration of training, averaged over $3$ random seeds. From the tables, we can see that, our algorithm demonstrates robustness over a wide value range of hyperparameters. Also, the policy performance under our hyperparameter choice is close to the best performance in the table, which indicates the effectiveness of our proposed hyperparameter selection rule.

\begin{table}[h!]
\footnotesize
\setlength{\tabcolsep}{0.1em}
\label{hop}
\centering
\caption{Hopper-medium-replay: Our selection rule chooses $\lambda=c^*=0.25$ with the policy performance $114.0 \pm 2.4$.}
\begin{tabular}{c|c|c|c|c|c|c}
\hline
$c^*(\mathrm{col}), \lambda(\mathrm{row})$ & $\mathbf{2.5}$ & $\mathbf{1}$ & $\mathbf{0.1}$ & $\mathbf{0.01}$ & $\mathbf{0.0025}$ & $\mathbf{0.001}$ \\
\hline
$\mathbf{2.5}$ & $108.1 \pm 2.7$ & $109.7 \pm 2.4$ & $111.6 \pm 3.1$ & $111.1 \pm 2.7$ & $109.5 \pm 2.1$ & $108.3 \pm 4.4$ \\
\hline
$\mathbf{1}$ & $109.3 \pm 1.7$ & $111.8 \pm 2.4$ & $113.2 \pm 2.6$ & $112.9 \pm 2.2$ & $112.0 \pm 3.0$ & $110.7 \pm 3.3$ \\
\hline
$\mathbf{0.1}$ & $112.6 \pm 2.1$ & $113.3 \pm 2.0$ & $114.4 \pm 2.9$ & $114.6 \pm 2.1$ & $113.2 \pm 2.9$ & $112.5 \pm 3.4$ \\
\hline
$\mathbf{0.01}$ & $111.8 \pm 2.8$ & $112.0 \pm 3.6$ & $114.6 \pm 2.9$ & $114.2 \pm 3.3$ & $113.1 \pm 2.7$ & $110.2 \pm 3.4$ \\
\hline
$\mathbf{0.0025}$ & $109.7 \pm 2.6$ & $111.5 \pm 3.2$ & $113.8 \pm 2.6$ & $113.3 \pm 4.4$ & $112.6 \pm 3.7$ & $110.1 \pm 3.7$ \\
\hline
$\mathbf{0.001}$ & $108.2 \pm 3.0$ & $109.5 \pm 3.8$ & $111.9 \pm 3.5$ & $111.2 \pm 2.6$ & $109.8 \pm 3.1$ & $108.4 \pm 4.6$ \\
\hline
\end{tabular}
\end{table}

\begin{table}[h!]
\centering
\footnotesize
\label{walk}
\setlength{\tabcolsep}{0.1em}
\caption{Walker2d-medium-replay: Our selection rule chooses $\lambda=c^*=0.1$ with the policy performance $101.2 \pm 3.2$.}
\begin{tabular}{c|c|c|c|c|c|c}
\hline
$c^*(\mathrm{col}), \lambda \text{ (row) }$ & $\mathbf{2.5}$ & $\mathbf{1}$ & $\mathbf{0.1}$ & $\mathbf{0.01}$ & $\mathbf{0.0025}$ & $\mathbf{0.001}$ \\
\hline
$\mathbf{2.5}$ & $95.8 \pm 2.5$ & $97.4 \pm 2.8$ & $98.4 \pm 2.5$ & $99.1 \pm 3.2$ & $97.8 \pm 3.0$ & $97.9 \pm 3.4$ \\
\hline
$\mathbf{1}$ & $97.3 \pm 2.7$ & $98.0 \pm 3.1$ & $98.8 \pm 2.8$ & $99.4 \pm 3.4$ & $98.7 \pm 2.7$ & $98.1 \pm 3.2$ \\
\hline
$\mathbf{0.1}$ & $97.4 \pm 2.8$ & $98.3 \pm 2.9$ & $101.2 \pm 3.2$ & $101.3 \pm 3.4$ & $98.9 \pm 3.6$ & $97.5 \pm 4.2$ \\
\hline
$\mathbf{0.01}$ & $98.2 \pm 2.8$ & $99.5 \pm 2.9$ & $101.7 \pm 3.9$ & $102.6 \pm 3.4$ & $100.2 \pm 3.1$ & $98.4 \pm 3.3$ \\
\hline
$\mathbf{0.0025}$ & $98.0 \pm 3.6$ & $97.5 \pm 4.2$ & $100.1 \pm 3.6$ & $100.8 \pm 3.5$ & $99.2 \pm 4.2$ & $97.4 \pm 4.0$ \\
\hline
$\mathbf{0.001}$ & $97.2 \pm 3.8$ & $98.5 \pm 3.3$ & $98.2 \pm 3.8$ & $99.3 \pm 3.6$ & $97.8 \pm 5.2$ & $98.2 \pm 4.1$ \\
\hline
\end{tabular}
\end{table}

\begin{table}[h!]
\label{maze}
\footnotesize
\centering
\setlength{\tabcolsep}{0.1em}
\caption{Maze2d-medium: Our selection rule chooses $\lambda=c^*=2.25$ with the policy performance $138.1 \pm 7.6$.}
\begin{tabular}{c|c|c|c|c|c|c}
\hline
$c^*(\mathrm{col}), \lambda \text{ (row) }$ & $\mathbf{15}$ & $\mathbf{10}$ & $\mathbf{5}$ & $\mathbf{2.5}$ & $\mathbf{1}$ & $\mathbf{0.5}$ \\
\hline
$\mathbf{15}$ & $134.5 \pm 4.6$ & $133.9 \pm 5.8$ & $134.8 \pm 4.5$ & $136.7 \pm 6.2$ & $134.8 \pm 6.0$ & $134.9 \pm 5.4$ \\
\hline
$\mathbf{10}$ & $133.7 \pm 4.2$ & $136.7 \pm 5.1$ & $135.8 \pm 6.8$ & $138.4 \pm 7.4$ & $135.7 \pm 12.2$ & $137.5 \pm 8.2$ \\
\hline
$\mathbf{5}$ & $133.9 \pm 5.8$ & $137.3 \pm 6.9$ & $136.6 \pm 5.5$ & $138.3 \pm 9.2$ & $134.9 \pm 7.0$ & $135.1 \pm 5.2$ \\
\hline
$\mathbf{2.5}$ & $137.5 \pm 6.3$ & $135.8 \pm 5.9$ & $140.7 \pm 10.9$ & $138.9 \pm 9.2$ & $132.2 \pm 8.1$ & $133.7 \pm 6.5$ \\
\hline
$\mathbf{1}$ & $134.0 \pm 4.2$ & $137.2 \pm 10.7$ & $133.8 \pm 6.9$ & $137.3 \pm 9.5$ & $138.2 \pm 5.2$ & $137.6 \pm 8.0$ \\
\hline
$\mathbf{0.5}$ & $135.2 \pm 11.8$ & $133.7 \pm 8.3$ & $136.1 \pm 7.8$ & $134.5 \pm 6.7$ & $137.2 \pm 9.2$ & $135.5 \pm 7.1$ \\
\hline
\end{tabular}
\end{table}

\begin{table}  [ht]
\label{sen_va}
\footnotesize
\setlength{\tabcolsep}{0.15em}
\renewcommand{\arraystretch}{1.05}
		\centering
    \caption{Hyperparameter values for D4RL benchmark.}
\begin{tabular}{c|ccccccccc }
\hline Gym locomotion Tasks & Hypereparameters \\
\hline  walker2d-medium   & $0.25$  \\
        walker2d-medium-replay  & $0.1$  \\
        walker2d-medium-expert  & $0.35$ \\
        walker2d-random  & $0.25$ \\
        \hline  
        hopper-medium  & $0.4$ \\
        hopper-medium-replay  & $0.25$  \\
        hopper-medium-expert  & $0.5$ \\     
        hopper-random  & $0.4$  \\
        \hline  
        halfcheetah-medium  & $0.25$ \\
        halfcheetah-medium-replay  & $0.1$ \\
        halfcheetah-medium-expert  & $0.35$\\
        halfcheetah-random  & $0.25$\\
        \hline  
        Maze2d Tasks & Hyperparamters \\
\hline

       maze2d-umaze   & $2$ \\
        maze2d-medium   & $2.25$ \\
        maze2d-large  & $2.5$ \\  
        \hline 
\end{tabular} 
\end{table}

\begin{figure}[H]
    \centering
\includegraphics[width=0.68\textwidth]{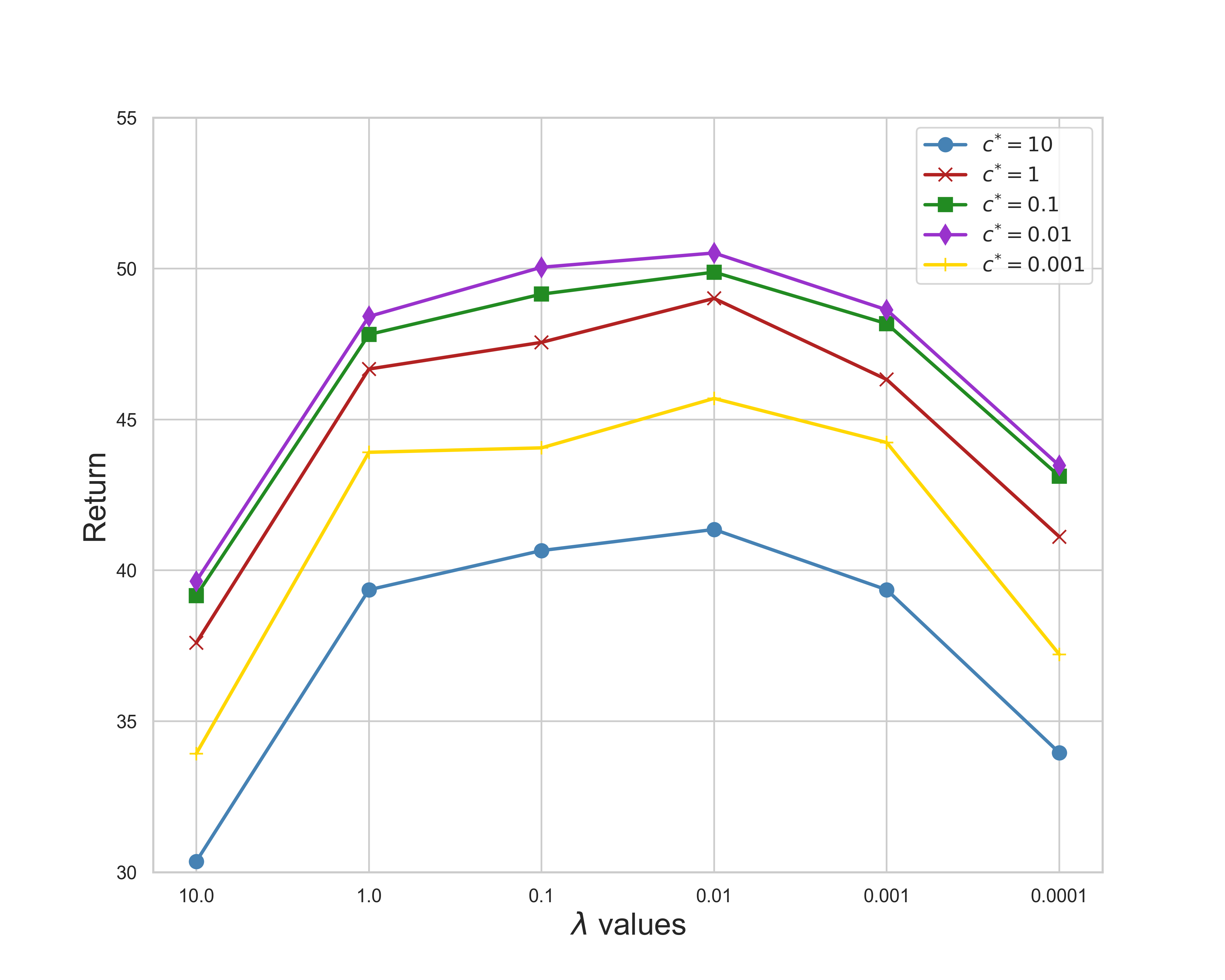}
    \caption{Sensitivity analysis on the effects of hyperparameters $\lambda$ and $c^{*}$ for model performance with halfcheetah-medium-replay dataset.}
    \label{fig:sen}
\end{figure}

\textbf{Empirical evaluation on theoretical results.} We also empirically validate the regret bound in Theorem 5.2. In general, we have no information on the optimal policy and whether it is covered by offline datasets, which makes it challenging to accurately compute the regret in order to verify our theoretic bound. Thus, we carefully design a synthetic environment. We describe the environment in the following: the reward 
$r(s, a)=(a-\beta s)^{\top} \Lambda(a-\beta s)$ with coefficient matrix $\beta$ and the negative definite matrix $\Lambda$. Therefore, the optimal policy has an analytical form $\pi^{\star}(s)=s^{\top} \beta$, which is important to calculate precise regret. The dataset is generated following $\pi_b$ such that $a = \beta s + \mathcal{N}\left(0, \sigma_0^2 I \right)$, indicating the behavior policy is more different from the optimal one and the data is more explored when $\sigma_0$ is large.

In Figure \ref{regret_conv} we study the convergence rate of regret, which validates the  $\mathcal{O}(n^{-1/4})$ rate in Theorem 5.1 and 5.2. The plot shows that the convergence rate is close to $\mathcal{O}(n^{-1/4})$ in all scenarios, which validates the theoretical regret bound of our practical algorithm in Theorem 5.1 and 5.2.

\begin{figure}[H]
\label{regret_conv}
    \centering
\includegraphics[width=0.68\textwidth]{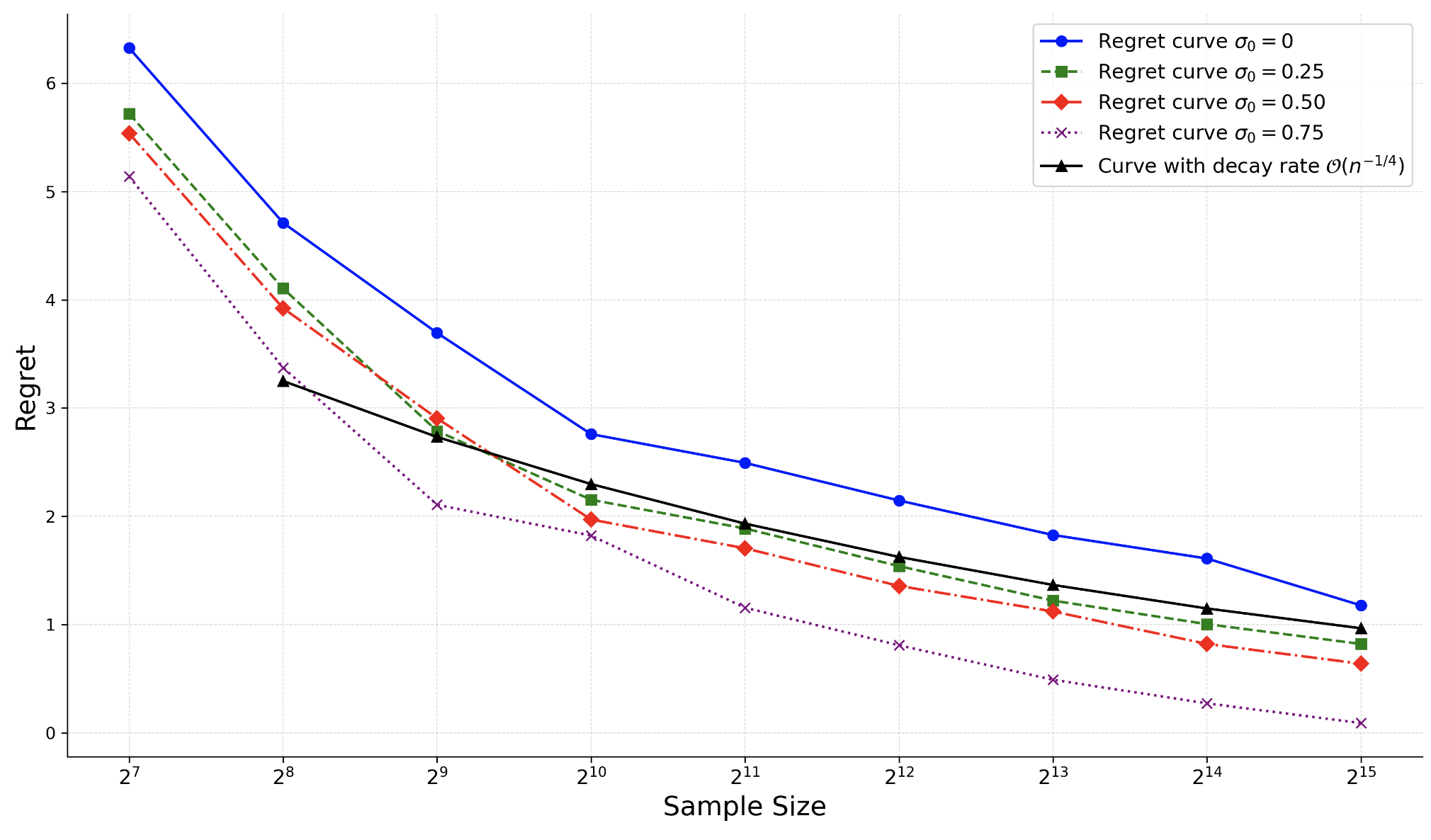}
    \caption{Convegence rate of the near-optimal regret (compete to the optimal policy) on the synthetic dataset with different degrees of exploration $\sigma_{0}$. A smaller $\sigma_{0}$ indicates the training data is less explored.}
\end{figure}

\section{Proof of Theorem 3.1}

\subsection{Proof of Lemma \ref{tele_lemma}}

\begin{lemma}[\citep{zhou2023distributional}]
\label{tele_lemma}
For any target policy $\pi \in \Pi$ and  $\tau \in \Omega$, 
\$
\frac{\mathbb{E}_{\mu}[\tau(s,a)r(s,a)]}{1-\gamma} - J^{\mathbb{D}}(\pi)  = \frac{\mathbb{E}_{\mu}\left[\tau(s,a)\left(q^{\pi}(s, a)-\gamma q^{\pi}\left(s^{\prime}, \pi\right)\right) - \lambda \mathbb{D}(\tau(s,a))\right]}{1-\gamma} - q^{\pi}(s^0,\pi),
%\label{tele_gap}
\$
where $J^{\mathbb{D}}(\pi) := J(\pi) - \lambda \xi(\mathbb{D},\tau)$ for $\xi(\mathbb{D},\tau) := \mathbb{E}_{\mu}[\frac{ \mathbb{D}(\tau(s,a))}{1-\gamma}]$, and $q^{\pi}$ is the unique fixed point of Bellman equation $\mathcal{B}^{\pi}q=q$. 
\end{lemma}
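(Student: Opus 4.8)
The plan is to derive the identity directly from the Bellman equation $q^{\pi} = \mathcal{B}^{\pi} q^{\pi}$ and the law of iterated expectations, treating the $\mathbb{D}$-dependent terms as pure bookkeeping through the definitions of $J^{\mathbb{D}}$ and $\xi(\mathbb{D},\tau)$. First I would substitute $J^{\mathbb{D}}(\pi) = J(\pi) - \lambda\xi(\mathbb{D},\tau)$ together with the identity $J(\pi) = q^{\pi}(s^{0},\pi)$ into the left-hand side and expand $\xi(\mathbb{D},\tau) = \mathbb{E}_{\mu}[\mathbb{D}(\tau(s,a))]/(1-\gamma)$. After this substitution the $q^{\pi}(s^{0},\pi)$ terms and the $\lambda\mathbb{D}(\tau)/(1-\gamma)$ terms appear on both sides and can be matched directly against their counterparts, so the claim reduces to the single reward-versus-Bellman identity
\[
\mathbb{E}_{\mu}[\tau(s,a)r(s,a)] = \mathbb{E}_{\mu}\big[\tau(s,a)\big(q^{\pi}(s,a) - \gamma q^{\pi}(s^{\prime},\pi)\big)\big].
\]

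The core step is the treatment of the next-state term. Since $\mathbb{E}_{\mu}$ is taken over transition tuples $(s,a,r,s^{\prime})$ with $s^{\prime} \sim \mathds{P}(\cdot\,|\,s,a)$ and $\tau(s,a)$ is measurable with respect to $(s,a)$, I would condition on $(s,a)$ and apply the tower property to obtain
\[
\mathbb{E}_{\mu}\big[\tau(s,a)\,\gamma\, q^{\pi}(s^{\prime},\pi)\big] = \mathbb{E}_{\mu}\big[\tau(s,a)\,\gamma\,\mathds{P}^{\pi} q^{\pi}(s,a)\big],
\]
using the definition $\mathds{P}^{\pi} q^{\pi}(s,a) = \mathbb{E}_{s^{\prime}\sim\mathds{P}(\cdot|s,a)}[q^{\pi}(s^{\prime},\pi)]$ from the preliminaries. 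I would then invoke the Bellman equation in the form $q^{\pi}(s,a) - \gamma\,\mathds{P}^{\pi} q^{\pi}(s,a) = r(s,a)$, which shows that $\mathbb{E}_{\mu}[\tau(s,a)(q^{\pi}(s,a) - \gamma\,\mathds{P}^{\pi} q^{\pi}(s,a))] = \mathbb{E}_{\mu}[\tau(s,a)r(s,a)]$ and closes the reduced identity. Dividing by $(1-\gamma)$ and reassembling the previously matched $q^{\pi}(s^{0},\pi)$ and $\lambda\mathbb{D}$ terms yields the stated equality.

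The only genuinely non-trivial point is the tower-property step: one must recognize that under $\mathbb{E}_{\mu}$ the single next-state draw $q^{\pi}(s^{\prime},\pi)$ may be replaced \emph{exactly} by its conditional mean $\mathds{P}^{\pi} q^{\pi}(s,a)$, with no residual variance term. This is precisely the feature that makes the weighted average Bellman error an unbiased target and lets the paper avoid the double-sampling issue and the completeness-type assumptions discussed after Theorem \ref{game}; it works only because the expression is linear in $q^{\pi}(s^{\prime},\pi)$ rather than quadratic. Everything else is a routine rearrangement of the defining expressions for $J^{\mathbb{D}}$ and $\xi(\mathbb{D},\tau)$.
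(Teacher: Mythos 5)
Your proof is correct and follows essentially the same route as the paper's: reduce the identity to the vanishing of the $\tau$-weighted average Bellman error $\mathbb{E}_{\mu}[\tau(s,a)(r(s,a)+\gamma q^{\pi}(s^{\prime},\pi)-q^{\pi}(s,a))]$ and kill that term via the tower property over $s^{\prime}\sim\mathds{P}(\cdot|s,a)$ combined with the Bellman equation. One caveat: the direct matching of the $\lambda\mathbb{D}$ terms that you (and the paper) perform only works under the convention $J^{\mathbb{D}}(\pi)=J(\pi)+\lambda\xi(\mathbb{D},\tau)$, which is what the paper's own proof actually invokes; with the minus sign as written in the lemma statement the two $\lambda\xi(\mathbb{D},\tau)$ terms would add rather than cancel, so that sign should be treated as a typo in the statement rather than a gap in your argument.
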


\begin{proof}[Proof of Lemma \ref{tele_lemma}]
It follows the definition $J^{\mathbb{D}}(\pi) = J(\pi) + \lambda \xi(\mathbb{D},\tau)$. Then it is sufficient to show 
\$
\frac{\mathbb{E}_{\mu}[\tau(s,a)r(s,a)]}{1-\gamma} - J(\pi)  = \frac{\mathbb{E}_{\mu}\left[\tau(s,a)\left(q^{\pi}(s, a)-\gamma q^{\pi}\left(s^{\prime}, \pi\right)\right) \right]}{1-\gamma} - q^{\pi}(s^0,\pi).
\$
We rearrange the equation as 
\$
J(\pi) - q^{\pi}(s^0,\pi) = \frac{\mathbb{E}_{\mu}\left[\tau(s,a)\left(-q^{\pi}(s, a) + r(s,a)+ \gamma q^{\pi}\left(s^{\prime}, \pi\right) \right)\right]}{1-\gamma}.
\$
Following the definition of 
$J(\pi) = \mathbb{E}\left[\sum_{t=0}^{\infty} \gamma^{t} r^{t} | \pi\right] =q^{\pi}(s^0,\pi),  
$
Therefore, it leaves to show the 
$
\frac{\mathbb{E}_{\tau}\left[r+ \gamma q^{\pi}\left(s^{\prime}, \pi\right)-q^{\pi}(s, a)  \right]}{1-\gamma} = 0 .
$
As $q^{\pi}(s, a) = r(s,a)+\mathbb{E}_{s^{\prime}\sim \mathds{P}(\cdot|s,a)}[q^{\pi}(s^{\prime},\pi)]$ by Bellman evaluation equation, thus we concldue that 
\$
& \frac{\mathbb{E}_{\mu}\left[\tau(s,a)\left(r(s,a)+ \gamma q^{\pi}\left(s^{\prime}, \pi\right)-q^{\pi}(s, a)  \right)\right]}{1-\gamma} \\
= &   \frac{\mathbb{E}_{\mu}\left[\tau(s,a)\big(r+ \gamma q^{\pi}\left(s^{\prime}, \pi\right)-q^{\pi}(s, a)\big)  \right]}{1-\gamma} \\
= & \frac{\int_{s,a}\mu(s,a)\left[\tau(s,a)\mathbb{E}_{s^{\prime}\sim \mathds{P}(\cdot|s,a)}\big[\big(r(s,a)+ \gamma q^{\pi}\left(s^{\prime}, \pi\right)-q^{\pi}(s, a)\big)\big]\right]}{1-\gamma} \\
= & \frac{\int_{s,a}\mu(s,a)\left[\tau(s,a)\big(r(s,a)+ \gamma\mathbb{E}_{s^{\prime}\sim \mathds{P}(\cdot|s,a)}\big[ q^{\pi}\left(s^{\prime}, \pi\right)\big]-q^{\pi}(s, a)\big)\right]}{1-\gamma} 
=  0.
\$
This completes the proof.
\end{proof}

\subsection{Proof of Theorem 3.2}

\begin{proof}
To prove the theorem, we follow the proof of Theorem 3.4 in \citep{zhou2023distributional}. We need to establish appropriate confidence in upper and lower bounds at the same time. To simplify the notation, we denote $\mathbb{E}_{\tau}[\cdot] = \mathbb{E}_{\mu}[\tau(s,a)\cdot]$. At first, we prove for the confidence lower bound. It follows Lemma \ref{tele_lemma} and for any $\lambda>0$, we have 
\$
\frac{\mathbb{E}_{\tau}\left[r(s,a) + \gamma q^{\pi}\left(s^{\prime}, \pi\right) - q^{\pi}(s, a) - \lambda \mathbb{D}(\tau(s,a))/\tau(s,a)\right]}{1-\gamma} =  J(\pi) - q^{\pi}(s^0,\pi) - \lambda \xi(\mathbb{D},\tau).
\$
This immediately implies that
\$
 &J(\pi) - q^{\pi}(s^0,\pi) \geq \frac{\mathbb{E}_{\tau}\left[r(s,a) + \gamma q^{\pi}\left(s^{\prime}, \pi\right) - q^{\pi}(s, a) - \lambda \mathbb{D}(\tau(s,a))/\tau(s,a)\right]}{1-\gamma} \\
 \iff & J(\pi) \geq \frac{\mathbb{E}_{\tau}\left[r(s,a) - \gamma q^{\pi}\left(s^{\prime}, \pi\right) - q^{\pi}(s, a) - \lambda \mathbb{D}(\tau(s,a))/\tau(s,a)\right]}{1-\gamma} + q^{\pi}(s^0,\pi). 
\$
The above equation helps to obtain the lower bound for the bias evaluation but without concern about the uncertainty quantification due to sampling. To construct the sample estimator for the lower bound and incorporate the uncertainty deviation, we first observe that it suffices to approximation 
\$
\frac{\mathbb{E}_{\tau}\left[r(s,a) - \gamma q^{\pi}\left(s^{\prime}, \pi\right) - q^{\pi}(s, a)  -\lambda\mathbb{D}(\tau(s,a))/\tau(s,a)\right]}{1-\gamma},
\$
by its sample counterparts. That is, 
\$
\frac{\frac{1}{n}\sum^{n}_{i=1}\tau(s_i,a_i)\left(r_i + \gamma q^{\pi}(s^{\prime}_i,\pi)-q^{\pi}(s_i,a_i)\right)}{1-\gamma} -\lambda\xi_n(\mathbb{D},\tau)
\$
To bound below the uncertainty, this is equivalent to finding a good $\sigma_n$ such that for any $\tau \in \Omega$, 
\#
& \frac{\mathbb{E}_{\tau}\left[r(s,a) - \gamma q^{\pi}\left(s^{\prime}, \pi\right) - q^{\pi}(s, a) - \lambda \mathbb{D}(\tau(s,a))/\tau(s,a)\right]}{1-\gamma} \notag \\
\geq &\frac{\frac{1}{n}\sum^{n}_{i=1}\tau(s_i,a_i)\left(r_i + \gamma q^{\pi}(s^{\prime}_i,\pi)-q^{\pi}(s_i,a_i)\right)}{1-\gamma} -\lambda\xi_n(\mathbb{D},\tau) -\sigma^{\circ}_n
\label{uncertain_bound}
\#
with probability at least $1-\delta/2$. Note that the power of $\delta/2$ is due to that we need to further consider the upper confidence bound with also $\delta/2$ power so that the confidence interval holds w.p. $\geq 1-\delta$.    

According to Bellman equation, we know that 
$r(s,a) - \gamma \mathbb{E}_{s^{\prime} \sim \mathds{P}(\cdot|s,a)}[q^{\pi}\left(s^{\prime}, \pi\right)] - q^{\pi}(s, a)=0$ for any $s,a$. This implies that 
\#
& \frac{\mathbb{E}_{\tau}\left[r(s,a) - \gamma q^{\pi}\left(s^{\prime}, \pi\right) - q^{\pi}(s, a) - \lambda \mathbb{D}(\tau(s,a))/\tau(s,a)\right]}{1-\gamma}  \notag \\
= &\frac{\mathbb{E}_{\tau}\left[-\lambda \mathbb{D}(\tau(s,a))/\tau(s,a)\right]}{1-\gamma}  \leq  0
\label{plug_in_un}
\#
where the last inequality comes from the fact of Definition 3.1 on the detection function $\mathbb{D}(\cdot)$ which is always non-negative. 

Combine the inequalities \eqref{uncertain_bound} and  \eqref{plug_in_un}, it is sufficient to obtain $\sigma^{\circ}_{n}$ satisfying the following condition:
\$
 \sigma^{\circ}_{n} \geq \frac{\frac{1}{n}\sum^{n}_{i=1}\tau(s_i,a_i)\left(r_i + \gamma q^{\pi}(s^{\prime}_i,\pi)-q^{\pi}(s_i,a_i)\right)}{1-\gamma} -\lambda\xi_n(\mathbb{D},\tau), 
\$
for any $\tau \in \Omega$. We can rewrite it to use a uniform argument, that is 
\$
\sup_{\tau \in \Omega}\left\{\frac{1}{n}\sum^{n}_{i=1}\frac{\tau(s_i,a_i)\left(r_i + \gamma q^{\pi}(s^{\prime}_i,\pi)-q^{\pi}(s_i,a_i)\right)}{1-\gamma} -\lambda\xi_n(\mathbb{D},\tau)\right\} \leq \sigma^{\circ}_{n}
\$
Now, recall that we have a condition that 
\$
\sup_{\tau \in \Omega}\Big| \frac{1}{n(1-\gamma)}\sum^{n}_{i=1}\tau(s_i,a_i)\left(r_i +\gamma q^{\pi}(s^{\prime}_i,\pi)-q^{\pi}(s_i,a_i)\right)-\lambda\xi_n(\mathbb{D},\tau)\Big| \leq \sigma_n,
\$
which directly implies that 
\$
\sup_{\tau \in \Omega}\left\{\frac{1}{n}\sum^{n}_{i=1}\frac{\tau(s_i,a_i)\left(r_i + \gamma q^{\pi}(s^{\prime}_i,\pi)-q^{\pi}(s_i,a_i)\right)}{1-\gamma} -\lambda\xi_n(\mathbb{D},\tau)\right\} \leq \sigma_{n}. 
\$

Therefore, we set $\sigma^{\circ}_n = \sigma_{n}$, and  combine with \eqref{uncertain_bound}, it obtains that for any $\tau \in \Omega$,
\$
 J(\pi) \geq &\frac{1}{n}\sum^{n}_{i=1} \frac{\tau(s_i,a_i)\left(r_i + \gamma q^{\pi}(s^{\prime}_i,\pi)-q^{\pi}(s_i,a_i)\right)}{1-\gamma} + q^{\pi}(s^0,\pi) -\lambda\xi_n(\mathbb{D},\tau) -\sigma_{n} \\
J(\pi) \geq &\frac{1}{n}\sum^{n}_{i=1}\frac{r_i\tau(s_i,a_i)}{1-\gamma} + \inf_{q \in \mathcal{Q}} \frac{\frac{1}{n}\sum^{n}_{i=1} \tau(s_i,a_i)(\gamma q(s^{\prime}_i,\pi) - q(s_i,a_i)) + (1-\gamma)q(s^0,\pi)}{1-\gamma} \\
& -\lambda\xi_n(\mathbb{D},\tau)-\sigma^{L}_n \\
 J(\pi) \geq  &\frac{1}{n}\sum^{n}_{i=1}\frac{r_i\tau(s_i,a_i)}{1-\gamma} - \sup_{q \in \mathcal{Q}} \underbrace{ \frac{\frac{1}{n}\sum^{n}_{i=1} \tau(s_i,a_i)(q(s_i,a_i)-\gamma q(s^{\prime}_i,\pi)) - (1-\gamma)q(s^0,\pi)}{1-\gamma}}_{\widehat{M}_{n}(-q,\tau)} \\
 & -\lambda\xi_n(\mathbb{D},\tau) -\sigma^{L}_n.
 \$
This completes the proof for the confidence lower bound.

Now, it remains to prove the result for the confidence upper bound. According to the value interval in (3) of the maintext, we observe 
\$
 J(\pi) \leq \frac{\mathbb{E}_{\tau}\left[r(s,a) - \gamma q^{\pi}\left(s^{\prime}, \pi\right) - q^{\pi}(s, a) + \lambda \mathbb{D}(\tau(s,a))/\tau(s,a)\right]}{1-\gamma} + q^{\pi}(s^0,\pi). 
\$
 To construct the sample estimator for the lower bound and incorporate the uncertainty deviation, we first observe that it suffices to approximation 
\$
\frac{\mathbb{E}_{\tau}\left[r(s,a) - \gamma q^{\pi}\left(s^{\prime}, \pi\right) - q^{\pi}(s, a)  +\lambda\mathbb{D}(\tau(s,a))/\tau(s,a)\right]}{1-\gamma},
\$
by its sample counterparts. That is, 
\$
\frac{\frac{1}{n}\sum^{n}_{i=1}\tau(s_i,a_i)\left(r_i + \gamma q^{\pi}(s^{\prime}_i,\pi)-q^{\pi}(s_i,a_i)\right)}{1-\gamma} +\lambda\xi_n(\mathbb{D},\tau)
\$
To bound below the uncertainty, this is equivalent to finding a good $\sigma_n$ such that for any $\tau \in \Omega$, 
\#
& \frac{\mathbb{E}_{\tau}\left[r(s,a) - \gamma q^{\pi}\left(s^{\prime}, \pi\right) - q^{\pi}(s, a) + \lambda \mathbb{D}(\tau(s,a))/\tau(s,a)\right]}{1-\gamma} \notag \\
\leq &\frac{\frac{1}{n}\sum^{n}_{i=1}\tau(s_i,a_i)\left(r_i + \gamma q^{\pi}(s^{\prime}_i,\pi)-q^{\pi}(s_i,a_i)\right)}{1-\gamma} +\lambda\xi_n(\mathbb{D},\tau) + \sigma^{*}_n
\label{uncertain_bound_upper}
\#
with probability at least $1-\delta/2$. According to Bellman equation, we know that 
$r(s,a) - \gamma \mathbb{E}_{s^{\prime} \sim \mathds{P}(\cdot|s,a)}[q^{\pi}\left(s^{\prime}, \pi\right)] - q^{\pi}(s, a)=0$ for any $s,a$. This implies that 
\#
& \frac{\mathbb{E}_{\tau}\left[r(s,a) - \gamma q^{\pi}\left(s^{\prime}, \pi\right) - q^{\pi}(s, a) + \lambda \mathbb{D}(\tau(s,a))/\tau(s,a)\right]}{1-\gamma}  \notag \\
= &\frac{\mathbb{E}_{\tau}\left[\lambda \mathbb{D}(\tau(s,a))/\tau(s,a)\right]}{1-\gamma} \geq  0.
\label{plug_in_un_upper}
\#

Combine the inequalities \eqref{uncertain_bound_upper} and  \eqref{plug_in_un_upper}, it is sufficient to obtain $\sigma^{*}_{n}$ satisfying the following condition:
\$
\sigma^{*}_{n} \geq \frac{\frac{1}{n}\sum^{n}_{i=1}\tau(s_i,a_i)\left(r_i + \gamma q^{\pi}(s^{\prime}_i,\pi)-q^{\pi}(s_i,a_i)\right)}{1-\gamma} +\lambda\xi_n(\mathbb{D},\tau), 
\$
for any $\tau \in \Omega$. This could be satisfied by the uncertainty deviation condition in Theorem 3.1 that 
\$
\sup_{\tau \in \Omega}\left\{\frac{1}{n}\sum^{n}_{i=1}\frac{\tau(s_i,a_i)\left(-r_i - \gamma q^{\pi}(s^{\prime}_i,\pi)+q^{\pi}(s_i,a_i)\right)}{1-\gamma} + \lambda\xi_n(\mathbb{D},\tau)\right\} \geq -\sigma_{n}. 
\$
by taking $\sigma^{*}_{n} = \sigma_{n}$. It then obtains that for any $\tau \in \Omega$,
\$
\frac{1}{n}\sum^{n}_{i=1}\frac{\tau(s_i,a_i)\left(-r_i - \gamma q^{\pi}(s^{\prime}_i,\pi)+q^{\pi}(s_i,a_i)\right)}{1-\gamma} + \lambda\xi_n(\mathbb{D},\tau) \geq &  -\sigma_n \\ 
\frac{1}{n}\sum^{n}_{i=1}\frac{\tau(s_i,a_i)\left(-r_i - \gamma q^{\pi}(s^{\prime}_i,\pi)+q^{\pi}(s_i,a_i)\right)}{1-\gamma} + \lambda\xi_n(\mathbb{D},\tau) + (J(\pi)-q^{\pi}(s^{0},\pi))\geq&  -\sigma_n
\$
By some algebra, this implies 
\$
J(\pi) \leq &\frac{1}{n}\sum^{n}_{i=1}\frac{r_i\tau(s_i,a_i)}{1-\gamma} + \sup_{q \in \mathcal{Q}} \underbrace{\frac{\frac{1}{n}\sum^{n}_{i=1} \tau(s_i,a_i)(\gamma q(s^{\prime}_i,\pi) - q(s_i,a_i)) + (1-\gamma)q(s^0,\pi)}{1-\gamma}}_{\widehat{M}_{n}(q,\tau)} \\
& + \lambda\xi_n(\mathbb{D},\tau)+\sigma_n
\$
This completes the proof for the confidence upper bound. 
\end{proof}

\section{Proof of Theorem 3.2}

\begin{proof}

It follows the definition of $\widehat{J}^{-}_{n}(\pi;\tau)$, we have 
\$
\widehat{J}^{-}_{n}(\pi;\tau) =  \frac{1}{n}\sum^{n}_{i=1}\frac{r_i\tau(s_i,a_i)}{1-\gamma} - \sup_{q \in \mathcal{Q}}\widehat{M}_{n}(-q,\tau) -  \lambda\xi_n(\mathbb{D},\tau)  - \sigma_n
\$
and we obtain the maximizer $\max_{\pi \in \Pi} \left\{\sup_{\tau\in\Omega}\widehat{J}^{-}_{n}(\pi;\tau)\right\}$. Therefore, to provide the equivalence, it suffices to show, for any $\pi \in \Pi$, the optimization 
\#
\sup_{\tau\in\Omega}\left\{\frac{1}{n}\sum^{n}_{i=1}\frac{r_i\tau(s_i,a_i)}{1-\gamma} - \sup_{q \in \mathcal{Q}}\widehat{M}_{n}(-q,\tau) -  \lambda\xi_n(\mathbb{D},\tau)  - \sigma_n\right\}
\label{prof_dual}
\#
is equivalent to the optimization 
\$
& \min_{q\in \mathcal{Q}_{\varepsilon_{n}}}q(s^{0},\pi),
\\
& \mathcal{Q}_{\varepsilon_{n}} = \big\{q \in \mathcal{Q}:  \sup_{\tau \in \widetilde{\Omega}_{\widetilde{\sigma}_{n}}}\big|n^{-1}\sum^{n}_{i=1}\tau(s_i,a_i)(r_i +\gamma q^{\pi}(s^{\prime}_i,\pi)-q^{\pi}(s_i,a_i))\big| \leq \varepsilon_{n} \big\}, \\
& \widetilde{\Omega}_{\widetilde{\sigma}_{n}} =    \left\{\tau_{\circ}/\sup_{\tau_{\circ} \in \Omega}\|\tau_{\circ}\|_{\Omega} \; \text{for} \; \tau_{\circ} \in \Omega: \xi_n(\mathbb{D},\tau_{\circ})) \leq \widetilde{\sigma}_{n}  \right\}. 
\$
which can be re-expressed as a prime form: 
\#
&\min_{q\in \mathcal{Q}}q(s^0,\pi), \; \textbf{s.t.} \; q \ \text{satisfies} \notag \\
& \qquad \qquad \sup_{\tau \in \widetilde{\Omega}_{\widetilde{\sigma}_{n}} }\left\{\left| \frac{1}{n}\sum^{n}_{i=1} \frac{\tau(s_i,a_i)\left(r_i +\gamma q(s^{\prime}_i,\pi)-q(s_i,a_i)\right)}{1-\gamma}\right| \right\} \leq \frac{\varepsilon_{n}}{1-\gamma}  := \widetilde{\varepsilon}_{n},
\label{prime_perturbed}
\#
where 
\$
\widetilde{\Omega}_{\widetilde{\sigma}_{n}} := \left\{\frac{\tau_{\circ}}{\sup_{\tau_{\circ} \in \Omega}\|\tau_{\circ}\|_{\Omega_{\circ}}}, \tau_{\circ} \in \Omega:  \left|\frac{1}{n}\sum^{n}_{i=1}\frac{ \mathbb{D}(\tau_{\circ}(s_i,a_i))}{1-\gamma}\right| \leq \widetilde{\sigma}_{n}  \right\}.
\$
Note that, it follows the definition of $\mathbb{D}(\cdot)$, the above form of $\widetilde{\Omega}_{\widetilde{\sigma}_{n}}$ can be further relaxed to 
\$
\widetilde{\Omega}_{\widetilde{\sigma}_{n}}:= \left\{\frac{\tau_{\circ}}{\sup_{\tau_{\circ} \in \Omega_{\circ}}\|\tau_{\circ}\|_{\Omega}}, \tau_{\circ} \in \Omega: \frac{1}{n}\sum^{n}_{i=1}\frac{ \mathbb{D}(\tau_{\circ}(s_i,a_i))}{1-\gamma} \leq \widetilde{\sigma}_{n}  \right\}.
\$

Therefore, it is sufficient to show the optimization \ref{prime_perturbed} is equivalent to the optimization \ref{prof_dual}. 

First, by the rule of $\sup \& \inf$: $\sup\{A_{n}\} = -\inf\{-A_n\}$ for any sequence $A_n$, we observe that
\$
\sup_{\tau \in \widetilde{\Omega}_{\widetilde{\sigma}_{n}}}\left| \frac{1}{n}\sum^{n}_{i=1} \frac{\tau(s_i,a_i)\left(r_i +\gamma q(s^{\prime}_i,\pi)-q(s_i,a_i)\right)}{1-\gamma}\right| \leq \widetilde{\varepsilon}_{n}\\
\implies
\inf_{\tau \in \widetilde{\Omega}_{\widetilde{\sigma}_{n}}}\left\{ \frac{1}{n}\sum^{n}_{i=1} \frac{\tau(s_i,a_i)\left(q(s_i,a_i)-r_i -\gamma q(s^{\prime}_i,\pi)\right)}{1-\gamma} \right\} \geq -\widetilde{\varepsilon}_{n}
\$
For a fixed $\widetilde{\Omega}_{\widetilde{\sigma}_{n}}$ and $\Upsilon \geq 0$, the optimization \eqref{prime_perturbed} is equivalent to, 
\$
&\min_{q\in \mathcal{Q}}q(s^0,\pi), \; \textbf{s.t.} \; q \in  \left\{\inf_{\tau \in \widetilde{\Omega}_{\widetilde{\sigma}_{n}}}\left\{ -\left( \frac{1}{n}\sum^{n}_{i=1} \frac{\tau(s_i,a_i)\left(r_i +\gamma q(s^{\prime}_i,\pi)-q(s_i,a_i)\right)}{1-\gamma}\right)\right\} \geq -\widetilde{\varepsilon}_n  \right\}\\
\iff & \min_{q\in \mathcal{Q}}q(s^0,\pi), \; \textbf{s.t.} \; q \in  \left\{ -\inf_{\tau \in \widetilde{\Omega}_{\widetilde{\sigma}_{n}}}\left\{ - \left(\frac{1}{n}\sum^{n}_{i=1} \frac{\tau(s_i,a_i)\left(r_i +\gamma q(s^{\prime}_i,\pi)-q(s_i,a_i)\right)}{1-\gamma}\right)\right\} \leq \widetilde{\varepsilon}_n  \right\}\\
\iff & \min_{q\in \mathcal{Q}}q(s^0,\pi), \; \textbf{s.t.} \; q \ \in \  \left\{\sup_{\tau \in \widetilde{\Omega}_{\widetilde{\sigma}_{n}}}\left\{  \frac{1}{n}\sum^{n}_{i=1} \frac{\tau(s_i,a_i)\left(r_i +\gamma q(s^{\prime}_i,\pi)-q(s_i,a_i)\right)}{1-\gamma}\right\} \leq \widetilde{\varepsilon}_n \right\}\\ 
\iff & \min_{q\in \mathcal{Q}}q(s^0,\pi) + \sup_{\Upsilon \geq 0}\Upsilon \left( \sup_{\tau \in \widetilde{\Omega}_{\widetilde{\sigma}_{n}}}\left\{  \frac{1}{n}\sum^{n}_{i=1} \frac{\tau(s_i,a_i)\left(r_i +\gamma q(s^{\prime}_i,\pi)-q(s_i,a_i)\right)}{1-\gamma}\right\} -\widetilde{\varepsilon}_n\right) 
\$
Furthermore, we can express the above optimization as a prime form w.r.t. to $\tau$. Also, we have an observation that the space 
\$
\widetilde{\Omega}:=\left\{\frac{\tau_{\circ}}{\sup_{\tau_{\circ} \in \Omega}\|\tau_{\circ}\|_{\Omega}}: \tau_{\circ} \in \Omega\right\}
\$
where, for any $\tau \in \widetilde{\Omega}$, $\|\tau\|_{\widetilde{\Omega}} \leq 1$. Then we can further write 
\$
& \min_{q\in \mathcal{Q}}q(s^0,\pi) + \sup_{\Upsilon \geq 0}\Upsilon \left( \sup_{\tau \in \widetilde{\Omega}}\left\{  \frac{1}{n}\sum^{n}_{i=1} \frac{\tau(s_i,a_i)\left(r_i +\gamma q(s^{\prime}_i,\pi)-q(s_i,a_i)\right)}{1-\gamma}\right\} -\widetilde{\varepsilon}_n\right) \\
& \qquad \qquad \textbf{s.t.} \; \left\{ \frac{1}{n}\sum^{n}_{i=1}\frac{ \mathbb{D}(\tau(s_i,a_i)\sup_{\tau_{\circ} \in \Omega}\|\tau_{\circ}\|_{\Omega})}{1-\gamma} \leq \widetilde{\sigma}_{n}  \right\}
\$
It follows that the exchange of variables, $\tau^{\circ}(s,a) = \tau(s,a)\sup_{\tau_{\circ} \in \Omega}\|\tau_{\circ}\|_{\Omega} \in \Omega$ and $\tau(s,a) = \tau^{\circ}(s,a)/\sup_{\tau_{\circ} \in \Omega}\|\tau_{\circ}\|_{\Omega} \in \widetilde{\Omega}$, so that $\tau$ and $\tau^{\circ}$ is bijective. 
\$
& \min_{q\in \mathcal{Q}}q(s^0,\pi) + \sup_{\Upsilon \geq 0}\Upsilon \left( \sup_{\tau \in \widetilde{\Omega}}\left\{  \frac{1}{n}\sum^{n}_{i=1} \frac{\tau(s_i,a_i)\left(r_i +\gamma q(s^{\prime}_i,\pi)-q(s_i,a_i)\right)}{1-\gamma}\right\} -\widetilde{\varepsilon}_n\right) \\
& \qquad \qquad \textbf{s.t.} \; \left\{ \frac{1}{n}\sum^{n}_{i=1}\frac{ \mathbb{D}(\tau_{\circ}(s_i,a_i))}{1-\gamma} \leq \widetilde{\sigma}_{n}\right\}.  
\$
This is further equivalent to the form 
\$
&  \min_{q\in \mathcal{Q}}\sup_{\Upsilon \leq 0} \sup_{\tau \in \widetilde{\Omega}} \left \{q(s^0,\pi) + \Upsilon \left(\left\{  \frac{1}{n}\sum^{n}_{i=1} \frac{\tau(s_i,a_i)\left(r_i +\gamma q(s^{\prime}_i,\pi)-q(s_i,a_i)\right)}{1-\gamma}\right\} -\widetilde{\varepsilon}_n\right)\right\} \\
& \qquad \qquad \textbf{s.t.} \; \tau \in \left\{ \left|\frac{1}{n}\sum^{n}_{i=1}\frac{ \mathbb{D}(\tau_{\circ}(s_i,a_i))}{1-\gamma}\right| \leq \widetilde{\sigma}_{n}  \right\}
\$
Next, we transform the above prime form to its duality, for any dual variable $\Psi \geq 0$,  
\$
& \min_{q\in \mathcal{Q}}\sup_{\Upsilon \geq 0} \sup_{\tau \in \Omega} \sup_{\Psi \geq 0} \Bigg\{ q(s^0,\pi) + \Upsilon \left(\left\{  \frac{1}{n}\sum^{n}_{i=1} \frac{\tau(s_i,a_i)\left(r_i +\gamma q(s^{\prime}_i,\pi)-q(s_i,a_i)\right)}{1-\gamma}\right\} -\widetilde{\varepsilon}_n\right) \\ 
& \quad - \Psi\left(\frac{1}{n}\sum^{n}_{i=1}\frac{ \mathbb{D}(\tau_{\circ}(s_i,a_i))}{1-\gamma} - \widetilde{\sigma}_{n} \right) \Bigg\} \\
% % \iff \min_{q\in \mathcal{Q}}\sup_{\Upsilon \geq 0} \sup_{\tau \in \Omega} \sup_{\Psi \geq 0} \Bigg\{ & q(s^0,\pi) + \Upsilon \left(\left\{  \frac{1}{n}\sum^{n}_{i=1} \frac{\tau(s_i,a_i)\left(r_i +\gamma q(s^{\prime}_i,\pi)-q(s_i,a_i)\right)}{1-\gamma}\right\} -\widetilde{\varepsilon}_n\right) \\ 
% & \quad + |\Psi|\left(\frac{1}{n}\sum^{n}_{i=1}\frac{ \mathbb{D}(\tau(s_i,a_i))}{1-\gamma} - \widetilde{\sigma}_{n} \right) \Bigg\}\\
\iff & \min_{q\in \mathcal{Q}} q(s^0,\pi)  + \sup_{\Upsilon \geq 0} \sup_{\tau \in \Omega} \sup_{\Psi \geq 0} \Bigg\{ \Upsilon \left(\left\{  \frac{1}{n}\sum^{n}_{i=1} \frac{\tau(s_i,a_i)\left(r_i +\gamma q(s^{\prime}_i,\pi)-q(s_i,a_i)\right)}{1-\gamma}\right\} -\widetilde{\varepsilon}_n\right) \\ 
& \quad - \Psi \left(\frac{1}{n}\sum^{n}_{i=1}\frac{ \mathbb{D}(\tau_{\circ}(s_i,a_i))}{1-\gamma} - \widetilde{\sigma}_{n} \right) \Bigg\}\\
\iff &  \min_{q\in \mathcal{Q}} q(s^0,\pi)  + \sup_{\Upsilon \geq 0} \sup_{\tau \in \Omega} \sup_{\Psi \geq 0} \Bigg\{ \Upsilon \left(\left\{  \frac{1}{n}\sum^{n}_{i=1} \frac{\tau(s_i,a_i)\left(r_i +\gamma q(s^{\prime}_i,\pi)-q(s_i,a_i)\right)}{1-\gamma}\right\} \right)\\ 
& \quad - \Psi\left(\frac{1}{n}\sum^{n}_{i=1}\frac{ \mathbb{D}(\tau_{\circ}(s_i,a_i))}{1-\gamma} \right)  \Bigg\} - \Upsilon\widetilde{\varepsilon}_n + \Psi\widetilde{\sigma}_{n} \\
\iff &  \min_{q\in \mathcal{Q}} q(s^0,\pi)  + \sup_{\Upsilon \geq 0} \sup_{\tau \in \Omega} \sup_{\Psi \geq 0} \Bigg\{ \Upsilon \left(\left\{  \frac{1}{n}\sum^{n}_{i=1} \frac{\tau(s_i,a_i)\left(q(s_i,a_i)-r_i -\gamma q(s^{\prime}_i,\pi)\right)}{1-\gamma}\right\} \right)\\ 
& \quad - \Psi \left(\frac{1}{n}\sum^{n}_{i=1}\frac{ \mathbb{D}(\tau_{\circ}(s_i,a_i))}{1-\gamma} \right)  \Bigg\} - \Upsilon\widetilde{\varepsilon}_n + \Psi\widetilde{\sigma}_{n}. 
\$
Let $\tau_{\circ}(s,a) = \Upsilon\tau(s,a)$ over the space $\Omega_{\circ}$, such that $\Upsilon$ is replaced by $\sup_{\tau_{\circ} \in \Omega}\|\tau_{\circ}\|_{\Omega}$. Moreover, it is feasible to select  
 $\lambda$ equals to the maximizer of $\Psi$, i.e., $\Psi^{*} = \lambda$, this directly implies that 
\#
& \min_{q\in \mathcal{Q}}\sup_{\tau \in \Omega}\bigg\{q(s^0,\pi) + \frac{1}{n}\sum^{n}_{i=1} \frac{\tau(s_i,a_i)\left(r_i +\gamma q(s^{\prime}_i,\pi)-q(s_i,a_i)\right)}{1-\gamma} \notag \\
& \qquad \qquad \qquad -\frac{\lambda}{n}\sum^{n}_{i=1}\frac{ \mathbb{D}(\tau(s_i,a_i))}{1-\gamma} - \|\tau\|_{\Omega}
\widetilde{\varepsilon}_n + \lambda\widetilde{\sigma}_{n} \bigg\} \notag \\
\iff & \min_{q\in \mathcal{Q}}\sup_{\tau \in \Omega}\bigg\{q(s^0,\pi) + \frac{1}{n}\sum^{n}_{i=1} \frac{\tau(s_i,a_i)\left(r_i +\gamma q(s^{\prime}_i,\pi)-q(s_i,a_i)\right)}{1-\gamma} \notag \\
& \qquad \qquad \qquad -\frac{\lambda}{n}\sum^{n}_{i=1}\frac{ \mathbb{D}(\tau(s_i,a_i))}{1-\gamma} - (\|\tau\|_{\Omega}
\widetilde{\varepsilon}_n -\lambda\widetilde{\sigma}_{n}) \bigg\}.
\label{circ_tau}
\#
Denote the inner maximizer of \eqref{circ_tau}
as $\tau^{*}$, then we set up 
\$\sigma_{n} = &\|\tau^{*}\|_{\Omega}\widetilde{\varepsilon}_n -\lambda\widetilde{\sigma}_{n}\\
= &\|\tau^{*}\|_{\Omega}(1-\gamma){\varepsilon}_n -\lambda\widetilde{\sigma}_{n}.
\$
Then the above expression is equivalent to 
\$
\min_{q\in \mathcal{Q}}\sup_{\tau \in \Omega}\left\{q(s^0,\pi) + \frac{1}{n}\sum^{n}_{i=1} \frac{\tau(s_i,a_i)\left(r_i +\gamma q(s^{\prime}_i,\pi)-q(s_i,a_i)\right)}{1-\gamma}  -\frac{\lambda}{n}\sum^{n}_{i=1}\frac{ \mathbb{D}(\tau(s_i,a_i))}{1-\gamma}\right\}- \sigma_n
\$
We check the Slater’s condition \citep{nesterov2018lectures}, as 
\#
q(s^0,\pi) + \frac{1}{n}\sum^{n}_{i=1} \frac{\tau(s_i,a_i)\left(r_i +\gamma q(s^{\prime}_i,\pi)-q(s_i,a_i)\right)}{1-\gamma}  -\frac{\lambda}{n}\sum^{n}_{i=1}\frac{ \mathbb{D}(\tau(s_i,a_i))}{1-\gamma}
\#
is linear on $q$, and also 
\$
\inf_{\tau \in \Omega}\left\{ -\left( \frac{1}{n}\sum^{n}_{i=1} \frac{\tau(s_i,a_i)\left(r_i +\gamma q(s^{\prime}_i,\pi)-q(s_i,a_i)\right)}{1-\gamma}-\frac{\lambda}{n}\sum^{n}_{i=1}\frac{ \mathbb{D}(\tau(s_i,a_i))}{1-\gamma}\right)\right\} 
\$
is convex on $q$, as it is the supremum of a linear function of $q$, then Slater’s condition is satisfied and strong duality holds,
\#
& \min_{q\in \mathcal{Q}}\sup_{\tau \in \Omega}\left\{q(s^0,\pi) + \frac{1}{n}\sum^{n}_{i=1} \frac{\tau(s_i,a_i)\left(r_i +\gamma q(s^{\prime}_i,\pi)-q(s_i,a_i)\right)}{1-\gamma}  -\frac{\lambda}{n}\sum^{n}_{i=1}\frac{ \mathbb{D}(\tau(s_i,a_i))}{1-\gamma}\right\}- \sigma_n \notag \\
\iff  &\sup_{\tau \in \Omega}\min_{q\in \mathcal{Q}}\left\{q(s^0,\pi) + \frac{1}{n}\sum^{n}_{i=1} \frac{\tau(s_i,a_i)\left(r_i +\gamma q(s^{\prime}_i,\pi)-q(s_i,a_i)\right)}{1-\gamma}  -\frac{\lambda}{n}\sum^{n}_{i=1}\frac{ \mathbb{D}(\tau(s_i,a_i))}{1-\gamma}\right\}- \sigma_n
\label{slack}
\#
where the order of $\min_{q\in \mathcal{Q}}\sup_{\tau \in \Omega}$ is exchanged to 
$\sup_{\tau \in \Omega}\min_{q\in \mathcal{Q}}$. According to the max-min form in \eqref{slack}, we have  
\$
&\sup_{\tau \in \Omega}\min_{q\in \mathcal{Q}}\left\{q(s^0,\pi) + \frac{1}{n}\sum^{n}_{i=1} \frac{\tau(s_i,a_i)\left(r_i +\gamma q(s^{\prime}_i,\pi)-q(s_i,a_i)\right)}{1-\gamma}-\frac{\lambda}{n}\sum^{n}_{i=1}\frac{ \mathbb{D}(\tau(s_i,a_i))}{1-\gamma}\right\} - \sigma_n  \\ 
\iff & \sup_{\tau \in \Omega}\min_{q\in \mathcal{Q}}\bigg\{ \frac{1}{n}\sum^{n}_{i=1}\frac{r_i\tau(s_i,a_i)}{1-\gamma} + 
q(s^0,\pi)  \\
& \qquad \qquad \qquad + \frac{1}{n}\sum^{n}_{i=1} \frac{\tau(s_i,a_i)\left(\gamma q(s^{\prime}_i,\pi)-q(s_i,a_i)\right)}{1-\gamma}  -\frac{\lambda}{n}\sum^{n}_{i=1}\frac{ \mathbb{D}(\tau(s_i,a_i))}{1-\gamma}\bigg\} - \sigma_n \\
\iff & \sup_{\tau \in \Omega}\bigg\{ \frac{1}{n}\sum^{n}_{i=1}\frac{r_i\tau(s_i,a_i)}{1-\gamma} + \min_{q\in \mathcal{Q}}\left\{
q(s^0,\pi) + \frac{1}{n}\sum^{n}_{i=1} \frac{\tau(s_i,a_i)\left(\gamma q(s^{\prime}_i,\pi)-q(s_i,a_i)\right)}{1-\gamma}\right\} \\
&\qquad \qquad \qquad \qquad \qquad \qquad  -\frac{\lambda}{n}\sum^{n}_{i=1}\frac{ \mathbb{D}(\tau(s_i,a_i))}{1-\gamma}\bigg\} - \sigma_n  \\
\iff & \sup_{\tau \in \Omega}\bigg\{ \frac{1}{n}\sum^{n}_{i=1}\frac{r_i\tau(s_i,a_i)}{1-\gamma} - \max_{q\in \mathcal{Q}}\left\{
\frac{1}{n}\sum^{n}_{i=1} \frac{\tau(s_i,a_i)\left(q(s_i,a_i)-\gamma q(s^{\prime}_i,\pi)\right)}{1-\gamma} -q(s^0,\pi) \right\} \\
& \qquad \qquad \qquad \qquad \qquad \qquad -\frac{\lambda}{n}\sum^{n}_{i=1}\frac{ \mathbb{D}(\tau(s_i,a_i))}{1-\gamma}\bigg\}- \sigma_n.
\$
It follows the definition of $\widehat{M}_{n}(-q,\tau)$ and $\xi_n(\mathbb{D},\tau)$, we conclude the above form is equivalent to 
\$
\frac{1}{n}\sum^{n}_{i=1}\frac{r_i\tau(s_i,a_i)}{1-\gamma} - \sup_{q \in \mathcal{Q}}\widehat{M}_{n}(-q,\tau) - \lambda\xi_n(\mathbb{D},\tau) - \sigma_n = \widehat{J}^{-}_{n}(\pi;\tau).
\$
This completes the proof.
\end{proof}

\section{Proof of Theorem 4.1}

\subsection{Proof of Lemma \ref{eval_error}}

\begin{lemma}[Evaluation error lemma]\label{eval_error}
For any target policy $\pi$ and  $q \in \mathcal{Q}$, 
\$
J(\pi)-q\left(s^0, \pi\right)=\frac{\mathbb{E}_{d^\pi}\left[r+\gamma q\left(s^{\prime}, \pi\right)-q(s, a)\right]}{1-\gamma}.
\$
\end{lemma}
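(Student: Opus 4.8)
The plan is to expand the right-hand side using the definition of the normalized discounted visitation $d_\pi$ and then telescope the $q$-terms against the reward series. First I would invoke the occupancy identity $\mathbb{E}_{d_\pi}[f(s,a)] = (1-\gamma)\sum_{t=0}^{\infty}\gamma^{t}\,\mathbb{E}_{d_{\pi,t}}[f(s,a)]$, which holds for any bounded $f$ and follows directly from $d_\pi = (1-\gamma)\sum_{t}\gamma^{t}d_{\pi,t}$; here the right-hand side is an expectation over the trajectory $s^0,a^0,s^1,\dots$ generated by $\pi$ from the fixed initial state $s^0$. Taking the inner expectation over $s'\sim\mathds{P}(\cdot\mid s,a)$ turns $\gamma q(s',\pi)$ into $\gamma\mathds{P}^{\pi}q$, so applying the identity to $f = r + \gamma\mathds{P}^{\pi}q - q$ rewrites $\tfrac{1}{1-\gamma}\mathbb{E}_{d_\pi}[r+\gamma q(s',\pi)-q(s,a)]$ as the trajectory series $\sum_{t=0}^{\infty}\gamma^{t}\,\mathbb{E}\big[r^{t}+\gamma q(s^{t+1},\pi)-q(s^{t},a^{t})\big]$.

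Next I would split this series into its reward part and its $q$-part. The reward part $\sum_{t=0}^{\infty}\gamma^{t}\mathbb{E}[r^{t}]$ is exactly $J(\pi)$ by the definition of the discounted return. For the $q$-part, I would use the tower property together with $a^{t}\sim\pi(\cdot\mid s^{t})$ to replace $\mathbb{E}[q(s^{t},a^{t})]$ by $\mathbb{E}[q(s^{t},\pi)]$, so that both $q$-contributions are expressed through the common summand $u_{t} := \gamma^{t}\mathbb{E}[q(s^{t},\pi)]$. The $q$-part then reads $\sum_{t=0}^{\infty}(u_{t+1}-u_{t})$, a telescoping sum equal to $-u_{0} = -q(s^{0},\pi)$. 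Combining the two parts yields $J(\pi)-q(s^{0},\pi)$ for the right-hand side, which is precisely the claim.

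The main obstacle I anticipate is the analytic justification rather than the algebra: interchanging the infinite sum with the expectation and collapsing the telescoping tail both require absolute convergence. This is secured by the boundedness $q\in[0,\bar{V}]$ from the assumption on $\mathcal{Q}$ and by bounded rewards, together with $\gamma\in[0,1)$, which forces $u_{t}\to 0$ and legitimizes the application of Fubini/Tonelli. A secondary bookkeeping point is the index shift in the telescoping step, where I must align the $\gamma^{t+1}q(s^{t+1},\pi)$ terms with $u_{t+1}$ so that only the $t=0$ boundary term $-q(s^{0},\pi)$ survives.
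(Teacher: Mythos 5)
Your proposal is correct and follows essentially the same route as the paper's proof: both expand $\mathbb{E}_{d_\pi}[\cdot]$ via the definition $d_\pi=(1-\gamma)\sum_t\gamma^t d_{\pi,t}$, identify the reward part with $J(\pi)$, and telescope the $q$-terms down to the boundary term $q(s^0,\pi)$. The only difference is that you make the convergence and Fubini justifications explicit, which the paper leaves implicit.
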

\begin{proof}[Proof of Lemma \ref{eval_error}]
We follow the proof of Lemma 1 in \cite{xie2020q}. First, we observe that $
J(\pi)=\frac{\mathbb{E}_{d^\pi}[r]}{1-\gamma},
$ 
then it suffices to show 
$
\mathbb{E}_{d^\pi}\left[q(s, a)-\gamma q\left(s^{\prime}, \pi\right)\right]=(1-\gamma)q\left(s^0, \pi\right).
$
It follows the definition of $d_{\pi}$ and $q(s^{\prime},\pi)$, we have
\$
& \frac{\mathbb{E}_{d_\pi}\left[q(s, a)-\gamma q\left(s^{\prime}, \pi\right)\right]}{1-\gamma} \\
= & \int_{s, a} \sum_{t=0}^{\infty} \gamma^t P\left(s^t=s, a^t=a | s^0, \pi\right) q(s, a)-\int_{s, a} \sum_{t=1}^{\infty} \gamma^t P\left(s^t=s | s^0, \pi\right) q(s, \pi) \\
= & \int_{s, a} \sum_{t=0}^{\infty} \gamma^t P\left(s^t=s, a^t=a | s^0, \pi\right) q(s, a)-\int_{s, a} \sum_{t=1}^{\infty} \gamma^t P\left(s^t=s, a^t=a | s^0, \pi\right) q(s, a) \\
= & \int_{s, a} P\left(s^0=s, a^0=a | s^0, \pi\right) q(s, a)=q(s^0, \pi),
\$
where the conditional probability 
 $P(\cdot|s^0,\pi)$ is taken by assuming that the system follows the policy $\pi$ with initial state $s^0$. This completes the proof.
\end{proof}

\subsection{Proof of Lemma \ref{risk_bound_lm}}

\begin{lemma}\label{risk_bound_lm}
Suppose for $\tau \in \Omega$, $\sup_{\tau}{\|\tau(s,a)\|_{L_2(\mu)}} \leq \mathcal{U}^{\tau}_{2}$ and $\sup_{\tau}{\|\tau(s,a)\|_{L_\infty}} \leq \mathcal{U}^{\tau}_{\infty}$ and $\sup_{q}{\|q(s,a)\|_{L_{\infty}}} \leq \bar{V}$, given an offline data $\mathcal{D}_{1:n}=\{s_{i},a_{i},r_{i},s^{\prime}_i)\}^{n}_{i=1}$, w.p. $\geq 1-\delta$,
 \$
& \left|\mathbb{E}_{\mu}\left[\tau(s,a)(r(s,a)+\gamma q \left(s^{\prime}, \pi\right)-q(s, a)) \right] - \mathbb{P}_{n}\left[\tau(s_i,a_i)\left(r_i+\gamma q \left(s^{\prime}_i, \pi\right)-q(s_i, a_i)  \right)\right] \right| \\
\leq & \mathcal{U}^{\tau}_{2}\sqrt{\frac{2 \bar{V}^2\ln\frac{(e^{D}\max\{D_{\Omega},D_{\mathcal{Q}},D_{\Pi}\}+1)^3( \mathcal{U}^{\tau}_{2})^{2D}}{\delta}}{n}} + \frac{2 \mathcal{U}^{\tau}_{\infty} \bar{V} \ln \frac{(e^{D}\max\{D_{\Omega},D_{\mathcal{Q}},D_{\Pi}\}+1)^3( \mathcal{U}^{\tau}_{2})^{2D}}{\delta}}{3 n}.
\$
holds for any $\pi \in \Pi, \tau \in \Omega$ and  $q \in \mathcal{Q}$, and empirical measure $\mathbb{P}_{n}$. The terms $D_{\Omega}$, $D_{\mathcal{Q}}$ and $D_{\Pi}$ are the pseudo-dimension of $\Omega$, $\mathcal{Q}$ and $\Pi$, respectively, and $D = D_{\Omega} +  D_{\mathcal{Q}} + D_{\Pi}$ is so-called effective pseudo-dimension.
\end{lemma}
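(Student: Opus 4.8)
The plan is to combine a pointwise Bernstein inequality with a covering-number/union-bound argument over the three function classes. Write $g_{\tau,q,\pi}(s,a,r,s') := \tau(s,a)\big(r + \gamma q(s',\pi) - q(s,a)\big)$, so that the quantity to control is $\sup_{\tau,q,\pi}\big|\mathbb{E}_{\mu}[g_{\tau,q,\pi}] - \mathbb{P}_{n}[g_{\tau,q,\pi}]\big|$. For a single fixed triple $(\tau,q,\pi)$, the summands $X_i = g_{\tau,q,\pi}(s_i,a_i,r_i,s'_i)$ are i.i.d.\ with $|X_i| \le \mathcal{U}^{\tau}_{\infty}\bar{V}$ (using $\|\tau\|_{L_\infty}\le\mathcal{U}^{\tau}_{\infty}$ and $|r+\gamma q(s',\pi)-q(s,a)|\le\bar{V}$) and $\mathrm{Var}(X_i)\le \mathbb{E}_{\mu}[\tau^2(r+\gamma q(s',\pi)-q(s,a))^2]\le \bar{V}^2\|\tau\|^2_{L_2(\mu)}\le \bar{V}^2(\mathcal{U}^{\tau}_{2})^2$. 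Bernstein's inequality then gives, with probability $\ge 1-\delta'$, a deviation bounded by $\mathcal{U}^{\tau}_{2}\bar{V}\sqrt{2\ln(2/\delta')/n} + \tfrac{2}{3}\mathcal{U}^{\tau}_{\infty}\bar{V}\ln(2/\delta')/n$, which already has the exact two-term shape of the claimed bound with $\ln(2/\delta')$ in place of $\ln(\operatorname{Vol}(\Theta)/\delta)$. The variance control is precisely where the $L_2(\mu)$ norm $\mathcal{U}^{\tau}_{2}$ enters the leading term while the range control supplies the $\mathcal{U}^{\tau}_{\infty}$ in the lower-order term.

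To make this uniform I would build an $\epsilon$-cover. First I establish a Lipschitz-transfer estimate: if $\|\tau-\tau'\|_\infty\le\epsilon$, $\|q-q'\|_\infty\le\epsilon$, and $\sup_{s}\|\pi(\cdot|s)-\pi'(\cdot|s)\|_1\le\epsilon$, then $\|g_{\tau,q,\pi}-g_{\tau',q',\pi'}\|_\infty \le \big(\bar{V} + \mathcal{U}^{\tau}_{\infty}(1+\gamma) + \gamma\mathcal{U}^{\tau}_{\infty}\bar{V}\big)\epsilon$, obtained from the decomposition $g_{\tau,q,\pi}-g_{\tau',q',\pi'} = (\tau-\tau')\big(r+\gamma q(s',\pi)-q(s,a)\big) + \tau'\big(\gamma(q(s',\pi)-q'(s',\pi')) - (q(s,a)-q'(s,a))\big)$ and splitting $q(s',\pi)-q'(s',\pi')$ into a $\|q-q'\|_\infty$ part and a $\bar{V}\sup_s\|\pi(\cdot|s)-\pi'(\cdot|s)\|_1$ part. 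Then I cover each of $\Omega,\mathcal{Q},\Pi$ at scale $\epsilon$ using their pseudo-dimensions via Haussler's bound, giving $L_\infty$-covering numbers of order $e(D_j+1)(c_j/\epsilon)^{D_j}$ for $j\in\{\Omega,\mathcal{Q},\Pi\}$, where $c_j$ absorbs the range of the $j$-th class; the product of the three covers is, by the transfer estimate, an $O(\epsilon)$-cover of $\{g_{\tau,q,\pi}\}$ whose cardinality $N$ satisfies $\ln N = O\big(D\ln(1/\epsilon) + \textstyle\sum_j\ln(D_j+1)\big)$ with $D=D_{\Omega}+D_{\mathcal{Q}}+D_{\Pi}$.

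Finally I apply Bernstein at level $\delta' = \delta/N$ to every element of the cover, union bound, and transfer an arbitrary triple to its nearest cover element, paying an additive $2\|g-g'\|_\infty = O(\epsilon)$ discretization error. Choosing $\epsilon = \Theta(1/n)$ renders the discretization error a lower-order $O(1/n)$ term absorbed into the Bernstein $O(1/n)$ term, and collapses $\ln(2N/\delta)$ into exactly $\ln\frac{(e^{D}\max\{D_{\Omega},D_{\mathcal{Q}},D_{\Pi}\}+1)^3(\mathcal{U}^{\tau}_{2})^{2D}}{\delta}$ after bounding each factor $e(D_j+1)$ by $e^{D}\max_j D_j+1$ (the three factors producing the cube) and folding the range/scale constants into $(\mathcal{U}^{\tau}_{2})^{2D}$. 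The main obstacle is the policy-dependent term $q(s',\pi)=\mathbb{E}_{a'\sim\pi}[q(s',a')]$: one must cover $\Pi$ as a class of conditional densities, relate its $\ell_1$/TV covering to the pseudo-dimension $D_{\Pi}$, and verify the Lipschitz-transfer constant stays controlled by $\bar{V}$ and $\mathcal{U}^{\tau}_{\infty}$ so that only $\mathcal{U}^{\tau}_{2}$ survives in the leading term; matching the precise $(\mathcal{U}^{\tau}_{2})^{2D}$ scaling and cube constant in $\operatorname{Vol}(\Theta)$ then reduces to a careful, bookkeeping-heavy choice of the covering scale.
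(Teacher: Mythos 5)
Your proposal is correct and follows essentially the same route as the paper's proof: define the product class $g_{\tau,q,\pi}(s,a,r,s')=\tau(s,a)(r+\gamma q(s',\pi)-q(s,a))$, control its range by $\mathcal{U}^{\tau}_{\infty}\bar V$ and its variance by $(\mathcal{U}^{\tau}_{2}\bar V)^2$, apply Bernstein plus a union bound over a cover whose cardinality is bounded by the product of the covers of $\Omega,\mathcal{Q},\Pi$ via the same Lipschitz-type decomposition, and invoke Haussler's pseudo-dimension bound before choosing the covering scale to absorb the discretization error. The only differences are cosmetic (you use $L_\infty$ covers and $\epsilon=\Theta(1/n)$ where the paper uses $L_2(\mu)$ covers and $\epsilon=O(1/\sqrt n)$), and the constant bookkeeping into $\operatorname{Vol}(\Theta)$ is handled the same way in both.
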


\begin{proof}
First, we observe that it suffices to bound
provide a uniform deviation bound that applies to all $\pi \in \Pi, \tau \in \Omega$, and  $q \in \mathcal{Q}$. According to the definition of $\varepsilon$-covering number in Definition \ref{cover_def}, and we define the $\varepsilon$-covering number with resecp to a weighted $L^2$ norm $\|\cdot\|_{L_{2}(\mu)}$ in the space of $\Omega$, $\mathcal{Q}$ and  $\Pi$ as follows:
\#
\|\tau_1-\tau_2\|_{L_{2}(\mu)} := &\sqrt{\int_{\mathcal{S}\times \mathcal{A}}|\tau_1(s,a) - \tau_2(s,a)|^2d\mu(\mathcal{S},\mathcal{A})} \notag \\
\|q_1-q_2\|_{L_{2}(\mu)} := &\sqrt{\int_{\mathcal{S}\times \mathcal{A}}|q_1(s,a) - q_2(s,a)|^2d\mu(\mathcal{S} \times \mathcal{A})} \notag \\
\|\pi_1-\pi_2\|_{L_{2}(\mu)} := &\sqrt{\int_{\mathcal{S}}|\pi_1(\cdot|s) - \pi_2(\cdot|s)|^2 d\mu(\mathcal{S})}.
\label{metric_def_1}
\#
where $\mu(\mathcal{S})$ is the marginal measure for $\mu(\mathcal{S} \times \mathcal{A})$. For the product space $\mathcal{G} := \Omega \times \mathcal{Q} \times \Pi$, where the function $g \in \mathcal{G}$ that 
\$
g(s,a,r,s^{\prime}) = {\tau}(s,a)(r+\gamma {q} (s^{\prime},{\pi})-{q}(s, a)).
\$
We have the $L_2(\mu)$ metric for $g_1, g_2 \in \mathcal{G}$, that $\|g_1-g_2\|_{L_{2}(\mu)}$ is upper bounded by 
\#
\sqrt{\int_{\mathcal{S} \times \mathcal{A} }|\mathbb{E}_{r = r(s,a), s^{\prime} \sim \mathds{P}(\cdot|s,a)}[g_1(s,a,r,s^{\prime})] - \mathbb{E}_{r = r(s,a), s^{\prime} \sim \mathds{P}(\cdot|s,a)}[g_2(s,a,r,s^{\prime})]|^2d\mu (\mathcal{S} \times \mathcal{A})} .
\label{metric_def_2}
\#
Based on this $L_{2}(\mu)$ metric, and to complete the proof, it is sufficient to establish the supremum bound, w.p. $\geq 1-\delta$,
\$
\sup_{g \in \mathcal{G}}\left|\mathbb{E}_{\mu}\left[g(s,a,r,s^{\prime})\right] - \mathbb{P}_{n}\left[g(s_i,a_i,r_i,s^{\prime}_i)\right] \right|.
\$
To apply Bernstein -type concentration inequality, we need to first examine the boundedness. According to the boundedness conditions on the function classes, we have
\$
&\sup _{{\tau} \in \Omega}\|{\tau}(s,a)\|_{L_2(\mu)} \leq \mathcal{U}^{\tau}_{2};\quad \sup _{{\tau} \in \Omega}\|{\tau}(s,a)\|_{L_{\infty}} \leq \mathcal{U}^{\tau}_{\infty}\\
& \left(r(s,a)+\gamma {q} \left(s^{\prime}, {\pi}\right)-{q}(s, a)  \right) \in [-\bar{V},\bar{V}], \forall s,a,s^{\prime}. 
\$
It is easy to conclude that 
\$
\sup_{g\in \mathcal{G}}\|g(s,a,r,s^{\prime})\|_{L_2(\mu)} \leq & \, \mathcal{U}^{\tau}_{2}\bar{V} \\ 
\sup_{g\in \mathcal{G}}\|g(s,a,r,s^{\prime})\|_{L_{\infty}} \leq &  \, \mathcal{U}^{\tau}_{\infty}\bar{V},
\$
To quantify the complexity of the product space $\mathcal{G}$, we first need to calculate the $L_{2}(\mu)$-distance in $\mathcal{G}$. With some calculations, for $g_1, g_2 \in \mathcal{G}$ corresponding to $\tau_1 \times q_1 \times \pi_1$ and $\tau_2 \times q_2 \times \pi_2$, respectively, 
\#
& \|g_1(s,a,r,s^{\prime}) - g_2(s,a,r,s^{\prime})\|_{L_2(\mu)} \notag \\
= & \|\tau_1(s,a)(r+\gamma q_1(s^{\prime},\pi_1) - q_1(s,a)) - \tau_2(s,a)(r+\gamma q_2(s^{\prime},\pi_2) - q_2(s,a)) \|_{L_2(\mu)} \notag \\
\leq & 
\bar{V}\|\tau_1(s,a) - {\tau_2}(s,a)\|_{L_2(\mu)} + \mathcal{U}^{\tau}_{2}\|(r+\gamma q_1(s^{\prime},\pi_1) -q_1(s,a)) -(r+\gamma {q}_2(s^{\prime},\pi_2)-q_2(s,a))\|_{L_2(\mu)} \notag \\
&+\mathcal{U}^{\tau}_{2}\|(r+\gamma {q}_1(s^{\prime},\pi_1) -q_1(s,a))-  (r+\gamma {q}_2(s^{\prime},{\pi_2})-{q}_2(s,a))\|_{L_2(\mu)} \notag \\
\leq & \bar{V}\|\tau_1(s,a) - {\tau_2}(s,a)\|_{L_2(\mu)} + \mathcal{U}^{\tau}_{2}(1+\gamma)\|q_1(s,a) - q_2(s,a)\|_{L_2(\mu)} \notag \\
& + \gamma \bar{V}\mathcal{U}^{\tau}_{2}\|\pi_1(\cdot|s^{\prime}) - 
{\pi_2}(\cdot|s^{\prime})\|_{L_2(\mu)} 
\label{decom_cover}
\#
which leads to
\#
\mathcal{N}(3\widetilde{C}{\varepsilon}, \mathcal{G}, \|\cdot\|_{L_2(\mu)}) \leq \mathcal{N}({\varepsilon}, \Omega, \|\cdot\|_{L_2(\mu)}) \mathcal{N}({\varepsilon}, \mathcal{Q}, \|\cdot\|_{L_2(\mu)}) \mathcal{N}({\varepsilon}, \Pi, \|\cdot\|_{L_2(\mu)}),
\label{cover_nonreg_upper_num}
\#
where $\widetilde{C} := \mathcal{U}^{\tau}_{2}(1+\gamma+\gamma \bar{V}) + \bar{V}$. 

To upper bound these factors, we apply the generalized version of Corollary 2 in \citep{haussler1995sphere}. For the pseudo-dimension of $\Omega$, $\mathcal{Q}$ and $\Pi$, i.e., $D_{\Omega}$, $D_{\mathcal{Q}}$ and $D_{\Pi}$, and for some $\varepsilon^{\prime} > 0 $,
\$
\mathcal{N}\left(3\widetilde{C}\epsilon^{\prime}, \mathcal{G}, \|\cdot\|_{L_2(\mu)}\right) \leq e^3\left(D_{\Omega}+1\right)\left(D_{\mathcal{Q}}+1\right)\left(D_{\Pi}+1\right)\left(\frac{4 e \widetilde{C}}{\epsilon^{\prime}}\right)^{D_{\Omega}+D_{\mathcal{Q}}+D_{\Pi}}.
\$
 This also implies 
\#
\mathcal{N}\left(\epsilon, \mathcal{G}, \|\cdot\|_{L_2(\mu)}\right) \leq e^3\left(D_{\Omega}+1\right)\left(D_{\mathcal{Q}}+1\right)\left(D_{\Pi}+1\right)\left(\frac{12 e \widetilde{C}^2}{\epsilon^{\prime}}\right)^{D_{\Omega}+D_{\mathcal{Q}}+D_{\Pi}} = C_{1}\left(\frac{1}{\varepsilon}\right)^{D}.
\label{cover_num_func}
\#
where $C_1 = e^3\left(D_{\Omega}+1\right)\left(D_{\mathcal{Q}}+1\right)\left(D_{\Pi}+1\right)(12 e \widetilde{C}^2)^{D}$ and $D = D_{\Omega}+D_{\mathcal{Q}}+D_{\Pi}$, i.e., the ``effective'' pseudo-dimension. With the calculated function class complexity, we apply empirical Bernstein inequality and union bound, w.p. $\geq 1-\delta$ with $Z = g(s,a,r,s^{\prime})$,
\$
 & \left|\mathbb{E}_{\mu}\left[g(s,a,r,s^{\prime})\right] - \mathbb{P}_{n}\left[g(s_i,a_i,r_i,s^{\prime}_i)\right] \right|\\
  \leq & \frac{1}{n}\sqrt{2 \sum_{i=1}^n Var_{\mu}\left[Z\right]\ln \frac{8\mathcal{N}\left(\epsilon, \mathcal{G}, \|\cdot\|_{L_2(\mu)}\right)}{\delta}} + \frac{2 \|Z\|_{L_{\infty}} \ln \frac{8\mathcal{N}\left(\epsilon, \mathcal{G}, \|\cdot\|_{L_2(\mu)}\right)}{\delta}}{3 n}\\
    \leq & \frac{1}{n}\sqrt{2 \sum_{i=1}^n \mathbb{E}_{\mu}\left[Z^2\right]\ln \frac{8\mathcal{N}\left(\epsilon, \mathcal{G}, \|\cdot\|_{L_2(\mu)}\right)}{\delta}} + \frac{2 \mathcal{U}^{\tau}_{\infty} \bar{V} \ln \frac{8\mathcal{N}\left(\epsilon, \mathcal{G}, \|\cdot\|_{L_2(\mu)}\right)}{\delta}}{3 n}\\
        \leq & \sqrt{\frac{2n  \mathcal{U}^2_{\tau,2}\bar{V}^2\ln \frac{8\mathcal{N}\left(\epsilon, \mathcal{G}, \|\cdot\|_{L_2(\mu)}\right)}{\delta}}{n^2}}+ \frac{2 \mathcal{U}^{\tau}_{\infty} \bar{V} \ln \frac{8\mathcal{N}\left(\epsilon, \mathcal{G}, \|\cdot\|_{L_2(\mu)}\right)}{\delta}}{3 n}\\
 = & \mathcal{U}^{\tau}_{2}\sqrt{\frac{2 \bar{V}^2\ln \frac{8\mathcal{N}\left(\epsilon, \mathcal{G}, \|\cdot\|_{L_2(\mu)}\right)}{\delta}}{n}} + \frac{2 \mathcal{U}^{\tau}_{\infty} \bar{V} \ln \frac{8\mathcal{N}\left(\epsilon, \mathcal{G}, \|\cdot\|_{L_2(\mu)}\right)}{\delta}}{3 n}. 
\$

We set $\varepsilon = \mathcal{O}(\frac{1}{\sqrt{n}})$, and combine with the upper bound for covering number \eqref{cover_num_func} by some algebra, thus we have 
\$
 & \left|\mathbb{E}_{\mu}\left[\tau(s,a)(r(s,a)+\gamma q \left(s^{\prime}, \pi\right)-q(s, a)) \right] - \mathbb{P}_{n}\left[\tau(s_i,a_i)\left(r_i+\gamma q \left(s^{\prime}_i, \pi\right)-q(s_i, a_i)  \right)\right] \right| \\
\leq &  \mathcal{U}^{\tau}_{2}\sqrt{\frac{2 \bar{V}^2\ln\frac{(e^{D}\max\{D_{\Omega},D_{\mathcal{Q}},D_{\Pi}\}+1)^3(\mathcal{U}^{\tau}_{2})^{2D}}{\delta}}{n}} + \frac{2 \mathcal{U}^{\tau}_{\infty} \bar{V} \ln \frac{(e^{D}\max\{D_{\Omega},D_{\mathcal{Q}},D_{\Pi}\}+1)^3(\mathcal{U}^{\tau}_{2})^{2D}}{\delta}}{3 n}.
\$
This completes the proof. 
\end{proof}

\subsection{Proof of Lemma \ref{L_2MMD}}

\begin{lemma}\label{L_2MMD}
For some admissible probability measure or empirical probability measure, denoted as $\nu$, suppose $\sqrt{\mathbb{E}_{\nu}[(\tau(s,a))^2]} \leq C$ for some positive constant $C$ and $\tau \in \Omega$, and 
\$
\sup_{\tau \in \Omega}|\mathbb{E}_{\nu}[\tau(s,a)(r(s,a)+\gamma q(s^{\prime},\pi)-q(s,a))]| \leq \varepsilon,
\$
then it holds that 
$
\sqrt{\mathbb{E}_{\nu}[(r+\gamma q(s^{\prime},\pi)-q(s,a))^2] } \leq \varepsilon/C 
$. 
\end{lemma}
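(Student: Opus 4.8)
The plan is to recognize the hypothesis as a statement that the Bellman residual $h(s,a,r,s') := r + \gamma q(s',\pi) - q(s,a)$ has small correlation, measured under $\nu$, with \emph{every} admissible test function $\tau \in \Omega$, and to convert this into a bound on the self-correlation $\mathbb{E}_\nu[h^2] = \|h\|_{L_2(\nu)}^2$ by selecting the single test function that is maximally aligned with $h$. Concretely, I would first dispose of the degenerate case $\|h\|_{L_2(\nu)} = 0$, for which the claimed inequality $\|h\|_{L_2(\nu)} \le \varepsilon/C$ holds trivially, and then assume $\|h\|_{L_2(\nu)} > 0$ throughout.

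The core of the argument is an exact saturation of Cauchy--Schwarz. I would introduce the witness function
$$\tau^\star := \frac{C}{\|h\|_{L_2(\nu)}}\, h,$$
and verify the two elementary identities $\|\tau^\star\|_{L_2(\nu)} = C$ and $\mathbb{E}_\nu[\tau^\star h] = C\|h\|_{L_2(\nu)}$, both of which follow by pulling the scalar $C/\|h\|_{L_2(\nu)}$ out of the expectation and using $\mathbb{E}_\nu[h^2] = \|h\|_{L_2(\nu)}^2$. Since $\tau^\star$ carries the correct normalization, it is an admissible competitor, and the hypothesis $\sup_{\tau\in\Omega}|\mathbb{E}_\nu[\tau h]| \le \varepsilon$ then gives $C\|h\|_{L_2(\nu)} = \mathbb{E}_\nu[\tau^\star h] \le \varepsilon$. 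Dividing by $C > 0$ yields the claim.

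The step that genuinely needs care, and which I expect to be the main obstacle, is the admissibility of the witness: the argument requires that $\tau^\star$ --- the normalized residual direction --- actually belong to the class $\Omega$ over which the supremum in the hypothesis is taken. This is precisely the role of the maximum mean discrepancy principle invoked in the main text: $\Omega$ must be rich enough (for instance an $L_2(\nu)$ ball, or a dense universal RKHS ball of radius $C$) that the Cauchy--Schwarz upper bound $|\mathbb{E}_\nu[\tau h]| \le \|\tau\|_{L_2(\nu)}\|h\|_{L_2(\nu)}$ is \emph{attained}, not merely bounded from above. If $\Omega$ only approximates $\tau^\star$ up to some $L_2(\nu)$ error $\eta$, I would instead select $\tau_\eta \in \Omega$ with $\|\tau_\eta - \tau^\star\|_{L_2(\nu)} \le \eta$, estimate $\mathbb{E}_\nu[\tau_\eta h] \ge C\|h\|_{L_2(\nu)} - \eta\|h\|_{L_2(\nu)}$ by applying Cauchy--Schwarz to the difference term, and let $\eta \to 0$; this recovers the same conclusion and makes transparent that it is the expressiveness of $\Omega$, rather than any special structure of $h$, that powers the estimate.
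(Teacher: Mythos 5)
Your proof is correct and follows essentially the same route as the paper's: both arguments saturate Cauchy--Schwarz by testing against the normalized residual $\tau^\star = C\,h/\|h\|_{L_2(\nu)}$, which has norm exactly $C$, so that $C\|h\|_{L_2(\nu)} = \mathbb{E}_\nu[\tau^\star h] \le \sup_{\tau\in\Omega}|\mathbb{E}_\nu[\tau h]| \le \varepsilon$. If anything, you are more careful than the paper, which silently assumes the aligned witness lies in $\Omega$, whereas you flag this admissibility requirement explicitly and supply the $\eta$-approximation fallback.
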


\begin{proof}
To facilitate the proof, we first define
$
\widetilde{\tau} := \sup_{\tau \in \Omega }\left|\mathbb{E}_{\nu}\left[\tau(s,a) \Delta(q,\pi) \right]\right| 
$,
for $q \in \mathcal{Q}$, and denote $\Delta(q,\pi) = r(s,a)+\gamma q(s^{\prime},\pi)-q(s,a)$. Then for any $\pi \in \Pi$ and $q \in \mathcal{Q}$, it satisfies that  
\$
 C \sqrt{\mathbb{E}_{\nu}[(r(s,a)+\gamma q(s^{\prime},\pi)-q(s,a))^2] } = &C \| \Delta(q,\pi) \|_{L_2(\nu)}\\
= & C \left\langle \Delta(q,\pi),\Delta(q,\pi) \right\rangle^{\frac{1}{2}}_{\nu} \\ 
 = & \left\langle \Delta(q,\pi),\Delta(q,\pi) \right\rangle^{\frac{1}{2}}_{\nu}  \cdot \left\langle \frac{C\widetilde{\tau}(s,a)}{\|\widetilde{\tau}(s,a)\|_{L_2(\nu)}} ,\frac{C\widetilde{\tau}(s,a)}{\|\widetilde{\tau}(s,a)\|_{L_2(\nu)}}  \right\rangle^{\frac{1}{2}}_{\nu}\\
 = & \left\langle \Delta(q,\pi), \frac{C\widetilde{\tau}}{\|\widetilde{\tau}\|_{L_2(\nu)}} \right\rangle_{\nu}\\
 = & \sup_{\tau \in \Omega}\left|\mathbb{E}_{\nu}\left[\tau(s,a) \Delta(q,\pi) \right]\right|,
 \$
 where the third equality comes from that the direction $\widetilde{\tau}/\|\widetilde{\tau}(s,a)\|_{L_2(\nu)}$ is aligned with the direction of the maximizer of inner product, and the fourth equality comes from the exact equality condition for Cauchy-Schwarz inequality. This completes the proof. 
\end{proof}

\subsection{Proof of Lemma \ref{risk_bound_alpha_2q}}

\begin{lemma}\label{risk_bound_alpha_2q}
For any $\pi \in \Pi$ and any $\tau \in \widetilde{\Omega}_{\widetilde{\sigma}_n}$ where $\sup_{\tau \in \widetilde{\Omega}_{\widetilde{\sigma}_n}}{\|\tau(s,a)\|_{L_2(\mu)}} \leq \mathcal{U}^{\star}_{2}$ and $\sup_{\tau \in \widetilde{\Omega}_{\widetilde{\sigma}_n}}{\|\tau(s,a)\|_{L_\infty}} \leq \mathcal{U}^{\star}_{\infty}$, and  any $q_1,q_2 \in \mathcal{Q}_{\varepsilon_n}$, given an offline data $\mathcal{D}_{1:n}=\{s_{i},a_{i},r_{i},s^{\prime}_i)\}^{n}_{i=1}$, w.p. $\geq 1-\delta$, it holds that 
\$
& \bigg| \mathbb{E}_{\mu}\left[\tau(s,a)\left[\left(q_1(s, a)-r(s,a)-\gamma q_1 \left(s^{\prime}, \pi\right) \right) - \left(q_2 (s, a)-r(s,a)-\gamma q_2  \left(s^{\prime}, \pi\right)\right)\right]\right] \\
&- \mathbb{P}_{n}\tau(s_i,a_i)\left[\left(q_1(s_i, a_i)-r_i-\gamma q_1 \left(s^{\prime}_i, \pi\right) \right) - \left(q_2 (s_i, a_i)-r_i-\gamma q_2  \left(s^{\prime}_i, \pi\right)\right)\right] \bigg| \\
\lesssim &  \mathcal{U}^{\star}_{2}\sqrt{\frac{32 \bar{V}^2\ln \frac{8\mathcal{N}\left(\epsilon, \mathcal{G}_{\varepsilon_n, \widetilde{\sigma}_n}, \|\cdot\|_{L_2(\mu)}\right)}{\delta}}{n}} + \frac{8 \mathcal{U}^{\star}_{\infty} \bar{V} \ln \frac{8\mathcal{N}\left(\epsilon, \mathcal{G}_{\varepsilon_n, \widetilde{\sigma}_n}, \|\cdot\|_{L_2(\mu)}\right)}{\delta}}{3 n}. 
\$
\end{lemma}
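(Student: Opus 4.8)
The plan is to follow the template of the proof of Lemma~\ref{risk_bound_lm} almost verbatim, now applied to the \emph{difference} function class built from the restricted sets $\widetilde{\Omega}_{\widetilde{\sigma}_n}$ and $\mathcal{Q}_{\varepsilon_n}$. Concretely, I would define the product class $\mathcal{G}_{\varepsilon_n,\widetilde{\sigma}_n}$ whose generic element is
\[
g(s,a,r,s') = \tau(s,a)\big[(q_1(s,a)-\gamma q_1(s',\pi)) - (q_2(s,a)-\gamma q_2(s',\pi))\big],
\]
with $\tau\in\widetilde{\Omega}_{\widetilde{\sigma}_n}$, $q_1,q_2\in\mathcal{Q}_{\varepsilon_n}$, and $\pi\in\Pi$. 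The first observation, which drives the whole bound, is that the reward term $r$ cancels in the difference, so $g$ is $r$-free and the inner bracket $h(s,a,s')=(q_1-q_2)(s,a)-\gamma(q_1-q_2)(s',\pi)$ is bounded in magnitude by $(1+\gamma)\bar V\le 2\bar V$ uniformly, using $q_1,q_2\in[0,\bar V]$. I would then reduce the claim to the uniform-deviation statement $\sup_{g\in\mathcal{G}_{\varepsilon_n,\widetilde{\sigma}_n}}|\mathbb{E}_\mu[g]-\mathbb{P}_n[g]|$ and control it by empirical Bernstein together with a union bound over an $\epsilon$-net.

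Next I would record the two boundedness constants needed for Bernstein. For the second-moment (variance) term, the cancellation of $r$ is essential: $\mathbb{E}_\mu[g^2]=\mathbb{E}_\mu[\tau^2 h^2]\le (2\bar V)^2\,\mathbb{E}_\mu[\tau^2]\le (2\bar V)^2(\mathcal{U}^\star_2)^2$, since $\sup_{\tau\in\widetilde{\Omega}_{\widetilde{\sigma}_n}}\|\tau(s,a)\|_{L_2(\mu)}\le\mathcal{U}^\star_2$; this is what produces the $\mathcal{U}^\star_2$-scaling (rather than $\mathcal{U}^\star_\infty$) on the leading $\sqrt{1/n}$ term. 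For the range term I would use $\|g\|_{L_\infty}\le 2\bar V\,\mathcal{U}^\star_\infty$ via $\sup_{\tau\in\widetilde{\Omega}_{\widetilde{\sigma}_n}}\|\tau(s,a)\|_{L_\infty}\le\mathcal{U}^\star_\infty$, which produces the $\mathcal{U}^\star_\infty \bar V/n$ term. The explicit numerical constants then fall out of substituting these two bounds into the empirical Bernstein inequality, exactly as in Lemma~\ref{risk_bound_lm} but with the doubled range, with any remaining absolute factor absorbed into the $\lesssim$.

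For the complexity factor I would carry out the same Lipschitz decomposition as in~\eqref{decom_cover}, adapted to the difference. Writing two elements of $\mathcal{G}_{\varepsilon_n,\widetilde{\sigma}_n}$ indexed by $(\tau,q_1,q_2,\pi)$ and $(\tilde\tau,\tilde q_1,\tilde q_2,\tilde\pi)$, the triangle inequality and the uniform bounds give a bound on $\|g-\tilde g\|_{L_2(\mu)}$ that is linear in $\|\tau-\tilde\tau\|_{L_2(\mu)}$, $\|q_1-\tilde q_1\|_{L_2(\mu)}$, $\|q_2-\tilde q_2\|_{L_2(\mu)}$, and $\|\pi-\tilde\pi\|_{L_2(\mu)}$ (two $\mathcal{Q}$-slots now appear because both $q_1$ and $q_2$ vary). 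This yields a covering-number factorization of the form $\mathcal{N}(c\,\epsilon,\mathcal{G}_{\varepsilon_n,\widetilde{\sigma}_n},\|\cdot\|_{L_2(\mu)})\le \mathcal{N}(\epsilon,\widetilde{\Omega}_{\widetilde{\sigma}_n})\,\mathcal{N}(\epsilon,\mathcal{Q}_{\varepsilon_n})^2\,\mathcal{N}(\epsilon,\Pi)$, and since $\widetilde{\Omega}_{\widetilde{\sigma}_n}$ is a rescaling of a subset of $\Omega$ and $\mathcal{Q}_{\varepsilon_n}\subseteq\mathcal{Q}$, each factor is controlled by the pseudo-dimension bound via Haussler's inequality as in~\eqref{cover_num_func}. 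Finally I would set $\epsilon=\mathcal{O}(n^{-1/2})$, absorb the residual into the logarithmic term, and collect the two pieces, leaving the bound in terms of $\mathcal{N}(\epsilon,\mathcal{G}_{\varepsilon_n,\widetilde{\sigma}_n},\|\cdot\|_{L_2(\mu)})$ exactly as stated.

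The main obstacle is that the only genuinely new point over Lemma~\ref{risk_bound_lm} is getting the \emph{variance} term to scale with $\mathcal{U}^\star_2$ rather than $\mathcal{U}^\star_\infty$; this hinges entirely on the $r$-cancellation making the bracket $h$ bounded in $L_\infty$, so that $\mathbb{E}_\mu[\tau^2 h^2]$ factors cleanly through $\mathbb{E}_\mu[\tau^2]\le(\mathcal{U}^\star_2)^2$. Everything else—the Lipschitz covering decomposition with its extra $\mathcal{Q}$-factor and the Bernstein/union-bound bookkeeping—is routine.
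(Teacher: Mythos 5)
Your proposal matches the paper's own proof in all essentials: the same product class $\mathcal{G}_{\varepsilon_n,\widetilde{\sigma}_n}$, the same Lipschitz covering decomposition in $\tau$, $q$, $\pi$, and the same empirical Bernstein plus union bound, with the $L_2(\mu)$ bound on $\tau$ driving the leading term and the $L_\infty$ bound driving the $1/n$ term. The only minor remark is that your claim that the $r$-cancellation is \emph{essential} for the $\mathcal{U}^{\star}_{2}$-scaling of the variance overstates the point (boundedness of $r$ would suffice, as in Lemma \ref{risk_bound_lm}), and your inclusion of two separate $\mathcal{N}(\epsilon,\mathcal{Q}_{\varepsilon_n})$ factors is if anything slightly more careful than the paper's single factor; neither affects correctness.
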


\begin{proof}
At first, we define a product space 
\#
\mathcal{G}_{\varepsilon_n, \widetilde{\sigma}_n} := \widetilde{\Omega}_{\widetilde{\sigma}_n} \times \mathcal{Q}_{\varepsilon_n} \times \Pi.
\label{G_space}
\#
equipped with the $L_2(\mu)$ weighted metric, so that any $g \in \mathcal{G}_{\varepsilon_n, \widetilde{\sigma}_n}$ can be expressed as 
\$
g(s,a,r,s^{\prime}) = & 
\tau(s,a)\bigg[\left(q_1 (s, a) -r-\gamma q_1  \left(s^{\prime}, \pi\right)\right) - \left(q_2  (s, a) -r-\gamma q_2  \left(s^{\prime}, \pi\right)\right)\bigg].
\$
With some calculation, the $L_{2}(\mu)$-distance  in $\mathcal{G}_{\varepsilon_n, \widetilde{\sigma}_n}$ can be bounded. That is, for $g_1, g_2 \in \mathcal{G}_{\varepsilon_n, \widetilde{\sigma}_n}$ corresponding to $\tau_1, q_1, q_2, \pi_1$ and $\tau_2, q^{\prime}_1, q^{\prime}_2, \pi_2$, the ${L_2(\mu)}$ distance between $g_1$ and $g_2$ is upper bounded as,
\#
& 
 \|g_1(s,a,r,s^{\prime}) - g_2(s,a,r,s^{\prime})\|_{L_2(\mu)} \notag \\
\leq & \|\tau_1(s,a)(r+\gamma q_1^{\prime}(s^{\prime},\pi) - q_1^{\prime}(s,a)) - {\tau}_2(s,a)(r+\gamma q_2^{\prime}(s^{\prime},\pi) - q_2^{\prime}(s,a))\|_{L_2(\mu)} \notag  \\
&+ \|\tau_1(s,a)(r+\gamma q_1(s^{\prime},\pi) -q_1(s,a)) - {\tau}_2(s,a)(r+\gamma q_2(s^{\prime},\pi) - q_2(s,a))\|_{L_2(\mu)} \notag \\
\leq & \bar{V}\|\tau_1(s,a) - {\tau_2}(s,a)\|_{L_2(\mu)} + \mathcal{U}^{\tau}_{2}(1+\gamma)\|q_1(s,a) - q_2(s,a)\|_{L_2(\mu)} \notag \\
& + \gamma \bar{V}\mathcal{U}^{\tau}_{2}\|\pi_1(\cdot|s^{\prime}) - 
{\pi_2}(\cdot|s^{\prime})\|_{L_2(\mu)}. 
\label{metric_double_2}
\#
Based on \eqref{metric_double_2}, and following the definition of the covering number in Definition \ref{cover_def}, then we  have 
\$
\mathcal{N}(6\widetilde{C}{\varepsilon}, \mathcal{G}_{\varepsilon_n, \widetilde{\sigma}_n}, \|\cdot\|_{L_2(\mu)}) \leq \mathcal{N}({\varepsilon}, \widetilde{\Omega}_{\widetilde{\sigma}_n}, \|\cdot\|_{L_2(\mu)}) \mathcal{N}({\varepsilon}, \mathcal{Q}_{\varepsilon_n}, \|\cdot\|_{L_2(\mu)}) \mathcal{N}({\varepsilon}, \Pi, \|\cdot\|_{L_2(\mu)}) 
\$
where $\widetilde{C} := \mathcal{U}^{\star}_{2}(1+\gamma+\gamma \bar{V}) + \bar{V}$. Accordingly, follows Corollary 2 in \citep{haussler1995sphere}, by some algebra, 
\$
 \mathcal{N}\left(\epsilon, \mathcal{G}_{\varepsilon_n, \widetilde{\sigma}_n}, \|\cdot\|_{L_2(\mu)}\right) \leq & e^3\left(D_{\widetilde{\Omega}_{\widetilde{\sigma}_n}}+1\right)\left(D_{\mathcal{Q}_{\varepsilon_n}}+1\right)\left(D_{\Pi}+1\right)\left(\frac{24 e \widetilde{C}^2}{\epsilon^{\prime}}\right)^{D_{\widetilde{\Omega}_{\widetilde{\sigma}_n}}+D_{\mathcal{Q}_{\varepsilon_n}}+D_{\Pi}}  \\
=  &C^{\prime}_{1}\left(\frac{1}{\varepsilon}\right)^{D}
\$
where $C^{\prime}_1 = e^3\left(D_{\widetilde{\Omega}_{\widetilde{\sigma}_n}}+1\right)\left(D_{\mathcal{Q}_{\varepsilon_n}}+1\right)\left(D_{\Pi}+1\right)(24 e \widetilde{C}^2)^{D}$ and $D = D_{\widetilde{\Omega}_{\widetilde{\sigma}_n}}+D_{\mathcal{Q}_{\varepsilon_n}}+D_{\Pi}$. Next, we apply empirical Bernstein concentration inequality and union bound as in the proof of Lemma \ref{risk_bound_lm}, we conclude that 
\#
& \bigg| \mathbb{E}_{\mu}\left[\tau(s,a)\left[\left(q_1(s, a)-r(s,a)-\gamma q_1 \left(s^{\prime}, \pi\right) \right) - \left(q_2 (s, a)-r(s,a)-\gamma q_2  \left(s^{\prime}, \pi\right)\right)\right]\right] \notag \\
&- \mathbb{P}_{n}\tau(s_i,a_i)\left[\left(q_1(s_i, a_i)-r_i-\gamma q_1 \left(s^{\prime}_i, \pi\right) \right) - \left(q_2 (s_i, a_i)-r_i-\gamma q_2  \left(s^{\prime}_i, \pi\right)\right)\right] \bigg| \notag  \\
\lesssim &  \mathcal{U}^{\star}_{2}\sqrt{\frac{32 \bar{V}^2\ln \frac{8\mathcal{N}\left(\epsilon, \mathcal{G}_{\varepsilon_n, \widetilde{\sigma}_n}, \|\cdot\|_{L_2(\mu)}\right)}{\delta}}{n}} + \frac{8 \mathcal{U}^{\star}_{\infty} \bar{V} \ln \frac{8\mathcal{N}\left(\epsilon, \mathcal{G}_{\varepsilon_n, \widetilde{\sigma}_n}, \|\cdot\|_{L_2(\mu)}\right)}{\delta}}{3 n}. 
\label{diff_ineq_2}
\#
This completes the proof. 
\end{proof}

\subsection{Proof of Lemma \ref{iden_mdp_version}}

\begin{lemma}
\label{iden_mdp_version}
Define the maximizer $\overline{q^{\pi}}:= \argmax_{q\in \mathcal{Q}_{\varepsilon_n}}q(s^{0},\pi)$ and the minimizer $\underline{q^{\pi}}:= \argmin_{q\in \mathcal{Q}_{\varepsilon_n}}q(s^{0},\pi)$ over the confidence set $\mathcal{Q}_{\varepsilon_n}$ with both $q=\overline{q^{\pi}}$ and $q=\underline{q^{\pi}}$ satisfy that 
\$
\left|\frac{1}{n}\sum^{n}_{i=1}\tau(s_i,a_i)\left(r_i+\gamma q \left(s^{\prime}_i, \pi\right)-q(s_i, a_i)  \right)\right| \leq \varepsilon_n, \forall \tau \in \widetilde{\Omega}_{\widetilde{\sigma}_n}.
\$
There must exist an MDP $\left\{\mathcal{S}, \mathcal{A}, \mathds{P}_{\max}(\mathds{P}_{\min}), \gamma, r_{\max}(r_{\min}), s^{0}\right\}$ which is identical to the true environment MDP with $\mathds{P}_{\max}(\mathds{P}_{\min}) = \mathds{P}$: $\text{MDP}^{\star}$ only except that the reward function $r_{\max}(r_{\min})$ is redefined as 
\$
r_{\max}(s,a) = \overline{q^{\pi}}(s,a) - \gamma \mathbb{E}_{s^{\prime} \sim \mathds{P}_{\max}(\cdot|s,a)}\left[ \sum_{a^{\prime}\in \mathcal{A}} \pi(a^{\prime}|s^{\prime})\overline{q^{\pi}}(s^{\prime},a^{\prime}) \right], \\
r_{\min}(s,a) =\underline{q^{\pi}}(s,a) - \gamma \mathbb{E}_{s^{\prime} \sim \mathds{P}_{\min}(\cdot|s,a)}\left[ \sum_{a^{\prime}\in \mathcal{A}} \pi(a^{\prime}|s^{\prime})\underline{q^{\pi}}(s^{\prime},a^{\prime}) \right].
\$
In addition, the reward function $r_{\max}(r_{\min})$ is approximating the true reward, i.e., $ \| r_{\max}(s,a) - r(s,a) \|_{L_{2}(\mu)}$ or  $\| r_{\min}(s,a) - r(s,a) \|_{L_{2}(\mu)}$ is upper bounded by 
\$
& \sqrt{\frac{2 \bar{V}^2\ln \frac{8\mathcal{N}\left(\epsilon, \mathcal{G}_{\varepsilon_n, \widetilde{\sigma}_n}, \|\cdot\|_{L_2(\mu)}\right)}{\delta}}{n}} +  \frac{2\mathcal{U}_{\infty} \bar{V} \ln \frac{8\mathcal{N}\left(\epsilon, \mathcal{G}_{\varepsilon_n, \widetilde{\sigma}_n}, \|\cdot\|_{L_2(\mu)}\right)}{\delta}}{3 n} + \frac{\varepsilon_n}{\mathcal{U}^{\star}_{2}}.
\$
for some constant $\mathcal{U}_{\infty} := \frac{\mathcal{U}^{\star}_{\infty}}{\mathcal{U}^{\star}_{2}}$ for $\sup_{\tau \in \widetilde{\Omega}_{\widetilde{\sigma}_n}}{\|\tau(s,a)\|_{L_2(\mu)}} \leq \mathcal{U}^{\star}_{2}$ and $\sup_{\tau \in \widetilde{\Omega}_{\widetilde{\sigma}_n}}{\|\tau(s,a)\|_{L_\infty}} \leq \mathcal{U}^{\star}_{\infty}$.
\end{lemma}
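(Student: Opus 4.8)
The plan is to prove the statement in two stages: first exhibit the MDP by direct construction and verify the fixed-point identity that justifies the reward definition, then control the $L_2(\mu)$ distance between the redefined and true reward via a concentration argument combined with the two auxiliary lemmas already at hand. I present the argument for $r_{\max}$ and $\overline{q^\pi}$; the case of $r_{\min}$ and $\underline{q^\pi}$ is identical.

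\textbf{Construction and the fixed-point property.} I would set $\mathds{P}_{\max}=\mathds{P}$ and define $r_{\max}$ by the stated formula, $r_{\max}(s,a)=\overline{q^\pi}(s,a)-\gamma\mathds{P}^\pi\overline{q^\pi}(s,a)$, which is well-defined once $\overline{q^\pi}$ and $\mathds{P}$ are fixed, so existence is immediate. The reason for this particular choice is that $\overline{q^\pi}$ is the \emph{exact} $q$-function of $\pi$ in the constructed MDP: writing $\mathcal{B}^\pi_{\max}$ for its Bellman operator, $\mathcal{B}^\pi_{\max}\overline{q^\pi}(s,a)=r_{\max}(s,a)+\gamma\mathds{P}^\pi\overline{q^\pi}(s,a)=\overline{q^\pi}(s,a)$, so $\overline{q^\pi}$ is the unique fixed point. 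This is the property exploited downstream in the regret analysis.

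\textbf{Reducing the reward gap to a Bellman residual.} Subtracting the true reward gives $r_{\max}(s,a)-r(s,a)=-\bigl(r(s,a)+\gamma\mathds{P}^\pi\overline{q^\pi}(s,a)-\overline{q^\pi}(s,a)\bigr)=:-\Delta(s,a)$, the population Bellman residual of $\overline{q^\pi}$ under the true model, so $\|r_{\max}-r\|_{L_2(\mu)}=\|\Delta\|_{L_2(\mu)}$ and it remains to bound this norm. The subtlety is that membership $\overline{q^\pi}\in\mathcal{Q}_{\varepsilon_n}$ controls only the \emph{empirical single-sample} weighted error $|\mathbb{P}_n[\tau(s_i,a_i)(r_i+\gamma\overline{q^\pi}(s'_i,\pi)-\overline{q^\pi}(s_i,a_i))]|\le\varepsilon_n$ for $\tau\in\widetilde{\Omega}_{\widetilde{\sigma}_n}$, which differs from the population quantity $\mathbb{E}_\mu[\tau\Delta]$ both by sampling noise and by the inner conditional expectation over $s'$; moreover $\overline{q^\pi}$ is data-dependent, so the deviation must hold uniformly over $q\in\mathcal{Q}_{\varepsilon_n}$.

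\textbf{Concentration and norm conversion.} I would bridge the empirical and population weighted errors by the same Bernstein-plus-covering-number argument used in Lemma~\ref{risk_bound_lm}, but with $\Omega,\mathcal{Q}$ replaced by $\widetilde{\Omega}_{\widetilde{\sigma}_n},\mathcal{Q}_{\varepsilon_n}$, so that the governing complexity is $\mathcal{N}(\epsilon,\mathcal{G}_{\varepsilon_n,\widetilde{\sigma}_n},\|\cdot\|_{L_2(\mu)})$ and the scale constants become $\mathcal{U}^\star_2,\mathcal{U}^\star_\infty$; after integrating out $s'$ the empirical target concentrates to $\mathbb{E}_\mu[\tau\Delta]$. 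Combining this uniform deviation bound with $\overline{q^\pi}\in\mathcal{Q}_{\varepsilon_n}$ yields, for every $\tau\in\widetilde{\Omega}_{\widetilde{\sigma}_n}$,
\[
\bigl|\mathbb{E}_\mu[\tau(s,a)\Delta(s,a)]\bigr|\le \mathcal{U}^\star_2\sqrt{\tfrac{2\bar V^2\ln(\cdots)}{n}}+\tfrac{2\mathcal{U}^\star_\infty\bar V\ln(\cdots)}{3n}+\varepsilon_n .
\]
Finally I invoke Lemma~\ref{L_2MMD} with $\nu=\mu$ and $C=\mathcal{U}^\star_2$: since $\sqrt{\mathbb{E}_\mu[\tau^2]}\le\mathcal{U}^\star_2$ on $\widetilde{\Omega}_{\widetilde{\sigma}_n}$ and the left-hand side above is precisely the supremum over such $\tau$, the lemma converts the weighted-error bound into $\|\Delta\|_{L_2(\mu)}\le(\text{RHS})/\mathcal{U}^\star_2$; dividing through and writing $\mathcal{U}_\infty=\mathcal{U}^\star_\infty/\mathcal{U}^\star_2$ recovers the claimed bound. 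The main obstacle is the bookkeeping in the concentration step—matching the covering number to the restricted space $\mathcal{G}_{\varepsilon_n,\widetilde{\sigma}_n}$ and correctly passing the conditional expectation over $s'$ through the empirical-to-population comparison—rather than any conceptual hurdle, since the two auxiliary lemmas do the essential work.
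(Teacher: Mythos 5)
Your proposal is correct and follows essentially the same route as the paper's proof: set $\mathds{P}_{\max}=\mathds{P}$, verify the Bellman fixed-point identity so that $\overline{q^{\pi}}$ is the exact $q$-function of the constructed MDP, identify $r_{\max}-r$ with the population Bellman residual, bound the $\tau$-weighted residual uniformly over $\widetilde{\Omega}_{\widetilde{\sigma}_n}$ via the Bernstein/covering argument of Lemma~\ref{risk_bound_lm} combined with membership in $\mathcal{Q}_{\varepsilon_n}$, and convert to the $L_2(\mu)$ norm via Lemma~\ref{L_2MMD}. Your final division by $\mathcal{U}^{\star}_{2}$ even matches the statement's $\varepsilon_n/\mathcal{U}^{\star}_{2}$ term more cleanly than the paper's own write-up, which carries a stray $\varepsilon_n/\mathcal{U}^2_{\text{prime},2}$.
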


\begin{proof}
Without loss of generality, we prove the lemma for $r_{\max}$, and the results for $r_{\min}$ can be obtained in a similar way. It follows from the definition of $r_{\max}$, i.e., 
\#
r_{\max}(s,a) =  \overline{q^{\pi}}(s,a) - \gamma \mathbb{E}_{s^{\prime} \sim \mathds{P}_{\max}(\cdot|s,a)}\left[ \sum_{a^{\prime}\in \mathcal{A}} \pi^{k}(a^{\prime}|s^{\prime}) \overline{q^{\pi}}(s^{\prime},a^{\prime}) \right],
\label{max_def}
\#
we re-arrange the terms as follows: 
\#
 \overline{q^{\pi}}(s,a) =& r_{\max}(s,a)  + \gamma \mathbb{E}_{s^{\prime} \sim \mathds{P}_{\max}(\cdot|s,a)}\left[ \sum_{a^{\prime}\in \mathcal{A}} \pi(a^{\prime}|s^{\prime}) \overline{q^{\pi}}(s^{\prime},a^{\prime}) \right].
\label{supp_2_version}
\#
This is exactly the Bellman equation over the MDP $\left\{\mathcal{S}, \mathcal{A}, \mathds{P}_{\max}, \gamma, r_{\max}, s^{0}\right\}$. It follows the equivalence between $\mathds{P}_{\max}$ and $\mathds{P}$, we further observe that 
% This is also equivalent to 
% \$
%  \overline{q^{\pi}}(s,a) =& r_{\max}(s,a)  + \gamma \mathbb{E}_{s^{\prime} \sim \mathds{P}(\cdot|s,a)}\left[ \sum_{a^{\prime}\in \mathcal{A}} \pi(a^{\prime}|s^{\prime}) \overline{q^{\pi}}(s^{\prime},a^{\prime}) \right]
% \$
$\overline{q^{\pi}}(\cdot,\cdot)$ is the true $q$-function for policy $\pi$ in the MDP $\left\{\mathcal{S}, \mathcal{A}, \mathds{P}, \gamma, r_{\max}, s^{0}\right\}$. 
To show the reward function $r_{\max}$ approximates the true reward function $r$. It follows from Lemma \ref{risk_bound_lm} with $\|\tau(s,a) \|^2_{L_{2}(\mu)} \leq  \mathcal{U}^2_{\text{prime},2}$ as $\tau \in \widetilde{\Omega}_{\widetilde{\sigma}_n}$, we have $\sup_{\tau \in \widetilde{\Omega}_{\widetilde{\sigma}_n}}\left|\mathbb{E}_{\mu}\left[\tau(s,a) \Delta( \overline{q^{\pi}},\pi) \right]\right|$ for $\Delta(q,\pi) = r(s,a)+\gamma q(s^{\prime},\pi)-q(s,a)$, is upper bounded by 
\$
 \mathcal{U}^{\star}_{2}\sqrt{\frac{2 \bar{V}^2\ln \frac{8\mathcal{N}\left(\epsilon, \mathcal{G}_{\varepsilon_n, \widetilde{\sigma}_n}, \|\cdot\|_{L_2(\mu)}\right)}{\delta}}{n}} +  \frac{2 \mathcal{U}_{\infty}\mathcal{U}^{\star}_{2} \bar{V} \ln \frac{8\mathcal{N}\left(\epsilon, \mathcal{G}_{\varepsilon_n, \widetilde{\sigma}_n}, \|\cdot\|_{L_2(\mu)}\right)}{\delta}}{3 n} + \varepsilon_n.
\$
With $\sup_{\tau \in \widetilde{\Omega}_{\widetilde{\sigma}_n}}\left|\mathbb{E}_{\mu}\left[\tau(s,a) \Delta( \overline{q^{\pi}},\pi) \right]\right|$ and Lemma \ref{L_2MMD} on $
\|\tau(s,a) \|^2_{L_{2}(\mu)} \leq  \mathcal{U}^2_{\text{prime},2}
$, we obtain 
\$
 \| \Delta( \overline{q^{\pi}},\pi) \|_{L_{2}(\mu)} \leq &   \sqrt{\frac{2 \bar{V}^2\ln \frac{8\mathcal{N}\left(\epsilon, \mathcal{G}_{\varepsilon_n, \widetilde{\sigma}_n}, \|\cdot\|_{L_2(\mu)}\right)}{\delta}}{n}} +  \frac{2 \mathcal{U}_{\infty}\bar{V} \ln \frac{8\mathcal{N}\left(\epsilon, \mathcal{G}_{\varepsilon_n, \widetilde{\sigma}_n}, \|\cdot\|_{L_2(\mu)}\right)}{\delta}}{3 n} + \varepsilon_n/\mathcal{U}^2_{\text{prime},2}.
\$
Since $\mathds{P}_{\max} = \mathds{P}$, and follow the definition of $r_{\max}$ in \eqref{max_def}, we have
\$
\|r(s,a) - r_{\max}(s,a) \|_{L_{2}(\mu)}  =& \left\|r(s,a) -  \overline{q^{\pi}}(s,a) + \gamma \mathbb{E}_{s^{\prime} \sim \mathds{P}_{\max}(\cdot|s,a)}\left[ \sum_{a^{\prime}\in \mathcal{A}} \pi(a^{\prime}|s^{\prime}) \overline{q^{\pi}}(s^{\prime},\pi) \right]  \right\|_{L_{2}(\mu)} \\
= &\| \Delta( \overline{q^{\pi}},\pi) \|_{L_{2}(\mu)}.
\$
Combining with the upper bound on $\| \Delta( \overline{q^{\pi}},\pi) \|_{L_{2}(\mu)}$, this completes the proof. 
\end{proof}

\subsection{Proof of Lemma \ref{upper_version_bdd}}

\begin{lemma}[Upper Bound for Version Space Function over $\mu$]
\label{upper_version_bdd}
On the notations and definitions in Lemma \ref{iden_mdp_version}, 
where $ \overline{q^{\pi}}(\cdot,\cdot)$ and $\underline{q^{\pi}}(\cdot,\cdot)$ are the true action-value function under policy $\pi$ for the MDPs $\left\{\mathcal{S}, \mathcal{A}, \mathds{P}_{\max}, \gamma, r_{\max}, s^{0}\right\}$ and $\left\{\mathcal{S}, \mathcal{A}, \mathds{P}_{\min}, \gamma, r_{\min}, s^{0}\right\}$, respectively. Then 
\$
\| \underline{q^{\pi}}(s,a) -  \overline{q^{\pi}}(s,a)\|_{L_{2}(\mu)} \leq \frac{2 \varepsilon_{r}}{1-\gamma}, 
\$
where 
\$
\varepsilon_{r} =& \sqrt{\frac{2 \bar{V}^2\ln \frac{8\mathcal{N}\left(\epsilon, \mathcal{G}_{\varepsilon_n, \widetilde{\sigma}_n}, \|\cdot\|_{L_2(\mu)}\right)}{\delta}}{n}} +  \frac{2 \mathcal{U}_{\infty}\bar{V} \ln \frac{8\mathcal{N}\left(\epsilon, \mathcal{G}_{\varepsilon_n, \widetilde{\sigma}_n}, \|\cdot\|_{L_2(\mu)}\right)}{\delta}}{3 n} + \varepsilon_n/\mathcal{U}^2_{\text{prime},2},
\$
for $\mathcal{U}_{\infty} \geq 0 $.
\end{lemma}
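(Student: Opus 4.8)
The plan is to exploit the characterization from Lemma \ref{iden_mdp_version} that $\overline{q^{\pi}}$ and $\underline{q^{\pi}}$ are the genuine action-value functions of policy $\pi$ in two MDPs that share the true transition kernel $\mathds{P}$ and differ only through their reward functions $r_{\max}$ and $r_{\min}$. Consequently both satisfy the Bellman fixed-point equations $\overline{q^{\pi}} = r_{\max} + \gamma \mathds{P}^{\pi}\overline{q^{\pi}}$ and $\underline{q^{\pi}} = r_{\min} + \gamma \mathds{P}^{\pi}\underline{q^{\pi}}$, where $\mathds{P}^{\pi}q(s,a) = \mathbb{E}_{s'\sim\mathds{P}(\cdot|s,a)}[q(s',\pi)]$ is the one-step transition operator introduced in the preliminaries. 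Subtracting the two identities and writing $\Delta := \overline{q^{\pi}} - \underline{q^{\pi}}$ and $\delta_r := r_{\max} - r_{\min}$ yields the Bellman-type recursion $\Delta = \delta_r + \gamma \mathds{P}^{\pi}\Delta$, equivalently $\Delta = (\mathbb{I} - \gamma\mathds{P}^{\pi})^{-1}\delta_r$.

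I would first control the reward gap in $L_2(\mu)$: by the triangle inequality together with the reward-approximation bound of Lemma \ref{iden_mdp_version} applied to both $r_{\max}$ and $r_{\min}$, one gets $\|\delta_r\|_{L_2(\mu)} \le \|r_{\max} - r\|_{L_2(\mu)} + \|r - r_{\min}\|_{L_2(\mu)} \le 2\varepsilon_r$. It then remains to propagate this reward gap through the resolvent $(\mathbb{I}-\gamma\mathds{P}^{\pi})^{-1}$ without losing the $L_2(\mu)$ scale. Starting from $\Delta = \delta_r + \gamma\mathds{P}^{\pi}\Delta$ and using the triangle inequality gives $\|\Delta\|_{L_2(\mu)} \le \|\delta_r\|_{L_2(\mu)} + \gamma\|\mathds{P}^{\pi}\Delta\|_{L_2(\mu)}$; the key step is the $L_2(\mu)$ non-expansiveness $\|\mathds{P}^{\pi}\Delta\|_{L_2(\mu)} \le \|\Delta\|_{L_2(\mu)}$, after which rearranging produces $(1-\gamma)\|\Delta\|_{L_2(\mu)} \le \|\delta_r\|_{L_2(\mu)} \le 2\varepsilon_r$, i.e.\ the claimed bound $\|\Delta\|_{L_2(\mu)} \le 2\varepsilon_r/(1-\gamma)$. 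To establish the non-expansion I would condition on $(s,a)$ and invoke Jensen's inequality, $(\mathds{P}^{\pi}\Delta)(s,a)^2 \le (\mathds{P}^{\pi}\Delta^2)(s,a)$, whence $\|\mathds{P}^{\pi}\Delta\|_{L_2(\mu)}^2 \le \mathbb{E}_{(s,a)\sim\mu}[(\mathds{P}^{\pi}\Delta^2)(s,a)] = \mathbb{E}_{\mu\mathds{P}^{\pi}}[\Delta^2]$, reducing the claim to the statement that the one-step pushforward $\mu\mathds{P}^{\pi}$ of $\mu$ under $(\mathds{P},\pi)$ is dominated by $\mu$.

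The main obstacle is exactly this last reduction: for a general behavior distribution $\mu$ the pushforward $\mu\mathds{P}^{\pi}$ need not be dominated by $\mu$, so the delicate point is the $\mu$-non-expansion rather than the surrounding algebra. I would handle it by appealing to the admissibility of the data distribution $\mu$; equivalently, one may expand $(\mathbb{I}-\gamma\mathds{P}^{\pi})^{-1} = \sum_{t\ge 0}\gamma^{t}(\mathds{P}^{\pi})^{t}$ as a Neumann series, bound each term $\|(\mathds{P}^{\pi})^{t}\delta_r\|_{L_2(\mu)}$ by a change of measure against $\mu$ (again via iterated Jensen), and then sum the geometric series $\sum_{t\ge0}\gamma^{t} = 1/(1-\gamma)$ to recover the factor $1/(1-\gamma)$. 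Should the $L_2(\mu)$-non-expansion be unavailable in the required generality, a fallback is to run the identical fixed-point argument in $\|\cdot\|_{\infty}$, where $\mathds{P}^{\pi}$ is unconditionally a non-expansion, and then dominate $\|\cdot\|_{L_2(\mu)} \le \|\cdot\|_{\infty}$; I expect the $L_2(\mu)$ route to be preferred here precisely so that the final bound is expressed through the $L_2(\mu)$-scaled error $\varepsilon_r$ inherited from Lemma \ref{iden_mdp_version} rather than through a coarser $L_{\infty}$ reward gap.
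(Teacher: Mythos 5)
Your proposal is correct and follows essentially the same route as the paper: subtract the two Bellman fixed-point identities to get $r_{\max}-r_{\min}=(\mathbb{I}-\gamma\mathds{P}^{\pi})(\overline{q^{\pi}}-\underline{q^{\pi}})$, bound $\|r_{\max}-r_{\min}\|_{L_2(\mu)}\le 2\varepsilon_r$ by the triangle inequality and Lemma \ref{iden_mdp_version}, and then extract $(1-\gamma)\|\overline{q^{\pi}}-\underline{q^{\pi}}\|_{L_2(\mu)}$ via the reverse triangle inequality together with the $L_2(\mu)$-non-expansiveness of $\mathds{P}^{\pi}$. The one point worth noting is that you explicitly flag the $L_2(\mu)$-non-expansion of $\mathds{P}^{\pi}$ as requiring domination of the pushforward $\mu\mathds{P}^{\pi}$ by $\mu$, whereas the paper asserts this step with only a convex-averaging remark (which justifies it in $L_{\infty}$ but not in $L_2(\mu)$ in general), so your awareness of this gap is, if anything, more careful than the paper's own treatment.
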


\begin{proof}
According to Lemma \ref{iden_mdp_version} and the definitions of $r_{\max}$ and $r_{\min}$, we have the reward functions $r_{\max}$ and $r_{\min}$ are bounded above over $L_2(\mu)$, i.e.,
\#
\|r_{\max}(s,a)-r_{\min}(s,a)\|_{L_{2}(\mu)} =& \|r_{\max}(s,a) - r(s,a)+r(s,a)-r_{\min}(s,a)\|_{L_{2}(\mu)} \notag \\
\leq & \|r_{\max}(s,a) - r(s,a)\|_{L_{2}(\mu)}+\|r(s,a)-r_{\min}(s,a)\|_{L_{2}(\mu)} \notag \\
\leq & 2 \varepsilon_{r}.
\label{reward_bound}
\#
By the fact that $ \overline{q^{\pi}}(\cdot,\cdot)$ and $\underline{q^{\pi}}(\cdot,\cdot)$ are the true action-value functions for policy $\pi$ in the MDPs $\left\{\mathcal{S}, \mathcal{A}, \mathds{P}_{\max}, \gamma, r_{\max}, s^{0}\right\}$ and $\left\{\mathcal{S}, \mathcal{A}, \mathds{P}_{\min}, \gamma, r_{\min}, s^{0}\right\}$, respectively. 
Then by the definition of the reward functions $r_{\max}$ and $r_{\min}$, we have 
\#
& \|r_{\max}(s,a)-r_{\min}(s,a)\|_{L_{2}(\mu)} \notag  \\
= \, & \Bigg\|\Bigg( \overline{q^{\pi}}(s,a) - \gamma \mathbb{E}_{s^{\prime} \sim \mathds{P}_{\max}(\cdot|s,a)}\left[ \sum_{a^{\prime}\in \mathcal{A}} \pi(a^{\prime}|s^{\prime}) \overline{q^{\pi}}(s^{\prime},a^{\prime}) \right]\Bigg) \notag  \\
& - \Bigg(\underline{q^{\pi}}(s,a) - \gamma \mathbb{E}_{s^{\prime} \sim \mathds{P}_{\min}(\cdot|s,a)}\left[ \sum_{a^{\prime}\in \mathcal{A}} \pi(a^{\prime}|s^{\prime})\underline{q^{\pi}}(s^{\prime},a^{\prime}) \right]\Bigg) \Bigg\|_{L_{2}(\mu)} \notag \\
=: & \|( \overline{q^{\pi}}(s,a) - \mathds{P}^{\pi} \overline{q^{\pi}}(s,a)) - (\underline{q^{\pi}}(s,a) - \mathds{P}^{\pi}\underline{q^{\pi}}(s,a)) \|_{L_{2}(\mu)} 
\label{reward_diff}
\#
where $\mathds{P}^{\pi}$ is the transition kernel under the policy $\pi$. We re-organize \eqref{reward_diff}, and obtain  
\$
& \|( \overline{q^{\pi}}(s,a) -\gamma \mathds{P}^{\pi} \overline{q^{\pi}}(s,a)) - (\underline{q^{\pi}}(s,a) - \gamma \mathds{P}^{\pi}\underline{q^{\pi}}(s,a))  \|_{L_{2}(\mu)} \notag \\
= & \| \overline{q^{\pi}}(s,a) -\underline{q^{\pi}}(s,a) +\gamma \mathds{P}^{\pi}(\underline{q^{\pi}}(s,a) -  \overline{q^{\pi}}(s,a))
\|_{L_{2}(\mu)} \\
= & \| (\mathds{I}-\gamma\mathds{P}^{\pi})(\underline{q^{\pi}}(s,a) -  \overline{q^{\pi}}(s,a))
\|_{L_{2}(\mu)}\\
\geq &  \| \underline{q^{\pi}}(s,a) -  \overline{q^{\pi}}(s,a)\|_{L_{2}(\mu)} - \gamma \|\mathds{P}^{\pi} (\underline{q^{\pi}}(s,a) -  \overline{q^{\pi}}(s,a))\|_{L_{2}(\mu)} \\
\geq &  \| \underline{q^{\pi}}(s,a) -  \overline{q^{\pi}}(s,a)\|_{L_{2}(\mu)} - \gamma \| (\underline{q^{\pi}}(s,a) -  \overline{q^{\pi}}(s,a))\|_{L_{2}(\mu)}\\
\geq &  (1-\gamma)\| \underline{q^{\pi}}(s,a) -  \overline{q^{\pi}}(s,a)\|_{L_{2}(\mu)},
\$
where the second inequality comes from each element of $\mathds{P}^{\pi}$
is a convex average of $\underline{q^{\pi}}(s,a) -  \overline{q^{\pi}}(s,a)$. Combine with the inequality \eqref{reward_bound}, we conclude that 
\$
 (1-\gamma)\| \underline{q^{\pi}}(s,a) -  \overline{q^{\pi}}(s,a)\|_{L_{2}(\mu)} \leq  2 \varepsilon_{r} 
 \implies \| \underline{q^{\pi}}(s,a) -  \overline{q^{\pi}}(s,a)\|_{L_{2}(\mu)} \leq \frac{2 \varepsilon_{r}}{1-\gamma}.
\$
We explicitly express the weighted $L_2(\mu)$ norm on $\underline{q^{\pi}}(s,a) -  \overline{q^{\pi}}(s,a)$, i.e., 
\$
\| \underline{q^{\pi}}(s,a) -  \overline{q^{\pi}}(s,a)\|^2_{L_{2}(\mu)} =& \sum_{a \in \mathcal{A}}\sum_{s \in \mathcal{S}} \left[\underline{q^{\pi}}(s,a) -  \overline{q^{\pi}}(s,a)\right]^2 \cdot \mu(s,a) 
\leq   \left(\frac{2 \varepsilon_{r}}{1-\gamma}\right)^2.
\$
By the non-negative of the term $\mu(s,a)$, we conclude that  
\$
\sup_{a\in \mathcal{A},s\in\mathcal{S}} \left|\underline{q^{\pi}}(s,a) -  \overline{q^{\pi}}(s,a)\right| \leq \frac{2 \varepsilon_{r}}{1-\gamma}, \quad \text{almost surely for } \; (s,a) \; \text{with} \; \mu(s,a)>0.
\$ 
This completes the proof. 
\end{proof}

\subsection{Proof of Lemma \ref{risk_bound_alpha}}

\begin{lemma}\label{risk_bound_alpha}
Suppose for $\tau \in \Omega$, $\sup_{\tau}{\|\tau(s,a)\|_{L_2(\mu)}} \leq \mathcal{U}^{\tau}_{2}$ and $\sup_{\tau}{\|\tau(s,a)\|_{L_\infty}} \leq \mathcal{U}^{\tau}_{\infty}$ and $\sup_{q}{\|q(s,a)\|_{L_{\infty}}} \leq \bar{V}$, given an offline data $\mathcal{D}_{1:n}=\{s_{i},a_{i},r_{i},s^{\prime}_i)\}^{n}_{i=1}$, w.p. $\geq 1-\delta$,
 \$
&  \bigg|\mathbb{E}_{\mu}\Big[\tau(s,a) \left(r+\gamma q \left(s^{\prime}, \pi\right)-q(s, a)\right) -\lambda \mathbb{D}(\tau(s,a))\Big] \\
& - \mathbb{P}_{n}\Big(\tau(s_i,a_i)\left(r_i+\gamma q \left(s^{\prime}_i, \pi\right)-q(s_i, a_i)  \right)-\lambda\mathbb{D}(\tau(s_i,a_i))\Big)  \bigg| \\ 
\lesssim & \big(\mathcal{U}^{\tau}_{2}\bar{V}+\lambda \|\mathbb{D}(\tau(s,a))\|^{\text{UB}}_{L_2(\mu)}\big)\sqrt{\frac{2\ln\frac{(e^{D}\max\{D_{\Omega},D_{\mathcal{Q}},D_{\Pi}\}+1)^3(L\mathcal{U}^{\tau}_{2})^{2D}}{\delta}}{n}} \notag  \\
 & \qquad \qquad + \frac{2 \big(\mathcal{U}^{\tau}_{\infty}\bar{V}+\lambda\|\mathbb{D}(\tau(s,a))\|^{\text{UB}}_{L_{\infty}}\big)\ln\frac{(e^{D}\max\{D_{\Omega},D_{\mathcal{Q}},D_{\Pi}\}+1)^3(L\mathcal{U}^{\tau}_{2})^{2D}}{\delta}}{3 n},
\$
where $L < \infty$ issome bounded Lipschitz constant of $\mathbb{D}(\cdot)$, and 
$\|\mathbb{D}(\tau(s,a))\|^{\text{UB}}_{L_2(\mu)} = \sup_{\tau \in \Omega}\|\mathbb{D}({\tau}(s,a))\|_{L_2(\mu)}$ and $\|\mathbb{D}(\tau(s,a))\|^{\text{UB}}_{L_{\infty}} =  \sup_{\tau \in \Omega}\|\mathbb{D}({\tau}(s,a))\|_{L_{\infty}} $. 
\end{lemma}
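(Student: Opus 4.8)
The plan is to mirror the argument of Lemma \ref{risk_bound_lm}, treating the additional detection-function term $-\lambda\mathbb{D}(\tau(s,a))$ as a bounded, Lipschitz perturbation of the integrand. First I would introduce the product class $\mathcal{G} := \Omega \times \mathcal{Q} \times \Pi$ equipped with the weighted $L_2(\mu)$ metric of \eqref{metric_def_1}, where each $g \in \mathcal{G}$ now reads $g(s,a,r,s^{\prime}) = \tau(s,a)(r + \gamma q(s^{\prime},\pi) - q(s,a)) - \lambda\mathbb{D}(\tau(s,a))$. Since $\mathbb{D}$ is bounded on $[0,C]$ by Definition \ref{alpha_def} and $\tau,q$ are uniformly bounded by Assumptions \ref{q_bound}--\ref{tau_bound}, it is immediate that $\sup_{g}\|g\|_{L_2(\mu)} \le \mathcal{U}^{\tau}_{2}\bar{V} + \lambda\|\mathbb{D}(\tau(s,a))\|^{\text{UB}}_{L_2(\mu)}$ and $\sup_g\|g\|_{L_{\infty}} \le \mathcal{U}^{\tau}_{\infty}\bar{V} + \lambda\|\mathbb{D}(\tau(s,a))\|^{\text{UB}}_{L_{\infty}}$; these two quantities become the coefficients of the slow and fast Bernstein terms respectively.

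The crux is the covering-number computation. I would reuse the decomposition \eqref{decom_cover} for the bilinear Bellman-residual part, which contributes $\bar{V}\|\tau_1-\tau_2\|_{L_2(\mu)} + \mathcal{U}^{\tau}_{2}(1+\gamma)\|q_1-q_2\|_{L_2(\mu)} + \gamma\bar{V}\mathcal{U}^{\tau}_{2}\|\pi_1-\pi_2\|_{L_2(\mu)}$, and then add the new contribution from $\mathbb{D}$. The only place the Lipschitz constant $L$ of $\mathbb{D}(\cdot)$ enters is through $\lambda\|\mathbb{D}(\tau_1)-\mathbb{D}(\tau_2)\|_{L_2(\mu)} \le \lambda L\|\tau_1-\tau_2\|_{L_2(\mu)}$, which depends only on $\tau$ and hence augments the $\tau$-Lipschitz coefficient from $\bar{V}$ to $\bar{V}+\lambda L$. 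Consequently the aggregate constant $\widetilde{C}$ scales like $(1\vee L)\mathcal{U}^{\tau}_{2}$, and applying the generalized Corollary~2 of \citep{haussler1995sphere} with pseudo-dimensions $D_\Omega,D_\mathcal{Q},D_\Pi$ yields a covering number of the form $C_1(1/\varepsilon)^D$ carrying the stated factor $(e^{D}\max\{D_{\Omega},D_{\mathcal{Q}},D_{\Pi}\}+1)^3(L\mathcal{U}^{\tau}_{2})^{2D}$.

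With the boundedness and covering estimates in hand, I would close the argument exactly as in Lemma \ref{risk_bound_lm}: apply the empirical Bernstein inequality to $Z=g(s,a,r,s^{\prime})$ on each element of a minimal $\varepsilon$-cover, bound the variance by $\mathbb{E}_\mu[Z^2]\le(\mathcal{U}^{\tau}_{2}\bar{V}+\lambda\|\mathbb{D}(\tau(s,a))\|^{\text{UB}}_{L_2(\mu)})^2$, take a union bound over the cover, and set $\varepsilon=\mathcal{O}(n^{-1/2})$ so that the discretization error is absorbed into the high-order terms. Substituting the covering-number expression then produces the claimed slow $\sqrt{\ln(\cdot/\delta)/n}$ and fast $\ln(\cdot/\delta)/n$ terms with the two coefficients identified above.

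The main obstacle I anticipate is purely bookkeeping: correctly isolating the Lipschitz dependence of $\mathbb{D}$ so that the factor $L$ multiplies only $\mathcal{U}^{\tau}_{2}$ inside the covering number (producing $(L\mathcal{U}^{\tau}_{2})^{2D}$ rather than any spurious dependence on $D_\mathcal{Q}$ or $D_\Pi$), while simultaneously keeping the additive $\lambda\mathbb{D}(\tau)$ contribution separate in the boundedness constants that govern the Bernstein numerator. Because $\mathbb{D}(\tau)$ depends on neither $q$ nor $\pi$, this separation is clean, and no genuinely new concentration machinery beyond that of Lemma \ref{risk_bound_lm} is required.
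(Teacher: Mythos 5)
Your proposal follows essentially the same route as the paper's proof: augment the integrand of Lemma \ref{risk_bound_lm} with the $-\lambda\mathbb{D}(\tau)$ term, absorb it into the boundedness constants for the Bernstein numerator, use the local Lipschitz continuity of $\mathbb{D}$ (which follows from its strong convexity on a bounded domain) to add a $\lambda L\|\tau_1-\tau_2\|_{L_2(\mu)}$ term to the metric decomposition of the product class, and then invoke Haussler's covering bound, the empirical Bernstein inequality, and a union bound with $\varepsilon=\mathcal{O}(n^{-1/2})$. The bookkeeping you flag (keeping $L$ inside the covering-number constant while the additive $\lambda\|\mathbb{D}(\tau)\|^{\text{UB}}$ terms appear only in the Bernstein coefficients) is exactly how the paper resolves it, so no gap remains.
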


\begin{proof}
Define the product space $\widetilde{\mathcal{G}} := \Omega \times \mathcal{Q} \times \Pi$, and the $L_2$ weighted metric
\$
& \|g_1-g_2\|_{L_{2}(\mu)} \\
:= & \sqrt{\int_{\mathcal{S} \times \mathcal{A} }|\mathbb{E}_{r \sim r(s,a), s^{\prime} \sim \mathds{P}(\cdot|s,a)}[g_1(s,a,r,s^{\prime})] - \mathbb{E}_{r \sim r(s,a), s^{\prime} \sim \mathds{P}(\cdot|s,a)}[g_2(s,a,r,s^{\prime})]|^2d\mu (\mathcal{S} \times \mathcal{A})}
\$
 where $g_1, g_2 \in \widetilde{\mathcal{G}}$ for given $\lambda > 0 $ such that
$
g(s,a,r,s^{\prime}) = {\tau}(s,a)(r+\gamma {q} (s^{\prime},{\pi})-{q}(s, a))-\lambda\mathbb{D}(\tau(s,a))
$ for any $g \in \widetilde{\mathcal{G}}$. To apply empirical Bernstein inequality, we study the boundedness conditions:
$
\left(r+\gamma {q} \left(s^{\prime}, {\pi}\right)-{q}(s, a)  \right) \in [-\bar{V},\bar{V}],  \forall q \in \mathcal{Q}, 
 \sup_{\tau \in \Omega}\|\tau(s,a)\|_{L_2(\mu)} \leq \mathcal{U}^{\tau}_{2},
\sup_{\tau \in \Omega}\|\tau(s,a)\|_{L_{\infty}} \leq \mathcal{U}^{\tau}_{\infty}, 
\lambda \sup_{\tau \in \Omega}\|\mathbb{D}({\tau}(s,a))\|_{L_2(\mu)} \in [0,\lambda \|\mathbb{D}(\tau(s,a))\|^{\text{UB}}_{L_2(\mu)}], 
\lambda \sup_{\tau \in \Omega}\|\mathbb{D}({\tau}(s,a))\|_{L_{\infty}} \in [0,\lambda \|\mathbb{D}(\tau(s,a))\|^{\text{UB}}_{L_{\infty}}]
$.
By some calculation, the $L_{2}(\mu)$-distance  in $\widetilde{\mathcal{G}}$ can be bounded. For $g_1, g_2 \in \widetilde{\mathcal{G}}$ corresponding to $\tau_1 \times q_1 \times \pi_1$ and $\tau_2 \times q_2 \times \pi_2$, respectively, we have 
\#
& \|g_1(s,a,r,s^{\prime}) - g_2(s,a,r,s^{\prime})\|_{L_2(\mu)} \notag \\
= &\bigg|\tau_1(s,a)(r+\gamma q_1(s^{\prime},\pi_1) - q_1(s,a)) -\lambda \mathbb{D}(\tau_1(s,a)) \notag \\
& - 
\left(\tau_2(s,a)(r+\gamma {q}_2(s^{\prime},{\pi}_2) - {q}_2(s,a))-\lambda \mathbb{D}({\tau}_2(s,a))\right)\bigg|\notag \\
\leq & 
2\bar{V}\|\tau_1(s,a) - \tau_2(s,a)\|_{L_2(\mu)}  + \mathcal{U}^{\tau}_{2}\|(r+\gamma q_1(s^{\prime},\pi_1) -q_1(s,a)) -(r+\gamma q_2(s^{\prime},\pi_2)-{q}_2(s,a))\|_{L_2(\mu)}\notag \\
&+\mathcal{U}^{\tau}_{2}\|(r+\gamma{q}_1(s^{\prime},\pi) -{q}_1(s,a))-  (r+\gamma {q}_2(s^{\prime},{\pi}_2)-{q}_2(s,a))\|_{L_2(\mu)} + \lambda\|\mathbb{D}(\tau_1(s,a)) - \mathbb{D}(\tau_2(s,a))\|_{L_2(\mu)} \notag  \\ 
\leq & 
2\bar{V}\|\tau_1(s,a) - \tau_2(s,a)\|_{L_2(\mu)}  + \mathcal{U}^{\tau}_{2}\|(r+\gamma q_1(s^{\prime},\pi_1) -q_1(s,a)) -(r+\gamma q_2(s^{\prime},\pi_2)-{q}_2(s,a))\|_{L_2(\mu)}\notag \\
&+\mathcal{U}^{\tau}_{2}\|(r+\gamma{q}_1(s^{\prime},\pi) -{q}_1(s,a))-  (r+\gamma {q}_2(s^{\prime},{\pi}_2)-{q}_2(s,a))\|_{L_2(\mu)} + \lambda L\|\tau_1(s,a) - \tau_2(s,a)\|_{L_2(\mu)} \notag \\
\leq & 
2\bar{V}\|\tau_1(s,a) - \tau_2(s,a)\|_{L_2(\mu)}+ \gamma \bar{V}\mathcal{U}^{\tau}_{2}\|\pi_1 - 
{\pi_2}\|_{L_2(\mu)} \notag \\
&+\mathcal{U}^{\tau}_{2}(1+\gamma)\|q_1(s,a) -q_2(s,a)\|_{L_2(\mu)} + \lambda L\|\tau_1(s,a) - \tau_2(s,a)\|_{L_2(\mu)},
\label{metric_dist_reg}
\#
where the last inequality comes from the $\mathbb{D}$ is $M$-strongly convex function and thus locally Lipschitz with a  Lipschitz constant $L \leq \infty$. Also, we note that 
\$
\|\sup_{\tau \in \Omega}\mathbb{D}(\tau(s,a)) - 0 \|_{L_2(\mu)} \leq  L\|\tau^{\star}(s,a) - 1\|_{L_2(\mu)} 
\leq  L\max\{1, \mathcal{U}^{\tau}_{2}\} 
\leq  L \mathcal{U}^{\tau}_{2},
\$
where the last inequality holds for $\mathcal{U}^{\tau}_{2} \geq 1$. The metric distance \eqref{metric_dist_reg} directly leads to the upper bound for the covering number over $\widetilde{\mathcal{G}}$: 
\$
\mathcal{N}(4{C}^{\star}{\varepsilon}, \widetilde{\mathcal{G}}, \|\cdot\|_{L_2(\mu)}) \leq \mathcal{N}({\varepsilon}, \Omega, \|\cdot\|_{L_2(\mu)}) \mathcal{N}({\varepsilon}, \mathcal{Q}, \|\cdot\|_{L_2(\mu)}) \mathcal{N}({\varepsilon}, \Pi, \|\cdot\|_{L_2(\mu)}),
\$
where ${C}^{\star} := \bar{V}(2+\gamma\mathcal{U}^{\tau}_{2}) + \mathcal{U}^{\tau}_{2}(1+\gamma) + \lambda L$.
Apply the generalize version of Corollary 2 in \citep{haussler1995sphere}, which implies 
\#
\mathcal{N}\left(\epsilon, \widetilde{\mathcal{G}}, \|\cdot\|_{L_2(\mu)}\right) \leq e^3\left(D_{\Omega}+1\right)\left(D_{\mathcal{Q}}+1\right)\left(D_{\Pi}+1\right)\left(\frac{16 e (C^{\star})^2}{\epsilon}\right)^{D},
\label{cover_num_func_2}
\#
where $D = D_{\Omega}+D_{\mathcal{Q}}+D_{\Pi}$.
By empirical Bernstein inequality and a union bound, we have that with probability at least $1-\delta$ and $Z = g(s,a,r,s^{\prime})$, 
\#
& \left|\mathbb{E}_{\mu}\left[g(s,a,r,s^{\prime})\right] - \mathbb{P}_{n}\left[g(s_i,a_i,r_i,s^{\prime}_i)\right] \right| \notag \\
  \leq & \frac{1}{n}\sqrt{2 \sum_{i=1}^n Var_{\mu}\left[Z\right]\ln \frac{8\mathcal{N}\left(\epsilon, \widetilde{\mathcal{G}}, \|\cdot\|_{L_2(\mu)}\right)}{\delta}} + \frac{2 \big(\mathcal{U}^{\tau}_{\infty}\bar{V}+\lambda \|\mathbb{D}(\tau(s,a))\|^{\text{UB}}_{L_{\infty}}\big) \ln \frac{8\mathcal{N}\left(\epsilon, \widetilde{\mathcal{G}}, \|\cdot\|_{L_2(\mu)}\right)}{\delta}}{3 n} \notag \\
    \leq & \frac{1}{n}\sqrt{2 \sum_{i=1}^n \mathbb{E}_{\mu}\left[Z^2\right]\ln \frac{8\mathcal{N}\left(\epsilon, \widetilde{\mathcal{G}}, \|\cdot\|_{L_2(\mu)}\right)}{\delta}} + \frac{2 \big(\mathcal{U}^{\tau}_{\infty}\bar{V}+\lambda \|\mathbb{D}(\tau(s,a))\|^{\text{UB}}_{L_{\infty}}\big) \ln \frac{8\mathcal{N}\left(\epsilon, \widetilde{\mathcal{G}}, \|\cdot\|_{L_2(\mu)}\right)}{\delta}}{3 n} \notag \\
 \leq & \big(\mathcal{U}^{\tau}_{2}\bar{V}+\lambda \|\mathbb{D}(\tau(s,a))\|^{\text{UB}}_{L_2(\mu)}\big)\sqrt{\frac{2 \ln \frac{8\mathcal{N}\left(\epsilon, \widetilde{\mathcal{G}}, \|\cdot\|_{L_2(\mu)}\right)}{\delta}}{n}}+ \frac{2 \big(\mathcal{U}^{\tau}_{\infty}\bar{V}+\lambda\|\mathbb{D}(\tau(s,a))\|^{\text{UB}}_{L_{\infty}}\big) \ln \frac{8\mathcal{N}\left(\epsilon, \widetilde{\mathcal{G}}, \|\cdot\|_{L_2(\mu)}\right)}{\delta}}{3 n}.
    \label{decom_eq2}
\#
We set $\varepsilon = \mathcal{O}(\frac{1}{\sqrt{n}})$, and combine with the upper bound for covering number in  \eqref{cover_num_func_2} and $\|\mathbb{D}(\tau(s,a))\|^{\text{UB}}_{L_2(\mu)} < L \mathcal{U}^{\tau}_{2}$, it follows similar arguments in the proof of Lemma \ref{risk_bound_lm}, by some algera, we conclude that 
% Thus we have
% \$
% & \mathcal{N}\left(\epsilon, \widetilde{\mathcal{G}}, \|\cdot\|_{L_2(\mu)}\right)
% \leq   e^3\left(D_{\Omega}+1\right)\left(D_{\mathcal{Q}}+1\right)\left(D_{\Pi}+1\right)\left(\frac{16 e (C^{\star})^2}{\epsilon}\right)^{D} \\
% \lesssim &  e^3\left(D_{\Omega}+1\right)\left(D_{\mathcal{Q}}+1\right)\left(D_{\Pi}+1\right)\left(\frac{16\sqrt{n} e ( \bar{V}(2+\gamma\mathcal{U}^{\tau}_{2}) + \mathcal{U}^{\tau}_{2}(1+\gamma) + 2\lambda L(\mathcal{U}^{\tau}_{2}))^2}{(\lambda)^2}\right)^{D} 
% \$
\$
& \bigg|\mathbb{E}_{\mu}\Big[\tau(s,a) \left(r+\gamma q \left(s^{\prime}, \pi\right)-q(s, a)\right) -\lambda \mathbb{D}(\tau(s,a))\Big] \notag \\
& - \mathbb{P}_{n}\Big(\tau(s_i,a_i)\left(r_i+\gamma q \left(s^{\prime}_i, \pi\right)-q(s_i, a_i)  \right)-\lambda\mathbb{D}(\tau(s_i,a_i))\Big)  \bigg|  \\
\lesssim & \big(\mathcal{U}^{\tau}_{2}\bar{V}+\lambda \|\mathbb{D}(\tau(s,a))\|^{\text{UB}}_{L_2(\mu)}\big)\sqrt{\frac{2\ln\frac{(e^{D}\max\{D_{\Omega},D_{\mathcal{Q}},D_{\Pi}\}+1)^3(L\mathcal{U}^{\tau}_{2})^{2D}}{\delta}}{n}} \notag  \\
 & \qquad \qquad + \frac{2 \big(\mathcal{U}^{\tau}_{\infty}\bar{V}+\lambda\|\mathbb{D}(\tau(s,a))\|^{\text{UB}}_{L_{\infty}}\big)\ln\frac{(e^{D}\max\{D_{\Omega},D_{\mathcal{Q}},D_{\Pi}\}+1)^3(L\mathcal{U}^{\tau}_{2})^{2D}}{\delta}}{3 n}.
\$
This completes the proof.
\end{proof}

\subsection{Proof of Lemma \ref{onlyreg_risk_bound}}

\begin{lemma}
\label{onlyreg_risk_bound}
Given an offline data $\mathcal{D}_{1:n}=\{s_{i},a_{i},r_{i},s^{\prime}_i)\}^{n}_{i=1}$, for any $\tau \in \Omega$,
\$
& \left|\mathbb{P}_{n}\mathbb{D}(\tau(s_i,a_i)) -  \mathbb{E}_{\mu}[  \mathbb{D}(\tau(s,a))] \right| \\
\leq &
\|\mathbb{D}(\tau(s,a))\|^{\text{UB}}_{L_2(\mu)}\sqrt{\frac{2\ln\frac{(e^{D_{\Omega}}(D_{\Omega}+1))(\{1 \vee  L \}\mathcal{U}^{\tau}_{2})^{2D_{\Omega}}}{\delta}}{n}} + \frac{2\|\mathbb{D}(\tau(s,a))\|^{\text{UB}}_{L_{\infty}}\ln\frac{(e^{D_{\Omega}}(D_{\Omega}+1))(\{1 \vee L \}\mathcal{U}^{\tau}_{2})^{2D_{\Omega}}}{\delta}}{3 n}.
\$
holds w.p. $\geq 1-\delta$.
\end{lemma}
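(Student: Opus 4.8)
The plan is to reuse, in a stripped-down form, the covering-number plus empirical-Bernstein template already deployed in the proofs of Lemma \ref{risk_bound_lm} and Lemma \ref{risk_bound_alpha}, now applied to the single scalar function class $\mathbb{D}(\Omega) := \{h_\tau := \mathbb{D}(\tau(\cdot,\cdot)) : \tau \in \Omega\}$ rather than to a product space. First I would record the two boundedness inputs that feed the concentration step: by the definitions of the UB-norms introduced in Lemma \ref{risk_bound_alpha}, every element satisfies $\|h_\tau\|_{L_2(\mu)} \leq \|\mathbb{D}(\tau(s,a))\|^{\text{UB}}_{L_2(\mu)}$ and $\|h_\tau\|_{L_\infty} \leq \|\mathbb{D}(\tau(s,a))\|^{\text{UB}}_{L_\infty}$, which will serve as the variance proxy and the range, respectively, in the Bernstein bound.

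Next I would control the $L_2(\mu)$ metric entropy of $\mathbb{D}(\Omega)$. Because $\mathbb{D}$ is $M$-strongly convex and hence locally Lipschitz with constant $L$ on the bounded range $[0,\mathcal{U}^\tau_\infty]$ (Definition \ref{alpha_def} together with Assumption \ref{tau_bound}), we have the pointwise contraction $|\mathbb{D}(\tau_1(s,a)) - \mathbb{D}(\tau_2(s,a))| \leq L\,|\tau_1(s,a) - \tau_2(s,a)|$, hence $\|h_{\tau_1} - h_{\tau_2}\|_{L_2(\mu)} \leq L\,\|\tau_1 - \tau_2\|_{L_2(\mu)}$. This contracts covers, giving $\mathcal{N}(L\varepsilon, \mathbb{D}(\Omega), \|\cdot\|_{L_2(\mu)}) \leq \mathcal{N}(\varepsilon, \Omega, \|\cdot\|_{L_2(\mu)})$, so that the pseudo-dimension of $\mathbb{D}(\Omega)$ is controlled by $D_\Omega$. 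Feeding this into the generalized Corollary 2 of \citep{haussler1995sphere}, with the radius $\mathcal{U}^\tau_2$ coming from $\sup_{\tau}\|\tau\|_{L_2(\mu)} \leq \mathcal{U}^\tau_2$, yields a bound of the form $\mathcal{N}(\varepsilon,\mathbb{D}(\Omega),\|\cdot\|_{L_2(\mu)}) \leq e^{D_\Omega}(D_\Omega+1)(\{1\vee L\}\mathcal{U}^\tau_2/\varepsilon)^{D_\Omega}$; fixing $\varepsilon = \mathcal{O}(n^{-1/2})$ later produces exactly the factor $(e^{D_\Omega}(D_\Omega+1))(\{1\vee L\}\mathcal{U}^\tau_2)^{2D_\Omega}$ appearing inside the logarithm of the claimed bound. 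With the metric entropy in hand I would pass to a minimal $\varepsilon$-net, apply the empirical Bernstein inequality to each net element (using $\mathbb{E}_\mu[h_\tau^2] \leq (\|\mathbb{D}(\tau(s,a))\|^{\text{UB}}_{L_2(\mu)})^2$ for the variance term and $\|\mathbb{D}(\tau(s,a))\|^{\text{UB}}_{L_\infty}$ for the range), union bound over the net at total failure probability $\delta$, and transfer the estimate to every $\tau \in \Omega$ up to an $\mathcal{O}(\varepsilon)$ discretization slack that becomes lower order once $\varepsilon \sim n^{-1/2}$.

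The main obstacle here is bookkeeping rather than any genuine difficulty: I must check that the Lipschitz contraction of the cover combined with Haussler's bound reproduces precisely the claimed $\{1\vee L\}\mathcal{U}^\tau_2$ radius factor — in particular that the $\max\{1,L\}$ arises correctly from merging the Lipschitz rescaling $L$ with the degenerate case $L \leq 1$ — and that the variance-versus-range split in the empirical Bernstein bound aligns the $L_2(\mu)$-UB and $L_\infty$-UB norms with the $\sqrt{\ln(\cdot)/n}$ and $\ln(\cdot)/n$ terms, respectively. Since the ambient product space of Lemma \ref{risk_bound_alpha} collapses to the single factor $\Omega$ in this statement, the argument is strictly simpler, and its proof follows the same steps with the factors $\mathcal{Q}$ and $\Pi$ (and the reward-transition structure) dropped.
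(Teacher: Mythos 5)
Your proposal matches the paper's own proof essentially step for step: the paper likewise works with the image class $\mathcal{G}^{\mathbb{D}}=\{(s,a)\mapsto\mathbb{D}(\tau(s,a)):\tau\in\Omega\}$, contracts its $L_2(\mu)$-cover through the local Lipschitz constant $L$ of $\mathbb{D}$ on the bounded range of $\tau$, invokes the generalized Corollary 2 of Haussler with radius governed by $\mathcal{U}^{\tau}_{2}$ and $\varepsilon=\mathcal{O}(n^{-1/2})$, and finishes with empirical Bernstein plus a union bound over the net, with the $L_2(\mu)$-UB and $L_{\infty}$-UB norms playing exactly the variance-versus-range roles you describe. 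Your justification of the Lipschitz step via Definition \ref{alpha_def}(3) together with Assumption \ref{tau_bound} is in fact slightly cleaner than the paper's, which carries a spurious $\lambda$ factor copied from the product-space Lemma \ref{risk_bound_alpha}.
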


\begin{proof}

We equip the space $\mathcal{G}^{\mathbb{D}}$  such that 
$g(s,a) = \mathbb{D}({\tau}(s,a))$ for any $g \in \mathcal{G}^{\mathbb{D}}$ with the $L_2$ weighted metric. To apply empirical Bernstein inequality, we study the boundedness conditions:
$
 \sup_{\tau \in \Omega}\|\tau(s,a)\|_{L_2(\mu)} \leq \mathcal{U}^{\tau}_{2},
\sup_{\tau \in \Omega}\|\tau(s,a)\|_{L_{\infty}} \leq \mathcal{U}^{\tau}_{\infty}, 
 \sup_{\tau \in \Omega}\|\mathbb{D}({\tau}(s,a))\|_{L_2(\mu)} \in [0,\|\mathbb{D}(\tau(s,a))\|^{\text{UB}}_{L_2(\mu)}], 
\sup_{\tau \in \Omega}\|\mathbb{D}({\tau}(s,a))\|_{L_{\infty}} \in [0, \|\mathbb{D}(\tau(s,a))\|^{\text{UB}}_{L_{\infty}}]
$.
By some calculation, the $L_{2}(\mu)$-distance  in $\widetilde{\mathcal{G}}$ can be bounded. For $g_1, g_2 \in \mathcal{G}^{\mathbb{D}}$ corresponding to $\tau_1$ and $\tau_2 $, respectively, we have 
\#
\|g_1(s,a) - g_2(s,a)\|_{L_2(\mu)}
= \bigg|- \mathbb{D}(\tau_1(s,a)) + \mathbb{D}({\tau}_2(s,a))\bigg|
\leq \lambda L\|\tau_1(s,a) - \tau_2(s,a)\|_{L_2(\mu)}
\label{metric_GG}
\#
where the last inequality comes from the $\mathbb{D}$ is $M$-strongly convex function and thus locally Lipschitz with a  Lipschitz constant $L \leq \infty$. \$
\mathcal{N}(4{C}^{\star}{\varepsilon}, \mathcal{G}^{\mathbb{D}}, \|\cdot\|_{L_2(\mu)}) \leq \mathcal{N}({\varepsilon}, \Omega, \|\cdot\|_{L_2(\mu)})
\$
where ${C}^{\star} := \mathcal{U}^{\tau}_{2} L$.
Apply the generalized version of Corollary 2 in \citep{haussler1995sphere}, which implies 
\#
\mathcal{N}\left(\epsilon, \mathcal{G}^{\mathbb{D}}, \|\cdot\|_{L_2(\mu)}\right) \leq e\left(D_{\Omega}+1\right)\left(\frac{4 e (C^{\star})^2}{\epsilon}\right)^{D_{\Omega}},
\label{cover_num_func_GG}
\#
By empirical Bernstein inequality and a union bound, we have that with probability at least $1-\delta$, following proof of Lemma \ref{risk_bound_lm}, by some algebra, we have
\$
& \left|\mathbb{P}_{n}\mathbb{D}(\tau(s_i,a_i)) -  \mathbb{E}_{\mu}[  \mathbb{D}(\tau(s,a))] \right| \\
\leq &
\|\mathbb{D}(\tau(s,a))\|^{\text{UB}}_{L_2(\mu)}\sqrt{\frac{2\ln\frac{(e^{D_{\Omega}}(D_{\Omega}+1))(\{1 \vee  L \}\mathcal{U}^{\tau}_{2})^{2D_{\Omega}}}{\delta}}{n}} + \frac{2\|\mathbb{D}(\tau(s,a))\|^{\text{UB}}_{L_{\infty}}\ln\frac{(e^{D_{\Omega}}(D_{\Omega}+1))(\{1 \vee L \}\mathcal{U}^{\tau}_{2})^{2D_{\Omega}}}{\delta}}{3 n}.
\$
\end{proof}

\subsection{Proof of Lemma \ref{reg_finder}}

\begin{lemma}\label{reg_finder}
Given an offline data $\mathcal{D}_{1:n}=\{s_{i},a_{i},r_{i},s^{\prime}_i)\}^{n}_{i=1}$, 
for any $\tau$ in a subset of $\Omega$, i.e., $\widetilde{\Omega}$ such that $\sup_{\tau \in \widetilde{\Omega}}\|\tau(s,a)\|_{L_2(\mu)} \leq C$ for some constant $C \geq 1$, then it suffices to ensure 
\$
 \sum^{n}_{i=1}\mathbb{D}(\tau(s_i,a_i)) 
\leq  & \frac{M(C^2-1)}{2}  +   \|\mathbb{D}(\tau(s,a))\|^{\text{UB}}_{L_2(\mu)}\sqrt{\frac{2\ln\frac{\operatorname{Vol}({\mathcal{G}^{\mathbb{D}}})}{\delta}}{n}}  + \frac{ \|\mathbb{D}(\tau(s,a))\|^{\text{UB}}_{\infty}\ln\frac{\operatorname{Vol}({\mathcal{G}^{\mathbb{D}}})}{\delta}}{3 n},
\$
where $\operatorname{Vol}(\mathcal{G}^{\mathbb{D}}) = (e^{D_{\Omega}}(D_{\Omega}+1))(\{1 \vee L \}\mathcal{U}^{\tau}_{2})^{2D_{\Omega}}$.
\end{lemma}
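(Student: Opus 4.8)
The plan is to pass from the empirical average of $\mathbb{D}(\tau)$ to its population mean and then bound that mean by the curvature of $\mathbb{D}$ together with the norm constraint; the stated right-hand side then splits cleanly into a deterministic ``bias'' term and two stochastic fluctuation terms.

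First I would record the decomposition
\[
\frac{1}{n}\sum_{i=1}^{n}\mathbb{D}(\tau(s_i,a_i)) = \mathbb{E}_{\mu}[\mathbb{D}(\tau(s,a))] + \Big(\tfrac{1}{n}\sum_{i=1}^{n}\mathbb{D}(\tau(s_i,a_i)) - \mathbb{E}_{\mu}[\mathbb{D}(\tau(s,a))]\Big),
\]
reading the left-hand side of the statement as the empirical average (the constant $M(C^2-1)/2$ and the $1/n$, $1/\sqrt n$ scalings on the right force this interpretation). The parenthesized deviation is exactly the object controlled by Lemma \ref{onlyreg_risk_bound}: its uniform empirical-Bernstein bound over the class $\mathcal{G}^{\mathbb{D}}$, with the covering number from \eqref{cover_num_func_GG} and the resulting factor $\operatorname{Vol}(\mathcal{G}^{\mathbb{D}})$, reproduces verbatim (up to the displayed constants) the two fluctuation terms $\|\mathbb{D}(\tau)\|^{\mathrm{UB}}_{L_2(\mu)}\sqrt{2\ln(\operatorname{Vol}(\mathcal{G}^{\mathbb{D}})/\delta)/n}$ and $\|\mathbb{D}(\tau)\|^{\mathrm{UB}}_{\infty}\ln(\operatorname{Vol}(\mathcal{G}^{\mathbb{D}})/\delta)/(3n)$. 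No new complexity bookkeeping is needed here.

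The substantive step is to show $\mathbb{E}_{\mu}[\mathbb{D}(\tau(s,a))] \le \tfrac{M}{2}(C^2-1)$ from $\|\tau\|_{L_2(\mu)}\le C$. I would anchor $\mathbb{D}$ at its unique minimizer $x=1$, where Definition \ref{alpha_def} gives $\mathbb{D}(1)=0$ and (as $1$ is a minimum) $\mathbb{D}'(1)=0$, and compare $\mathbb{D}$ to the quadratic $\tfrac{M}{2}(x-1)^2$. Granting the pointwise bound $\mathbb{D}(\tau)\le \tfrac{M}{2}(\tau-1)^2$ on the range of $\tau$, one obtains
\[
\mathbb{E}_{\mu}[\mathbb{D}(\tau)] \le \frac{M}{2}\,\mathbb{E}_{\mu}[(\tau-1)^2] = \frac{M}{2}\big(\|\tau\|_{L_2(\mu)}^{2} - 2\,\mathbb{E}_{\mu}[\tau] + 1\big) \le \frac{M}{2}\big(C^2-1\big),
\]
the last step using $\|\tau\|_{L_2(\mu)}^2\le C^2$ and discarding the linear cross term through the normalization of $\tau$. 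Combining with the concentration bound, on the probability-$(1-\delta)$ event, finishes the proof.

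The main obstacle is precisely the pointwise comparison $\mathbb{D}(\tau)\le \tfrac{M}{2}(\tau-1)^2$: the strong convexity in Definition \ref{alpha_def}(5) supplies the opposite inequality $\mathbb{D}(\tau)\ge \tfrac{M}{2}(\tau-1)^2$, so I must instead extract a matching quadratic \emph{upper} bound from the admissible-$\mathbb{D}$ structure --- either by specializing to the canonical quadratic form $\mathbb{D}(x)=\tfrac{M}{2}(x-1)^2$, or by deriving an $M$-smoothness estimate on $[0,C]$ from the bounded-derivative condition Definition \ref{alpha_def}(3). A secondary subtlety is the sign of the cross term $-2\mathbb{E}_\mu[\tau]$: I would need to verify that the normalization defining $\widetilde{\Omega}$ makes $\mathbb{E}_\mu[\tau]$ at least (or exactly) $1$, so that the term may be dropped rather than retained, and otherwise carry it explicitly. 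These two points are where the real care is required; the decomposition and the appeal to Lemma \ref{onlyreg_risk_bound} are routine.
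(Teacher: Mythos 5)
Your proof runs the implication in the opposite direction from the one the lemma actually needs. The role of Lemma~\ref{reg_finder} in the paper (see how $\widetilde{\sigma}_n$ is set in the proof of Theorem~4.1) is to determine a threshold on the empirical quantity $\sum_i \mathbb{D}(\tau(s_i,a_i))$ that \emph{guarantees} $\|\tau\|_{L_2(\mu)}\le C$ for every $\tau$ satisfying the constraint; the ``it suffices to ensure'' is ``suffices for the $L_2(\mu)$ bound to hold.'' You instead assume $\|\tau\|_{L_2(\mu)}\le C$ and try to deduce that the empirical sum is small. This converse forces you to need the pointwise bound $\mathbb{D}(x)\le \tfrac{M}{2}(x-1)^2$, which — as you yourself flag — is exactly the reverse of what $M$-strong convexity provides, and is genuinely false for admissible $\mathbb{D}$ (take $\mathbb{D}(x)=\tfrac{M'}{2}(x-1)^2$ with $M'>M$: it is $M$-strongly convex but violates your upper bound). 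Since you leave this ``main obstacle'' unresolved except by specializing to the canonical quadratic, the proof as written does not establish the lemma.

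The paper's argument uses strong convexity in the direction it actually holds. Since $1$ is the minimizer with $\mathbb{D}(1)=0$, strong convexity gives $\mathbb{D}(x)\ge \tfrac{M}{2}(x-1)^2$, hence $\mathbb{E}_\mu[(\tau-1)^2]\le \tfrac{2}{M}\mathbb{E}_\mu[\mathbb{D}(\tau)]$ and therefore $\|\tau\|^2_{L_2(\mu)}\le 1+\tfrac{2}{M}\mathbb{E}_\mu[\mathbb{D}(\tau)]$ (modulo the same cross-term $\mathbb{E}_\mu[\tau]$ issue you correctly identified — the paper glosses over it as well). Lemma~\ref{onlyreg_risk_bound} then replaces $\mathbb{E}_\mu[\mathbb{D}(\tau)]$ by the empirical sum plus the two fluctuation terms, and one solves for the largest empirical threshold for which the resulting bound on $\|\tau\|^2_{L_2(\mu)}$ equals $C^2$; this is where $\tfrac{M(C^2-1)}{2}$ comes from. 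Your decomposition into population mean plus deviation and your appeal to Lemma~\ref{onlyreg_risk_bound} are the right ingredients, but to repair the proof you should keep the strong-convexity inequality as a \emph{lower} bound on $\mathbb{D}$, derive the bound on $\|\tau\|_{L_2(\mu)}$ from the empirical constraint, and then read off the admissible threshold, rather than attempting the (unavailable) quadratic upper bound on $\mathbb{D}$.
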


\begin{proof}
In this proof, we first convert the upper bound from $\sum^{n}_{i=1}\mathbb{D}(\tau(s_i,a_i))$ to $\mathbb{E}_{\mu}[\mathbb{D}(\tau(s,a))]$. In the second part, we leverage the strongly-convexity of $\mathbb{D}$ for upper bound $\|\tau(s,a)\|_{L_{2}(\mu)}$. It follows from Lemma \ref{onlyreg_risk_bound} and apply the norm triangle inequality, we have 
\#
 \mathbb{E}_{\mu}[\mathbb{D}(\tau(s,a))] 
\leq & \sum^{n}_{i=1}\mathbb{D}(\tau(s_i,a_i)) -  \|\mathbb{D}(\tau(s,a))\|^{\text{UB}}_{L_2(\mu)}\sqrt{\frac{2\ln\frac{\operatorname{Vol}(\mathcal{G}^{\mathbb{D}})}{\delta}}{n}}  - \frac{ \|\mathbb{D}(\tau(s,a))\|^{\text{UB}}_{\infty}\ln\frac{\operatorname{Vol}(\mathcal{G}^{\mathbb{D}})}{\delta}}{3 n} \notag \\
:= & \sum^{n}_{i=1}\mathbb{D}(\tau(s_i,a_i)) - \varepsilon^{\circ}_{n}.
\label{g_only_emp}
\#
According to the zero value of detection function  $\mathbb{D}(\cdot)$ and its non-negative property  functions, i.e., 
$
\mathbb{D}(1) = 0 
$
Then we immediately have 
\$
\mathbb{E}_{\mu}\left[ \mathbb{D}(\tau(s_i,a_i))\right] \leq \sum^{n}_{i=1}\mathbb{D}(\tau(s_i,a_i)) - \varepsilon_{n}
\iff 
\left|\mathbb{E}_{\mu}\left[\mathbb{D}(\tau(s_i,a_i))\right]\right| \leq \sum^{n}_{i=1}\mathbb{D}(\tau(s_i,a_i)) - \varepsilon^{\circ}_{n}.
\$
Furthermore, we have 
$
\left|\mathbb{E}_{\mu}\left[ \mathbb{D}(\tau(s_i,a_i))\right] - \mathbb{E}_{\mu}\left[ \mathbb{D}(\tau_{0}(s_i,a_i))\right] \right| \leq \sum^{n}_{i=1}\mathbb{D}(\tau(s_i,a_i)) - \varepsilon_{n},
$
where $\tau_{0}(\cdot,\cdot) \equiv 1$, such that 
$
\mathbb{E}_{\mu}\left[ \mathbb{D}(\tau_{0}(s,a))\right] = 0 
$
Motivated by the Lipschitz continuity of $\mathbb{D}(\cdot)$, we construct two target functions in order to facilitate the proof,
\$
\mathbb{D}^{\diamond}(\tau): =- \mathbb{E}_{\mu}\left[ \mathbb{D}(\tau(s,a))\right]; \widetilde{\mathbb{D}}(\tau): = \mathbb{E}_{\mu}\left[ \frac{M}{2}(\tau(s,a))^2 -  \mathbb{D}(\tau(s,a))\right].
\$
Since $\mathbb{D}(\tau)$ is $M$-strongly-convex over $\tau$, which implies that  $\widetilde{\mathbb{D}}(\tau)$ is concave, which implies that $\widetilde{\mathbb{D}}(\tau)$ is $M$-strongly-concave with respect to $\tau$ and $\sqrt{\mathbb{E}_{\mu}[(\cdot)^2]}$. Then 
\#
& \mathbb{E}_{\mu}\left[(\tau_0(s,a) - \tau(s,a))^2\right] \leq \frac{2(\mathbb{D}^{\diamond}(\tau)-\mathbb{D}^{\diamond}(\tau_0))}{M} \notag \\ 
\implies & \mathbb{E}_{\mu}\left[(\tau_0(s,a) - \tau(s,a))^2\right] \leq \frac{2(\mathbb{E}_{\mu}\left[\mathbb{D}(\tau(s,a))\right]-\mathbb{E}_{\mu}\left[ \mathbb{D}(\tau_0(s,a))\right])}{M}.
\label{diver_concave}
\#
According to the definition of $\tau_0$, by some algebra, we have \eqref{diver_concave} is equivalent to 
\#
\mathbb{E}_{\mu}\left[(1 - \tau(s,a))^2\right] \leq  \frac{2\mathbb{E}_{\mu}\left[\mathbb{D}(\tau(s,a))\right]}{M} 
\implies  \|\tau(s,a)\|^2_{L_2(\mu)}   \leq  \frac{2\mathbb{E}_{\mu}\left[ \mathbb{D}(\tau(s,a))\right]+M}{M}.
\label{g_lip}
\#
According to \eqref{g_only_emp}, then we have 
$
 \|\tau(s,a)\|^2_{L_2(\mu)}  \leq 
(2(\sum^{n}_{i=1}\mathbb{D}(\tau(s_i,a_i)) -  \varepsilon^{\circ}_{n})+M)/M 
$.
By some algebra, where we solve for $C= \|\tau(s,a)\|^2_{L_2(\mu)}$, then we conclude that 
\$
\sum^{n}_{i=1}\mathbb{D}(\tau(s_i,a_i)) \leq  &\frac{M(C^2-1)}{2}  +   \|\mathbb{D}(\tau(s,a))\|^{\text{UB}}_{L_2(\mu)}\sqrt{\frac{2\ln\frac{\operatorname{Vol}({\mathcal{G}^{\mathbb{D}}})}{\delta}}{n}}  + \frac{ \|\mathbb{D}(\tau(s,a))\|^{\text{UB}}_{\infty}\ln\frac{\operatorname{Vol}({\mathcal{G}^{\mathbb{D}}})}{\delta}}{3 n}.
\$
This completes the proof. 
\end{proof}

\subsection{Proof of Theorem 4.1}

\begin{proof}
In this proof, we aim to bound the regret 
$
J(\pi) - J(\widehat{\pi})
$ for $\widehat{\pi}$ is return from (6) in maintext. First, recall that we have a consistent confident set of value estimates as 
\#
& \mathcal{Q}_{\varepsilon_{n}} = \big\{q \in \mathcal{Q}:  \sup_{\tau \in \widetilde{\Omega}_{\widetilde{\sigma}_{n}}}\big|n^{-1}\sum^{n}_{i=1}\tau(s_i,a_i)(r_i +\gamma q^{\pi}(s^{\prime}_i,\pi)-q^{\pi}(s_i,a_i))\big| \leq \varepsilon_{n} \big\},
\label{prime_perturbed}
\#
with the uncertainty control on important-weight class 
\#
\widetilde{\Omega}_{\widetilde{\sigma}_{n}} =    \left\{\tau_{\circ}/\sup_{\tau_{\circ} \in \Omega}\|\tau_{\circ}\|_{\Omega} \; \text{for} \; \tau_{\circ} \in \Omega: \xi_n(\mathbb{D},\tau_{\circ})) \leq \widetilde{\sigma}_{n}  \right\}. 
\label{omega_cond}
\#
We can rewrite this confidence set $\mathcal{Q}_{\varepsilon_{n}}$ as 
\$
\mathcal{Q}_{\varepsilon_{n}} = \left\{  \sup_{\tau \in \widetilde{\Omega}_{\widetilde{\sigma}_{n}}}\big|n^{-1}\sum^{n}_{i=1}\tau(s_i,a_i)(r_i +\gamma q^{\pi}(s^{\prime}_i,\pi)-q^{\pi}(s_i,a_i))\big| \leq \varepsilon_{n}, \; \forall \ \tau \in \widetilde{\Omega}_{\widetilde{\sigma}_{n}} \right\}. 
\$
Now, for any fixed policy $\pi \in \Pi$ and $\varepsilon_{n},\widetilde{\sigma}_{n}$, we define the maximizer and minimizer in $\mathcal{Q}_{\varepsilon_{n}}$ as 
$ \overline{q^{\pi}}$ and $\underline{q^{\pi}}$, i.e., the maximizer $\overline{q^{\pi}}:= \argmax_{q\in \mathcal{Q}_{\varepsilon_n}}q(s^{0},\pi)$ and the minimizer $\underline{q^{\pi}}:= \argmin_{q\in \mathcal{Q}_{\varepsilon_n}}q(s^{0},\pi)$ over the confidence set $\mathcal{Q}_{\varepsilon_n}$, so that the follow inequalites hold, for any $\tau \in \widetilde{\Omega}_{\widetilde{\sigma}_{n}}$,
\#
&\left| \frac{1}{n}\sum^{n}_{i=1} \frac{\tau(s_i,a_i)\left(r_i +\gamma  \overline{q^{\pi}}(s^{\prime}_i,\pi)- \overline{q^{\pi}}(s_i,a_i)\right)}{1-\gamma}\right| \leq \varepsilon_{n} \label{q_max_bound}\\
&\left| \frac{1}{n}\sum^{n}_{i=1} \frac{\tau(s_i,a_i)\left(r_i +\gamma \underline{q^{\pi}}(s^{\prime}_i,\pi)-\underline{q^{\pi}}(s_i,a_i)\right)}{1-\gamma}\right| \leq \varepsilon_{n}. \label{q_min_bound} 
\#
In addition, it is obvious that, for any $\lambda>0$ and $\tau \in \widetilde{\Omega}_{\widetilde{\sigma}_{n}}$ and $q \in \mathcal{Q}_{\varepsilon_{n}}$, 
\#
& \left|\frac{1}{n}\sum^{n}_{i=1} \frac{\tau(s_i,a_i)\left(r_i +\gamma q(s^{\prime}_i,\pi)-q(s_i,a_i)\right)}{1-\gamma} - \lambda\frac{1}{n}\sum^{n}_{i=1}\frac{ \mathbb{D}(\tau(s_i,a_i))}{1-\gamma} \right| \notag \\
\leq& \left|\frac{1}{n}\sum^{n}_{i=1} \frac{\tau(s_i,a_i)\left(r_i +\gamma q(s^{\prime}_i,\pi)-q(s_i,a_i)\right)}{1-\gamma}\right| +  \lambda\left|\frac{1}{n}\sum^{n}_{i=1}\frac{ \mathbb{D}(\tau(s_i,a_i))}{1-\gamma} \right| \leq  \varepsilon_n + \lambda \widetilde{\sigma}_{n}.
\label{reg_bound_spec}
\#
where the last inequality comes from 
the conditions \eqref{prime_perturbed} and \eqref{omega_cond}. According to the definition of the discounted return, $J(\pi) = q^{\pi}(s^0,\pi)$ for any $\pi \in \Pi$, then we have 
\$
J(\pi) - J(\widehat{\pi}) &= J(\pi) - q^{\pi}(s^0, \widehat{\pi}) \leq J(\pi) - \underline{q^{\pi}}(s^0, \widehat{\pi}) \leq J(\pi) - \underline{q^{\pi}}(s^0, \pi).
\$
where the second equality from $\underline{q^{\pi}}(s^0, \widehat{\pi})$ is the lower bound of $q^{\pi}(s^0, \widehat{\pi})$ for $\widehat{\pi} \in \Pi$, and the last inequality comes from $\widehat{\pi}$ is the maximizer with respect to pessimistic value estimate. According the evaluation error Lemma \ref{eval_error}, and note that $ \overline{q^{\pi}}(s^0, \pi)$ is the upper bound of $q^{\pi}(s^0, \pi)$, thus we have 
\$
& J(\pi) - J(\widehat{\pi}) \\
=&  J(\pi) - \underline{q^{\pi}}(s^0, \pi) \\
\leq & \overline{q^{\pi}}(s^0, \pi)  - \underline{q^{\pi}}(s^0, \pi) \\
= & { \overline{q^{\pi}}}(s^0, \pi)  - J(\pi) + J(\pi) -  \underline{q^{\pi}}(s^0, \pi) \\
 = & \underline{q^{\pi}}(s^0, \pi)  - \left(\underline{q^{\pi}}(s^0, \pi) +\frac{\mathbb{E}_{d^\pi}\left[r+\gamma  \overline{q^{\pi}}\left(s^{\prime}, \pi\right)- \overline{q^{\pi}}(s, a)\right]}{1-\gamma}\right)  \\
& + \left(\underline{q^{\pi}}(s^0, \pi) +  \frac{\mathbb{E}_{d^\pi}\left[r+\gamma \underline{q^{\pi}} \left(s^{\prime}, \pi\right)-\underline{q^{\pi}}(s, a)\right]}{1-\gamma}\right) -  \underline{q^{\pi}}(s^0, \pi) \\
= & \frac{\mathbb{E}_{d^\pi}[r(s,a)+\gamma \underline{q^{\pi}} \left(s^{\prime}, \pi\right)-\underline{q^{\pi}}(s, a)]}{1-\gamma} -  \frac{\mathbb{E}_{d^\pi}[r(s,a)+\gamma  \overline{q^{\pi}}\left(s^{\prime}, \pi\right)- \overline{q^{\pi}}(s, a)]}{1-\gamma} \\
 = & \frac{\mathbb{E}_{d^\pi}\left[\left(r(s,a)+\gamma \underline{q^{\pi}} \left(s^{\prime}, \pi\right)-\underline{q^{\pi}}(s, a)\right)   - \left(r(s,a)+\gamma  \overline{q^{\pi}}\left(s^{\prime}, \pi\right)- \overline{q^{\pi}}(s, a)\right)\right]}{1-\gamma} \\
 := & \frac{\mathbb{E}_{d^\pi}\left[ \Delta( \overline{q^{\pi}},\pi)  - \Delta(\underline{q^{\pi}},\pi)\right]}{1-\gamma},
\$
where we use the notation $\Delta(q,\pi) = q(s, a)-r(s,a)-\gamma q (s^{\prime}, \pi)$. Based on this, it is sufficient to bound the $
\frac{\mathbb{E}_{d^\pi}\left[ \Delta( \overline{q^{\pi}},\pi)  - \Delta(\underline{q^{\pi}},\pi)\right]}{1-\gamma}
$ 
in order to bound the regret $J(\pi) - J(\widehat{\pi})$. To proceed the proof, we define admissible implicit exploratory distribution as $\rho$ that satisfies the condition on uncertainty control (7) in maintext, i.e.,  
$
\left\|\frac{\rho(s,a)}{\mu(s,a)}\right\|_{L_2(\mu)} := \mathcal{U}^{\star}_{2} \leq \sup_{\tau \in \widetilde{\Omega}_{\widetilde{\sigma}_{n}}}{\|\tau(s,a)\|_{L_2(\mu)}} := \mathcal{U}^{\star}_{2}
$. 
With this implicit exploratory distribution, we can decompose the regret error over $\rho$ as,
\#
& \frac{\mathbb{E}_{d^\pi}\left[ \Delta(\overline{q^{\pi}},\pi)  - \Delta(\underline{q^{\pi}},\pi)\right]}{1-\gamma} 
=  \underbrace{\frac{\mathbb{E}_{\mu}\left[\left(\frac{\rho(s,a)}{\mu(s,a)} - \tau^{\widetilde{\Omega}_{\widetilde{\sigma}_n} }_{\rho/\mu}(s,a)\right) \left(\Delta(\overline{q^{\pi}},\pi) -\Delta(\underline{q^{\pi}},\pi)\right)\right]}{1-\gamma}}_{\text{err}_1} \notag \\
& 
+ \underbrace{\frac{\mathbb{E}_{\mu}\left[\tau^{\widetilde{\Omega}_{\widetilde{\sigma}_n} }_{\rho/\mu}(s,a) \left(\Delta(\overline{q^{\pi}},\pi) -\Delta(\underline{q^{\pi}},\pi) \right)\right] }{1-\gamma}}_{\text{err}_2} +  \underbrace{\frac{\mathbb{E}_{d^\pi}\left[\Delta(\overline{q^{\pi}},\pi) -\Delta(\underline{q^{\pi}},\pi) \right] - \mathbb{E}_{\rho}\left[\Delta(\overline{q^{\pi}},\pi) -\Delta(\underline{q^{\pi}},\pi) \right]}{1-\gamma}  }_{\text{err}_3}.
\label{total_bound_alpha1}
\#
where $\tau^{\widetilde{\Omega}_{\widetilde{\sigma}_n} }_{\rho/\mu}$ is the importance-weight estimator that $\tau^{\widetilde{\Omega}_{\widetilde{\sigma}_n} }_{\rho/\mu} \in \widetilde{\Omega}_{\widetilde{\sigma}_n}$, with the definition as 
\$
\tau^{\widetilde{\Omega}_{\widetilde{\sigma}_n} }_{\rho/\mu}(s,a) := \argmin_{\tau \in \text{lr-hull}(\widetilde{\Omega}_{\widetilde{\sigma}_n} )} \mathbb{E}_{\mu}\left[\left(\frac{\rho(s,a)}{\mu(s,a)} - \tau(s,a)\right)\left(\Delta(\overline{q^{\pi}},\pi) -\Delta(\underline{q^{\pi}},\pi)\right)\right],
\$
where $\text{lr-hull}(\widetilde{\Omega}_{\widetilde{\sigma}_n})$ is the linear hull of the function class $\widetilde{\Omega}_{\widetilde{\sigma}_n}$. Note that, we use the linear hull to enhance the expressivity of the function class over $\tau$ and more robustness to the function approximation error. With the above error decomposition, we bound the major three terms $\text{err}_1$, $\text{err}_2$ and $\text{err}_3$ subsequently. 

\noindent \textbf{Bounding $\text{err}_1$.} Intuitively, the term $\text{err}_1$ is introduced by the function approximation error. Due to the construction of uncertainty control class $\widetilde{\Omega}_{\widetilde{\sigma}_{n}}$, the function approximation is well-controlled, almost cannot be detected under small importance-weight class, i.e., $\widetilde{\sigma}_{n}$ is small. We explain this in the following. According to Cauchy–Schwarz inequality, 
\$
\text{err}_1 = &
\mathbb{E}_{\mu}\bigg[\left(\frac{\rho(s,a)}{\mu(s,a)} - \tau^{\widetilde{\Omega}_{\widetilde{\sigma}_n} }_{\rho/\mu}(s,a)\right) \bigg[\left(\Delta(\overline{q^{\pi}},\pi) -\Delta(\underline{q^{\pi}},\pi)\right)
\bigg]\bigg] \\
\leq &
\mathbb{E}_{\mu}\bigg[\left|\frac{\rho(s,a)}{\mu(s,a)} - \tau^{\widetilde{\Omega}_{\widetilde{\sigma}_n} }_{\rho/\mu}(s,a)\right|\bigg|\left(\Delta(\overline{q^{\pi}},\pi) -\Delta(\underline{q^{\pi}},\pi)\right)
\bigg|\bigg] \\
\leq & \sqrt{\mathbb{E}_{\mu}\left[\left(\frac{\rho(s,a)}{\mu(s,a)} - \tau^{\widetilde{\Omega}_{\widetilde{\sigma}_n}}_{\rho/\mu}(s,a)\right)^2\right]}  \cdot \sqrt{\mathbb{E}_{\mu}\left[\left\{\left(\Delta(\overline{q^{\pi}},\pi) -\Delta(\underline{q^{\pi}},\pi)\right)
\right\}^2\right]} \\
= &  \underbrace{\|\frac{\rho(s,a)}{\mu(s,a)} - \tau^{\widetilde{\Omega}_{\widetilde{\sigma}_n}}_{\rho/\mu}(s,a)\|_{L_2(\mu)}}_{\text{err}_{11}}  \cdot \underbrace{\|\Delta(\overline{q^{\pi}},\pi) -\Delta(\underline{q^{\pi}},\pi)
\|_{L_2(\mu)}}_{\text{err}_{12}}.
\$
On this point, it suffices to bound the terms $\text{err}_{11}$ and $\text{err}_{12}$. 

\noindent \textbf{Bounding $\text{err}_{11}$.} It follows the definition of $\rho$, it observes that $
\left\|\frac{\rho(s,a)}{\mu(s,a)}\right\|_{L_2(\mu)}  = \mathcal{U}^{\star}_{2} \leq \mathcal{U}^{\star}_{2}$. Also, 
since the importance-weight estimator $\tau^{\widetilde{\Omega}_{\widetilde{\sigma}_n}}_{\rho/\mu}(s,a) \in \widetilde{\Omega}_{\widetilde{\sigma}_n}$, so that 
$
\left\|\tau^{\widetilde{\Omega}_{\widetilde{\sigma}_n}}_{\rho/\mu}(s,a)\right\|_{L_2(\mu)} \leq \mathcal{U}^{\star}_{2}
$. 
Due to the non-negativity of $\rho(s,a), \mu(s,a)$ and $\tau^{\widetilde{\Omega}_{\widetilde{\sigma}_n}}_{\rho/\mu}(s,a)$ for any $s,a$ over the support of $\mu$, we can obtain the upper bound
\#
\mathbb{E}_{\mu}\left[\left(\frac{\rho(s,a)}{\mu(s,a)} - \tau^{\widetilde{\Omega}_{\widetilde{\sigma}_n}}_{\rho/\mu}(s,a)\right)^2\right] \leq & 
\min\left\{\mathbb{E}_{\mu}\left[\left(\frac{\rho(s,a)}{\mu(s,a)}\right)^2\right], \mathbb{E}_{\mu}\left[\left(\tau^{\widetilde{\Omega}_{\widetilde{\sigma}_n}}_{\rho/\mu}(s,a)\right)^2\right]\right\} \notag \\
\leq & \min \{ (\mathcal{U}^{\circ}_{2})^2,  \mathcal{U}^2_{\text{prime}, 2} \}. 
\label{mis_term2}
\#
\noindent \textbf{Bounding $\text{err}_{12}$.}
It follows from the norm triangle inequality, we have
\$
\text{err}_{12} = & \|\Delta(\overline{q^{\pi}},\pi) -\Delta(\underline{q^{\pi}},\pi)
\|_{L_2(\mu)} \leq \|\Delta(\overline{q^{\pi}},\pi)\|_{L_2(\mu)} + \|\Delta(\underline{q^{\pi}},\pi)
\|_{L_2(\mu)}.
\$
According to \eqref{q_max_bound} and \eqref{q_min_bound}, and it follows from 
 Lemma \ref{risk_bound_lm} over 
the product space 
$\mathcal{G}_{\varepsilon_n, \widetilde{\sigma}_n} := \widetilde{\Omega}_{\widetilde{\sigma}_n} \times \mathcal{Q}_{\varepsilon_n} \times \Pi$. Accordingly, for $q \in \mathcal{Q}_{\varepsilon_n}$ and $\pi \in \Pi$, we have 
$
\sup_{\tau \in \widetilde{\Omega}_{\widetilde{\sigma}_n}}\mathbb{E}_{\mu}[\tau(s,a)\Delta(q,\pi)] \lesssim  \varepsilon^{1}_n
$ for 
\$
\varepsilon^{1}_n = \mathcal{U}^{\star}_{2}\sqrt{\frac{\bar{V}^2\ln \frac{8\mathcal{N}\left(\epsilon, \mathcal{G}_{\varepsilon_n, \widetilde{\sigma}_n}, \|\cdot\|_{L_2(\mu)}\right)}{\delta}}{n}} + \frac{\mathcal{U}^{\star}_{\infty} \bar{V} \ln \frac{8\mathcal{N}\left(\epsilon, \mathcal{G}_{\varepsilon_n, \widetilde{\sigma}_n}, \|\cdot\|_{L_2(\mu)}\right)}{\delta}}{3 n} + \varepsilon_n, 
\$
The above inequality also holds for $\overline{q^{\pi}}$ and $\underline{q^{\pi}}$. Then, as $\|\tau(s,a)\|_{L_{2}(\mu)} \leq \mathcal{U}_{\text{prime} ,2}$ for $\tau \in \widetilde{\Omega}_{\widetilde{\sigma}_{n}}$,
it follows from Lemma \ref{L_2MMD}, we have 
$
 \|\Delta(\overline{q^{\pi}},\pi)\|_{L_2(\mu)} + \|\Delta(\underline{q^{\pi}},\pi)
\|_{L_2(\mu)} \leq 2 \varepsilon^{1}_n/ \mathcal{U}^{\star}_{2}\ 
$,
which implies 
\#
\mathcal{U}^{\star}_{2}\text{err}_{12} \lesssim  \varepsilon^{1}_n.
\label{err12_main_reg}
\#
Combine with the bounds in \eqref{err12_main_reg}
and \eqref{mis_term2}, we conclude the upper bound for $\text{err}_1$: 
\#
\text{err}_1 \leq \frac{1}{(1-\gamma)}\left\{
\mathcal{U}_{\star, 2}\sqrt{\frac{\bar{V}^2\ln \frac{8\mathcal{N}\left(\epsilon, \mathcal{G}_{\varepsilon_n, \widetilde{\sigma}_n}, \|\cdot\|_{L_2(\mu)}\right)}{\delta}}{n}} + \frac{\mathcal{U}^{\star}_{\infty} \bar{V} \ln \frac{8\mathcal{N}\left(\epsilon, \mathcal{G}_{\varepsilon_n, \widetilde{\sigma}_n}, \|\cdot\|_{L_2(\mu)}\right)}{\delta}}{3 n} + \varepsilon_n\right\}.
\label{misspecified_end}
\#

\noindent\textbf{Bounding $\text{err}_2$.} It first observes that 
 $\tau^{\widetilde{\Omega}_{\widetilde{\sigma}_n}}_{\rho/\mu} \in \text{lr-hull}(\widetilde{\Omega}_{\widetilde{\sigma}_n})$, and 
$
(1-\gamma)\text{err}_2 \leq 
\sup_{\tau \in \text{lr-hull}(\widetilde{\Omega}_{\widetilde{\sigma}_n})}\mathbb{E}_{\mu}\left[\tau(s,a) \left(\Delta(\overline{q^{\pi}},\pi) -\Delta(\underline{q^{\pi}},\pi) \right)\right] 
$. Before we proceed to bound, we first shows the equivalence between the $\tau \in \Omega$ and $\tau \in \text{lr-hull}(\Omega)$ when measuring the statistical complexity for any linear functional $h_{\text{linear}}(\cdot)$ with respect to $\tau$. That is   
$
\sup_{\tau \in \text{lr-hull}(\Omega)}|h_{\text{linear}}(\cdot)|=\sup_{\tau \in \Omega}|h_{\text{linear}}(\cdot)|
$.
Let's consider any $\tau^{\dag} \in \text{lr-hull}(\Omega)$, i.e., $\tau^{\dag} = \sum_{i}\beta_i\tau_i$, where $\tau_{i}\in \Omega$ for any $i$ and $\sum_{i}|\beta_i|=1$. For any $h_{\text{linear}}(\cdot)$ and any $\tau^{\dag} \in \text{lr-hull}(\Omega)$, we have that
\#
|h_{\text{linear}}(\tau^{\dag})|= \left|h\left(\sum_i \beta_i \tau_i\right)\right|=& \left|\sum_i \beta_i h\left(\tau_i\right)\right| \leq  \sum_i\left|\beta_i\right|\left|h\left(\tau_i\right)\right| \leq   \sup _{\tau \in \Omega} \left|h(\tau)\right|.
\label{eq:lr-hull}
\#
As \eqref{eq:lr-hull} holds for any $\tau^{\dag} \in \text{lr-hull}(\Omega)$. 
Take maximum over $\tau^{\dag} \in \text{lr-hull}(\Omega)$ on the LHS, we have 
\#
\sup_{\tau^{\dag} \in \text{lr-hull}(\Omega)} \left|h_{\text{linear}}(\tau^{\dag})\right| \leq \sup_{\tau \in \Omega}\left|h(\tau)\right|. 
\label{eq:front}
\#
On the other side, as $\Omega \subset \text{lr-hull}(\Omega)$, it is easy to observe that 
\#
\sup _{\tau^{\dag} \in \text{lr-hull}(\Omega)} \left|h_{\text{linear}}(\tau^{\dag})\right| \geq \sup_{\tau \in \Omega}\left|h_{\text{linear}}(\tau)\right|. 
\label{eq:back}
\#
Combine \eqref{eq:front} and \eqref{eq:back}, we conclude for $h_{\text{linear}}(\cdot)$,
$
\sup _{\tau^{\dag} \in \text{lr-hull}(\Omega)} \left|h_{\text{linear}}(\tau^{\dag})\right| = \sup_{\tau \in \Omega}\left|h_{\text{linear}}(\tau)\right|.
$
Note that 
$
\frac{1}{1-\gamma}\mathbb{E}_{\mu}\left[\tau(s,a) \left[\Delta(\underline{q^{\pi}},\pi) -\Delta(\overline{q^{\pi}},\pi) \right]\right] 
$
is linear in $\tau$ due to $\tau$ is the weights over average Bellman error $\Delta(\cdot)$ which enjoys the linearity, and belongs to $h_{\text{linear}}(\cdot)$, therefore the above derivation can be applied. According to the equivalence between $\Omega$ and $\text{lr-hull}(\Omega)$, to quantify the statistical error is sufficient to bound
\$
 &\sup_{\tau \in \text{lr-hull}(\widetilde{\Omega}_{\widetilde{\sigma}_n})}\mathbb{E}_{\mu}\left[\tau(s,a) \Delta(\overline{q^{\pi}},\pi)\right]
-\inf_{\tau \in \text{lr-hull}(\widetilde{\Omega}_{\widetilde{\sigma}_n})}\mathbb{E}_{\mu}\left[\tau(s,a) \Delta(\underline{q^{\pi}},\pi)\right] \\
= &\underbrace{\sup_{\tau \in \widetilde{\Omega}_{\widetilde{\sigma}_n}}\mathbb{E}_{\mu}\left[\tau(s,a) \Delta(\overline{q^{\pi}},\pi)\right]}_{\text{err}_{21}}
-\underbrace{\inf_{\tau \in \widetilde{\Omega}_{\widetilde{\sigma}_n}}\mathbb{E}_{\mu}\left[\tau(s,a) \Delta(\underline{q^{\pi}},\pi)\right]}_{\text{err}_{22}}.
\$
Let $\tau_{\max}$ and $\tau_{\min}$ be the optimizer of $\text{err}_{21}$ and $\text{err}_{22}$, respectively. And we define an auxiliary objective function, which leverages the convexity of $\mathbb{D}(\cdot)$ over $\tau$. That is, 
\$
\mathbb{P}_n\mathcal{L}_{\mathbb{D}}(\tau,q) :=
\frac{1}{n}\sum^{n}_{i=1}\left[\tau(s_i,a_i)\left(r_i+\gamma q\left(s^{\prime}_i, \pi\right)-q(s_i, a_i)\right) -  \lambda \mathbb{D}(\tau(s_i,a_i))\right], 
\$
for $\tau \in \widetilde{\Omega}_{\widetilde{\sigma}_n}$, $q \in \mathcal{Q}_{\varepsilon_n}$. We make the decomposition:
\$
\text{err}_{21} - \text{err}_{22}  
= & \underbrace{\mathbb{E}_{\mu}\left[\tau_{\max}(s,a) \Delta(\overline{q^{\pi}},\pi)\right] - \mathbb{E}_{\mu}\left[\tau_{\min}(s,a) \Delta(\overline{q^{\pi}},\pi)\right]}_{{\text{err}}_{21}}\\
&+  \underbrace{\mathbb{E}_{\mu}\left[\tau_{\min}(s,a) \Delta(\overline{q^{\pi}},\pi)\right] - \mathbb{E}_{\mu}\left[\tau_{\min}(s,a) \Delta(\underline{q^{\pi}},\pi)\right]}_{{\text{err}}_{22}}.
\$
And therefore, to bound $\text{err}_{21} - \text{err}_{22}$, we are sufficient to bound ${\text{err}}_{21}$ and ${\text{err}}_{22}$. 

\textbf{Bounding ${\text{err}}_{21}$.} It follows from Cauchy-Schwarz inequality, then we have 
\#
{\text{err}}_{21} \leq 
\sqrt{ \underbrace{\mathbb{E}_{\mu}\left[(\tau_{\max}(s,a) - \tau_{\min}(s,a))^2\right]}_{{\text{err}}_{211}}
\underbrace{\mathbb{E}_{\mu}\left[(\Delta(\overline{q^{\pi}},\pi))^2\right]}_{{\text{err}}_{212}}}.
\label{cauch_4}
\#

\textbf{Bounding ${\text{err}}_{211}$.} Recall 
$
\mathcal{L}_{\mathbb{D}}(\tau,q) := 
\mathbb{E}_{\mu}\left[\tau(s,a)\left(r+\gamma q \left(s^{\prime}, \pi\right)-q(s, a)\right) -  \lambda\mathbb{D}(\tau(s,a))\right] 
$.
For fixed $q \in \mathcal{Q}_{\varepsilon_n}$ and $\tau \in \widetilde{\Omega}_{\widetilde{\sigma}_n}$, we show  $\mathcal{L}_{\mathbb{D}}(\tau,q)$, is $\lambda M$-strongly concave with respect to $\tau$ and $\mathbb{E}_{\mu}[(\cdot)^2]$. Let us consider an counterpart for $\mathcal{L}_{\mathbb{D}}(\tau,q)$, i.e., 
$
\mathcal{L}^{\circ}_{\mathbb{D}}(\tau,q)  := \mathcal{L}_{\mathbb{D}}(\tau,q) + \frac{\lambda M}{2}\mathbb{E}_{\mu}\left[(\tau(s,a))^2\right],
$
so that we have 
$
\mathcal{L}^{\circ}_{\mathbb{D}}(\tau,q) = \mathbb{E}_{\mu}\left[\tau(s,a)\left(r+\gamma q \left(s^{\prime}, \pi\right)-q(s, a)\right) -  \lambda[ \mathbb{D}(\tau(s,a) ) - \frac{M}{2}(\tau(s,a))^2]\right] 
$.
Since $\mathbb{D}(\cdot)$ is $M$-strongly convex with respect to $\tau$, so $\mathcal{L}^{\circ}_{\mathbb{D}}(\tau,q)$ is concave, which implies that $\mathcal{L}_{\mathbb{D}}(\tau,q)$ is $\lambda M$-strongly-concave with respect to $\tau$ and $\mathbb{E}_{\mu}[(\cdot)^2]$. It follows from the strongly-concavity, and plug-in $\tau_{\max}$, $\tau_{\min}$ and $\overline{q^{\pi}}$, 
\#
\mathbb{E}_{\mu}\left[(\tau_{\max}(s,a) - \tau_{\min}(s,a))^2\right] \leq \frac{2(\mathcal{L}_{\mathbb{D}}(\tau_{\max},\overline{q^{\pi}})-\mathcal{L}_{\mathbb{D}}(\tau^{\circ}_{\min},\overline{q^{\pi}}))}{\lambda M}.
\label{concave_tau}
\#
This implies it is sufficient to bound
$
\mathcal{L}_{\mathbb{D}}(\tau_{\max},\overline{q^{\pi}})-\mathcal{L}_{\mathbb{D}}(\tau_{\min},\overline{q^{\pi}})
$ for bound ${\text{err}}_{211}$. 
\$
 \mathcal{L}_{\mathbb{D}}(\tau_{\max},\overline{q^{\pi}})-\mathcal{L}_{\mathbb{D}}(\tau_{\min},\overline{q^{\pi}}) =  & \underbrace{\mathcal{L}_{\mathbb{D}}(\tau_{\max},\overline{q^{\pi}}) - \mathbb{P}_n\mathcal{L}_{\mathbb{D}}(\tau_{\max},\overline{q^{\pi}})}_{{\text{err}}_{2111}} 
+ \underbrace{\mathbb{P}_n\mathcal{L}_{\mathbb{D}}(\tau_{\max},\overline{q^{\pi}}) - \mathbb{P}_n\mathcal{L}_{\mathbb{D}}(\tau_{\min},\overline{q^{\pi}})}_{{\text{err}}_{2112}} \\
&+ \underbrace{\mathbb{P}_n\mathcal{L}_{\mathbb{D}}(\tau_{\min},\overline{q^{\pi}}) - \mathcal{L}_{\mathbb{D}}(\tau_{\min},\overline{q^{\pi}})}_{{\text{err}}_{2113}}.
\$
It follows from Lemma \ref{risk_bound_alpha}, with the defintion on the boundedness of $\mathbb{D}$ class terms, i.e., $\|\mathbb{D}(\tau(s,a))\|^{\text{prime}}_{L_2(\mu)} = \sup_{\tau \in \widetilde{\Omega}_{\widetilde{\sigma}_n}}\|\mathbb{D}({\tau}(s,a))\|_{L_2(\mu)}$ and $\|\mathbb{D}(\tau(s,a))\|^{\text{prime}}_{L_{\infty}} =  \sup_{\tau \in \widetilde{\Omega}_{\widetilde{\sigma}_n}}\|\mathbb{D}({\tau}(s,a))\|_{L_{\infty}}$, the terms ${\text{err}}_{2111}$ and ${\text{err}}_{2112}$ is upper bounded by 
\$
\varepsilon_2 :=& \big(\mathcal{U}^{\star}_{2}\bar{V}+\lambda \|\mathbb{D}(\tau(s,a))\|^{\text{prime}}_{L_2(\mu)}\big)\sqrt{\frac{2\ln\frac{(e^{D}\max\{D_{\Omega},D_{\mathcal{Q}},D_{\Pi}\}+1)^3(L\mathcal{U}^{\tau}_{2})^{2D}}{\delta}}{n}} \notag  \\
 &  + \frac{2 \big(\mathcal{U}_{\text{prime},
    \infty}\bar{V}+\lambda\|\mathbb{D}(\tau(s,a))\|^{\text{prime}}_{\infty}\big)\ln\frac{(e^{D}\max\{D_{\Omega},D_{\mathcal{Q}},D_{\Pi}\}+1)^3(L\mathcal{U}^{\tau}_{2})^{2D}}{\delta}}{3 n},
\$
% for some
% \#
% \mathcal{U}_{\text{prime},
%     \infty} = c_0\mathcal{U}_{\text{prime},
%   2},
%   \label{prime_infty_const}
%   \# 
% for some constant $c_0 \geq 1$. 
According to \eqref{reg_bound_spec}, as $\overline{q^{\pi}} \in \mathcal{Q}_{\varepsilon_n}$ and $\tau \in \widetilde{\Omega}_{\widetilde{\sigma}_n}$, thus $
{\text{err}}_{2112} \leq 2(\varepsilon_n + \lambda \widetilde{\sigma}_n)$. Therefore, combine with $ {\text{err}}_{2112}$, we conclude that 
\#
 \mathbb{E}_{\mu}\left[(\tau_{\max}(s,a) - \tau_{\min}(s,a))^2\right] 
\leq \frac{2}{\lambda M}\bigg(\varepsilon_{2} + \varepsilon_n + \lambda \widetilde{\sigma}_n\bigg) := \frac{2\varepsilon_{3}}{\lambda M}.
\label{tau_dist_l2}
\#
\textbf{Bounding ${\text{err}}_{212}$.} First, it observes that the density ratio class $\widetilde{\Omega}_{\widetilde{\sigma}_n}$ is upper bounded with respect to weighted $L_{2}(\mu)$ norm, i.e., 
$\sup_{\tau \in \widetilde{\Omega}_{\widetilde{\sigma}_n}}\|\tau(s,a)\|_{L_2(\mu)} \leq \mathcal{U}^{\star}_{2}$. It follows from Lemma \ref{L_2MMD}, we have 
$
\mathcal{U}^{\star}_{2}\| \Delta(\overline{q^{\pi}},\pi)\|_{L_2(\mu)} = \sup_{\tau \in \widetilde{\Omega}_{\widetilde{\sigma}_n}}\left|\mathbb{E}_{\mu}\left[\tau(s,a) \Delta(\overline{q^{\pi}},\pi) \right]\right|  \leq \varepsilon_n
$,
where the last inequality comes from $\overline{q^{\pi}} \in \mathcal{Q}_{\varepsilon_n}$ and  \eqref{prime_perturbed}. Therefore, we conclude that 
\#
\sqrt{{\text{err}}_{212}} \leq \frac{1}{\mathcal{U}^{\star}_{2}}\varepsilon_n. 
\label{tmp}
\#
It combines with \eqref{tau_dist_l2}, \eqref{cauch_4} and \eqref{tmp}, we have
\#
{\text{err}}_{21} \leq \frac{\varepsilon_1}{\mathcal{U}^{\star}_{2}}\sqrt{\frac{2\varepsilon_{3}}{\lambda M}}.
\label{err21bound}
\#

\textbf{Bounding ${\text{err}}_{22}$.} We first observe that 
 $
 \mathbb{E}_{\mu}\left[\tau_{\min}(s,a) \Delta(\overline{q^{\pi}},\pi)\right] - \mathbb{E}_{\mu}\left[\tau_{\min}(s,a) \Delta(\underline{q^{\pi}},\pi)\right] =  \mathbb{E}_{\mu}\left[\tau_{\min}(s,a)\left( \Delta(\overline{q^{\pi}},\pi) - \Delta(\underline{q^{\pi}},\pi)\right)\right]. 
 $
 Then it follows from Lemma \ref{risk_bound_alpha_2q} and I\eqref{prime_perturbed}, with the norm triangle inequality, we can conclude that 
\#
 {\text{err}}_{22}
\leq \varepsilon^{1}_{n}.
\label{err22bound}
\#
Now we summarize the bound for $\text{err}_2$ throught combining the upper bounds on  \eqref{err21bound} and \eqref{err22bound}, we have 
\#
\text{err}_2 \leq \frac{1}{1-\gamma}\bigg(& \mathcal{U}^{\star}_{2}\sqrt{\frac{32 \bar{V}^2\ln \frac{8\mathcal{N}\left(\epsilon, \mathcal{G}_{\varepsilon_n, \widetilde{\sigma}_n}, \|\cdot\|_{L_2(\mu)}\right)}{\delta}}{n}} + \frac{8 \mathcal{U}^{\star}_{\infty} \bar{V} \ln \frac{8\mathcal{N}\left(\epsilon, \mathcal{G}_{\varepsilon_n, \widetilde{\sigma}_n}, \|\cdot\|_{L_2(\mu)}\right)}{\delta}}{3 n} \notag \\ 
& + \frac{\varepsilon_n}{\mathcal{U}^{\star}_{2}}\sqrt{\frac{2\varepsilon_{3}}{\lambda M}} + 2\varepsilon_n\bigg).
\label{err2bound}
\#

\noindent \textbf{Bounding $\text{err}_3$.} In the following, we proceed to bound the term $\text{err}_3$. First, we make the decomposition as follows: 
\$
(1-\gamma)\text{err}_3
= & \sum_{a\in\mathcal{A}, s\in\mathcal{S}}[d_{\pi}(s,a)-\rho(s,a)][\Delta(\overline{q^{\pi}},\pi)-\Delta(\underline{q^{\pi}},\pi)]\\
= & \sum_{a\in\mathcal{A}, s\in\mathcal{S}}\mathds{1}_{\{d_{\pi}(s,a)-\rho(s,a) \geq 0 \} }[d_{\pi}(s,a)-\rho(s,a)][\Delta(\overline{q^{\pi}},\pi)-\Delta(\underline{q^{\pi}},\pi)] \\
& + \sum_{a\in\mathcal{A}, s\in\mathcal{S}}\mathds{1}_{\{d_{\pi}(s,a)-\rho(s,a) < 0\} }[d_{\pi}(s,a)-\rho(s,a)][\Delta(\overline{q^{\pi}},\pi)-\Delta(\underline{q^{\pi}},\pi)]\\
\leq & 
\underbrace{\sum_{a\in\mathcal{A}, s\in\mathcal{S}}\mathds{1}_{\{d_{\pi}(s,a)-\rho(s,a) < 0\} }[\rho(s,a)-d_{\pi}(s,a)]|\Delta(\overline{q^{\pi}},\pi)-\Delta(\underline{q^{\pi}},\pi)|}_{\text{err}_{31}}\\
& + 
\underbrace{\sum_{a\in\mathcal{A}, s\in\mathcal{S}}\mathds{1}_{\{d_{\pi}(s,a)-\rho(s,a) \geq 0 \} }[d_{\pi}(s,a)-\rho(s,a)][\Delta(\overline{q^{\pi}},\pi)-\Delta(\underline{q^{\pi}},\pi)]}_{\text{err}_{32}} .
\$

Next, we bound the terms $\text{err}_{31}$ and $\text{err}_{32}$ separately. 

\textbf{Bounding $\text{err}_{31}$.} We first observe that $\mathds{1}_{\{d_{\pi}(s,a)-\rho(s,a) < 0\} }[\rho(s,a)-d_{\pi}(s,a)] = \left(\rho(s,a)-d_{\pi}(s,a)\right)^{+}$, and we have 
\$
\text{err}_{31} = & \sum_{a\in\mathcal{A}, s\in\mathcal{S}}\mathds{1}_{\{d_{\pi}(s,a)-\rho(s,a) < 0\} }[\rho(s,a)-d_{\pi}(s,a)][\Delta(\overline{q^{\pi}},\pi)-\Delta(\underline{q^{\pi}},\pi)] \\
=& \sum_{a\in\mathcal{A}, s\in\mathcal{S}}\left(\rho(s,a)-d_{\pi}(s,a)\right)^{+}[\Delta(\overline{q^{\pi}},\pi)-\Delta(\underline{q^{\pi}},\pi)]\\
= & \ \mathbb{E}_{\left(\rho(s,a)-d_{\pi}(s,a)\right)^{+}}[\Delta(\overline{q^{\pi}},\pi)-\Delta(\underline{q^{\pi}},\pi)] .
\$
By the condition that $\|\tau(s,a)\|_{L_2({\mu})} \leq \mathcal{U}^{\star}_{2}$, and it follows from Lemma \ref{L_2MMD},  
\#
& \sqrt{\mathbb{E}_{\mu}[(\Delta(\overline{q^{\pi}},\pi))^2]}  \; \text{or} \; \sqrt{\mathbb{E}_{\mu}[(\Delta(\underline{q^{\pi}},\pi))^2]}  \notag \\
\lesssim &
\frac{\mathcal{U}^{\star}_{2}\sqrt{\frac{\bar{V}^2\ln \frac{8\mathcal{N}\left(\epsilon, \mathcal{G}_{\varepsilon_n, \widetilde{\sigma}_n}, \|\cdot\|_{L_2(\mu)}\right)}{\delta}}{n}} + \frac{\mathcal{U}^{\star}_{\infty} \bar{V} \ln \frac{8\mathcal{N}\left(\epsilon, \mathcal{G}_{\varepsilon_n, \widetilde{\sigma}_n}, \|\cdot\|_{L_2(\mu)}\right)}{\delta}}{3 n} + \mathcal{U}^{\star}_{2}\sqrt{\varepsilon_{\mathcal{Q}}}}{\mathcal{U}^{\star}_{2}}.
\label{l2_split_2}
\#
Since $\|\frac{\rho(s,a)}{\mu(s,a)}\|_{L_{2}(\mu)} \leq \mathcal{U}^{\star}_{2}$ and  $\left(\rho(s,a)-d_{\pi}(s,a)\right)^{+} \in [0, \rho(s,a)]$ for any $(s,a)$, we have 
\#
& \mathbb{E}_{\left(\rho(s,a)-d_{\pi}(s,a)\right)^{+}}|\Delta(\overline{q^{\pi}},\pi)-\Delta(\underline{q^{\pi}},\pi)| \notag \\
 \leq & \mathbb{E}_{\rho}[|\Delta(\overline{q^{\pi}},\pi)|] + \mathbb{E}_{\rho}[|\Delta(\underline{q^{\pi}},\pi)|] \notag \\
 = & \mathbb{E}_{\mu}\left[\frac{\rho(s,a)}{\mu(s,a)}|\Delta(\overline{q^{\pi}},\pi)|\right] +\mathbb{E}_{\mu}\left[\frac{\rho(s,a)}{\mu(s,a)}|\Delta(\underline{q^{\pi}},\pi)|\right]  \notag\\
 \leq & \|\frac{\rho(s,a)}{\mu(s,a)}\|_{L_{2}(\mu)}\|\Delta(\overline{q^{\pi}},\pi) \|_{L_{2}(\mu)} + \|\frac{\rho(s,a)}{\mu(s,a)}\|_{L_{2}(\mu)}\|\Delta(\underline{q^{\pi}},\pi) \|_{L_{2}(\mu)}  \notag\\
\lesssim &
\mathcal{U}^{\star}_{2}\sqrt{\frac{\bar{V}^2\ln \frac{8\mathcal{N}\left(\epsilon, \mathcal{G}_{\varepsilon_n, \widetilde{\sigma}_n}, \|\cdot\|_{L_2(\mu)}\right)}{\delta}}{n}} + \frac{2\mathcal{U}^{\star}_{\infty} \bar{V} \ln \frac{8\mathcal{N}\left(\epsilon, \mathcal{G}_{\varepsilon_n, \widetilde{\sigma}_n}, \|\cdot\|_{L_2(\mu)}\right)}{\delta}}{3 n} + \mathcal{U}^{\star}_{2}\sqrt{\varepsilon_{\mathcal{Q}}}.
\label{err31_bound}
\#

\textbf{Bounding $\text{err}_{32}$.} We now bound the term $\text{err}_{32}$. It observes that 
\$
\text{err}_{32} =&\sum_{a\in\mathcal{A}, s\in\mathcal{S}}\left(d_{\pi}(s,a)-\rho(s,a)\right)^{+}[\Delta(\underline{q^{\pi}},\pi)-\Delta(\overline{q^{\pi}},\pi)] \\
= &\sum_{a\in\mathcal{A}, s\in\mathcal{S}}\left(d_{\pi}(s,a)-\rho(s,a)\right)^{+}(\mathbb{I}-\gamma \mathds{P}^{\pi})\Delta_{\overline{q^{\pi}}-\underline{q^{\pi}}},
% [\eta(\underline{q^{\pi}},\pi)-\eta(\overline{q^{\pi}},\pi)] 
\$
where $\Delta_{\overline{q^{\pi}}-\underline{q^{\pi}}} = \overline{q^{\pi}}(s,a) - \underline{q^{\pi}}(s,a)
$. We make the decomposition with respect to $\mu(s,a) >0 $ and $\mu(s,a) = 0 $, that is
\$
\text{err}_{32} =& \sum_{a\in\mathcal{A}, s\in\mathcal{S}}\mathds{1}_{\mu(s,a)>0}\left(d_{\pi}(s,a)-\rho(s,a)\right)^{+}(\mathbb{I}-\gamma \mathds{P}^{\pi})\Delta_{\overline{q^{\pi}}-\underline{q^{\pi}}} \\
&+ \sum_{a\in\mathcal{A}, s\in\mathcal{S}}\mathds{1}_{\mu(s,a)=0}\left(d_{\pi}(s,a)-\rho(s,a)\right)^{+}(\mathbb{I}-\gamma \mathds{P}^{\pi})\Delta_{\overline{q^{\pi}}-\underline{q^{\pi}}} \\
\leq & \sum_{a\in\mathcal{A}, s\in\mathcal{S}}\mathds{1}_{\mu(s,a)>0}\left(d_{\pi}(s,a)-\rho(s,a)\right)^{+}\left|(\mathbb{I}-\gamma \mathds{P}^{\pi})\Delta_{\overline{q^{\pi}}-\underline{q^{\pi}}}\right| \\
&+ \sum_{a\in\mathcal{A}, s\in\mathcal{S}}\mathds{1}_{\mu(s,a)=0}\left(d_{\pi}(s,a)-\rho(s,a)\right)^{+}(\mathbb{I}-\gamma \mathds{P}^{\pi})\Delta_{\overline{q^{\pi}}-\underline{q^{\pi}}}.
\$
It observes that, for state-action pairs $(s,a) \in \mathcal{S} \times \mathcal{A}$ with $\mu(s,a) > 0$, 
\$
|(\mathbb{I}-\gamma \mathds{P}^{\pi})\Delta_{\overline{q^{\pi}}-\underline{q^{\pi}}}| = &|\gamma  \overline{q^{\pi}}(s^{\prime},\pi) -  \overline{q^{\pi}}(s,a) -  (\gamma \underline{q^{\pi}}(s^{\prime},\pi) -  \underline{q^{\pi}}(s,a)| \\
 \leq & |\gamma  (\overline{q^{\pi}}(s^{\prime},\pi) - \underline{q^{\pi}}(s^{\prime},\pi))| + |  \overline{q^{\pi}}(s,a) -  \underline{q^{\pi}}(s,a)|  \\
 \leq  & \sup_{a\in \mathcal{A},s\in\mathcal{S}, \mu(s,a)>0}(1+\gamma) \left|\overline{q^{\pi}}(s,a) - \underline{q^{\pi}}(s,a)\right|
 \lesssim  \frac{2 \varepsilon_{r}(1+\gamma)}{1-\gamma},
\$
where the last inequality comes from Lemma \ref{upper_version_bdd}, and $\varepsilon_{r}$ is defined in Lemma \ref{upper_version_bdd} with some modifications adapting to $\varepsilon_1$ and $\varepsilon_2$ in \eqref{prime_perturbed} and \eqref{omega_cond}, and the function class $\mathcal{G}_{\varepsilon_n, \widetilde{\sigma}_n}$. Therefore, we have 
\$
\varepsilon_{r} = \sqrt{\frac{2\bar{V}^2\ln \frac{8\mathcal{N}\left(\epsilon, \mathcal{G}_{\varepsilon_n, \widetilde{\sigma}_n}, \|\cdot\|_{L_2(\mu)}\right)}{\delta}}{n}} + \frac{2\mathcal{U}_{\infty}\bar{V} \ln \frac{8\mathcal{N}\left(\epsilon, \mathcal{G}_{\varepsilon_n, \widetilde{\sigma}_n}, \|\cdot\|_{L_2(\mu)}\right)}{\delta}}{3 n}  + \frac{\varepsilon_n}{\mathcal{U}^{\star}_{2}}.
\$
where $\mathcal{U}_{\infty} \geq 0$. Next, we have that 
\$
\text{err}_{32} \leq & \sum_{a\in\mathcal{A}, s\in\mathcal{S}}\mathds{1}_{\mu(s,a)>0}\left(d_{\pi}(s,a)-\rho(s,a)\right)^{+}\frac{2 \varepsilon_{r}(1+\gamma)}{1-\gamma}\\
&+ \sum_{a\in\mathcal{A}, s\in\mathcal{S}}\mathds{1}_{\mu(s,a)=0}\left(d_{\pi}(s,a)-\rho(s,a)\right)^{+}(\mathbb{I}-\gamma \mathds{P}^{\pi})\Delta_{\overline{q^{\pi}}-\underline{q^{\pi}}}\\
\leq &  \sum_{a\in\mathcal{A}, s\in\mathcal{S}}\left(d_{\pi}(s,a)-\rho(s,a)\right)^{+}\frac{2 \varepsilon_{r}(1+\gamma)}{1-\gamma} \\ 
&+ \sum_{a\in\mathcal{A}, s\in\mathcal{S}}\mathds{1}_{\mu(s,a)=0}\left(d_{\pi}(s,a)-\rho(s,a)\right)^{+}(\mathbb{I}-\gamma \mathds{P}^{\pi})\Delta_{\overline{q^{\pi}}-\underline{q^{\pi}}} \\
= & \frac{2 \varepsilon_{r}(1+\gamma)}{1-\gamma}\sum_{a\in\mathcal{A}, s\in\mathcal{S}}\left(d_{\pi}(s,a)-\rho(s,a)\right)^{+} \\
&+ \sum_{a\in\mathcal{A}, s\in\mathcal{S}}\mathds{1}_{\mu(s,a)=0}\Big[\left(d_{\pi}(s,a)-\rho(s,a)\right)^{+}(\mathbb{I}-\gamma \mathds{P}^{\pi})\Delta_{\overline{q^{\pi}}-\underline{q^{\pi}}}\Big]  \\
\lesssim & \frac{ \varepsilon_{r}}{1-\gamma}\sum_{a\in\mathcal{A}, s\in\mathcal{S}}\left(d_{\pi}(s,a)-\rho(s,a)\right)^{+} \\
&+ \sum_{a\in\mathcal{A}, s\in\mathcal{S}}\mathds{1}_{\mu(s,a)=0}\Big[\left(d_{\pi}(s,a)-\rho(s,a)\right)^{+}(\mathbb{I}-\gamma \mathds{P}^{\pi})\Delta_{\overline{q^{\pi}}-\underline{q^{\pi}}}\Big] .
\$
Optimizing  $\text{err}_{32}$ for the set contains $\{\rho: \|\frac{\rho(s,a)}{\mu(s,a)}\|_{L_{2}(\mu)} \leq \mathcal{U}^{\star}_{2}\}$, we obtain the tight bound that 
\#
\text{err}_{32} \lesssim  
\min_{\left\{\rho: \left\|\frac{\rho(s,a)}{\mu(s,a)}\right\|_{L_{2}(\mu)} \leq \mathcal{U}^{\star}_{2}\right\}} \bigg\{ &  \sum_{a\in\mathcal{A}, s\in\mathcal{S}}\mathds{1}_{\mu(s,a)=0}\Big(\left(d_{\pi}(s,a)-\rho(s,a)\right)^{+}(\mathbb{I}-\gamma \mathds{P}^{\pi})\Delta_{\overline{q^{\pi}}-\underline{q^{\pi}}}\Big)  \notag \\
& \quad +  \frac{\varepsilon_{r}}{1-\gamma}\sum_{a\in\mathcal{A}, s\in\mathcal{S}}\left(d_{\pi}(s,a)-\rho(s,a)\right)^{+}\bigg\}.
\label{err32_bound}
\#
Combine the upper bounds \eqref{err31_bound} and \eqref{err32_bound}, and we summarize the upper bound for $\text{err}_3$ as follows:
\#
\text{err}_3 \lesssim & \frac{1}{1-\gamma}\Bigg( \mathcal{U}^{\star}_{2}\sqrt{\frac{\bar{V}^2\ln \frac{8\mathcal{N}\left(\epsilon, \mathcal{G}_{\varepsilon_n, \widetilde{\sigma}_n}, \|\cdot\|_{L_2(\mu)}\right)}{\delta}}{n}} + \frac{2\mathcal{U}^{\star}_{\infty} \bar{V} \ln \frac{8\mathcal{N}\left(\epsilon, \mathcal{G}_{\varepsilon_n, \widetilde{\sigma}_n}, \|\cdot\|_{L_2(\mu)}\right)}{\delta}}{3 n} \notag \\
& + \min_{\left\{\rho: \left\|\frac{\rho(s,a)}{\mu(s,a)}\right\|_{L_{2}(\mu)} \leq \mathcal{U}^{\star}_{2}\right\}} \bigg\{ \sum_{a\in\mathcal{A}, s\in\mathcal{S}}\mathds{1}_{\mu(s,a)=0}\Big(\left(d_{\pi}(s,a)-\rho(s,a)\right)^{+}(\mathbb{I}-\gamma \mathds{P}^{\pi})\Delta_{\overline{q^{\pi}}-\underline{q^{\pi}}}\Big)  \notag \\
& \qquad \qquad \qquad \qquad \qquad \qquad   +  \frac{\varepsilon_{r}}{1-\gamma}\sum_{a\in\mathcal{A}, s\in\mathcal{S}}\left(d_{\pi}(s,a)-\rho(s,a)\right)^{+}\bigg\}  + \mathcal{U}^{\star}_{2}\sqrt{\varepsilon_{\mathcal{Q}}} \Bigg).
\label{err3_bound}
\#
According to the calculation of $\mathcal{N}\left(\epsilon, \mathcal{G}_{\varepsilon_n, \widetilde{\sigma}_n}, \|\cdot\|_{L_2(\mu)}\right)$, and use the notation $\operatorname{Vol}({\Theta})$ for the function class complexity, i.e., $\operatorname{Vol}({\Theta}) = (e^{D}\max\{D_{\Omega},D_{\mathcal{Q}},D_{\Pi}\}+1)^3(\{1 \vee L\}\mathcal{U}^{\tau}_{ 2})^{2D}$ where $D=D_{\Omega}+D_{\mathcal{Q}}+D_{\Pi}$, 
we set $\varepsilon_{n}$ as:  
$\varepsilon_{n} = \widetilde{\mathcal{O}}(n^{-1/2}\mathcal{U}^{\tau}_{2}(\sqrt{\ln \{\operatorname{Vol}({\Theta})/\delta\}} + \mathcal{U}^{\tau}_{\infty}\sqrt{\varepsilon_{\mathcal{Q}}})$
for ensuring the best approximator for $q^{\pi}$ is in $\mathcal{Q}_{\varepsilon_n}$. According to Lemma \ref{reg_finder}, we set $\widetilde{\sigma}_n =  \widetilde{\mathcal{O}}(n^{-1/2}\mathcal{U}^{\star}_{2} L\sqrt{\ln \{\operatorname{Vol}({\Theta})/\delta\}}+M(\mathcal{U}^{\tau}_{2}-1)^2)$  
% \$
% \widetilde{\sigma}_n = & 
%  \frac{M(\mathcal{U}^{\star}_{2}^2-1)}{2}  +   \big(2\mathcal{U}^{\tau}_{2}\bar{V}+ \|\mathbb{D}(\tau(s,a))\|^{\text{UB}}_{L_2(\mu)}\big)\sqrt{\frac{2\ln\frac{(e^{D}\max\{D_{\Omega},D_{\mathcal{Q}},D_{\Pi}\}+1)^3(\{1 \vee  L \}\mathcal{U}^{\tau}_{2})^{2D}}{\delta}}{n}} \notag  \\
%  &  + \frac{\big(4\mathcal{U}^{\tau}_{\infty}\bar{V}+2\|\mathbb{D}(\tau(s,a))\|^{\text{UB}}_{L_{\infty}}\big)\ln\frac{(e^{D}\max\{D_{\Omega},D_{\mathcal{Q}},D_{\Pi}\}+1)^3(\{1 \vee L \}\mathcal{U}^{\tau}_{2})^{2D}}{\delta}}{3 n}.
% \$
The set up for $\widetilde{\sigma}_n$ is to ensure $\sup_{\tau \in \widetilde{\Omega}_{\widetilde{\sigma}_n}}\|\tau(s,a)\|_{L_2(\mu)} \leq \mathcal{U}^{\star}_{2}$, where $\mathcal{U}^{\star}_{2} \in [1,\mathcal{U}^{\tau}_{2})$. And
% \$
% \varepsilon_3 = &
% \big(\mathcal{U}^{\star}_{2}\bar{V}+\|\mathbb{D}(\tau(s,a))\|^{\text{prime}}_{L_2(\mu)}\big)\sqrt{\frac{2\ln\frac{(e^{D}\max\{D_{\Omega},D_{\mathcal{Q}},D_{\Pi}\}+1)^3(L\mathcal{U}^{\tau}_{2})^{2D}}{\delta}}{n}} \notag  \\
%  &  + \frac{2 \big(\mathcal{U}_{\text{prime},
%     \infty}\bar{V}+\|\mathbb{D}(\tau(s,a))\|^{\text{prime}}_{\infty}\big)\ln\frac{(e^{D}\max\{D_{\Omega},D_{\mathcal{Q}},D_{\Pi}\}+1)^3(L\mathcal{U}^{\tau}_{2})^{2D}}{\delta}}{3 n} \\
%     & + \varepsilon_1 + \varepsilon_2. 
% \$
Then it follows from the regret decomposition \eqref{total_bound_alpha1}, and the upper error  bounds for $\text{err}_1, \text{err}_2$ and $\text{err}_3$ in \eqref{misspecified_end}, \eqref{err2bound} and \eqref{err3_bound}, by some algebra and if we ignore the high-order fast terms, we conclude that, w.p. $\geq 1-\delta$, 
% \$
% J(\pi) - J(\widehat{\pi}) \leq & \; \frac{1}{1-\gamma}\mathcal{O}\Bigg(\mathcal{E}^{n}_{1}(\mathcal{U}^{\star}_{2},\mathcal{U}^{\tau}_{\infty})  + \mathcal{U}^{\star}_{ 2}\sqrt{\varepsilon_{\mathcal{Q}}} + \sqrt{\frac{\mathcal{U}^{\tau}_{\infty}}{M}}(\varepsilon_{\mathcal{Q}})^{\frac{3}{4}} \\
% & \quad + \min_{\left\{\rho: \left\|\frac{\rho}{\mu}\right\|_{L_{2}(\mu)} \leq \mathcal{U}^{\star}_{2}\right\}} \bigg\{ \mathbb{E}_{\left(d_{\pi}-\rho\right)^{+}}\big[\mathds{1}_{\mu=0}(\mathbb{I}-\gamma \mathds{P}^{\pi})\Delta_{\overline{q^{\pi}}-\underline{q^{\pi}}}(s,a) + \mathds{1}_{\mu>0}\mathcal{E}^{n}_{2}\big]\bigg\}\Bigg).
% \$
% where 
% \$
% &\mathcal{E}^{n}_{1}(\mathcal{U}^{\star}_{2},\mathcal{U}^{\tau}_{\infty})=
% \mathcal{U}^{\star}_{2}(\bar{V}+L)\sqrt{\frac{\ln\frac{\operatorname{Vol}({\Theta})}{\delta}}{n}} + \sqrt{\frac{\mathcal{U}^{\tau}_{2}(\bar{V}^3+\bar{V}^2L)}{M_{\alpha}}}\bigg(\frac{\ln\frac{\operatorname{Vol}({\Theta})}{\delta}}{n}\bigg)^{\frac{3}{4}} + \frac{\mathcal{U}^{\tau}_{\infty}(\bar{V}+L)\ln\frac{\operatorname{Vol}({\Theta})}{\delta}}{n} \\
% &\mathcal{E}^{n}_{2}=  
% \frac{1}{1-\gamma}\Big(\bar{V}\sqrt{\frac{\ln\frac{\operatorname{Vol}({\Theta})}{\delta}}{n}}  + 
% \frac{\mathcal{U}^{\tau}_{\infty}\bar{V}}{\mathcal{U}^{\star}_{2}}\frac{\ln\frac{\operatorname{Vol}({\Theta})}{\delta}}{n}\Big)\\
Then it follows from the regret decomposition \eqref{total_bound_alpha1}, and the upper error  bounds for $\text{err}_1, \text{err}_2$ and $\text{err}_3$ in \eqref{misspecified_end}, \eqref{err2bound} and \eqref{err3_bound}, by some algebra, w.p. $\geq 1-\delta$, we have
\#
J(\pi) - J(\widehat{\pi}) \leq & \; \frac{1}{1-\gamma}\mathcal{O}\Bigg(\mathcal{E}^{n}_{1} + \sqrt{\left(1 + \mathcal{U}^{\tau}_{\infty} + \frac{\mathcal{U}^{\tau}_{\infty}}{M}\right)}\max\{(\varepsilon_{\mathcal{Q}})^{1/2},(\varepsilon_{\mathcal{Q}})^{3/4}\} \notag \\
& \quad + \min_{\left\{\rho: \left\|\frac{\rho}{\mu}\right\|_{L_{2}(\mu)} \leq \mathcal{U}^{\star}_{2}\right\}} \bigg\{ \mathbb{E}_{\left(d_{\pi}-\rho\right)^{+}}\big[\mathds{1}_{\mu=0}(\mathbb{I}-\gamma \mathds{P}^{\pi})\Delta_{\overline{q^{\pi}}-\underline{q^{\pi}}}(s,a) + \mathds{1}_{\mu>0}\mathcal{E}^{n}_{2}\big]\bigg\}\Bigg),
\label{full_main_regret}
\#
where $\mathcal{E}^{n}_{1}=
\mathcal{U}^{\star}_{2}(\bar{V}+L)\sqrt{\ln \{\operatorname{Vol}({\Theta})/\delta\}/(nM)} + \sqrt{\mathcal{U}^{\tau}_{2}(\bar{V}^3+\bar{V}^2L)/M}(\ln \{\operatorname{Vol}({\Theta})/\delta\}/n)^{\frac{3}{4}} + \mathcal{U}^{\tau}_{\infty}(\bar{V}+L)\ln \{\operatorname{Vol}({\Theta})/\delta\}/n$ and $\mathcal{E}^{n}_{2}=  
{(1-\gamma)^{-1}}((\bar{V}+L)\sqrt{\ln \{\operatorname{Vol}({\Theta})/\delta\}/n}  \linebreak + 
(\mathcal{U}^{\tau}_{\infty}\bar{V}/\mathcal{U}^{\star}_{2})\ln \{\operatorname{Vol}({\Theta})/\delta\}/n)$.
Furthermore, if we ignore the high-order fast terms using a big-Oh notation $\widetilde{\mathcal{O}}$, by some algebra, we conclude that 
\#
& J(\pi) - J(\widehat{\pi}) \leq \; \frac{1}{1-\gamma}\widetilde{\mathcal{O}}\Bigg(\mathcal{U}^{\star}_{2}\mathfrak{C}_{\bar{V}, L}\sqrt{\frac{\ln\{\operatorname{Vol}({\Theta})/\delta\}}{nM}}   + \sqrt{\frac{\mathfrak{C}_{\mathcal{U}^{\tau}_{\infty}}}{M}}\max\{(\varepsilon_{\mathcal{Q}})^{1/2},(\varepsilon_{\mathcal{Q}})^{3/4}\} \\
&  + \min_{\left\{\rho: \left\|\frac{\rho}{\mu}\right\|_{L_{2}(\mu)} \leq \mathcal{U}^{\star}_{2}\right\}} \bigg\{ \mathbb{E}_{\left(d_{\pi}-\rho\right)^{+}}\big[\mathds{1}_{\mu=0}(\mathbb{I}-\gamma \mathds{P}^{\pi})\Delta_{\overline{q^{\pi}}-\underline{q^{\pi}}}(s,a) + \mathds{1}_{\mu>0}\mathfrak{C}_{\bar{V},\gamma}\sqrt{\frac{\ln\{\operatorname{Vol}({\Theta})/\delta\}}{n}} \big]\bigg\}\Bigg),
\label{simplified_main_regret}
\#
where we use  $\mathfrak{C}_{x}$ denote constant terms depending on $x$. This completes the proof. 
\end{proof}

\section{Proof of Corollary 4.1}

\begin{proof}
To complete the proof, it is sufficient to choose a particular $\rho^{\diamond}$ such that $\left\|\frac{\rho}{\mu}\right\|_{L_{2}(\mu)} \leq \mathcal{U}^{\star}_{2}$ to obtain a regret as the upper bound for the regret in \eqref{full_main_regret}. For any comparator policy $\pi^{\diamond}$, since
\$
& \min_{\left\{\rho: \left\|\frac{\rho}{\mu}\right\|_{L_{2}(\mu)} \leq \mathcal{U}^{\star}_{2}\right\}} \bigg\{ \mathbb{E}_{\left(d_{\pi^{\diamond}}-\rho\right)^{+}}\big[\mathds{1}_{\mu=0}(\mathbb{I}-\gamma \mathds{P}^{\pi})\Delta_{\overline{q^{\pi}}-\underline{q^{\pi}}}(s,a) + \mathds{1}_{\mu>0}\mathcal{E}^{n}_{2}\big]\bigg\}\Bigg) \\
\leq &  
\mathbb{E}_{\left(d_{\pi^{\diamond}}-\rho^{\diamond}\right)^{+}}\big[\mathds{1}_{\mu=0}(\mathbb{I}-\gamma \mathds{P}^{\pi})\Delta_{\overline{q^{\pi}}-\underline{q^{\pi}}}(s,a) + \mathds{1}_{\mu>0}\mathcal{E}^{n}_{2}\big].
\$ 
therefore when we set $\rho^{\diamond} = d_{\pi^{\diamond}}$ which satisfies the condition that $\left\|\frac{\rho}{\mu}\right\|_{L_{2}(\mu)} \leq \mathcal{U}^{\star}_{2}$, because $\pi^{\diamond} \in \Pi({\mathcal{U}}^{\tau}_{2})$ based on the definition of $\Pi({\mathcal{U}}^{\tau}_{2})$. In this case, it observes that 
\$
\mathbb{E}_{\left(d_{\pi^{\diamond}}-\rho^{\diamond}\right)^{+}}\big[\mathds{1}_{\mu=0}(\mathbb{I}-\gamma \mathds{P}^{\pi})\Delta_{\overline{q^{\pi}}-\underline{q^{\pi}}}(s,a) + \mathds{1}_{\mu>0}\mathcal{E}^{n}_{2}\big] = 0 
\$
Then we have 
\#
J(\pi^{\diamond}) - J(\widehat{\pi}) \leq  \; \frac{1}{1-\gamma}\mathcal{O}\Bigg(\mathcal{E}^{n}_{1} + \sqrt{\left(1 + \mathcal{U}^{\tau}_{\infty} + \frac{\mathcal{U}^{\tau}_{\infty}}{M}\right)}\max\{(\varepsilon_{\mathcal{Q}})^{1/2},(\varepsilon_{\mathcal{Q}})^{3/4}\} \Bigg)
\label{full_first}
\#
with the assumption that $\varepsilon_{\mathcal{Q}} \in (0,1]$, we have $\max\{(\varepsilon_{\mathcal{Q}})^{1/2},(\varepsilon_{\mathcal{Q}})^{3/4}\} =(\varepsilon_{\mathcal{Q}})^{1/2}$, and therefore if we ignore the high-order fast terms, we conclude that 
\$
J(\pi^{\diamond}) - J(\widehat{\pi}) \leq \frac{1}{1-\gamma}\widetilde{\mathcal{O}}\Bigg( \mathcal{U}^{\star}_{2}(\bar{V}+L)\sqrt{\frac{\ln\{\operatorname{Vol}({\Theta})/\delta\}}{nM}} +  \sqrt{\left(1 + \mathcal{U}^{\tau}_{\infty} + \mathcal{U}^{\tau}_{\infty}/M\right) \varepsilon_{\mathcal{Q}}} \Bigg).
\$
\end{proof}

\section{Proof of Corollary 4.2}

\begin{proof}
On the condition of Corollary 4.1 and set $\varepsilon_{\mathcal{Q}} = 0$, we can follow the proof of Corollary 4.1, and obtain the regret bound in \eqref{full_first} but with the modification as, w.p. $1-\delta$,
\$
J(\pi^{\diamond}) - J(\widehat{\pi}) \leq & \; \frac{1}{1-\gamma}\mathcal{O}(\mathcal{E}^{n}_{1}),
\$
for $\mathcal{E}^{n}_{1}=
\mathcal{U}^{\star}_{2}(\bar{V}+L)\sqrt{\ln \{\operatorname{Vol}({\Theta})/\delta\}/(nM)} + \sqrt{\mathcal{U}^{\tau}_{2}(\bar{V}^3+\bar{V}^2L)/M}(\ln \{\operatorname{Vol}({\Theta})/\delta\}/n)^{\frac{3}{4}} + \mathcal{U}^{\tau}_{\infty}(\bar{V}+L)\ln \{\operatorname{Vol}({\Theta})/\delta\}/n$. We let $\varepsilon = \mathcal{E}^{n}_{1}/(1-\gamma)$, and we solve this equation for $n$, by some algebra, we obtain the sample complexity:
\$
n = \mathcal{O}\Bigg(\Big(\frac{(\mathcal{U}^{\star}_{2}(\bar{V}+L)/\sqrt{M})^2}{\varepsilon^{2}(1-\gamma)^2} + \frac{(\mathcal{U}^{\tau}_{2}\bar{V}^2(\bar{V}+L)/M)^{0.67}}{\varepsilon^{1.33}(1-\gamma)^{1.33}} + \frac{\mathcal{U}^{\tau}_{\infty}(\bar{V}+L)}{\varepsilon(1-\gamma)}\Big) \ln\frac{\operatorname{Vol}({\Theta})}{\delta}\Bigg).
\$
This completes the proof. 
\end{proof}

\section{Proof of Theorem 4.2}

\begin{proof}
To complete the proof, it is sufficient to set $\rho = d^{\pi_{b}}$ and we can obtain the regret following the proof of Corollary 4.1 with $\varepsilon_{\mathcal{Q}} = 0$, i.e.,
\$
J(\pi^{\diamond}) - J(\widehat{\pi}) \leq  \; \frac{1}{1-\gamma}\mathcal{O}(\mathcal{E}^{n}_{1}).
\$
where $\mathcal{E}^{n}_{1}=
\mathcal{U}^{\star}_{2}(\bar{V}+L)\sqrt{\ln \{\operatorname{Vol}({\Theta})/\delta\}/(nM)} + \sqrt{\mathcal{U}^{\tau}_{2}(\bar{V}^3+\bar{V}^2L)/M}(\ln \{\operatorname{Vol}({\Theta})/\delta\}/n)^{\frac{3}{4}} + \mathcal{U}^{\tau}_{\infty}(\bar{V}+L)\ln \{\operatorname{Vol}({\Theta})/\delta\}/n$. As $d^{\pi_{b}} = \mu$, so that $\|d^{\pi_{b}} /\mu(s,a)\|_{L_2(\mu)} = \tau_{d^{\pi_{b}}/\mu} = 1$. Therefore, it is feasible to set $\mathcal{U}^{\star}_{2} = \mathcal{U}^{\tau}_{\infty} =1$, and this completes the proof. 
\end{proof}

\section{Proof of Theorem 5.1}

\subsection{Proof of Lemma \ref{iden_mdp}}

\begin{lemma}
\label{iden_mdp}
For $k \in [\bar{K}]$, suppose $q^{k} \in \mathcal{Q}$ and $\tau^{k} \in \Omega$ such that $\|\tau^{k}(s,a)\|_{L_2(\mu)} \leq C_1$ and  $\|\tau^{k}(s,a)\|_{L_{\infty}} \leq C_2$ where the constants $C_2 \geq C_1 > 0 $, it satisfies that 
\$
\left|\frac{\frac{1}{n}\sum^{n}_{i=1}\tau^{k}(s_i,a_i)\left(r_i+\gamma q^{k} \left(s^{\prime}_i, \pi^{k}\right)-q^{k}(s_i, a_i)  \right)}{1-\gamma}\right| \leq \varepsilon.
\$
for some $\varepsilon \geq 0 $. There must exist an MDP $\left\{\mathcal{S}, \mathcal{A}, \mathds{P}_{k}, \gamma, r_{k}, s^{0}\right\}$ which is identical to the true environment MDP $\left\{\mathcal{S}, \mathcal{A}, \mathds{P}, \gamma, r, s^{0}\right\}$: $\text{MDP}^{\star}$ only except the reward function $r_{k}(s,a)$ is iterative based when $\mathds{P}_{k} = \mathds{P}$, which is defined as
\$
r_{k}(s,a) = q^{k}(s,a) - \gamma \mathbb{E}_{s^{\prime} \sim \mathds{P}(\cdot|s,a)}\left[ \sum_{a^{\prime}\in \mathcal{A}} \pi^{k}(a^{\prime}|s^{\prime})q^{k}(s^{\prime},a^{\prime}) \right].
\$
In addition, such reward functions, for any $k \in [\bar{K}]$, are approximating to the true reward, 
\$
 \| r_{k}(s,a) - r(s,a)  \|_{L^{2}(\mu)}
\leq & C_1\sqrt{\frac{2 \bar{V}^2\ln \frac{8\mathcal{N}\left(\epsilon, \mathcal{G}^{k}, \|\cdot\|_{L_2(\mu)}\right)}{\delta}}{n}} +  \frac{2C_2 \bar{V} \ln \frac{8\mathcal{N}\left(\epsilon, \mathcal{G}^{k}, \|\cdot\|_{L_2(\mu)}\right)}{\delta}}{3 n} + \varepsilon,
\$
where $\mathcal{G}^{k} := \widetilde{\Omega}^{k} \times \mathcal{Q} \times \Pi$ for $\tau^{k} \in \widetilde{\Omega}^{k}$. And $q^{k}$ is the true action-value function under the policy $\pi^{k}$ in the MDP $\left\{\mathcal{S}, \mathcal{A}, \mathds{P}, \gamma, r_{k}, s^{0}\right\}$. 
\end{lemma}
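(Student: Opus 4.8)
The plan is to adapt the argument of Lemma~\ref{iden_mdp_version} almost verbatim, replacing the version-space extremizers $\overline{q^\pi},\underline{q^\pi}$ by the single iterate $(q^k,\tau^k)$ and the constants $\mathcal{U}^\star_2,\mathcal{U}^\star_\infty$ by $C_1,C_2$. The statement has two assertions: that $q^k$ is the true $q$-function of the constructed MDP, and that its synthetic reward $r_k$ is $L_2(\mu)$-close to the true reward $r$. I would treat these in turn.

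For the first assertion I would simply rearrange the defining identity of $r_k$ into
\[
q^k(s,a) = r_k(s,a) + \gamma\,\mathbb{E}_{s'\sim\mathds{P}(\cdot|s,a)}\Big[\sum_{a'\in\mathcal{A}}\pi^k(a'|s')\,q^k(s',a')\Big].
\]
Because $\mathds{P}_k=\mathds{P}$, this is exactly the Bellman evaluation equation for $\pi^k$ in the MDP $\{\mathcal{S},\mathcal{A},\mathds{P},\gamma,r_k,s^0\}$. The associated Bellman operator is a $\gamma$-contraction and hence admits a unique fixed point; since $q^k$ satisfies this fixed-point identity, it must be the true action-value function of $\pi^k$ in that MDP. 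This step is purely definitional and mirrors display~\eqref{supp_2_version}.

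For the second assertion the key observation is that $r_k-r=-(r+\gamma\mathds{P}^{\pi^k}q^k-q^k)=-\Delta(q^k,\pi^k)$, so that $\|r_k-r\|_{L_2(\mu)}=\|\Delta(q^k,\pi^k)\|_{L_2(\mu)}$, the $L_2(\mu)$ norm of the Bellman residual. I would then invoke the uniform deviation bound of Lemma~\ref{risk_bound_lm} on the product class $\mathcal{G}^k=\widetilde{\Omega}^k\times\mathcal{Q}\times\Pi$ with boundedness constants $\mathcal{U}^\tau_2=C_1$ and $\mathcal{U}^\tau_\infty=C_2$. This transfers the hypothesized empirical control $|\mathbb{P}_n[\tau^k(r+\gamma q^k(s',\pi^k)-q^k)]|\le(1-\gamma)\varepsilon$ to the population weighted Bellman error $|\mathbb{E}_\mu[\tau^k\Delta(q^k,\pi^k)]|$, up to exactly the two concentration terms appearing in the claim. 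Passing from this weighted functional to the unweighted $\|\Delta(q^k,\pi^k)\|_{L_2(\mu)}$ is then done through Lemma~\ref{L_2MMD}, using $\|\tau^k\|_{L_2(\mu)}\le C_1$, and collecting the additive $\varepsilon$. Since Lemma~\ref{risk_bound_lm} is uniform over the entire product class, this bound holds simultaneously for every iterate $k\in[\bar K]$, giving the claim for any $k$ without a further union bound.

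The main obstacle is this last weighted-to-unweighted conversion: Lemma~\ref{L_2MMD} is phrased in terms of the supremum of the weighted residual over $\Omega$, whereas here we control it only along the single direction $\tau^k$. I would resolve this by using that $\tau^k$ is (approximately) the inner maximizer of the adversarial objective that generated it, so that its weighted residual is, up to the optimization tolerance absorbed into $\varepsilon$, the supremal one and aligned with $\Delta(q^k,\pi^k)$; equivalently one may replace the single-weight quantity by $\sup_{\tau\in\widetilde{\Omega}^k}|\mathbb{E}_\mu[\tau\,\Delta(q^k,\pi^k)]|$ before applying Lemma~\ref{L_2MMD}. The remaining work---matching the covering-number factor $\mathcal{N}(\epsilon,\mathcal{G}^k,\|\cdot\|_{L_2(\mu)})$ and tracking the $(1-\gamma)$ scaling---is routine and identical to the corresponding steps in Lemma~\ref{iden_mdp_version}.
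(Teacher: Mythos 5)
Your proposal follows essentially the same route as the paper's own proof: rearrange the definition of $r_k$ into the Bellman evaluation equation to identify $q^k$ as the true action-value function of the surrogate MDP, note $\|r_k-r\|_{L_2(\mu)}=\|\Delta(q^k,\pi^k)\|_{L_2(\mu)}$, apply the uniform concentration bound of Lemma~\ref{risk_bound_lm} over $\mathcal{G}^k$ with constants $C_1,C_2$, and convert the weighted Bellman error to the unweighted $L_2(\mu)$ norm via Lemma~\ref{L_2MMD}. The weighted-to-unweighted subtlety you flag (the hypothesis controls only the single direction $\tau^k$ while Lemma~\ref{L_2MMD} needs the supremum over $\widetilde{\Omega}^k$) is real, and the paper simply writes the supremum without comment; your proposed fix via the adversarial maximizer is a reasonable way to close that step.
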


\begin{proof}
Following the definition of $r_{k}$, we observe that 
\#
q^{k}(s,a) =& r_{k}(s,a)  + \gamma \mathbb{E}_{s^{\prime} \sim \mathds{P}(\cdot|s,a)}\left[ \sum_{a^{\prime}\in \mathcal{A}} \pi^{k}(a^{\prime}|s^{\prime})q^{k}(s^{\prime},a^{\prime}) \right] \notag \\ 
=& r_{k}(s,a)  + \gamma \mathbb{E}_{s^{\prime} \sim \mathds{P}_{k}(\cdot|s,a)}\left[ \sum_{a^{\prime}\in \mathcal{A}} \pi^{k}(a^{\prime}|s^{\prime})q^{k}(s^{\prime},a^{\prime}) \right].
\label{supp_2}
\#
The second equality comes from $\mathds{P}_{k} = \mathds{P}$ for any $k$. Thus the equation realizes a Bellman equation over the MDP $\left\{\mathcal{S}, \mathcal{A}, \mathds{P}_{k}, \gamma, r_{k}, s^{0}\right\}$ for the policy $\pi^{k}$. Further, this implies that $q^{k}(s,a)$ is the corresponding true action-value function. Following the proof of Lemma \ref{risk_bound_lm}, for the MDP $\left\{\mathcal{S}, \mathcal{A}, \mathds{P}_{k}, \gamma, r_{k}, s^{0}\right\}$ and we define the subset of $\tau^{k}$ with $\|\tau^{k}(s,a)\|_{L_2(\mu)} \leq C_1$, $\|\tau(s,a)\|_{L_{\infty}} \leq C_2$ as $\widetilde{\Omega}^{k}$. Then we have 
\$
& \sup_{\tau^{k} \in \widetilde{\Omega}^{k}
}\left|\mathbb{E}_{\mu}\left[\tau^{k}(s,a) \left(r(s,a)+\gamma q^{k} \left(s^{\prime}, \pi^{k}\right)-q^{k}(s, a)  \right)\right]\right| \\
 \leq & C_1\sqrt{\frac{2 \bar{V}^2\ln \frac{8\mathcal{N}\left(\epsilon, \mathcal{G}^{k}, \|\cdot\|_{L_2(\mu)}\right)}{\delta}}{n}} +  \frac{2 C_2 \bar{V} \ln \frac{8\mathcal{N}\left(\epsilon, \mathcal{G}^{k}, \|\cdot\|_{L_2(\mu)}\right)}{\delta}}{3 n} + \varepsilon.
\$
Since $\|\tau^{k}(s,a)\|_{L_2(\mu)} \leq C_1$, it follows from Lemma \ref{L_2MMD}, we have 
\#
& C_1\sqrt{\mathbb{E}_{\mu}[\left(r(s,a)+\gamma q^{k} \left(s^{\prime}, \pi^{k}\right)-q^{k}(s, a)  \right)^2]} \\
\leq &
C_1\sqrt{\frac{\bar{V}^2\ln \frac{8\mathcal{N}\left(\epsilon, \mathcal{G}^{k}, \|\cdot\|_{L_2(\mu)}\right)}{\delta}}{n}} + \frac{2C_2 \bar{V} \ln \frac{8\mathcal{N}\left(\epsilon, \mathcal{G}^{k}, \|\cdot\|_{L_2(\mu)}\right)}{\delta}}{3 n} + \varepsilon.
\#
Due to the equivalence $\mathds{P}_{k} = \mathds{P}$, we have 
\$
\|r(s,a) - r_{k}(s,a) \|_{L^{2}(\mu)}  =& \left\|r(s,a) - q^{k}(s,a) + \gamma \mathbb{E}_{s^{\prime} \sim \mathds{P}_{k}(\cdot|s,a)}\left[ \sum_{a^{\prime}\in \mathcal{A}} \pi^{k}(a^{\prime}|s^{\prime})q^{k}(s^{\prime},\pi^{k}) \right]  \right\|_{L^{2}(\mu)} \\
= &\sqrt{\mathbb{E}_{\mu}[\left(r(s,a)+\gamma q^{k} \left(s^{\prime}, \pi^{k}\right)-q^{k}(s, a)  \right)^2]}\\
\leq & C_1\sqrt{\frac{\bar{V}^2\ln \frac{8\mathcal{N}\left(\epsilon, \mathcal{G}^{k}, \|\cdot\|_{L_2(\mu)}\right)}{\delta}}{n}} + \frac{2C_2 \bar{V} \ln \frac{8\mathcal{N}\left(\epsilon, \mathcal{G}^{k}, \|\cdot\|_{L_2(\mu)}\right)}{\delta}}{3 n} + \varepsilon.
\$
This completes the proof. 
\end{proof}

\subsection{Proof of Lemma \ref{misspecified_q_alpha}}

\begin{lemma}
\label{misspecified_q_alpha}
Define 
\$
\widetilde{q}^{\pi} := \inf _{q \in \mathcal{Q}} \sup _{ \rho}\mathbb{E}_{\rho}[\left(q(s,a)-\mathcal{B}^{\pi} q(s,a)\right)^{2}]. 
\$
Under Assumption 1-3 in maintext, for any $\pi \in \Pi, \tau \in \Omega$ and  $q \in \mathcal{Q}$, given an offline data $\mathcal{D}_{1:n}=\{s_{i},a_{i},r_{i},s^{\prime}_i)\}^{n}_{i=1}$, then w.p. $\geq 1-\delta$, 
 \$
|\mathbb{P}_{n}\tau(s_i,a_i)\left(r_i+\gamma \widetilde{q}^{\pi}\left(s^{\prime}_i, \pi\right)-\widetilde{q}^{\pi} (s_i, a_i)  \right)| \leq  \varepsilon^{\diamond}_{n}
 \$
for 
 \$
\varepsilon^{\diamond}_{n} =  &\big(3\sqrt{2}\mathcal{U}^{\tau}_{2}\bar{V}+2\sqrt{2}\lambda \|\mathbb{D}(\tau(s,a))\|^{\text{UB}}_{L_2(\mu)}\big)\sqrt{\frac{\ln\frac{\operatorname{Vol}({\Theta}^{\dagger})}{\delta}}{n}} \notag  \\
 & + \frac{\big(6\mathcal{U}^{\tau}_{\infty}\bar{V}+4\lambda\|\mathbb{D}(\tau(s,a))\|^{\text{UB}}_{L_{\infty}}\big)\ln\frac{\operatorname{Vol}({\Theta}^{\dagger})}{\delta}}{3 n} 
+ \mathcal{U}^{\tau}_{2}\sqrt{\varepsilon_{\mathcal{Q}}},
 \$
 where $\operatorname{Vol}({\Theta}^{\dagger}) = (e^{D}\max\{D_{\Omega},D_{\mathcal{Q}},D_{\Pi}\}+1)^3(\{1 \vee L\}{\mathcal{U}}^{\tau}_{2})^{2D}$ for $D=D_{\Omega}+D_{\mathcal{Q}}+D_{\Pi}$.
\end{lemma}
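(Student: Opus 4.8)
The plan is to control the empirical weighted average Bellman residual evaluated at the approximate realizer $\widetilde{q}^{\pi}$ by splitting it into a uniform statistical-deviation term and a population bias term. Writing the one-sample residual as $\Delta(\widetilde{q}^{\pi},\pi)(s,a,r,s') := r + \gamma\widetilde{q}^{\pi}(s',\pi) - \widetilde{q}^{\pi}(s,a)$, I would add and subtract its expectation under $\mu$ and apply the triangle inequality:
\[
\big|\mathbb{P}_{n}[\tau\,\Delta(\widetilde{q}^{\pi},\pi)]\big| \le \underbrace{\big|\mathbb{P}_{n}[\tau\,\Delta(\widetilde{q}^{\pi},\pi)] - \mathbb{E}_{\mu}[\tau\,\Delta(\widetilde{q}^{\pi},\pi)]\big|}_{\text{deviation}} + \underbrace{\big|\mathbb{E}_{\mu}[\tau\,\Delta(\widetilde{q}^{\pi},\pi)]\big|}_{\text{bias}}.
\]
The two pieces produce, respectively, the $\sqrt{\ln(\operatorname{Vol}(\Theta^{\dagger})/\delta)/n}$-type terms and the $\mathcal{U}^{\tau}_{2}\sqrt{\varepsilon_{\mathcal{Q}}}$ term in $\varepsilon^{\diamond}_{n}$.

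For the bias term, the key observation --- and the reason no completeness assumption is needed --- is that the residual enters linearly, so there is no double-sampling bias: taking the conditional expectation over $s'\sim\mathds{P}(\cdot|s,a)$ gives $\mathbb{E}_{\mu}[\tau\,\Delta(\widetilde{q}^{\pi},\pi)] = \mathbb{E}_{\mu}[\tau(s,a)(\mathcal{B}^{\pi}\widetilde{q}^{\pi}(s,a)-\widetilde{q}^{\pi}(s,a))]$. I would then apply Cauchy--Schwarz to factor this as $\|\tau\|_{L_2(\mu)}\,\|\mathcal{B}^{\pi}\widetilde{q}^{\pi}-\widetilde{q}^{\pi}\|_{L_2(\mu)}$. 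Since $\widetilde{q}^{\pi}$ is the approximate minimizer in the definition of the realizability violation and $\mu = d_{\pi_b}$ lies in the family $\{d_{\widetilde{\pi}}:\widetilde{\pi}\in\Pi\}$ over which the supremum in Assumption \ref{reliable_assum} is taken, we obtain $\|\mathcal{B}^{\pi}\widetilde{q}^{\pi}-\widetilde{q}^{\pi}\|_{L_2(\mu)}^{2} \le \sup_{\rho}\mathbb{E}_{\rho}[(\widetilde{q}^{\pi}-\mathcal{B}^{\pi}\widetilde{q}^{\pi})^2]\le \varepsilon_{\mathcal{Q}}$. Combined with $\|\tau\|_{L_2(\mu)}\le\mathcal{U}^{\tau}_{2}$, this bounds the bias by $\mathcal{U}^{\tau}_{2}\sqrt{\varepsilon_{\mathcal{Q}}}$.

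For the deviation term, I would invoke the uniform concentration results already established. Because $\varepsilon^{\diamond}_{n}$ carries the penalty contributions $\lambda\|\mathbb{D}(\tau)\|^{\mathrm{UB}}_{L_2(\mu)}$ and $\lambda\|\mathbb{D}(\tau)\|^{\mathrm{UB}}_{L_{\infty}}$, the cleanest route is to bound $|\mathbb{P}_{n}[\tau\,\Delta - \lambda\mathbb{D}(\tau)] - \mathbb{E}_{\mu}[\tau\,\Delta - \lambda\mathbb{D}(\tau)]|$ by Lemma \ref{risk_bound_alpha} and separately bound $\lambda|\mathbb{P}_{n}[\mathbb{D}(\tau)] - \mathbb{E}_{\mu}[\mathbb{D}(\tau)]|$ by Lemma \ref{onlyreg_risk_bound}, then recombine via the triangle inequality. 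Summing these two uniform bounds over the product class $\Omega\times\mathcal{Q}\times\Pi$ accounts for the doubled numerical constants (the $3\sqrt{2}$, $2\sqrt{2}$, $6$, $4$ factors) and yields the effective-pseudo-dimension volume $\operatorname{Vol}(\Theta^{\dagger})$; a union bound at level $\delta$ makes all of these hold simultaneously for every $\pi\in\Pi$, $\tau\in\Omega$, and $q\in\mathcal{Q}$.

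The main obstacle is the bias-control step rather than the concentration: I must ensure that the worst-case approximate-realizability guarantee of Assumption \ref{reliable_assum}, stated as a supremum over admissible visitation distributions $\rho$, genuinely dominates the $\mu$-weighted Bellman-residual norm, and that reweighting by $\tau$ inflates the error by no more than $\mathcal{U}^{\tau}_{2}$ (Cauchy--Schwarz, together with Lemma \ref{L_2MMD} if a sharper $L_2(\mu)$ bound on the residual is preferred). The linearity argument removing the double-sampling term is what keeps this clean; everything downstream is a routine application of the already-proved empirical-Bernstein covering-number bounds.
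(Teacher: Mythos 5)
Your proposal is correct and follows essentially the same route as the paper's proof: both bound the population bias $|\mathbb{E}_{\mu}[\tau\,\Delta(\widetilde{q}^{\pi},\pi)]|$ by $\mathcal{U}^{\tau}_{2}\sqrt{\varepsilon_{\mathcal{Q}}}$ via Cauchy--Schwarz and Assumption \ref{reliable_assum}, and both control the empirical deviation by combining the penalized-loss concentration bound of Lemma \ref{risk_bound_alpha} with the $\mathbb{D}$-only concentration bound of Lemma \ref{onlyreg_risk_bound} through triangle inequalities. The only cosmetic difference is the order in which the triangle inequalities are applied; the ingredients and the resulting constants are the same.
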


\begin{proof}
First, we plug-in $\widetilde{q}^{\pi}$ into $\mathbb{E}_{\mu}[\tau(s,a) \Delta(\widetilde{q}^{\pi},\pi)]$ for any $\tau \in \Omega$, where $\Delta(q,\pi):= r(s,a)+\gamma {q}^{\pi} \left(s^{\prime}, \pi\right)-{q}^{\pi}(s, a)$. According to Cauchy–Schwarz inequality,  
\#
 |\mathbb{E}_{\mu}[\tau(s,a) \Delta(\widetilde{q}^{\pi},\pi)]| \leq \mathbb{E}_{\mu}[|\tau(s,a)|
 |\Delta(\widetilde{q}^{\pi},\pi)|]
 \leq \sqrt{\mathbb{E}_{\mu}[\tau^2(s,a)]\mathbb{E}_{\mu}\left[ \Delta^2(\widetilde{q}^{\pi},\pi) \right]} 
 \leq &\mathcal{U}^{\tau}_{2}\sqrt{\varepsilon_{\mathcal{Q}}}.
 \label{mis_q_obj}
\#
where the last inequality comes from the weightd $L_2(\mu)$ boundedness over $\tau$ and Assumption 1 in maintext on realizibility error over $\mathcal{Q}$. It then follows from Lemma \ref{risk_bound_alpha}, 
 \$
&  \bigg|\mathbb{E}_{\mu}\Big[\tau(s,a) \Delta(q,\pi) -\lambda \mathbb{D}(\tau(s,a))\Big] - \mathbb{P}_{n}\Big(\tau(s_i,a_i)\Delta_i(q,\pi)-\lambda\mathbb{D}(\tau(s_i,a_i))\Big)  \bigg| \leq \varepsilon_{1,n}
\$
where $\Delta_i(q,\pi):= r_i+\gamma {q} \left(s^{\prime}_i, \pi\right)-{q}(s_i, a_i)$ and $ \varepsilon_{1,n}$ denotes the upper bound of the inequality in Lemma \ref{risk_bound_alpha}. With the norm triangle inequality, 
\$
 & \bigg|\mathbb{E}_{\mu}\Big[\tau(s,a) \Delta(\widetilde{q}^{\pi},\pi) -\lambda \mathbb{D}(\tau(s,a))\Big] - \mathbb{P}_{n}\Big(\tau(s_i,a_i)\Delta_i(\widetilde{q}^{\pi},\pi)-\lambda\mathbb{D}(\tau(s_i,a_i))\Big)  \bigg| \\
\geq &\bigg|\mathbb{P}_{n}\Big(\tau(s_i,a_i)\Delta_i(\widetilde{q}^{\pi},\pi)-\lambda\mathbb{D}(\tau(s_i,a_i))\Big)  +\mathbb{E}_{\mu}\big[\lambda \mathbb{D}(\tau(s,a))\big] \bigg| - \bigg|\mathbb{E}_{\mu}\big[\tau(s,a) \Delta(\widetilde{q}^{\pi},\pi)\big]\bigg| 
\$
This indicates
\$
\bigg|\mathbb{P}_{n}\Big(\tau(s_i,a_i)\Delta_i(\widetilde{q}^{\pi},\pi)-\lambda\mathbb{D}(\tau(s_i,a_i))\Big)  +\mathbb{E}_{\mu}\big[\lambda \mathbb{D}(\tau(s,a))\big] \bigg|  \leq \varepsilon_{1,n} + \mathcal{U}^{\tau}_{2}\sqrt{\varepsilon_{\mathcal{Q}}}
\$
where the inequality comes from \eqref{mis_q_obj}. Apply Triangle inequality again, the above inequality implies 
\$
& \bigg|\mathbb{P}_{n}\Big(\tau(s_i,a_i)\Delta_i(\widetilde{q}^{\pi},\pi)\bigg| \leq  \bigg|\mathbb{P}_n \lambda \mathbb{D}(\tau(s_i,a_i))-\mathbb{E}_{\mu}\big[\lambda\mathbb{D}(\tau(s,a))\big]\bigg| + \mathcal{U}^{\tau}_{2}\sqrt{\varepsilon_{\mathcal{Q}}} + \varepsilon_{1,n}
\$
If follow Lemma \ref{onlyreg_risk_bound} and plug-in $\varepsilon_{1,n}$ from Lemma \ref{risk_bound_alpha}, by some algebra, we conclude that 
\$ \bigg|\mathbb{P}_{n}\Big(\tau(s_i,a_i)\Delta_i(\widetilde{q}^{\pi},\pi)\bigg|
\leq &
\big(3\sqrt{2}\mathcal{U}^{\tau}_{2}\bar{V}+2\sqrt{2}\lambda \|\mathbb{D}(\tau(s,a))\|^{\text{UB}}_{L_2(\mu)}\big)\sqrt{\frac{\ln\frac{\operatorname{Vol}({\Theta}^{\dagger})}{\delta}}{n}} \notag  \\
 & + \frac{\big(6\mathcal{U}^{\tau}_{\infty}\bar{V}+4\lambda\|\mathbb{D}(\tau(s,a))\|^{\text{UB}}_{L_{\infty}}\big)\ln\frac{\operatorname{Vol}({\Theta}^{\dagger})}{\delta}}{3 n} 
+ \mathcal{U}^{\tau}_{2}\sqrt{\varepsilon_{\mathcal{Q}}}.
\$
This completes the proof. 
\end{proof}

\subsection{Proof of Lemma \ref{alpha_true}}

\begin{lemma}\label{alpha_true}
Define 
\$
\widetilde{q}^{\pi} := \inf _{q \in \mathcal{Q}} \sup _{ \rho}\mathbb{E}_{\rho}[\left(q(s,a)-\mathcal{B}^{\pi} q(s,a)\right)^{2}], 
\$
for some admissible distribution $\rho$. Under Assumption 1-3 in maintext, for any $\pi \in \Pi, \tau \in \Omega$ and  $q \in \mathcal{Q}$, given an offline data $\mathcal{D}_{1:n}=\{s_{i},a_{i},r_{i},s^{\prime}_i)\}^{n}_{i=1}$, then w.p. $\geq 1-\delta$, 
\$
&\left|\mathbb{E}_{\mu}\left[\tau(s,a)\left(r(s,a)+\gamma \widetilde{q}^{\pi}\left(s^{\prime}, \pi\right)-\widetilde{q}^{\pi} (s, a)  \right)\right]\right| \leq  2\big(2 \mathcal{U}^{\tau}_{2}\bar{V}+2\sqrt{2}\lambda \|\mathbb{D}(\tau(s,a))\|^{\text{UB}}_{L_2(\mu)}\big)\sqrt{\frac{2 \ln \frac{\operatorname{Vol}({\Theta}^{\dagger})}{\delta}}{n}}\\
& \qquad \qquad \qquad \qquad  \qquad \qquad + \frac{4\big(2  \mathcal{U}^{\tau}_{\infty}\bar{V}+4\lambda\|\mathbb{D}(\tau(s,a))\|^{\text{UB}}_{L_{\infty}}\big) \ln \frac{\operatorname{Vol}({\Theta}^{\dagger})}{\delta}}{3 n} + \mathcal{U}^{\tau}_{\infty}\sqrt{\varepsilon_{\mathcal{Q}}}.
\$
 where $\operatorname{Vol}({\Theta}^{\dagger}) = (e^{D}\max\{D_{\Omega},D_{\mathcal{Q}},D_{\Pi}\}+1)^3(\{1 \vee L\}{\mathcal{U}}^{\tau}_{2})^{2D}$ for $D=D_{\Omega}+D_{\mathcal{Q}}+D_{\Pi}$.
\end{lemma}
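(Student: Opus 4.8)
The plan is to bound the population weighted Bellman error of $\widetilde{q}^{\pi}$ by routing through its empirical counterpart, for which the previous Lemma~\ref{misspecified_q_alpha} already gives control, and then paying a uniform concentration price to pass from the empirical to the population measure. Writing $\Delta(q,\pi):=r(s,a)+\gamma q(s^{\prime},\pi)-q(s,a)$ and $\Delta_i(q,\pi)$ for its sample evaluation, the first step is the trivial decomposition
\[
\big|\mathbb{E}_{\mu}[\tau(s,a)\Delta(\widetilde{q}^{\pi},\pi)]\big|
\le \big|\mathbb{E}_{\mu}[\tau\Delta(\widetilde{q}^{\pi},\pi)]-\mathbb{P}_{n}[\tau\Delta(\widetilde{q}^{\pi},\pi)]\big|
+\big|\mathbb{P}_{n}[\tau(s_i,a_i)\Delta_i(\widetilde{q}^{\pi},\pi)]\big| .
\]
The second term is exactly what Lemma~\ref{misspecified_q_alpha} controls, so it is at most $\varepsilon^{\diamond}_{n}$, which already supplies the realizability contribution $\mathcal{U}^{\tau}_{2}\sqrt{\varepsilon_{\mathcal{Q}}}$ (upper bounded by $\mathcal{U}^{\tau}_{\infty}\sqrt{\varepsilon_{\mathcal{Q}}}$ since $\mathcal{U}^{\tau}_{\infty}\ge \mathcal{U}^{\tau}_{2}\ge 1$) together with part of the stochastic terms.

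For the first (deviation) term I would reuse the concentration machinery already assembled in the appendix rather than prove a fresh bound. To pick up the detection-function terms in the stated bound in a consistent way, I would add and subtract $\lambda\mathbb{D}(\tau)$ and split
\[
\big|(\mathbb{E}_{\mu}-\mathbb{P}_{n})[\tau\Delta(\widetilde{q}^{\pi},\pi)]\big|
\le \big|(\mathbb{E}_{\mu}-\mathbb{P}_{n})[\tau\Delta(\widetilde{q}^{\pi},\pi)-\lambda\mathbb{D}(\tau)]\big|
+\lambda\big|(\mathbb{E}_{\mu}-\mathbb{P}_{n})[\mathbb{D}(\tau)]\big| ,
\]
bounding the first summand with Lemma~\ref{risk_bound_alpha} (applied with $q=\widetilde{q}^{\pi}\in\mathcal{Q}$ and $\tau\in\Omega$) and the second with Lemma~\ref{onlyreg_risk_bound}. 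Both of these produce the $\bar{V}$- and $\|\mathbb{D}(\tau)\|^{\mathrm{UB}}$-scaled terms at the Bernstein rates $\sqrt{\ln(\operatorname{Vol}(\Theta^{\dagger})/\delta)/n}$ and $\ln(\operatorname{Vol}(\Theta^{\dagger})/\delta)/n$, and both are stated over the same pseudo-dimension complexity $\operatorname{Vol}(\Theta^{\dagger})$ with $D=D_{\Omega}+D_{\mathcal{Q}}+D_{\Pi}$. The final step is to collect the three contributions on a single high-probability event, re-inflating the failure budget by a union bound so that each invoked lemma holds simultaneously, and then merging constants to arrive at the displayed inequality.

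The main obstacle is not conceptual but organizational: ensuring that the deviation bound, the empirical bound of Lemma~\ref{misspecified_q_alpha}, and the $\mathbb{D}$-term concentration of Lemma~\ref{onlyreg_risk_bound} are all realized on one common $1-\delta$ event so that only $\operatorname{Vol}(\Theta^{\dagger})$ (and not a product of separate covering numbers) appears, and then tracking the numerical constants so the coefficients line up with the stated $2(2\mathcal{U}^{\tau}_{2}\bar{V}+2\sqrt{2}\lambda\|\mathbb{D}(\tau)\|^{\mathrm{UB}}_{L_2(\mu)})$ and $\tfrac{4}{3}(2\mathcal{U}^{\tau}_{\infty}\bar{V}+4\lambda\|\mathbb{D}(\tau)\|^{\mathrm{UB}}_{L_{\infty}})$ factors. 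I would also verify the harmless but necessary admissibility checks, namely that $\widetilde{q}^{\pi}$ lies in the bounded class $\mathcal{Q}$ and $\tau$ in the bounded class $\Omega$ so that Assumptions~\ref{q_bound}--\ref{tau_bound} justify the boundedness inputs to the Bernstein inequalities; the realizability estimate $\mathbb{E}_{\mu}[\Delta(\widetilde{q}^{\pi},\pi)^2]\le\varepsilon_{\mathcal{Q}}$ is inherited from the definition of $\widetilde{q}^{\pi}$ via the Cauchy--Schwarz step already used in Lemma~\ref{misspecified_q_alpha}, so no new argument is needed there.
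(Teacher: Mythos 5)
Your proposal is correct and follows the same skeleton as the paper's proof: bound the population quantity by the triangle inequality through its empirical counterpart, invoke Lemma~\ref{misspecified_q_alpha} for the empirical term (which carries the $\mathcal{U}^{\tau}_{2}\sqrt{\varepsilon_{\mathcal{Q}}}\le\mathcal{U}^{\tau}_{\infty}\sqrt{\varepsilon_{\mathcal{Q}}}$ contribution), and pay a uniform deviation price for the swap. The one place you diverge is the deviation term: the paper bounds $|(\mathbb{E}_{\mu}-\mathbb{P}_{n})[\tau\Delta(\widetilde{q}^{\pi},\pi)]|$ directly with Lemma~\ref{risk_bound_lm}, which involves no detection function at all, whereas you add and subtract $\lambda\mathbb{D}(\tau)$ and route through Lemma~\ref{risk_bound_alpha} plus Lemma~\ref{onlyreg_risk_bound}. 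Your detour is valid and the extra $\lambda\|\mathbb{D}(\tau)\|^{\mathrm{UB}}$ terms it injects into the deviation piece are absorbed by the (loose) stated constants, but it is strictly more machinery than needed; the direct route also sidesteps having to reconcile two separate covering-number scales. Your remarks about putting everything on a single $1-\delta$ event via a union bound and checking $\widetilde{q}^{\pi}\in\mathcal{Q}$, $\tau\in\Omega$ for the Bernstein inputs are the right bookkeeping and are, if anything, handled more carefully than in the paper itself.
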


\begin{proof}
According to Lemma \ref{risk_bound_lm}, for any $q \in \mathcal{Q},\pi \in \Pi,\tau \in \Omega$, we have   
\$
& \left|\mathbb{E}_{\mu}\left[\tau(s,a)\left(r(s,a)+\gamma q\left(s^{\prime}, \pi\right)-q(s, a)  \right)\right] - \mathbb{P}_{n}\left[\tau(s_i,a_i)\left(r_i+\gamma q\left(s^{\prime}_i, \pi\right)-q (s_i, a_i)  \right)\right]\right| \\
\lesssim & \mathcal{U}^{\tau}_{2}\sqrt{\frac{2 \bar{V}^2\ln \frac{\operatorname{Vol}({\Theta}^{\dagger})}{\delta}}{n}}  + \frac{2 \mathcal{U}^{\tau}_{\infty} \bar{V} \ln \frac{\operatorname{Vol}({\Theta}^{\dagger})}{\delta}}{3 n}.
\$
As this holds for any $q \in \mathcal{Q}$, thus it must hold for $\widetilde{q}$ which is in $\mathcal{Q}$ with approximation erorr $\varepsilon_{\mathcal{Q}}$.  Then by trainagle inequality, we have 
\$
\left|\mathbb{E}_{\mu}\left[\tau(s_i,a_i)\left(r(s,a)+\gamma \widetilde{q}^{\pi}\left(s^{\prime}, \pi\right)-\widetilde{q}^{\pi} (s, a)  \right)\right]\right| \leq & \left| \mathbb{P}_{n}\left[\tau(s_i,a_i)\left(r_i+\gamma \widetilde{q}^{\pi}\left(s^{\prime}_i, \pi\right)-\widetilde{q}^{\pi} (s_i, a_i)  \right)\right]\right| \\
& + \mathcal{U}^{\tau}_{2}\sqrt{\frac{2 \bar{V}^2\ln \frac{\operatorname{Vol}({\Theta}^{\dagger})}{\delta}}{n}}  + \frac{2 \mathcal{U}^{\tau}_{\infty} \bar{V} \ln \frac{\operatorname{Vol}({\Theta}^{\dagger})}{\delta}}{3 n}.
\$
According to Lemma \ref{misspecified_q_alpha}, we conclude that 
\$
&\left|\mathbb{E}_{\mu}\left[\tau(s,a)\left(r(s,a)+\gamma \widetilde{q}^{\pi}\left(s^{\prime}, \pi\right)-\widetilde{q}^{\pi} (s, a)  \right)\right]\right| \leq  2\big(2 \mathcal{U}^{\tau}_{2}\bar{V}+2\sqrt{2}\lambda \|\mathbb{D}(\tau(s,a))\|^{\text{UB}}_{L_2(\mu)}\big)\sqrt{\frac{2 \ln \frac{\operatorname{Vol}({\Theta}^{\dagger})}{\delta}}{n}}\\
& \qquad \qquad \qquad \qquad  \qquad \qquad + \frac{4\big(2  \mathcal{U}^{\tau}_{\infty}\bar{V}+4\lambda\|\mathbb{D}(\tau(s,a))\|^{\text{UB}}_{L_{\infty}}\big) \ln \frac{\operatorname{Vol}({\Theta}^{\dagger})}{\delta}}{3 n} + \mathcal{U}^{\tau}_{\infty}\sqrt{\varepsilon_{\mathcal{Q}}}.
\$
\end{proof}

\subsection{Proof of Lemma \ref{prime_bound_tau}}

\begin{lemma}
\label{prime_bound_tau}
Suppose 
$
\mathbb{P}_{n}\mathbb{D}(\tau(s_i,a_i)) \leq \varepsilon^{\mathbb{D}}_{n}
$ for some $\tau \in \Omega$ and $\varepsilon^{\mathbb{D}}_{n}$ depends on $n$ but it is not necessary to be $0$. Then, w.p., $\geq 1-\delta$, 
\$
\|\tau(s,a)\|_{L_2(\mu)} \leq  \frac{1}{\sqrt{M}}\bigg\{L\mathcal{U}^{\tau}_{2}\sqrt{\frac{2 \ln \frac{\operatorname{Vol}(\mathcal{G}^{\mathbb{D}})}{\delta}}{n}} +  2\sqrt{\frac{ L\mathcal{U}_{\infty}\mathcal{U}^{\tau}_{2}\ln \frac{\operatorname{Vol}(\mathcal{G}^{\mathbb{D}})}{\delta}}{3 n}}  + \sqrt{2\varepsilon^{\mathbb{D}}_{n}} + \sqrt{M}\bigg\}.
\$
where $L$ is the local Lipschitz constant. 
\end{lemma}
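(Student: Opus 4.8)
The plan is to run the argument of Lemma~\ref{reg_finder} in reverse: there one assumes a bound on $\|\tau\|_{L_2(\mu)}$ and controls $\mathbb{P}_n\mathbb{D}(\tau)$, whereas here we are handed $\mathbb{P}_n\mathbb{D}(\tau(s_i,a_i))\le\varepsilon^{\mathbb{D}}_n$ and must invert this into an upper bound on $\|\tau\|_{L_2(\mu)}$. The single ingredient that makes both directions work is the $M$-strong convexity of $\mathbb{D}$ anchored at its unique minimizer $\tau_0\equiv 1$, where $\mathbb{D}(1)=0$ by the $1$-minimum and non-negativity properties of Definition~\ref{alpha_def}.

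First I would pass from the empirical to the population mean of the detection function. By Lemma~\ref{onlyreg_risk_bound}, with probability at least $1-\delta$,
\[
\mathbb{E}_{\mu}[\mathbb{D}(\tau(s,a))] \le \mathbb{P}_n\mathbb{D}(\tau(s_i,a_i)) + \|\mathbb{D}(\tau)\|^{\text{UB}}_{L_2(\mu)}\sqrt{\tfrac{2\ln(\operatorname{Vol}(\mathcal{G}^{\mathbb{D}})/\delta)}{n}} + \tfrac{2\|\mathbb{D}(\tau)\|^{\text{UB}}_{L_\infty}\ln(\operatorname{Vol}(\mathcal{G}^{\mathbb{D}})/\delta)}{3n},
\]
after which I substitute the hypothesis $\mathbb{P}_n\mathbb{D}(\tau(s_i,a_i))\le\varepsilon^{\mathbb{D}}_n$ together with the envelope bounds $\|\mathbb{D}(\tau)\|^{\text{UB}}_{L_2(\mu)}\le L\mathcal{U}^{\tau}_2$ and $\|\mathbb{D}(\tau)\|^{\text{UB}}_{L_\infty}\le L\mathcal{U}_\infty$. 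These last two follow from the local Lipschitzness of the $M$-strongly convex $\mathbb{D}$ (constant $L$) combined with $\sup_\tau\|\tau-1\|_{L_2(\mu)}\le\mathcal{U}^{\tau}_2$, exactly as established inside the proof of Lemma~\ref{reg_finder}.

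Next I would invoke strong convexity to turn control of $\mathbb{E}_\mu[\mathbb{D}(\tau)]$ into control of $\|\tau\|_{L_2(\mu)}$. Since $\mathbb{D}(x)\ge \tfrac{M}{2}(x-1)^2$, taking $\mathbb{E}_\mu$ gives $\|\tau-1\|^2_{L_2(\mu)}\le \tfrac{2}{M}\mathbb{E}_\mu[\mathbb{D}(\tau)]$; this is precisely the inequality~\eqref{g_lip} used in the proof of Lemma~\ref{reg_finder}, which rearranges to $\|\tau\|^2_{L_2(\mu)}\le (2\mathbb{E}_\mu[\mathbb{D}(\tau)]+M)/M$ after using $\|1\|_{L_2(\mu)}=1$. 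Feeding in the concentration bound of the previous step and taking square roots, subadditivity of $\sqrt{\cdot}$ (namely $\sqrt{a+b+c+d}\le\sqrt a+\sqrt b+\sqrt c+\sqrt d$) distributes the four summands: the constant $M$ produces the $\sqrt M$ term, $\varepsilon^{\mathbb{D}}_n$ produces the $\sqrt{2\varepsilon^{\mathbb{D}}_n}$ term, and the $L_2$ and $L_\infty$ covering-number deviations produce the two remaining terms, all scaled by the overall $1/\sqrt M$ factor to yield the stated form.

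The main obstacle is bookkeeping rather than conceptual. One must verify that the envelope bounds $\|\mathbb{D}(\tau)\|^{\text{UB}}_{L_2(\mu)}\le L\mathcal{U}^{\tau}_2$ and its $L_\infty$ analogue hold uniformly over the class (so that $\operatorname{Vol}(\mathcal{G}^{\mathbb{D}})$ is the correct complexity term inherited from Lemma~\ref{onlyreg_risk_bound}), and track the multiplicative constants through the square-root splitting so that the grouping matches $\tfrac{1}{\sqrt M}\{\cdots+\sqrt M\}$. The strong-convexity step itself is the crux, since it is the only place the curvature constant $M$ enters and is what converts control of the \emph{value} of $\mathbb{D}$ into control of the \emph{distance} $\|\tau-1\|_{L_2(\mu)}$, hence of $\|\tau\|_{L_2(\mu)}$.
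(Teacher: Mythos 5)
Your first two steps match the paper exactly: Lemma~\ref{onlyreg_risk_bound} to pass from $\mathbb{P}_n\mathbb{D}(\tau)$ to $\mathbb{E}_\mu[\mathbb{D}(\tau)]$, then the strong-convexity inequality anchored at $\tau_0\equiv 1$ to get $\|\tau\|^2_{L_2(\mu)}\le (2\mathbb{E}_\mu[\mathbb{D}(\tau)]+M)/M$. The gap is in your final step. If you simply take square roots and distribute via subadditivity, the $L_2$-deviation summand $\tfrac{2}{M}\,\|\mathbb{D}(\tau)\|^{\text{UB}}_{L_2(\mu)}\sqrt{2\ln(\operatorname{Vol}(\mathcal{G}^{\mathbb{D}})/\delta)/n}$ contributes a term of the form
\[
\frac{1}{\sqrt{M}}\sqrt{2L\mathcal{U}^{\tau}_{2}}\left(\frac{2\ln(\operatorname{Vol}(\mathcal{G}^{\mathbb{D}})/\delta)}{n}\right)^{1/4},
\]
i.e.\ an $n^{-1/4}$ rate, whereas the lemma claims $\tfrac{1}{\sqrt{M}}\,L\mathcal{U}^{\tau}_{2}\sqrt{2\ln(\operatorname{Vol}(\mathcal{G}^{\mathbb{D}})/\delta)/n}$, an $n^{-1/2}$ rate for that term with no outer square root. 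Since $n^{-1/4}\gg n^{-1/2}$, your route proves only a strictly weaker inequality and does not establish the stated bound. Note that your treatment of the other three summands (the $L_\infty$ deviation, $\varepsilon^{\mathbb{D}}_n$, and the constant $M$) is consistent with the statement — subadditivity reproduces $2\sqrt{L\mathcal{U}_{\infty}\mathcal{U}^{\tau}_{2}\ln(\cdot)/(3n)}$, $\sqrt{2\varepsilon^{\mathbb{D}}_n}$, and $\sqrt{M}$ exactly — so the discrepancy is isolated to the $L_2$-deviation term.

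The paper avoids this by not square-rooting that term at all: it recasts the bound as a quadratic inequality in $x=\|\tau\|_{L_2(\mu)}$ of the form $x^2 - bx - c \le 0$ with $b = L\mathcal{U}^{\tau}_{2}\varepsilon^{\diamond,1}_n/M$ carrying the $L_2$ deviation and $c$ collecting the remaining summands, then bounds the positive root by $b+\sqrt{c}$. This keeps the $b$-term linear, hence at the $n^{-1/2}$ rate, while only the $c$-terms get square-rooted. (The paper's own write-up of this step is loose about signs and implicitly uses $\|\tau\|_{L_2(\mu)}\ge 1$ to absorb the deviation into the linear coefficient, but that is the mechanism you would need to reproduce.) As written, your proposal is missing this idea, and the bound it yields does not imply the lemma.
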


\begin{proof}
To proceed the proof, we first convert the upper bound for $\mathbb{P}_{n}\mathbb{D}(\tau(s_i,a_i))$ to the upper bound for $\mathbb{E}_{\mu}[\mathbb{D}(\tau(s_i,a_i))]$.
According to Lemma \ref{onlyreg_risk_bound}, we have 
\#
\mathbb{E}_{\mu}[\mathbb{D}(\tau(s,a))]  \lesssim \; \mathbb{P}_{n}\mathbb{D}(\tau(s_i,a_i)) + \varepsilon^{\diamond}_{n},
\label{concen_alpha}
\#
where 
\$
 \varepsilon^{\diamond}_{n}  = \|\mathbb{D}(\tau(s,a))\|^{\text{UB}}_{L_2(\mu)}\sqrt{\frac{2\ln\frac{\operatorname{Vol}(\mathcal{G}^{\mathbb{D}})}{\delta}}{n}} + \frac{2\|\mathbb{D}(\tau(s,a))\|^{\text{UB}}_{L_{\infty}}\ln\frac{\operatorname{Vol}(\mathcal{G}^{\mathbb{D}})}{\delta}}{3 n}.
\$
for $\operatorname{Vol}(\mathcal{G}^{\mathbb{D}}) = (e^{D_{\Omega}}(D_{\Omega}+1))(\{1 \vee  L \}\mathcal{U}^{\tau}_{2})^{2D_{\Omega}}$.
% where $\|\mathbb{D}(\tau(s,a))\|^{\text{UB}}_{2, \mu} = \sqrt{\mathbb{E}_{\mu}[\mathbb{D}(\tau(s,a))^2]}$ and $\|\mathbb{D}(\tau(s,a))\|^{\text{UB}}_{L_{\infty}} = \sup_{\{0 \leq \tau \leq  C_{\Omega}\}}\|\mathbb{D}(\tau)\|_{L_{\infty}}$. However, the penalized algorithm is able to control $\mathcal{U}^{\star}_{2}$ and $\|\mathbb{D}(\tau(s,a))\|_{L_{2}(\mu)}$. 
It follows the inequality in \eqref{g_lip} and 
combine with \eqref{concen_alpha}, we have 
$
 \|\tau(s,a)\|^2_{L_2(\mu)}  \leq    \frac{2(\varepsilon^{\mathbb{D}}_{n}+\varepsilon^{\diamond}_{n})+M}{M} 
$
To simplify the notation, we define 
\$
\varepsilon^{\diamond,1}_{n} =  \sqrt{\frac{2 \ln \frac{\operatorname{Vol}(\mathcal{G}^{\mathbb{D}})}{\delta}}{n}}; 
\varepsilon^{\diamond,2}_{n} = \frac{2 \|\mathbb{D}(\tau(s,a))\|^{\text{UB}}_{L_{\infty}} \ln \frac{\operatorname{Vol}(\mathcal{G}^{\mathbb{D}})}{\delta}}{3 n}.
\$
According to the Lipschitz continuity of $\mathbb{D}(\cdot)$ with local Lipschitz constant $L$, we have 
\$
&  \|\tau(s,a)\|^2_{L_2(\mu)}  \leq \frac{ \|\mathbb{D}(\tau(s,a))\|^{\text{UB}}_{L_2(\mu)}\varepsilon^{\diamond,1}_{n} + 2(\varepsilon^{\diamond,2}_{n}+\varepsilon^{\mathbb{D}}_{n})+M}{M} \leq \frac{L\mathcal{U}^{\tau}_{2}\varepsilon^{\diamond,1}_{n} + 2(\varepsilon^{\diamond,2}_{n}+\varepsilon^{\mathbb{D}}_{n})+M}{M}\\
 \implies &  \|\tau(s,a)\|^2_{L_2(\mu)} + \left(\frac{L\mathcal{U}^{\tau}_{2}\varepsilon^{\diamond,1}_{n}}{M}\right)\|\tau(s,a)\|_{L_2(\mu)} - \frac{2}{M}(\varepsilon^{\diamond,2}_{n}+\varepsilon^{\mathbb{D}}_{n}) -\frac{M}{M}  \leq 0. 
\$
Therefore, it suffices to solve the root of 
\$
\|\tau(s,a)\|^2_{L_2(\mu)} + \left(\frac{L\mathcal{U}^{\tau}_{2}\varepsilon^{\diamond,1}_{n}}{M}\right)\|\tau(s,a)\|_{L_2(\mu)} + \frac{2}{M}(\varepsilon^{\diamond,2}_{n}+\varepsilon^{\mathbb{D}}_{n}) + \frac{M}{M}   = 0, 
\$
 and we conclude that 
\$
\sqrt{M}\|\tau(s,a)\|_{L_2(\mu)} \leq & L\mathcal{U}^{\tau}_{2}\varepsilon^{\diamond,1}_{n}+\sqrt{2\varepsilon^{\diamond,2}_{n}}+\sqrt{2\varepsilon^{\mathbb{D}}_{n}} + \sqrt{M}\\
    \leq & L\mathcal{U}^{\tau}_{2}\sqrt{\frac{2 \ln \frac{\operatorname{Vol}(\mathcal{G}^{\mathbb{D}})}{\delta}}{n}} +  2\sqrt{\frac{ L\mathcal{U}_{\infty}\mathcal{U}^{\tau}_{2}\ln \frac{\operatorname{Vol}(\mathcal{G}^{\mathbb{D}})}{\delta}}{3 n}}  + \sqrt{2\varepsilon^{\mathbb{D}}_{n}} + \sqrt{M}.
\$ 
where the last inequality comes from the Lipschitz continuity of $\mathbb{D}(\cdot)$. This completes the proof. 
\end{proof}

\subsection{Proof of Lemma \ref{reg_tau_l2_bound}}

\begin{lemma}
\label{reg_tau_l2_bound}
Define 
\$
\tau_{*}^{k} :=\argmax_{\tau}\bigg\{ {q}^{k}(s^0,\pi^{k}) +\frac{c^{*}}{(1-\gamma)n} \Big| \sum^{n}_{i=1} \tau(s_i,a_i)\left( {q}^{k}(s_i,a_i)-r_i -\gamma  {q}^{k}(s^{\prime}_i,\pi^{k})\right)\Big| - \lambda \xi_{n}\big(\tau (s_i,a_i)\big)\bigg\},
\$
for $\widetilde{q}^{\pi^{k}} := \inf_{q \in \mathcal{Q}} \mathbb{E}_{\mu}\left[\left(q(s,a)-\mathcal{B}^{\pi^{k}} q(s,a)\right)^{2}\right]$. Then for any each iteration $k \in [\bar{K}]$, the maximizer $\tau_{*}^{k}(s_i,a_i)$ at $k$-th iteration satisfies that 
\$
 \mathbb{P}_{n}\mathbb{D}(\tau_{*}^{k}(s_i,a_i)) \leq &  \frac{(1-\gamma)}{\lambda}\bigg(2\bar{V} +  \frac{c^{*}}{1-\gamma}\Bigg\{\big(3 \mathcal{U}^{\tau}_{2}\bar{V}+2\sqrt{2}\lambda \|\mathbb{D}(\tau(s,a))\|^{\text{UB}}_{L_2(\mu)}\big)\sqrt{\frac{2 \ln \frac{\operatorname{Vol}({\Theta}^{\dagger})}{\delta}}{n}}\\
& \qquad \qquad \qquad + \frac{2\big(3  \mathcal{U}^{\tau}_{\infty}\bar{V}+2\lambda\|\mathbb{D}(\tau(s,a))\|^{\text{UB}}_{L_{\infty}}\big) \ln \frac{\operatorname{Vol}({\Theta}^{\dagger})}{\delta}}{3 n} + \mathcal{U}^{\tau}_{\infty}\sqrt{\varepsilon_{\mathcal{Q}}}\bigg),
\$
where $\operatorname{Vol}({\Theta}^{\dagger}) = (e^{D}\max\{D_{\Omega},D_{\mathcal{Q}},D_{\Pi}\}+1)^3(\{1 \vee L\}{\mathcal{U}}^{\tau}_{2})^{2D}$ for $D=D_{\Omega}+D_{\mathcal{Q}}+D_{\Pi}$.
\end{lemma}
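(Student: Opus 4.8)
The plan is to exploit the two optimality relations that define the inner iterate $(q^k,\tau_*^k)$: that $\tau_*^k$ maximizes the objective for the fixed $q^k$ (this is exactly the definition of $\tau_*^k$ in the statement), and that $q^k$ minimizes the same objective for the fixed $\tau_*^k$ (the inner minimization solved in Algorithm \ref{prox_map_detail}, whose saddle pair exists by the strong-duality/Slater argument already used in the proof of Theorem 3.2). Writing $\xi_n(\mathbb{D},\tau)=\frac{1}{(1-\gamma)n}\sum_{i=1}^n\mathbb{D}(\tau(s_i,a_i))=\frac{1}{1-\gamma}\mathbb{P}_n\mathbb{D}(\tau)$ and abbreviating the empirical Bellman sum $A(\tau,q):=\sum_{i=1}^n\tau(s_i,a_i)\big(q(s_i,a_i)-r_i-\gamma q(s_i',\pi^k)\big)$, the inner objective is $\mathcal{L}_n(q,\tau)=q(s^0,\pi^k)+\frac{c^*}{(1-\gamma)n}|A(\tau,q)|-\frac{\lambda}{1-\gamma}\mathbb{P}_n\mathbb{D}(\tau)$, and the target is precisely an upper bound on $\frac{\lambda}{1-\gamma}\mathbb{P}_n\mathbb{D}(\tau_*^k)$.

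First I would use the maximality of $\tau_*^k$ against the reference weight $\tau\equiv 1$, which lies in $\Omega$ and satisfies $\mathbb{D}(1)=0$ by Definition \ref{alpha_def}. From $\mathcal{L}_n(q^k,\tau_*^k)\geq \mathcal{L}_n(q^k,1)$ the value term $q^k(s^0,\pi^k)$ cancels and the penalty at $\tau\equiv 1$ vanishes, so after discarding the nonnegative $|A(1,q^k)|$ this yields the clean bound $\frac{\lambda}{1-\gamma}\mathbb{P}_n\mathbb{D}(\tau_*^k)\leq \frac{c^*}{(1-\gamma)n}|A(\tau_*^k,q^k)|$. It therefore remains to control the empirical weighted Bellman residual $\frac1n|A(\tau_*^k,q^k)|$ of the iterate $q^k$.

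The heart of the argument is to convert this residual of $q^k$ into the residual of the best approximator $\widetilde q^{\pi^k}$, for which Lemma \ref{misspecified_q_alpha} is available. Here I would invoke the minimality of $q^k$ for the fixed $\tau_*^k$: comparing $\mathcal{L}_n(q^k,\tau_*^k)\leq \mathcal{L}_n(\widetilde q^{\pi^k},\tau_*^k)$, the identical penalty term $-\frac{\lambda}{1-\gamma}\mathbb{P}_n\mathbb{D}(\tau_*^k)$ cancels on both sides, leaving $q^k(s^0,\pi^k)+\frac{c^*}{(1-\gamma)n}|A(\tau_*^k,q^k)|\leq \widetilde q^{\pi^k}(s^0,\pi^k)+\frac{c^*}{(1-\gamma)n}|A(\tau_*^k,\widetilde q^{\pi^k})|$. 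Bounding $\widetilde q^{\pi^k}(s^0,\pi^k)-q^k(s^0,\pi^k)\leq 2\bar V$ via the boundedness Assumption \ref{q_bound}, and applying Lemma \ref{misspecified_q_alpha} with $\tau=\tau_*^k\in\Omega$ and $\pi=\pi^k$ to get $\frac1n|A(\tau_*^k,\widetilde q^{\pi^k})|\leq \varepsilon^{\diamond}_n$ (the displayed concentration bound, up to $\sqrt2$-type constants), gives $\frac{c^*}{(1-\gamma)n}|A(\tau_*^k,q^k)|\leq 2\bar V+\frac{c^*}{1-\gamma}\varepsilon^{\diamond}_n$. Chaining this with the previous display and multiplying through by $\frac{1-\gamma}{\lambda}$ produces exactly the claimed inequality.

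The main obstacle I anticipate is justifying the two optimality inequalities simultaneously, i.e. treating $(q^k,\tau_*^k)$ as a genuine saddle pair so that the penalty cancels cleanly in the $q$-comparison while the reference-$\tau$ comparison still bounds that same penalty from above; this is where the minimax/strong-duality structure and the assumption that the inner problem is solved to optimality genuinely enter, and it is the only nonmechanical ingredient. The remainder — checking that Lemma \ref{misspecified_q_alpha} applies uniformly over the relevant classes with complexity $\operatorname{Vol}(\Theta^{\dagger})$, and reconciling the $\sqrt2$ factors and numerical constants between the bound $\varepsilon^{\diamond}_n$ and the braces in the statement — is routine bookkeeping that I would not grind through here.
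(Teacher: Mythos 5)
Your proposal is correct (modulo the saddle-point caveat you yourself flag) and it uses the same four ingredients as the paper's proof: the reference weight $\tau_0\equiv 1$ with $\mathbb{D}(1)=0$, a comparison between $q^{k}$ and the realizability witness $\widetilde q^{\pi^{k}}$, the boundedness $q\in[0,\bar V]$ producing the $2\bar V$ term, and Lemma \ref{misspecified_q_alpha} to control the empirical weighted residual of $\widetilde q^{\pi^{k}}$. The assembly differs, though. The paper runs a single chain of inequalities: it passes from the penalized $\sup_\tau$ objective at $q^{k}$ to the one at $\widetilde q^{\pi^{k}}$, then compares $\tau_*^{k}$ against $\tau_0$ \emph{within the $\widetilde q^{\pi^{k}}$-objective}, lower-bounds everything by $-\bar V$, and extracts $\lambda\xi_n(\tau_*^{k})$ by rearranging. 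You instead make two separate optimality comparisons: $\tau_*^{k}$ versus $\tau_0$ at fixed $q^{k}$ (isolating $\frac{\lambda}{1-\gamma}\mathbb{P}_n\mathbb{D}(\tau_*^{k})\le \frac{c^{*}}{(1-\gamma)n}|A(\tau_*^{k},q^{k})|$), and $q^{k}$ versus $\widetilde q^{\pi^{k}}$ at fixed $\tau_*^{k}$ (so the penalty cancels). The second step is where the routes genuinely diverge: you need $q^{k}=\argmin_q\mathcal{L}_n(q,\tau_*^{k})$ for the specific $\tau_*^{k}$ of the statement, i.e.\ that $(q^{k},\tau_*^{k})$ is a bona fide saddle pair, whereas the paper's other arguments only ever invoke $q^{k}=\argmin_q\sup_\tau\mathcal{L}_n(q,\tau)$. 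This is load-bearing rather than cosmetic: the weaker min-max fact alone would only bound the \emph{difference} $\frac{c^{*}}{(1-\gamma)n}|A(\tau_*^{k},q^{k})|-\frac{\lambda}{1-\gamma}\mathbb{P}_n\mathbb{D}(\tau_*^{k})$, which combined with your first step yields nothing. Since the paper explicitly treats the solution of the adversarial loss as a saddle point $(\psi^{*},\theta^{*})$, your route is admissible under its standing assumptions and is arguably cleaner; what the paper's ordering buys is that it never needs the fixed-$\tau_*^{k}$ minimality of $q^{k}$, only comparisons inside the $\sup_\tau$-penalized objective.
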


\begin{proof}
To proceed the proof, we first define a constant 
$
\tau_{0}(s,a) := 1 \,\text{for any} \, s,a
$, and we observe that 
\$
& {q}^{k}(s^0,\pi^{k}) + \sup_{\tau}\bigg\{\frac{c^{*}}{(1-\gamma)n} \Big| \sum^{n}_{i=1} \tau(s_i,a_i)\left({q}^{k}(s_i,a_i)-r_i -\gamma {q}^{k}(s^{\prime}_i,\pi^{k})\right)\Big|\bigg\} -\lambda\xi_{n}\big(\tau_{*}^{k}(s_i,a_i)\big) \\
\geq & {q}^{k}(s^0,\pi^{k}) + \sup_{\tau}\bigg\{\frac{c^{*}}{(1-\gamma)n} \Big| \sum^{n}_{i=1} \tau(s_i,a_i)\left( {q}^{k}(s_i,a_i)-r_i -\gamma  {q}^{k}(s^{\prime}_i,\pi^{k})\right)\Big|-\lambda\xi_{n}\big(\tau(s_i,a_i)\big) \bigg\} \\ 
\geq & \widetilde{q}^{\pi^k}(s^0,\pi^{k}) + \sup_{\tau}\bigg\{\frac{c^{*}}{(1-\gamma)n} \Big| \sum^{n}_{i=1} \tau(s_i,a_i)\left( \widetilde{q}^{\pi^k}(s_i,a_i)-r_i -\gamma  \widetilde{q}^{\pi^k}(s^{\prime}_i,\pi^{k})\right)\Big|-\lambda\xi_{n}\big(\tau(s_i,a_i)\big) \bigg\} \\ 
= &  \widetilde{q}^{\pi^k}(s^0,\pi^{k}) + \frac{c^{*}}{(1-\gamma)n} \Big| \sum^{n}_{i=1} \tau_{*}^{k}(s_i,a_i)\left( \widetilde{q}^{\pi^k}(s_i,a_i)-r_i -\gamma  \widetilde{q}^{\pi^k}(s^{\prime}_i,\pi^{k})\right)\Big| -\lambda\xi_{n}\big(\tau_{*}^{k}(s_i,a_i)\big) \\
\geq & \widetilde{q}^{\pi^k}(s^0,\pi^{k}) + \frac{c^{*}}{(1-\gamma)n} \Big| \sum^{n}_{i=1} \tau_{0}(s_i,a_i)\left( \widetilde{q}^{\pi^k}(s_i,a_i)-r_i -\gamma  \widetilde{q}^{\pi^k}(s^{\prime}_i,\pi^{k})\right)\Big| -\lambda\xi_{n}\big(\tau_{0}(s_i,a_i)\big) \\ 
= & \widetilde{q}^{\pi^k}(s^0,\pi^{k}) + \frac{c^{*}}{(1-\gamma)n} \Big| \sum^{n}_{i=1}\left( \widetilde{q}^{\pi^k}(s_i,a_i)-r_i -\gamma  \widetilde{q}^{\pi^k}(s^{\prime}_i,\pi^{k})\right)\Big| \geq -\bar{V} ,
\$
where the last inequality comes from the boundedness condition on $ \widetilde{q}^{\pi^k}$, and the non-negativity of the second term. Based on this, we further have 
\$
& \lambda\xi_{n}\big(\tau_{*}^{k} (s_i,a_i)\big) - \widetilde{q}^{\pi^k}(s^0,\pi^{k}) - \sup_{\tau}\bigg\{\frac{c^{*}}{(1-\gamma)n} \Big| \sum^{n}_{i=1} \tau(s_i,a_i)\left(\widetilde{q}^{\pi^k}(s_i,a_i)-r_i -\gamma \widetilde{q}^{\pi^k}(s^{\prime}_i,\pi^{k})\right)\Big|\bigg\}\leq \bar{V} ,
\$
which directly implies with Lemma \ref{misspecified_q_alpha}, 
\$
 \xi_{n}\big(\tau_{*}^{k} (s_i,a_i)\big) 
\leq & \frac{1}{\lambda}\bigg(2\bar{V} +  \frac{c^{*}}{1-\gamma}\Bigg\{\big(3 \mathcal{U}^{\tau}_{2}\bar{V}+2\sqrt{2}\lambda \|\mathbb{D}(\tau(s,a))\|^{\text{UB}}_{L_2(\mu)}\big)\sqrt{\frac{2 \ln \frac{\operatorname{Vol}({\Theta}^{\dagger})}{\delta}}{n}}\\
& \qquad \qquad \qquad + \frac{2\big(3  \mathcal{U}^{\tau}_{\infty}\bar{V}+2\lambda\|\mathbb{D}(\tau(s,a))\|^{\text{UB}}_{L_{\infty}}\big) \ln \frac{\operatorname{Vol}({\Theta}^{\dagger})}{\delta}}{3 n} + \mathcal{U}^{\tau}_{\infty}\sqrt{\varepsilon_{\mathcal{Q}}}\bigg).
\$
\end{proof}

\subsection{Proof of Lemma \ref{pik_return_lemma}}

\begin{lemma}
\label{pik_return_lemma}
For $k \in [\bar{K}]$, the following inequality holds, w.p. $\geq 1-\delta$, 
\$
& \frac{1}{\bar{K}}\sum^{\bar{K}}_{k=1}\left\{J(\pi^{k};\left\{\mathcal{S}, \mathcal{A}, \mathds{P}_{k}, \gamma, r_{k}, s^{0}\right\}) - J(\pi^{k})\right\} \\ 
\leq &\frac{\sqrt{\varepsilon_{\mathcal{Q}}}}{1-\gamma} +  \frac{c^{*}}{1-\gamma}\Bigg\{\big(3\mathcal{U}^{\star}_{2}\bar{V}+2\lambda \|\mathbb{D}(\tau(s,a))\|^{\text{prime}}_{L_2(\mu)}\big)\sqrt{\frac{2 \ln \frac{\operatorname{Vol}({\Theta}^{\dagger})}{\delta}}{n}} \notag\\
 &+ \frac{2\big(3 \mathcal{U}_{\infty}\mathcal{U}^{\star}_{2}\bar{V}+2\lambda \|\mathbb{D}(\tau(s,a))\|^{\text{prime}}_{L_\infty}\big) \ln \frac{\operatorname{Vol}({\Theta}^{\dagger})}{\delta}}{3 n} +  \mathcal{U}_{\infty}\mathcal{U}^{\star}_{2}\sqrt{\varepsilon_{\mathcal{Q}}}\Bigg\}+  \mathcal{O}\left(\lambda\mathcal{U}_{\infty}\mathcal{U}^{\star}_{2}\frac{\sqrt{\varepsilon_{\mathcal{Q}}}}{1-\gamma} \right) + \mathcal{O}(\frac{1}{\sqrt{n}}).
\$
where $\mathcal{U}^{\star}_{2} \in [1,\mathcal{U}^{\tau}_{2})$, can be choose via controlling $\lambda$ and $c^{*}$, and  $\|\mathbb{D}(\tau(s,a))\|^{\text{prime}}_{L_2(\mu)} = \sup_{\tau: \|\tau(s,a)\|_{L_{2}(\mu)} \leq \mathcal{U}^{\star}_{2}} \|\mathbb{D}(\tau(s,a))\|_{L_2(\mu)}$, and $\|\mathbb{D}(\tau(s,a))\|^{\text{prime}}_{L_\infty} = \sup_{\tau: \|\tau(s,a)\|_{L_{2}(\mu)} \leq \mathcal{U}^{\star}_{2}} \|\mathbb{D}(\tau(s,a))\|_{L_{\infty}}$, and $\operatorname{Vol}({\Theta}^{\dagger}) = (e^{D}\max\{D_{\Omega},D_{\mathcal{Q}},D_{\Pi}\}+1)^3(\{1 \vee L\}{\mathcal{U}}^{\tau}_{2})^{2D}$ for $D=D_{\Omega}+D_{\mathcal{Q}}+D_{\Pi}$.
\end{lemma}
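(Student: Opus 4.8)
The plan is to reduce the per-iteration value discrepancy to a visitation-weighted Bellman residual of $q^k$, transport that residual from the iterate's own occupancy $d_{\pi^k}$ onto the offline distribution $\mu$, and finally control the resulting $\mu$-weighted Bellman error through the optimality of the adversarial min--max update, before averaging over $k$.

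First I would invoke Lemma \ref{iden_mdp}: since $q^k$ is exactly the true action-value function of $\pi^k$ in the imagined MDP $\{\mathcal{S},\mathcal{A},\mathds{P}_k,\gamma,r_k,s^0\}$ and $\mathds{P}_k=\mathds{P}$, both the imagined and the true MDP share the occupancy $d_{\pi^k}$. Applying the evaluation error Lemma \ref{eval_error} in the true MDP to the function $q^k$, together with $J(\pi^k;\{r_k\})=q^k(s^0,\pi^k)$, yields
\[
J(\pi^k;\{r_k\})-J(\pi^k)=\frac{1}{1-\gamma}\,\mathbb{E}_{d_{\pi^k}}\big[-\Delta(q^k,\pi^k)\big],\qquad \Delta(q^k,\pi^k):=r+\gamma q^k(s',\pi^k)-q^k(s,a).
\]
Hence it suffices to upper bound the negative visitation-weighted Bellman residual $\mathbb{E}_{d_{\pi^k}}[-\Delta(q^k,\pi^k)]$ uniformly in $k$ and then average. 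Next I would transport this expectation onto $\mu$: on the support of $\mu$ I write $\mathbb{E}_{d_{\pi^k}}[\,\cdot\,]=\mathbb{E}_{\mu}[(d_{\pi^k}/\mu)\,\cdot\,]$ and use the penalty design to keep the effective importance weight bounded in $L_2(\mu)$. Concretely, Lemma \ref{prime_bound_tau} combined with the chosen $\lambda,c^*$ (so that the maximizing weight $\tau^k_*$ of Lemma \ref{reg_tau_l2_bound} satisfies $\|\tau^k_*\|_{L_2(\mu)}\le\mathcal{U}^\star_2\in[1,\mathcal{U}^\tau_2)$) produces the concentrability coefficient $\mathcal{U}^\star_2$ in the leading term rather than the cruder $\mathcal{U}^\tau_2$; a Cauchy--Schwarz step, or equivalently the variational identity of Lemma \ref{L_2MMD}, then bounds $\mathbb{E}_{\mu}[(d_{\pi^k}/\mu)(-\Delta)]$ by $\mathcal{U}^\star_2$ times the supremal weighted Bellman error of $q^k$ over the admissible importance-weight class.

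The core estimate is the $\mu$-weighted Bellman error of $q^k$, which I would control through the saddle structure of the update $\max_{\tau}\min_{q}\mathcal{L}(q,\tau,\pi^k,c^*,\lambda)$. Comparing the objective at $(q^k,\tau^k)$ against the near-realizable comparator $\widetilde{q}^{\pi^k}:=\inf_{q}\sup_{\rho}\mathbb{E}_{\rho}[(q-\mathcal{B}^{\pi^k}q)^2]$ and using that $q^k$ is the inner minimizer, the $\tau^k$-weighted empirical Bellman error of $q^k$ is dominated by that of $\widetilde{q}^{\pi^k}$ plus the $\mathbb{D}$-penalty gap; Lemmas \ref{misspecified_q_alpha} and \ref{alpha_true} bound the former by $\mathcal{U}^\tau_\infty\sqrt{\varepsilon_{\mathcal{Q}}}$ plus statistical fluctuation, while the penalty contributes the $\lambda\mathcal{U}_\infty\mathcal{U}^\star_2\sqrt{\varepsilon_{\mathcal{Q}}}$ term. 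Converting empirical to population via the uniform concentration Lemmas \ref{risk_bound_lm} and \ref{risk_bound_alpha} generates the $\sqrt{\ln(\operatorname{Vol}({\Theta}^{\dagger})/\delta)/n}$ and $\ln(\operatorname{Vol}({\Theta}^{\dagger})/\delta)/n$ rates carrying the stated $c^*/(1-\gamma)$ prefactor. Summing over $k\in[\bar{K}]$ and dividing by $\bar{K}$ assembles the claimed inequality, with the remaining lower-order fluctuations absorbed into $\mathcal{O}(1/\sqrt{n})$.

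The main obstacle will be twofold. First, the adversarial objective carries an absolute value on the weighted Bellman error, so extracting a one-sided (signed) control of $\mathbb{E}_{d_{\pi^k}}[-\Delta(q^k,\pi^k)]$ — rather than only of its magnitude — requires care in selecting the sign of the test weight and in tracking how the $\max_{\tau}$ step interacts with the penalty. Second, ensuring that the leading coefficient is the smaller $\mathcal{U}^\star_2$ rather than $\mathcal{U}^\tau_2$ hinges on the $M$-strong convexity of $\mathbb{D}$ through Lemma \ref{prime_bound_tau}, and on propagating the realizability error $\varepsilon_{\mathcal{Q}}$ with the correct powers simultaneously through the comparator argument and through the penalty contribution.
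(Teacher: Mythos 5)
Your opening identity is fine, but the second step of your plan contains a genuine gap: you propose to transport $\mathbb{E}_{d_{\pi^k}}[-\Delta(q^k,\pi^k)]$ onto $\mu$ by writing $\mathbb{E}_{d_{\pi^k}}[\cdot]=\mathbb{E}_{\mu}[(d_{\pi^k}/\mu)\,\cdot\,]$ and then controlling $d_{\pi^k}/\mu$ in $L_2(\mu)$ by $\mathcal{U}^{\star}_{2}$. Nothing in the hypotheses gives you this. The constant $\mathcal{U}^{\star}_{2}$ bounds the $L_2(\mu)$ norm of the importance weights $\tau$ in the (penalty-restricted) function class, not the actual density ratio of the learned iterates; the iterates $\pi^k$ may place mass where $\mu=0$, in which case the change-of-measure identity itself fails and the off-support part of $\mathbb{E}_{d_{\pi^k}}[-\Delta(q^k,\pi^k)]$ is simply dropped. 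Requiring $\|d_{\pi^k}/\mu\|_{L_2(\mu)}\le\mathcal{U}^{\star}_{2}$ amounts to a coverage assumption on the algorithm's own iterates, which is exactly the kind of condition this lemma is designed to avoid, and which cannot be verified for the output of the adversarial update.

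The paper's proof sidesteps the transport entirely by reordering your two main steps. It does \emph{not} apply the evaluation-error lemma to $q^k$. Instead it first uses that $q^k$ is the inner minimizer of the penalized objective to dominate $q^{k}(s^0,\pi^{k})+\sup_{\tau}\{c^{*}|\cdots q^{k}\cdots|-\lambda\xi_n(\tau)\}$ by the same expression evaluated at the realizability witness $\widetilde{q}^{\pi^{k}}$ (after adding and subtracting $\lambda\xi_n(\widetilde{\tau}^{k})$), and only \emph{then} applies Lemma \ref{eval_error} — to $\widetilde{q}^{\pi^{k}}$, not to $q^k$. The resulting term $\mathbb{E}_{d_{\pi^k}}[\Delta(\widetilde{q}^{\pi^k},\pi^k)]$ is controlled directly by Assumption \ref{reliable_assum}, because the realizability condition is stated with a supremum over all admissible occupancies $\rho\in\{d_{\widetilde{\pi}}:\widetilde{\pi}\in\Pi\}$, which includes $d_{\pi^k}$; this yields the $\sqrt{\varepsilon_{\mathcal{Q}}}/(1-\gamma)$ term with no density ratio appearing. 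The remaining pieces — the empirical weighted Bellman error of $\widetilde{q}^{\pi^k}$ (Lemma \ref{misspecified_q_alpha} plus the uniform concentration lemmas, giving the $c^{*}$-scaled terms) and the penalty value $\lambda\xi_n(\widetilde{\tau}^{k})$ (giving the $\mathcal{O}(\lambda\mathcal{U}_{\infty}\mathcal{U}^{\star}_{2}\sqrt{\varepsilon_{\mathcal{Q}}}/(1-\gamma))$ term) — are where $\mathcal{U}^{\star}_{2}$ legitimately enters, via Lemma \ref{prime_bound_tau} applied to the \emph{weight class}. Your concern about the absolute value and the one-sided control is also resolved by this ordering: the supremum term is nonnegative, so adding it only weakens the upper bound on $q^k(s^0,\pi^k)-J(\pi^k)$, and no sign-selection of a test weight is needed. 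To repair your argument you would need to replace the transport step with this comparator swap.
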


\begin{proof}
To facilitate the proof, we first define some useful optimizers as follows:
\$
\widetilde{q}^{\pi^{k}} := & \inf_{q \in \mathcal{Q}} \mathbb{E}_{\mu}\left[\left(q(s,a)-\mathcal{B}^{\pi^{k}} q(s,a)\right)^{2}\right] \\
\tau_{*}^{k} :=&\argmax_{\tau}\bigg\{ q^{k}(s^0,\pi^{k}) +\frac{c^{*}}{(1-\gamma)n} \Big| \sum^{n}_{i=1} \tau(s_i,a_i)\left(q^{k}(s_i,a_i)-r_i -\gamma q^{k}(s^{\prime}_i,\pi^{k})\right)\Big| - \lambda\xi_{n}\big(\tau(s_i,a_i)\big)\bigg\} \\
\widetilde{\tau}^{k} :=&\argmax_{\tau}\bigg\{ \widetilde{q}^{\pi^{k}}(s^0,\pi^{k}) +\frac{c^{*}}{(1-\gamma)} \mathbb{E}_{\mu}\big[ \tau(s,a)\big( \widetilde{q}^{\pi^{k}}(s,a)-r_i -\gamma\widetilde{q}^{\pi^{k}}(s^{\prime},\pi^{k})\big)\big] - \lambda\mathbb{E}_{\mu}[\mathbb{D}(\tau (s,a))]\bigg\} \\ 
\tau_{\star}^{k} : =& \argmax_{\tau} \Bigg\{\min_{q}\bigg\{q(s^0,\pi^{k}) + \bigg\{\frac{c^{*}}{(1-\gamma)n} \Big| \sum^{n}_{i=1} \tau_{\psi}(s_i,a_i)\left(q(s_i,a_i)-r_i -\gamma q(s^{\prime}_i,\pi^{k})\right)\Big| - \lambda\xi_{n}\big(\tau(s_i,a_i)\big)\bigg\}\bigg\} \Bigg\}
\$
According to Lemma \ref{iden_mdp}, $q^{k}$ is the true action-value function with respect to the MDP $\left\{\mathcal{S}, \mathcal{A}, \mathds{P}_{k}, \gamma, r_{k}, s^{0}\right\}$, therefore we have, 
\$
& J(\pi^{k};\left\{\mathcal{S}, \mathcal{A}, \mathds{P}_{k}, \gamma, r_{k}, s^{0}\right\}) - J(\pi^{k})= q^{k}(s^0,\pi^{k}) - J(\pi^{k}). 
\$
Based on this observation, it suffices to upper-bound the following term, 
\#
 & q^{k}(s^0,\pi^{k}) - J(\pi^{k})\notag  \\
\leq & q^{k}(s^0,\pi^{k}) + \sup_{\tau}\frac{c^{*}}{(1-\gamma)n} \Big| \sum^{n}_{i=1}  \tau(s_i,a_i)\left(q^{k}(s_i,a_i)-r_i -\gamma q^{k}(s^{\prime}_i,\pi^{k})\right)\Big| - \lambda\xi_{n}\big(\widetilde{\tau}^{k}(s_i,a_i)\big) \notag\\
& + \lambda\xi_{n}\big(\widetilde{\tau}^{k}(s_i,a_i)\big)  - J(\pi^{k})\notag \\
\leq & q^{k}(s^0,\pi^{k}) + \sup_{\tau}\bigg\{\frac{c^{*}}{(1-\gamma)n} \Big| \sum^{n}_{i=1}  \tau(s_i,a_i)\left(q^{k}(s_i,a_i)-r_i -\gamma q^{k}(s^{\prime}_i,\pi^{k})\right)\Big| - \lambda\xi_{n}\big(\tau(s_i,a_i)\big)\bigg\} \notag \\
& + \lambda\xi_{n}\big(\widetilde{\tau}^{k}(s_i,a_i)\big)  - J(\pi^{k}),
\label{reg_thetak}
\#
Follow the definition of $q^{k}$ which is the minimizer of \$
q(s^0,\pi^{k}) + \sup_{\psi}\bigg\{\frac{c^{*}}{(1-\gamma)n} \Big| \sum^{n}_{i=1} \tau(s_i,a_i)\left(q(s_i,a_i)-r_i -\gamma q(s^{\prime}_i,\pi^{k})\right)\Big| - \lambda\xi_{n}\big(\tau_{\psi}(s_i,a_i)\big)\bigg\}.
\$
Therefore, for $\widetilde{q}^{\pi^{k}}$ in function class, we have 
\$
& q^{k}(s^0,\pi^{k}) + \sup_{\tau}\bigg\{\frac{c^{*}}{(1-\gamma)n} \Big| \sum^{n}_{i=1} \tau(s_i,a_i)\left(q^{k}(s_i,a_i)-r_i -\gamma q^{k}(s^{\prime}_i,\pi^{k})\right)\Big| - \lambda\xi_{n}\big(\tau(s_i,a_i)\big)\bigg\} \\
 =& \min_{q}\bigg\{q(s^0,\pi^{k}) + \sup_{\tau}\bigg\{\frac{c^{*}}{(1-\gamma)n} \Big| \sum^{n}_{i=1} \tau_{\psi}(s_i,a_i)\left(q(s_i,a_i)-r_i -\gamma q(s^{\prime}_i,\pi^{k})\right)\Big| - \lambda\xi_{n}\big(\tau(s_i,a_i)\big)\bigg\}\bigg\}\\
  \leq & \widetilde{q}^{\pi^{k}}(s^0,\pi^{k}) + \frac{c^{*}}{(1-\gamma)n} \Big| \sum^{n}_{i=1}  \tau_{\star}^{k} (s_i,a_i)\left(\widetilde{q}^{\pi^{k}}(s_i,a_i)-r_i -\gamma \widetilde{q}^{\pi^{k}}(s^{\prime}_i,\pi^{k})\right)\Big| - \lambda\xi_{n}\big(\tau_{\star}^{k} (s_i,a_i)\big)\\
 \leq & \widetilde{q}^{\pi^{k}}(s^0,\pi^{k}) + \sup_{\tau}\bigg\{\frac{c^{*}}{(1-\gamma)n} \Big| \sum^{n}_{i=1}  \tau(s_i,a_i)\left(\widetilde{q}^{\pi^{k}}(s_i,a_i)-r_i -\gamma \widetilde{q}^{\pi^{k}}(s^{\prime}_i,\pi^{k})\right)\Big| - \lambda\xi_{n}\big(\tau(s_i,a_i)\big)\bigg\}
\$
By this, we have \eqref{reg_thetak} is upper bounded by 
\$
& \widetilde{q}^{\pi^{k}}(s^0,\pi^{k}) + \sup_{\tau}\bigg\{\frac{c^{*}}{(1-\gamma)n} \Big| \sum^{n}_{i=1}  \tau(s_i,a_i)\left(\widetilde{q}^{\pi^{k}}(s_i,a_i)-r_i -\gamma \widetilde{q}^{\pi^{k}}(s^{\prime}_i,\pi^{k})\right)\Big| - \lambda\xi_{n}\big(\tau(s_i,a_i)\big)\bigg\} \\
& + \lambda\xi_{n}\big(\widetilde{\tau}^{k}(s_i,a_i)\big)  - J(\pi^{k})\\
\leq & \widetilde{q}^{\pi^{k}}(s^0,\pi^{k}) + \sup_{\tau}\bigg\{\frac{c^{*}}{(1-\gamma)n} \Big| \sum^{n}_{i=1} \tau_{\psi}(s_i,a_i)\left(\widetilde{q}^{\pi^{k}}(s_i,a_i)-r_i -\gamma \widetilde{q}^{\pi^{k}}(s^{\prime}_i,\pi^{k})\right)\Big| \bigg\} \\
& + \lambda\xi_{n}\big(\widetilde{\tau}^{k}(s_i,a_i)\big)  - J(\pi^{k})\\
= &  \underbrace{\widetilde{q}^{\pi^{k}}(s^0,\pi^{k}) - J(\pi^{k})}_{\Delta_1} +\underbrace{ \sup_{\tau}\bigg\{\frac{c^{*}}{(1-\gamma)n} \Big| \sum^{n}_{i=1} \tau(s_i,a_i)\left(\widetilde{q}^{\pi^{k}}(s_i,a_i)-r_i -\gamma \widetilde{q}^{\pi^{k}}(s^{\prime}_i,\pi^{k})\right)\Big| \bigg\}}_{\Delta_2} \\
& + \underbrace{ \lambda\xi_{n}\big(\widetilde{\tau}^{k}(s_i,a_i)\big)}_{\Delta_3}.
\$
\textbf{Bounding  $\Delta_1$.} According to Lemma \ref{eval_error}, we have 
\$
\Delta_1 =&  \frac{\mathbb{E}_{d^{\pi^{k}}}[\widetilde{q}^{\pi^k}(s,a)-r(s,a) -\gamma \widetilde{q}^{\pi^k}(s^{\prime},\pi^{k})]}{1-\gamma}
\leq & \frac{\|\widetilde{q}^{\pi^k}(s,a)-r(s,a) -\gamma \widetilde{q}^{\pi^k}(s^{\prime},\pi^{k})\|_{L(d^{\pi^{k}}})}{1-\gamma}.
\$
As $\pi^{k} \in \Pi$ and thus $d^{\pi^{k}}$ is an admissible data distribution, it follows Assumption 1 in maintext, it follows 
\$
\frac{\|\widetilde{q}^{\pi^k}(s,a)-r(s,a) -\gamma \widetilde{q}^{\pi^k}(s^{\prime},\pi^{k})\|_{L(d^{\pi^{k}}})}{1-\gamma} \leq \frac{\sqrt{\varepsilon_{\mathcal{Q}}}}{1-\gamma}.
\$
Thus, we have $\Delta_1 \leq \frac{\sqrt{\varepsilon_{\mathcal{Q}}}}{1-\gamma} $. 

\textbf{Bounding  $\Delta_2$.} According to Lemma \ref{misspecified_q_alpha} and \ref{prime_bound_tau}, with a proper choice of $\lambda$ and $c^{*}$ in the proof of Theorem 5.1, we can have a well-controlled uncertainty concentrability coefficient$\mathcal{U}^{\star}_{2}$. Replacing $\pi$ with $\pi^{k}$, we then have 
\$
 & \sup_{\tau}\bigg\{\frac{c^{*}}{(1-\gamma)n} \Big| \sum^{n}_{i=1} \tau(s_i,a_i)\left(\widetilde{q}^{\pi^{k}}(s_i,a_i)-r_i -\gamma \widetilde{q}^{\pi^{k}}(s^{\prime}_i,\pi^{k})\right)\Big| \bigg\}\\
 \leq & 
\frac{c^{*}}{1-\gamma}\Bigg\{\big(3\sqrt{2}\mathcal{U}^{\star}_{2}\bar{V}+2\sqrt{2}\lambda \|\mathbb{D}(\tau(s,a))\|^{\text{prime}}_{L_2(\mu)}\big)\sqrt{\frac{\ln\frac{\operatorname{Vol}({\Theta}^{\dagger})}{\delta}}{n}} \notag  \\
 & + \frac{\big(6\mathcal{U}^{\star}_{\infty}\bar{V}+4\lambda\|\mathbb{D}(\tau(s,a))\|^{\text{prime}}_{L_{\infty}}\big)\ln\frac{\operatorname{Vol}({\Theta}^{\dagger})}{\delta}}{3 n} 
+ \mathcal{U}^{\star}_{\infty}\sqrt{\varepsilon_{\mathcal{Q}}}\Bigg\}.
\$
where $\mathcal{U}^{\star}_{2} \in [1,\mathcal{U}^{\tau}_{2})$ and $\|\mathbb{D}(\tau(s,a))\|^{\text{prime}}_{L_2(\mu)} = \sup_{\tau: \|\tau(s,a)\|_{L_{2}(\mu)} \leq \mathcal{U}^{\star}_{2}} \|\mathbb{D}(\tau(s,a))\|_{L_2(\mu)}$, and $\|\mathbb{D}(\tau(s,a))\|^{\text{prime}}_{L_\infty} = \sup_{\tau: \|\tau(s,a)\|_{L_{2}(\mu)} \leq \mathcal{U}^{\star}_{2}} \|\mathbb{D}(\tau(s,a))\|_{L_{\infty}}$.

\textbf{Bounding $\Delta_3$.} Follow a similar argument in Proposition 1 of \citep{lee2021optidice} and the definition of $\widetilde{\tau}^{k}$, by some algebra, we have, for any $(s,a)$, 
\$
\widetilde{\tau}^{k}(s,a) = \left[\left(\mathbb{D}^{\prime}\right)^{-1}\left(\frac{c^{*}\big(r(s,a) +\gamma  \widetilde{q}^{\pi^k}(s^{\prime},\pi^k) - \widetilde{q}^{\pi^k}(s,a)\big)}{\lambda(1-\gamma)}\right)\right]^{+}.
\$
It follows from Lemma \ref{alpha_true}, for any $(s,a)$ such that $\mu(s,a) >0$, we have 
\$
& |r(s,a) +\gamma  \widetilde{q}^{\pi^k}(s^{\prime},\pi^k) - \widetilde{q}^{\pi^k}(s,a)| \\
\lesssim &  \big(\bar{V}+\lambda L\big)\sqrt{\frac{2 \ln \frac{\operatorname{Vol}({\Theta}^{\dagger})}{\delta}}{n}} + \frac{\mathcal{U}_{\infty}\big( \bar{V}+L\lambda\big) \ln \frac{\operatorname{Vol}({\Theta}^{\dagger})}{\delta}}{3 n} + \mathcal{U}_{\infty}\sqrt{\varepsilon_{\mathcal{Q}}} :=  \varepsilon^{\widetilde{q}^{\pi^k}}_n.
\$
And thus for any $(s_i,a_i)$ where $i=1,...,n$, we have 
\$
& \frac{1}{1-\gamma}c^{*}\big(r(s_i,a_i) +\gamma  \widetilde{q}^{\pi^k}(s^{\prime}_i,\pi^k) - \widetilde{q}^{\pi^k}(s_i,a_i)\big)
\lesssim \frac{1}{1-\gamma}c^{*} \varepsilon^{\widetilde{q}^{\pi^k}}_n.
\$
This directly leads to 
\$
(1-\gamma)\lambda\xi_{n}\big(\widetilde{\tau}^{k}(s_i,a_i)\big) \leq  &\frac{\lambda}{n}\sum^{n}_{i=1}\mathbb{D}\Bigg(\left[\left(\mathbb{D}^{\prime}\right)^{-1}\left(\frac{c^{*}\varepsilon^{\widetilde{q}^{\pi^k}}_n}{\lambda(1-\gamma)}\right)\right]^{+}\Bigg) \leq \lambda\mathbb{D}\Bigg(\left[\left(\mathbb{D}^{\prime}\right)^{-1}\left(\frac{c^{*}\varepsilon^{\widetilde{q}^{\pi^k}}_n}{\lambda(1-\gamma)}\right)\right]^{+}\Bigg).
\$
We note that for sufficient large $n$, we have $c^{*}\varepsilon^{\widetilde{q}^{\pi^k}}_n \rightarrow c^{*}\mathcal{U}_{\infty}\mathcal{U}^{\star}_{2}\sqrt{\varepsilon_{\mathcal{Q}}}$ and thus
\$
\left[\left(\mathbb{D}^{\prime}\right)^{-1}\left(\frac{c^{*}\varepsilon^{\widetilde{q}^{\pi^k}}_n}{\lambda(1-\gamma)}\right)\right]^{+} \stackrel{n \uparrow \infty}{\longrightarrow} \left[\left(\mathbb{D}^{\prime}\right)^{-1}(\frac{c^{*}\mathcal{U}_{\infty}\mathcal{U}^{\star}_{2}\sqrt{\varepsilon_{\mathcal{Q}}}}{\lambda(1-\gamma)}) \right]^{+}.
\$
As $\mathbb{D}^{\prime}$ is monotonic increasing, and since 
$
\frac{c^{*}\mathcal{U}_{\infty}\mathcal{U}^{\star}_{2}\sqrt{\varepsilon_{\mathcal{Q}}}}{\lambda(1-\gamma)} \geq 0 ,
$,
so 
$
\left(\mathbb{D}^{\prime}\right)^{-1}(\frac{c^{*}\mathcal{U}_{\infty}\mathcal{U}^{\star}_{2}\sqrt{\varepsilon_{\mathcal{Q}}}}{\lambda(1-\gamma)}) \geq 1,
$
and according the property of divergence function $\mathbb{D}^{\prime}(1) = 0$ thus 
\$
\left[\left(\mathbb{D}^{\prime}\right)^{-1}(\frac{c^{*}\mathcal{U}_{\infty}\mathcal{U}^{\star}_{2}\sqrt{\varepsilon_{\mathcal{Q}}}}{\lambda(1-\gamma)})\right]^{+} = \left(\mathbb{D}^{\prime}\right)^{-1}(\frac{c^{*}\mathcal{U}_{\infty}\mathcal{U}^{\star}_{2}\sqrt{\varepsilon_{\mathcal{Q}}}}{\lambda(1-\gamma)}),
\$
for sure. This immediately implies that 
\$
\mathbb{D}\Bigg(\left(\mathbb{D}^{\prime}\right)^{-1}\left(\frac{c^{*}\varepsilon^{\widetilde{q}^{\pi^k}}_n}{\lambda(1-\gamma)}\right)\Bigg) \stackrel{n \uparrow \infty}{\longrightarrow} \mathbb{D}\left(\left(\mathbb{D}^{\prime}\right)^{-1}(\frac{c^{*}\mathcal{U}_{\infty}\mathcal{U}^{\star}_{2}\sqrt{\varepsilon_{\mathcal{Q}}}}{\lambda(1-\gamma)}) \right) .
\$
 Therefore we conclude that 
\$
\Delta_3 \lesssim \lambda \mathbb{D}\left(\left(\mathbb{D}^{\prime}\right)^{-1}(\frac{c^{*}\mathcal{U}_{\infty}\mathcal{U}^{\star}_{2}\sqrt{\varepsilon_{\mathcal{Q}}}}{\lambda(1-\gamma)}) \right)
+ \mathcal{O}(\frac{1}{\sqrt{n}}).
\$
For sufficient small realizibility error $\sqrt{\varepsilon_{\mathcal{Q}}}$, and $c^{*} \asymp \lambda$, for bounded $\mathcal{U}_{\infty}$ and $\mathcal{U}^{\star}_{2}$, and also by the monotonicity of $\left(\mathbb{D}^{\prime}\right)^{-1}(\cdot)$, we have 
\$
\left(\mathbb{D}^{\prime}\right)^{-1}(\frac{c^{*}\mathcal{U}_{\infty}\mathcal{U}^{\star}_{2}\sqrt{\varepsilon_{\mathcal{Q}}}}{\lambda(1-\gamma)}) \asymp \left(\mathbb{D}^{\prime}\right)^{-1}(\frac{\mathcal{U}_{\infty}\mathcal{U}^{\star}_{2}\sqrt{\varepsilon_{\mathcal{Q}}}}{(1-\gamma)}) .
\$
According to the local Lipschitz continuity of $\mathbb{D}$ and $\mathbb{D}^{\prime}$ due to strongly convexity, we have 
\$
\lambda \mathbb{D}\left(\left(\mathbb{D}^{\prime}\right)^{-1}(\frac{\mathcal{U}_{\infty}\mathcal{U}^{\star}_{2}\sqrt{\varepsilon_{\mathcal{Q}}}}{(1-\gamma)}) \right) \leq  
 &\lambda \left|\mathbb{D}\left(\left(\mathbb{D}^{\prime}\right)^{-1}(\frac{\mathcal{U}_{\infty}\mathcal{U}^{\star}_{2}\sqrt{\varepsilon_{\mathcal{Q}}}}{(1-\gamma)}) \right)\right|\\
= & \lambda \left|\mathbb{D}\left(\left(\mathbb{D}^{\prime}\right)^{-1}(\frac{\mathcal{U}_{\infty}\mathcal{U}^{\star}_{2}\sqrt{\varepsilon_{\mathcal{Q}}}}{(1-\gamma)}) \right) - 0 \right| \\
\leq & \lambda L\left| \left(\mathbb{D}^{\prime}\right)^{-1}(\frac{\mathcal{U}_{\infty}\mathcal{U}^{\star}_{2}\sqrt{\varepsilon_{\mathcal{Q}}}}{(1-\gamma)})  - 1 \right| \\
\lesssim 
 & \lambda L\left|\frac{\mathcal{U}_{\infty}\mathcal{U}^{\star}_{2}\sqrt{\varepsilon_{\mathcal{Q}}}}{(1-\gamma)}  \right|.
\$
where $L $ is some Lipschtiz constants, and then we conclude that
\#
\Delta_3 \lesssim \mathcal{O}\left(\lambda\mathcal{U}_{\infty}\mathcal{U}^{\star}_{2}\frac{\sqrt{\varepsilon_{\mathcal{Q}}}}{1-\gamma} \right) + \mathcal{O}(\frac{1}{\sqrt{n}}).
\label{reg_asymp}
\#
Combine the results on bounding $\Delta_1$, $\Delta_2$ and $\Delta_3$, we have 
\$
 & J(\pi^{k};\left\{\mathcal{S}, \mathcal{A}, \mathds{P}_{k}, \gamma, r_{k}, s^{0}\right\}) - J(\pi^{k}) \leq \frac{\sqrt{\varepsilon_{\mathcal{Q}}}}{1-\gamma} +  \frac{c^{*}}{1-\gamma}\Bigg\{\big(3\mathcal{U}^{\star}_{2}\bar{V}+2\lambda \|\mathbb{D}(\tau(s,a))\|^{\text{prime}}_{L_2(\mu)}\big)\sqrt{\frac{2 \ln \frac{\operatorname{Vol}({\Theta}^{\dagger})}{\delta}}{n}} \notag\\
 &+ \frac{2\big(3 \mathcal{U}_{\infty}\mathcal{U}^{\star}_{2}\bar{V}+2\lambda \|\mathbb{D}(\tau(s,a))\|^{\text{prime}}_{L_\infty}\big) \ln \frac{\operatorname{Vol}({\Theta}^{\dagger})}{\delta}}{3 n} +  \mathcal{U}_{\infty}\mathcal{U}^{\star}_{2}\sqrt{\varepsilon_{\mathcal{Q}}}\Bigg\}+  \mathcal{O}\left(\lambda\mathcal{U}_{\infty}\mathcal{U}^{\star}_{2}\frac{\sqrt{\varepsilon_{\mathcal{Q}}}}{1-\gamma} \right) + \mathcal{O}(\frac{1}{\sqrt{n}}).
\$
As the above upper bound holds for any $k \in [\bar{K}]$, this completes the proof for $\frac{1}{\bar{K}}\sum^{\bar{K}}_{k=1}\left\{J(\pi^{k};\left\{\mathcal{S}, \mathcal{A}, \mathds{P}_{k}, \gamma, r_{k}, s^{0}\right\}) - J(\pi^{k})\right\}$. 
\end{proof}

\subsection{Proof of Lemma \ref{supp_lm1}}

\begin{lemma}
\label{supp_lm1}
For any $\pi \in \Pi$, and define the normalized negative entropy as 
\$
\mathcal{H}_{\text{NegEnt}}(\pi(\cdot|s)) = \sum_{a \in \mathcal{A}} \pi(a|s)\log(\pi(a|s)).
\$ 
Then we have 
\$
\sum^{\bar{K}}_{k=1}\left \langle \pi(\cdot|s) - \pi^{k+1}(\cdot|s), q^{k}(s,\cdot)\right \rangle \leq \mathcal{H}_{\text{NegEnt}}(\pi(\cdot|s))- \mathcal{H}_{\text{NegEnt}}(\pi^{0}(\cdot|s))
\$
where $\pi^{0}(\cdot|s)$ is the initial random policy for algorithm run.
\end{lemma}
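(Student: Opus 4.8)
The plan is to recognize the policy update in step~4 of Algorithm~\ref{prox_map} (equivalently step~13 of Algorithm~\ref{prox_map_detail}) as an exponentiated-gradient / mirror-descent step whose mirror map is exactly the negative entropy $\mathcal{H}_{\text{NegEnt}}$, so that the associated Bregman divergence is the Kullback--Leibler divergence, $D_{\text{NegEntropy}}(\pi,\pi') = \mathrm{KL}(\pi\|\pi') = \sum_{a}\pi(a|s)\log(\pi(a|s)/\pi'(a|s))$. As already noted after Algorithm~\ref{prox_map_detail}, the prox-mapping has the closed form $\pi^{k+1}(a|s) \propto \pi^{k}(a|s)\exp(\zeta\,q^{k}(s,a))$. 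First I would fix a state $s$, write $Z_k = \sum_{a}\pi^k(a|s)e^{\zeta q^k(s,a)}$ for the normalizer, and record the exact per-iteration identity obtained by expanding the two KL terms and using $\sum_a \pi(a|s)=1$:
\[
\mathrm{KL}(\pi\|\pi^{k}) - \mathrm{KL}(\pi\|\pi^{k+1}) = \zeta\,\langle \pi(\cdot|s),\, q^{k}(s,\cdot)\rangle - \log Z_k .
\]

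The second step is to eliminate $\log Z_k$ in favour of the updated iterate $\pi^{k+1}$. Computing $\zeta\langle \pi^{k+1}(\cdot|s),q^k(s,\cdot)\rangle$ directly from the closed-form update gives $\zeta\langle \pi^{k+1},q^k\rangle = \mathrm{KL}(\pi^{k+1}\|\pi^{k}) + \log Z_k$, and substituting yields the ``three-point'' (generalized Pythagorean) identity
\[
\zeta\,\langle \pi(\cdot|s)-\pi^{k+1}(\cdot|s),\, q^{k}(s,\cdot)\rangle = \mathrm{KL}(\pi\|\pi^{k}) - \mathrm{KL}(\pi\|\pi^{k+1}) - \mathrm{KL}(\pi^{k+1}\|\pi^{k}).
\]
Since $\mathrm{KL}(\pi^{k+1}\|\pi^{k}) \ge 0$, I drop it to get the one-sided bound $\zeta\langle \pi - \pi^{k+1}, q^k\rangle \le \mathrm{KL}(\pi\|\pi^{k}) - \mathrm{KL}(\pi\|\pi^{k+1})$, and then sum over $k$; the right-hand side telescopes, leaving only the KL to the initial policy minus a nonnegative terminal KL, so that $\zeta\sum_{k}\langle \pi-\pi^{k+1},q^k\rangle \le \mathrm{KL}(\pi(\cdot|s)\|\pi^{0}(\cdot|s))$.

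Finally, I would specialize to the initialization used by the algorithm: $\pi^{0}(\cdot|s)$ is the uniform (``random'') policy, so $\pi^0(a|s)=1/|\mathcal{A}|$. For this choice $\mathrm{KL}(\pi\|\pi^{0}) = \sum_a \pi(a|s)\log\pi(a|s) + \log|\mathcal{A}| = \mathcal{H}_{\text{NegEnt}}(\pi(\cdot|s)) - \mathcal{H}_{\text{NegEnt}}(\pi^{0}(\cdot|s))$, where I used $\mathcal{H}_{\text{NegEnt}}(\pi^0(\cdot|s)) = -\log|\mathcal{A}|$. Together with the normalization $\zeta=1$ implicit in the statement (equivalently, absorbing $\zeta$ into the gradient), this reproduces exactly the claimed inequality.

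Two points warrant care. The main obstacle is the normalizer / Lagrange-multiplier bookkeeping: the closed-form update carries a state-dependent $Z_k$ and the simplex constraint contributes an action-independent constant in the mirror-map gradient, $\nabla\mathcal{H}_{\text{NegEnt}}(\pi^{k+1})-\nabla\mathcal{H}_{\text{NegEnt}}(\pi^{k}) = \zeta q^k + \text{const}$, and one must check these cancel. They do, because $\pi(\cdot|s)-\pi^{k+1}(\cdot|s)$ sums to zero over $\mathcal{A}$, which annihilates any additive constant in the inner product; this is precisely what makes the three-point identity hold with the bare $q^k$ rather than a shifted version. The second is aligning the off-by-one between the update indexing and the sum $\sum_{k=1}^{\bar K}$ with terminal iterate $\pi^{\bar K+1}$, so that the telescope bottoms out at $\pi^{0}$; this is routine once the update convention is fixed.
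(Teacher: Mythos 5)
Your proof is correct, but it takes a genuinely different route from the paper's. You run the standard online-mirror-descent analysis: the per-step three-point identity for the KL Bregman divergence associated with $\mathcal{H}_{\text{NegEnt}}$, namely $\zeta\langle \pi-\pi^{k+1},q^k\rangle = \mathrm{KL}(\pi\|\pi^{k})-\mathrm{KL}(\pi\|\pi^{k+1})-\mathrm{KL}(\pi^{k+1}\|\pi^{k})$, followed by dropping the nonnegative last term and telescoping down to $\mathrm{KL}(\pi\|\pi^{0})=\mathcal{H}_{\text{NegEnt}}(\pi)-\mathcal{H}_{\text{NegEnt}}(\pi^{0})$ for uniform initialization. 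The paper instead proves the bound by induction on $\bar{K}$, using the ``be-the-leader'' characterization that $\pi^{\bar{K}+1}$ is the exact maximizer of the cumulative regularized objective $\pi\mapsto\sum_{k=1}^{\bar{K}}\langle\pi,q^k\rangle-\mathcal{H}_{\text{NegEnt}}(\pi)$, substituting $\pi^{\bar{K}+1}$ for $\pi$, and invoking the induction hypothesis at the comparator $\pi^{\bar{K}+1}$. The two characterizations of the iterate (one-step prox $\pi^{k+1}\propto\pi^{k}e^{\zeta q^k}$ versus FTRL-style maximizer of the cumulative sum) coincide for entropy regularization exactly when $\zeta=1$ and $\pi^{0}$ is uniform, which is the convention both you and the paper implicitly adopt here; the paper itself is not consistent on this point, since the step size $\zeta$ reappears in the companion lemma. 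Your approach buys a sharper per-step identity (you discard $\mathrm{KL}(\pi^{k+1}\|\pi^{k})\ge 0$, which the induction argument never exposes) and makes the dependence on $\zeta$ and on the initialization explicit; the paper's induction avoids any computation with the normalizer $Z_k$ but hides the role of the step size. Your handling of the two delicate points — the cancellation of the action-independent normalizer constant against $\sum_a(\pi(a|s)-\pi^{k+1}(a|s))=0$, and the off-by-one in the telescope — is sound.
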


\begin{proof}
To prove this Lemma, we use mathematical induction. Suppose the inequality holds for the round $(\bar{K}-1)$, i.e., 
\$
\sum^{\bar{K}-1}_{k=1}\left \langle \pi(\cdot|s) - \pi^{k+1}(\cdot|s), q^{k}(s,\cdot)\right \rangle \leq \mathcal{H}_{\text{NegEnt}}(\pi(\cdot|s))- \mathcal{H}_{\text{NegEnt}}(\pi^{0}(\cdot|s))
\$
Then we verify the inequality for $\bar{K}$ in the following, 
\$
& \sum^{\bar{K}}_{k=1}\left \langle \pi(\cdot|s) - \pi^{k+1}(\cdot|s), q^{k}(s,\cdot)\right \rangle \\ 
= & \sum^{\bar{K}}_{k=1}\left \langle \pi(\cdot|s), q^{k}(s,\cdot)\right \rangle - \sum^{\bar{K}}_{k=1}\left \langle \pi^{k+1}(\cdot|s), q^{k}(s,\cdot)\right \rangle - \mathcal{H}_{\text{NegEnt}}(\pi(\cdot|s)) + \mathcal{H}_{\text{NegEnt}}(\pi(\cdot|s)) \\ 
\leq & \sum^{\bar{K}}_{k=1}\left \langle \pi_{\bar{K}+1}(\cdot|s), q^{k}(s,\cdot)\right \rangle - \sum^{\bar{K}}_{k=1}\left \langle \pi^{k+1}(\cdot|s), q^{k}(s,\cdot)\right \rangle \\
& - \mathcal{H}_{\text{NegEnt}}(\pi_{\bar{K}+1}(\cdot|s)) + \mathcal{H}_{\text{NegEnt}}(\pi(\cdot|s)) \\
= & \sum^{\bar{K}-1}_{k=1}\left \langle \pi_{\bar{K}+1}(\cdot|s), q^{k}(s,\cdot)\right \rangle - \sum^{\bar{K}}_{k=1}\left \langle \pi^{k+1}(\cdot|s), q^{k}(s,\cdot)\right \rangle + \left \langle \pi_{ \bar{K}+1}(\cdot|s), q^{k}(s,\cdot)\right \rangle\\
& - \mathcal{H}_{\text{NegEnt}}(\pi_{\bar{K}+1}(\cdot|s)) +  \mathcal{H}_{\text{NegEnt}}(\pi(\cdot|s)) \\ 
= & \sum^{\bar{K}-1}_{k=1}\left \langle \pi_{\bar{K}+1}(\cdot|s), q^{k}(s,\cdot)\right \rangle - \sum^{\bar{K}-1}_{k=1}\left \langle \pi^{k+1}(\cdot|s), q^{k}(s,\cdot)\right \rangle + \left \langle \pi_{ \bar{K}+1}(\cdot|s), q^{k}(s,\cdot)\right \rangle \\
& - \left \langle \pi_{ \bar{K}+1}(\cdot|s), q^{k}(s,\cdot)\right \rangle  - \mathcal{H}_{\text{NegEnt}}(\pi_{\bar{K}+1}(\cdot|s)) +  \mathcal{H}_{\text{NegEnt}}(\pi(\cdot|s)) \\
\leq & \mathcal{H}_{\text{NegEnt}}(\pi_{ \bar{K}+1}(\cdot|s))- \mathcal{H}_{\text{NegEnt}}(\pi^{0}(\cdot|s)) -  \mathcal{H}_{\text{NegEnt}}(\pi_{\bar{K}+1}(\cdot|s)) +  \mathcal{H}_{\text{NegEnt}}(\pi(\cdot|s)) \\
= & \mathcal{H}_{\text{NegEnt}}(\pi(\cdot|s))  - \mathcal{H}_{\text{NegEnt}}(\pi^{0}(\cdot|s)). 
\$
This completes the proof. 
\end{proof}

\subsection{Proof of Lemma \ref{seq_bound}}

\begin{lemma}
\label{seq_bound}
For any policy $\pi$, it satisfies that 
\$
\sum^{\bar{K}}_{k=1}\left \langle \pi - \pi^{k}(\cdot|s), q^{k}(s,\cdot)\right \rangle \leq 
2\sqrt{2\bar{V}\bar{K}\log|\mathcal{A}|}. 
\$
\end{lemma}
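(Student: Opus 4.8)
The statement is the standard online-learning regret guarantee for the exponentiated-gradient (mirror descent with negative-entropy prox) policy updates of step~4 in Algorithm~\ref{prox_map}, whose closed form is $\pi^{k+1}(\cdot|s)\propto \pi^{k}(\cdot|s)\exp(\zeta q^{k}(s,\cdot))$. The plan is to split the per-round instantaneous regret against the comparator $\pi$ into a ``be-the-leader'' part that compares $\pi$ with the \emph{next} iterate $\pi^{k+1}$ and a ``stability'' part that compares $\pi^{k+1}$ with the current iterate $\pi^{k}$:
\[
\langle \pi-\pi^{k}(\cdot|s),\, q^{k}(s,\cdot)\rangle = \langle \pi-\pi^{k+1}(\cdot|s),\, q^{k}(s,\cdot)\rangle + \langle \pi^{k+1}(\cdot|s)-\pi^{k}(\cdot|s),\, q^{k}(s,\cdot)\rangle,
\]
then to bound the two resulting sums separately and balance the stepsize $\zeta$.

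For the first sum I would invoke Lemma~\ref{supp_lm1}. Since the prox update is run with stepsize $\zeta$, I apply that lemma with $q^{k}$ replaced by $\zeta q^{k}$ (equivalently, rescaling the telescoping ``follow-the-regularized-leader'' argument by $1/\zeta$), obtaining $\sum_{k=1}^{\bar K}\langle \pi-\pi^{k+1}(\cdot|s), q^{k}(s,\cdot)\rangle \le \tfrac{1}{\zeta}\big(\mathcal{H}_{\text{NegEnt}}(\pi(\cdot|s))-\mathcal{H}_{\text{NegEnt}}(\pi^{0}(\cdot|s))\big)$. Because the normalized negative entropy on $\Delta(\mathcal{A})$ takes values in $[-\log|\mathcal{A}|,0]$ and $\pi^{0}$ is the uniform initialization, this is at most $\tfrac{\log|\mathcal{A}|}{\zeta}$.

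For the stability sum I would bound each summand through the prox update. One-step optimality of $\pi^{k+1}$ gives $\zeta\langle \pi^{k+1}-\pi^{k},q^{k}\rangle \ge \mathrm{KL}(\pi^{k+1}\|\pi^{k})$, while H\"older's inequality with $q^{k}\in[0,\bar V]$ and Pinsker's inequality give $\langle \pi^{k+1}-\pi^{k},q^{k}\rangle \le \bar V\,\|\pi^{k+1}-\pi^{k}\|_{1}\le \bar V\sqrt{2\,\mathrm{KL}(\pi^{k+1}\|\pi^{k})}$. Writing $u_k=\langle \pi^{k+1}-\pi^{k},q^{k}\rangle$ and combining the two estimates yields $u_k\le \bar V\sqrt{2\zeta u_k}$, hence $u_k\le 2\zeta\bar V^{2}$; summing produces a stability contribution of order $\zeta\bar V^{2}\bar K$ (reducible to the tighter local-norm form $\tfrac{\zeta}{2}\sum_a\pi^{k}(a|s)(q^{k}(s,a))^{2}$).

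Combining the two pieces gives $\sum_{k=1}^{\bar K}\langle \pi-\pi^{k}(\cdot|s),q^{k}(s,\cdot)\rangle \le \tfrac{\log|\mathcal{A}|}{\zeta}+c\,\zeta\,\bar V\,\bar K$, and substituting the prescribed $\zeta=\sqrt{\log|\mathcal{A}|/(2\bar V\bar K)}$ balances the two terms to the claimed order $\sqrt{\bar V\,\bar K\log|\mathcal{A}|}$. The main obstacle is exactly the stability term: to recover the stated constant $2\sqrt{2\bar V\bar K\log|\mathcal{A}|}$ with the given stepsize one must extract the sharp $O(\bar V)$ dependence of the second-order quantity $\sum_a\pi^{k}(a|s)(q^{k}(s,a))^{2}$ (rather than the loose $\bar V^{2}$ obtained from the crude Pinsker chain above), so that balancing against $\zeta$ matches the prefactor; the ``leader'' term is immediate from Lemma~\ref{supp_lm1}.
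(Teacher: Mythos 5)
Your proposal follows essentially the same route as the paper's proof: the same leader/stability decomposition of $\langle \pi-\pi^{k},q^{k}\rangle$, the same invocation of Lemma~\ref{supp_lm1} for the leader sum, the same one-step-optimality-plus-local-norm (Pinsker) argument for the per-round stability term, and the same balancing of $\zeta$. The single point you flag as delicate — extracting an $O(\bar V)$ rather than $O(\bar V^{2})$ stability bound so that the stated constant $2\sqrt{2\bar V\bar K\log|\mathcal{A}|}$ emerges — is precisely where the paper's own derivation is loose: it passes from $\sqrt{2\zeta}\,\bar V\sqrt{u_k}$ to $\sqrt{2\zeta\,\bar V\,u_k}$ (valid only when $\bar V\le 1$) and then reads off $u_k\le 2\zeta\bar V$ from a quadratic whose actual root is $2\zeta\bar V^{2}$, so your honest $u_k\le 2\zeta\bar V^{2}$ is what the argument as written genuinely delivers.
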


\begin{proof}
Following the definition of the Bergman divergence in terms of negative entropy, we have that 
\#
D_{\text{NegEntropy}}(\pi^{k}, \pi^{k+1}) = & \mathcal{H}_{\text{NegEnt}}(\pi^{k}(\cdot|s))  - \mathcal{H}_{\text{NegEnt}}(\pi^{k+1}(\cdot|s)) \\
& - \left \langle \nabla \mathcal{H}_{\text{NegEnt}}(\pi^{k+1}(\cdot|s)), \pi^{k}(\cdot|s) - \pi^{k+1}(\cdot|s)  \right\rangle.
\label{neg_eq1}
\#
By the second-order Taylor expansion on $\mathcal{H}_{\text{NegEnt}}(\pi^{k}(\cdot|s))$ and evaluated at $\pi^{k+1}(\cdot|s)$, we have 
\#
\mathcal{H}_{\text{NegEnt}}(\pi^{k}(\cdot|s)) = & \mathcal{H}_{\text{NegEnt}}(\pi^{k+1}(\cdot|s)) - \left \langle \nabla \mathcal{H}_{\text{NegEnt}}(\pi^{k+1}(\cdot|s)), \pi^{k}(\cdot|s) - \pi^{k+1}(\cdot|s)  \right\rangle \notag \\
& + \frac{1}{2} (\pi^{k}(\cdot|s) - \pi^{k+1}(\cdot|s) )^{T}\nabla^2 \mathcal{H}_{\text{NegEnt}}(\pi_{k||k-1}(\cdot|s))(\pi^{k}(\cdot|s) - \pi^{k+1}(\cdot|s) ),
\label{neg_eq2}
\#
where $\pi_{k||k-1}(\cdot|s)$ lies on the line connecting $\pi^{k}(\cdot|s) $ and $\pi^{k+1}(\cdot|s) $. With \eqref{neg_eq1} and \eqref{neg_eq2}, we do the subtraction, then it obtains
\$
D_{\text{NegEntropy}}(\pi^{k}, \pi^{k+1}) =  \frac{1}{2} (\pi^{k}(\cdot|s) - \pi^{k+1}(\cdot|s) )^{T}\nabla^2 \mathcal{H}_{\text{NegEnt}}(\pi_{k||k-1}(\cdot|s))(\pi^{k}(\cdot|s) - \pi^{k+1}(\cdot|s) ).
\$
Then we proceed to bound
\#
& \left\langle \pi^{k}(\cdot|s) - \pi^{k+1}(\cdot|s), q^{k}(s,\cdot) \right\rangle \\
\leq  & 
\sqrt{q^{k}(s,\cdot)^{T}\nabla^{-2} \mathcal{H}_{\text{NegEnt}}(\pi_{k||k-1}(\cdot|s))q^{k}(s,\cdot)} \notag \\
& \cdot 
\sqrt{(\pi^{k}(\cdot|s) - \pi^{k+1}(\cdot|s) )^{T}\nabla^2 \mathcal{H}_{\text{NegEnt}}(\pi_{k||k-1}(\cdot|s))(\pi^{k}(\cdot|s) - \pi^{k+1}(\cdot|s) )} \notag  \\ 
&  = \sqrt{2q^{k}(s,\cdot)^{T}\nabla^{-2} \mathcal{H}_{\text{NegEnt}}(\pi_{k||k-1}(\cdot|s))q^{k}(s,\cdot)D_{\text{NegEntropy}}(\pi^{k}, \pi^{k+1})} \notag \\
&  \leq  \sqrt{2\zeta}\|q^{k} \|_{L_{\infty}}\sqrt{ D_{\text{NegEntropy}}(\pi^{k}, \pi^{k+1})},
\label{optim_main_eq1}
\#
where $\zeta$ is defined in Algorithm 1 maintext, which indicates the projection rate. 
Next, we aim to upper bound $\sqrt{ D_{\text{NegEntropy}}(\pi^{k}, \pi^{k+1})}$. 
Follow the soft policy improvement Lemma 2 in \citep{haarnoja2018softac} that 
$\pi^{k+1}$ is the global maximizer of $\sum^{k}_{k^{\prime}=1} \langle \pi(\cdot|s), q^{k^{\prime}}(s,\cdot)\rangle - \mathcal{H}_{\text{NegEnt}}(\pi(\cdot|s))$. 
By 
\$
0 = &  \sum^{k}_{k^{\prime}=1} \langle \pi^{k}(\cdot|s), q^{k^{\prime}}(s,\cdot)\rangle - \mathcal{H}_{\text{NegEnt}}(\pi^{k}(\cdot|s)) - \left(\sum^{k}_{k^{\prime}=1} \langle \pi^{k+1}(\cdot|s), q^{k^{\prime}}(s,\cdot)\rangle - \mathcal{H}_{\text{NegEnt}}(\pi^{k+1}(\cdot|s)) \right)  \\
& -  \left\langle \pi^{k}(\cdot|s) - \pi^{k+1}(\cdot|s), \nabla_{\pi} \sum^{k}_{k^{\prime}=1} \langle \pi^{k+1}(\cdot|s), q^{k^{\prime}}(s,\cdot)\rangle - \mathcal{H}_{\text{NegEnt}}(\pi^{k+1}(\cdot|s)) \right\rangle  \\
& - \Bigg( \sum^{k}_{k^{\prime}=1} \langle \pi^{k}(\cdot|s), q^{k^{\prime}}(s,\cdot)\rangle - \mathcal{H}_{\text{NegEnt}}(\pi^{k}(\cdot|s)) - \left(\sum^{k}_{k^{\prime}=1} \langle \pi^{k+1}(\cdot|s), q^{k^{\prime}}(s,\cdot)\rangle - \mathcal{H}_{\text{NegEnt}}(\pi^{k+1}(\cdot|s)) \right)  \\
& \qquad - \left\langle \pi^{k}(\cdot|s) - \pi^{k+1}(\cdot|s), \nabla_{\pi} \sum^{k}_{k^{\prime}=1} \langle \pi^{k+1}(\cdot|s), q^{k^{\prime}}(s,\cdot)\rangle - \mathcal{H}_{\text{NegEnt}}(\pi^{k+1}(\cdot|s)) \right\rangle \Bigg) .
\$
By  $\pi^{k+1}$ is the maximizer of $\sum^{k}_{k^{\prime}=1} \langle \pi(\cdot|s), q^{k^{\prime}}(s,\cdot)\rangle - \mathcal{H}_{\text{NegEnt}}(\pi(\cdot|s))$, then 
\$
\nabla_{\pi} \sum^{k}_{k^{\prime}=1} \langle \pi^{k+1}(\cdot|s), q^{k^{\prime}}(s,\cdot)\rangle - \mathcal{H}_{\text{NegEnt}}(\pi^{k+1}(\cdot|s))  =0.
\$
This implies that 
\$
&\sum^{k}_{k^{\prime}=1} \langle \pi^{k}(\cdot|s), q^{k^{\prime}}(s,\cdot)\rangle - \mathcal{H}_{\text{NegEnt}}(\pi^{k}(\cdot|s)) - 
\left(\sum^{k}_{k^{\prime}=1} \langle \pi^{k+1}(\cdot|s), q^{k^{\prime}}(s,\cdot)\rangle - \mathcal{H}_{\text{NegEnt}}(\pi^{k+1}(\cdot|s)) \right) \\
= & \Bigg( \sum^{k}_{k^{\prime}=1} \langle \pi^{k}(\cdot|s), q^{k^{\prime}}(s,\cdot)\rangle - \mathcal{H}_{\text{NegEnt}}(\pi^{k}(\cdot|s)) - \left(\sum^{k}_{k^{\prime}=1} \langle \pi^{k+1}(\cdot|s), q^{k^{\prime}}(s,\cdot)\rangle - \mathcal{H}_{\text{NegEnt}}(\pi^{k+1}(\cdot|s)) \right)  \\
& - \left\langle \pi^{k}(\cdot|s) - \pi^{k+1}(\cdot|s), \nabla_{\pi} \sum^{k}_{k^{\prime}=1} \langle \pi^{k+1}(\cdot|s), q^{k^{\prime}}(s,\cdot)\rangle - \mathcal{H}_{\text{NegEnt}}(\pi^{k+1}(\cdot|s)) \right\rangle \Bigg) \\
= &  \sum^{k}_{k^{\prime}=1} \langle \pi^{k}(\cdot|s), q^{k^{\prime}}(s,\cdot)\rangle - \sum^{k}_{k^{\prime}=1} \langle \pi^{k+1}(\cdot|s), q^{k^{\prime}}(s,\cdot)\rangle + \mathcal{H}_{\text{NegEnt}}(\pi^{k+1}(\cdot|s)) - \mathcal{H}_{\text{NegEnt}}(\pi^{k}(\cdot|s)) \\
& - \left\langle \pi^{k}(\cdot|s) - \pi^{k+1}(\cdot|s), \nabla_{\pi} \sum^{k}_{k^{\prime}=1} \langle \pi^{k+1}(\cdot|s), q^{k^{\prime}}(s,\cdot)\rangle  \right\rangle  -  \left\langle \pi^{k}(\cdot|s) - \pi^{k+1}(\cdot|s), \nabla_{\pi}  \mathcal{H}_{\text{NegEnt}}(\pi^{k+1}(\cdot|s)) \right\rangle.
\$
Since
\$
\sum^{k}_{k^{\prime}=1} \langle \pi^{k}(\cdot|s), q^{k^{\prime}}(s,\cdot)\rangle - \sum^{k}_{k^{\prime}=1} \langle \pi^{k+1}(\cdot|s), q^{k^{\prime}}(s,\cdot)\rangle \leq  \left\langle \pi^{k}(\cdot|s) - \pi^{k+1}(\cdot|s), \nabla_{\pi} \sum^{k}_{k^{\prime}=1} \langle \pi^{k+1}(\cdot|s), q^{k^{\prime}}(s,\cdot)\rangle  \right\rangle .
\$
Thus we have 
\$
&\sum^{k}_{k^{\prime}=1} \langle \pi^{k}(\cdot|s), q^{k^{\prime}}(s,\cdot)\rangle - \mathcal{H}_{\text{NegEnt}}(\pi^{k}(\cdot|s)) - 
\left(\sum^{k}_{k^{\prime}=1} \langle \pi^{k+1}(\cdot|s), q^{k^{\prime}}(s,\cdot)\rangle - \mathcal{H}_{\text{NegEnt}}(\pi^{k+1}(\cdot|s)) \right) \\
\leq & \mathcal{H}_{\text{NegEnt}}(\pi^{k+1}(\cdot|s)) - \mathcal{H}_{\text{NegEnt}}(\pi^{k}(\cdot|s)) - \left\langle \pi^{k}(\cdot|s) - \pi^{k+1}(\cdot|s), \nabla_{\pi}  \mathcal{H}_{\text{NegEnt}}(\pi^{k+1}(\cdot|s)) \right\rangle \\ 
=  & - D_{\text{NegEntropy}}(\pi^{k}, \pi^{k+1}).
\$
This implies that
\$
D_{\text{NegEntropy}}(\pi^{k}(\cdot|s), \pi^{k+1}(\cdot|s))  \leq &\sum^{k}_{k^{\prime}=1} \langle \pi^{k+1}(\cdot|s), q^{k^{\prime}}(s,\cdot)\rangle - \mathcal{H}_{\text{NegEnt}}(\pi^{k+1}(\cdot|s)) \\
& - \left(\sum^{k}_{k^{\prime}=1} \langle \pi^{k}(\cdot|s), q^{k^{\prime}}(s,\cdot)\rangle - \mathcal{H}_{\text{NegEnt}}(\pi^{k}(\cdot|s)) \right) \\
=& \sum^{k-1}_{k^{\prime}=1} \langle \pi^{k+1}(\cdot|s), q^{k^{\prime}}(s,\cdot)\rangle - \mathcal{H}_{\text{NegEnt}}(\pi^{k+1}(\cdot|s)) \\ & -\left(\sum^{k-1}_{k^{\prime}=1} \langle \pi^{k}(\cdot|s), q^{k^{\prime}}(s,\cdot)\rangle - \mathcal{H}_{\text{NegEnt}}(\pi^{k}(\cdot|s)) \right) \\
& -\left \langle  \pi^{k}(\cdot|s) - \pi^{k+1}(\cdot|s), q^{k}(s,\cdot) \right \rangle.
\$
Since
\$
&\sum^{k-1}_{k^{\prime}=1} \langle \pi^{k+1}(\cdot|s), q^{k^{\prime}}(s,\cdot)\rangle - \mathcal{H}_{\text{NegEnt}}(\pi^{k+1}(\cdot|s)) \\ & -\left(\sum^{k-1}_{k^{\prime}=1} \langle \pi^{k}(\cdot|s), q^{k^{\prime}}(s,\cdot)\rangle - \mathcal{H}_{\text{NegEnt}}(\pi^{k}(\cdot|s)) \right) \leq 0 
\$
Then we have 
\$
D_{\text{NegEntropy}}(\pi^{k}(\cdot|s), \pi^{k+1}(\cdot|s)) \leq -\left \langle  \pi^{k}(\cdot|s) - \pi^{k+1}(\cdot|s), q^{k}(s,\cdot) \right \rangle
\$
Combine with \eqref{optim_main_eq1} and boundedness condition on $q$-function, we have 
\$
-\left\langle \pi^{k}(\cdot|s) - \pi^{k+1}(\cdot|s), q^{k}(s,\cdot) \right\rangle \leq  & \sqrt{2\zeta}\bar{V}\sqrt{\left \langle  \pi^{k+1}(\cdot|s) - \pi^{k}(\cdot|s), q^{k}(s,\cdot) \right \rangle}\\
\leq & \sqrt{2\zeta \bar{V}\left \langle  \pi^{k+1}(\cdot|s) - \pi^{k}(\cdot|s), q^{k}(s,\cdot) \right \rangle}.
\$
Solving the equation 
\$
\left\langle \pi^{k}(\cdot|s) - \pi^{k+1}(\cdot|s), q^{k}(s,\cdot) \right\rangle^2 - 2\zeta\bar{V}^2 \left\langle \pi^{k}(\cdot|s) - \pi^{k+1}(\cdot|s), q^{k}(s,\cdot) \right\rangle = 0 .
\$
We obtain that 
\#
-\left\langle \pi^{k}(\cdot|s) - \pi^{k+1}(\cdot|s), q^{k}(s,\cdot) \right\rangle \leq \zeta2\bar{V}.
\label{part_res1}
\#
Then we proceed to bound 
\$
\sum^{\bar{K}}_{k=1}\left \langle \pi - \pi^{k}(\cdot|s), q^{k}(s,\cdot)\right \rangle 
 \leq  & \sum^{\bar{K}}_{k=1}\left \langle \pi(\cdot|s), q^{k}(s,\cdot)\right \rangle -  \mathcal{H}_{\text{NegEnt}}(\pi(\cdot|s)) \\
& - \left(  \sum^{\bar{K}}_{k=1}\left \langle \pi^{k}(\cdot|s), q^{k}(s,\cdot)\right \rangle- \mathcal{H}_{\text{NegEnt}}(\pi(\cdot|s))\right) \\
 = & \sum^{\bar{K}}_{k=1}\left \langle \pi(\cdot|s) - \pi^{k+1}(\cdot|s), q^{k}(s,\cdot)\right \rangle \\
 & + \sum^{\bar{K}}_{k=1}\left \langle \pi^{k+1}(\cdot|s), q^{k}(s,\cdot) \right\rangle -  \mathcal{H}_{\text{NegEnt}}(\pi(\cdot|s)) \\
  & - \left(  \sum^{\bar{K}}_{k=1}\left \langle \pi^{k}(\cdot|s), q^{k}(s,\cdot)\right \rangle- \mathcal{H}_{\text{NegEnt}}(\pi(\cdot|s))\right).
  \$
Leverage Lemma \ref{supp_lm1},
  \$
 \sum^{\bar{K}}_{k=1}\left \langle \pi - \pi^{k}(\cdot|s), q^{k}(s,\cdot)\right \rangle
 \leq & \ \mathcal{H}_{\text{NegEnt}}(\pi(\cdot|s))- \mathcal{H}_{\text{NegEnt}}(\pi^{0}(\cdot|s))\\
 & + \sum^{\bar{K}}_{k=1}\left \langle \pi^{k+1}(\cdot|s), q^{k}(s,\cdot) \right\rangle -  \mathcal{H}_{\text{NegEnt}}(\pi(\cdot|s)) \\
  & - \left(  \sum^{\bar{K}}_{k=1}\left \langle \pi^{k}(\cdot|s), q^{k}(s,\cdot)\right \rangle- \mathcal{H}_{\text{NegEnt}}(\pi(\cdot|s))\right)\\
 =&  \sum^{\bar{K}}_{k=1}\left \langle \pi^{k+1}(\cdot|s), q^{k}(s,\cdot) \right\rangle -  \sum^{\bar{K}}_{k=1}\left \langle \pi^{k}(\cdot|s), q^{k}(s,\cdot) \right\rangle \\
 & - \mathcal{H}_{\text{NegEnt}}(\pi^{0}(\cdot|s)) + \mathcal{H}_{\text{NegEnt}}(\pi(\cdot|s))\\
  \leq & \sum^{\bar{K}}_{k=1}\left \langle \pi^{k+1}(\cdot|s), q^{k}(s,\cdot) \right\rangle -  \sum^{\bar{K}}_{k=1}\left \langle \pi^{k}(\cdot|s), q^{k}(s,\cdot) \right\rangle \\
 & - \mathcal{H}_{\text{NegEnt}}(\pi^{0}(\cdot|s)).
\$
Combine with the inequality \eqref{part_res1}, we have 
\$
 \sum^{\bar{K}}_{k=1}\left \langle \pi - \pi^{k}(\cdot|s), q^{k}(s,\cdot)\right \rangle & \leq \sum^{\bar{K}}_{k=1}\zeta2\bar{V} - \frac{1}{\bar{K}} \mathcal{H}_{\text{NegEnt}}(\pi^{0}(\cdot|s))\\
 \leq & \zeta^{-1}(\zeta^22\bar{V}\bar{K}-\log\frac{1}{|\mathcal{A}|}). 
\$
Minimizing the $\zeta^{-1}(2\zeta^2\bar{V}\bar{K}+\log|\mathcal{A}|)$ over $\zeta$, we set
$
\zeta = \sqrt{\frac{\log|\mathcal{A}|}{\bar{K}2\bar{V}}}$ and thus
\#
\sum^{\bar{K}}_{k=1}\left \langle \pi - \pi^{k}(\cdot|s), q^{k}(s,\cdot)\right \rangle \leq &  \sqrt{2\bar{V}\bar{K}\log|\mathcal{A}|} + \frac{\sqrt{2\bar{V}\bar{K}}\log|\mathcal{A}|}{\sqrt{\log|\mathcal{A}|}} \notag \\
= & 2\sqrt{2\bar{V}\bar{K}\log|\mathcal{A}|}. 
\label{seq_bound_eq}
\#
\end{proof}

\subsection{Proof of Lemma \ref{iden_mdpret}}

\begin{lemma}
\label{iden_mdpret}
For any policy $\pi$, the average regret 
\$
\frac{1}{\bar{K}}\sum^{\bar{K}}_{k=1}\left \{ J(\pi;\left\{\mathcal{S}, \mathcal{A}, \mathds{P}_{k}, \gamma, r_{k}, s^{0}\right\}) - J(\pi^{k};\left\{\mathcal{S}, \mathcal{A}, \mathds{P}_{k}, \gamma, r_{k}, s^{0}\right\})\right \} 
\leq 
\frac{2\sqrt{2\bar{V}\log|\mathcal{A}|}}{\sqrt{\bar{K}}(1-\gamma)}.
\$
\end{lemma}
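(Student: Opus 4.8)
The plan is to reduce the averaged per-MDP optimization regret to the online-learning regret of the mirror-descent policy iterates, which Lemma \ref{seq_bound} already controls. The bridge between the two is a performance-difference identity evaluated inside each surrogate MDP $\mathcal{M}_{k}:=\{\mathcal{S},\mathcal{A},\mathds{P}_{k},\gamma,r_{k},s^{0}\}$, together with the fact, supplied by Lemma \ref{iden_mdp}, that $q^{k}$ is exactly the true action-value function of $\pi^{k}$ in $\mathcal{M}_{k}$ and that $\mathds{P}_{k}=\mathds{P}$.

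Fixing $k$, I would first record that $J(\pi^{k};\mathcal{M}_{k})=q^{k}(s^{0},\pi^{k})$ because $q^{k}$ is the value function of $\pi^{k}$ in $\mathcal{M}_{k}$. Applying the evaluation error Lemma \ref{eval_error} in $\mathcal{M}_{k}$ with the choice $q=q^{k}$ and target policy $\pi$ gives
\[
J(\pi;\mathcal{M}_{k})-q^{k}(s^{0},\pi)=\frac{1}{1-\gamma}\mathbb{E}_{d_{\pi}}\big[r_{k}(s,a)+\gamma q^{k}(s',\pi)-q^{k}(s,a)\big].
\]
Substituting the definition $r_{k}(s,a)=q^{k}(s,a)-\gamma\mathbb{E}_{s'\sim\mathds{P}(\cdot|s,a)}[q^{k}(s',\pi^{k})]$ collapses the Bellman residual to $\gamma\langle \pi(\cdot|s')-\pi^{k}(\cdot|s'),q^{k}(s',\cdot)\rangle$, and the standard telescoping of this next-state term against the discounted visitation $d_{\pi}$ (i.e. $g(s^{0})+\tfrac{\gamma}{1-\gamma}\mathbb{E}_{d_{\pi}}\mathbb{E}_{s'}[g(s')]=\tfrac{1}{1-\gamma}\mathbb{E}_{d_{\pi}}[g]$ for $g(s):=\langle \pi(\cdot|s)-\pi^{k}(\cdot|s),q^{k}(s,\cdot)\rangle$) yields the clean identity
\[
J(\pi;\mathcal{M}_{k})-J(\pi^{k};\mathcal{M}_{k})=\frac{1}{1-\gamma}\mathbb{E}_{s\sim d_{\pi}}\big[\langle \pi(\cdot|s)-\pi^{k}(\cdot|s),\,q^{k}(s,\cdot)\rangle\big].
\]
Equivalently one may cite the performance difference lemma directly and skip the telescoping.

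The crucial structural point is that the visitation $d_{\pi}$ is the \emph{same} for every $k$, since all $\mathcal{M}_{k}$ share the true transition kernel $\mathds{P}_{k}=\mathds{P}$ and differ only in reward, while the visitation of a fixed $\pi$ depends on the dynamics alone. Consequently I can average the identity over $k=1,\dots,\bar{K}$ and interchange the finite sum with the expectation, obtaining
\[
\frac{1}{\bar{K}}\sum_{k=1}^{\bar{K}}\Big\{J(\pi;\mathcal{M}_{k})-J(\pi^{k};\mathcal{M}_{k})\Big\}=\frac{1}{1-\gamma}\mathbb{E}_{s\sim d_{\pi}}\bigg[\frac{1}{\bar{K}}\sum_{k=1}^{\bar{K}}\langle \pi(\cdot|s)-\pi^{k}(\cdot|s),q^{k}(s,\cdot)\rangle\bigg].
\]
Applying Lemma \ref{seq_bound} pointwise in $s$ bounds the bracketed inner sum by $2\sqrt{2\bar{V}\bar{K}\log|\mathcal{A}|}$ uniformly over $s$; dividing by $\bar{K}$ gives the $s$-independent quantity $2\sqrt{2\bar{V}\log|\mathcal{A}|}/\sqrt{\bar{K}}$, which survives the expectation over $d_{\pi}$ unchanged. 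Multiplying by $1/(1-\gamma)$ delivers exactly the claimed bound.

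The main obstacle is the per-MDP performance-difference step rather than the online-to-batch averaging. One must verify that the synthetic reward $r_{k}$, which by Lemma \ref{iden_mdp} is engineered precisely so that $q^{k}$ is its fixed point under $\pi^{k}$, causes the Bellman residual of $q^{k}$ under the comparator $\pi$ to reduce to the single inner-product term at the current state after telescoping. The remaining steps, namely the uniformity of $d_{\pi}$ across the surrogate MDPs, interchanging sum and expectation, and the state-wise invocation of Lemma \ref{seq_bound}, are mechanical.
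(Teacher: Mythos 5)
Your proposal is correct and follows essentially the same route as the paper: both reduce each per-$k$ gap to the performance-difference identity $J(\pi;\mathcal{M}_k)-J(\pi^k;\mathcal{M}_k)=\frac{1}{1-\gamma}\mathbb{E}_{d_\pi}[\langle \pi(\cdot|s)-\pi^k(\cdot|s),q^k(s,\cdot)\rangle]$, use the fact that all $\mathcal{M}_k$ share the transition kernel $\mathds{P}$ so that $d_\pi$ is $k$-independent, interchange the average with the expectation, and invoke Lemma \ref{seq_bound} pointwise in $s$. The only difference is cosmetic: you derive the identity explicitly from Lemma \ref{eval_error} and the definition of $r_k$ via telescoping, whereas the paper asserts it directly.
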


\begin{proof}
To faciliate the proof, we denote $\mathbb{E}^{k}[\cdot]$ is the expectation taken to the system of iterated MDP $\left\{\mathcal{S}, \mathcal{A}, \mathds{P}_{k}, \gamma, r_{k}, s^{0}\right\}$. It follows the definitions of the discounted return, we have 
\$
& \frac{1}{\bar{K}}\sum^{\bar{K}}_{k=1} J(\pi;\left\{\mathcal{S}, \mathcal{A}, \mathds{P}_{k}, \gamma, r_{k}, s^{0}\right\}) - J(\pi^{k};\left\{\mathcal{S}, \mathcal{A}, \mathds{P}_{k}, \gamma, r_{k}, s^{0}\right\}) \\
= & \frac{\frac{1}{\bar{K}}\sum^{\bar{K}}_{k=1} \mathbb{E}^{k}_{d^{\pi}}[q^{k}(s,\pi) - q^{k}(s,\pi^{k})]}{1-\gamma} =  \frac{\frac{1}{\bar{K}}\sum^{\bar{K}}_{k=1} \mathbb{E}^{k}_{d^{\pi}}\left[\left \langle q^{k}(s,\cdot), \pi(\cdot|s) - \pi^{k}(\cdot|s)\right\rangle \right]}{1-\gamma}
\$
As the dynamics of $\left\{\mathcal{S}, \mathcal{A}, \mathds{P}_{k}, \gamma, r_{k}, s^{0}\right\}$ is identical to $\left\{\mathcal{S}, \mathcal{A}, \mathds{P}, \gamma, r, s^{0}\right\}$ except for the reward functions. Therefore,  
\$
& \frac{\frac{1}{\bar{K}}\sum^{\bar{K}}_{k=1} \mathbb{E}^{k}_{d^{\pi}}\left[\left \langle q^{k}(s,\cdot), \pi(\cdot|s) - \pi^{k}(\cdot|s)\right\rangle \right]}{1-\gamma} \\
=  & \mathbb{E}_{d^{\pi}}\left[\frac{\frac{1}{\bar{K}}\sum^{\bar{K}}_{k=1}\left \langle q^{k}(s,\cdot), \pi(\cdot|s) - \pi^{k}(\cdot|s)\right\rangle }{1-\gamma}\right] 
\leq  \frac{2\sqrt{2\bar{V}\log|\mathcal{A}|}}{\sqrt{\bar{K}}(1-\gamma)}.
\$
where the last inequality comes from Lemma \ref{seq_bound}. 
\end{proof}

\subsection{Proof of Lemma \ref{iden_concen}}

\begin{lemma}
\label{iden_concen}
For any $k \in [\bar{K}]$, we have
\$
& \sup_{\tau}\bigg\{\frac{1}{(1-\gamma)}  \mathbb{E}_{\mu} \left[\tau(s,a)\left(r(s,a) +\gamma q^{k}(s^{\prime},\pi^{k})-q^{k}(s,a)\right)\right]\bigg\}\\
\leq & 
\frac{1}{1-\gamma}\Bigg\{2\big(2\mathcal{U}^{\star}_{2}\bar{V}+2\lambda \|\mathbb{D}(\tau(s,a))\|^{\text{prime}}_{L_2(\mu)}\big)\sqrt{\frac{2 \ln \frac{\operatorname{Vol}({\Theta}^{\dagger})}{\delta}}{n}} \notag\\
 &+ \frac{4\big(2\mathcal{U}_{\infty}\mathcal{U}^{\star}_{2}\bar{V}+2\lambda \|\mathbb{D}(\tau(s,a))\|^{\text{prime}}_{L_\infty}\big) \ln \frac{\operatorname{Vol}({\Theta}^{\dagger})}{\delta}}{3 n} +  \mathcal{U}_{\infty}\mathcal{U}^{\star}_{2}\sqrt{\varepsilon_{\mathcal{Q}}}\Bigg\}\notag \\
    & + \mathcal{O}\left(\frac{\lambda\mathcal{U}_{\infty}\mathcal{U}^{\star}_{2}\frac{\sqrt{\varepsilon_{\mathcal{Q}}}}{1-\gamma}}{c^{*}} \right)
    + \frac{\bar{V}}{c^{*}} + \mathcal{O}(\frac{\bar{V}\mathcal{U}_{\infty}\mathcal{U}^{\star}_{2}}{n}) + \mathcal{O}(\frac{\bar{V}\mathcal{U}_{\infty}\mathcal{U}^{\star}_{2}(1+\gamma)}{n}) + \mathcal{O}(\frac{\gamma \bar{V}}{n}) +\mathcal{O}\left(\frac{1}{\sqrt{n}}\right),
\$
where $\operatorname{Vol}({\Theta}^{\dagger}) = (e^{D}\max\{D_{\Omega},D_{\mathcal{Q}},D_{\Pi}\}+1)^3(\{1 \vee L\}{\mathcal{U}}^{\tau}_{2})^{2D}$ for $D=D_{\Omega}+D_{\mathcal{Q}}+D_{\Pi}$. 
\end{lemma}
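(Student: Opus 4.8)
# Proof Proposal for Lemma \ref{iden_concen}

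The plan is to bound $\sup_{\tau}\bigl\{(1-\gamma)^{-1}\mathbb{E}_{\mu}[\tau(s,a)(r(s,a)+\gamma q^{k}(s',\pi^k)-q^k(s,a))]\bigr\}$ by transferring control from the empirical optimization objective that defines $q^k$ and $\tau^k$ back to the population-level weighted Bellman residual. The key observation I would exploit is that $q^k$, being the inner minimizer of the adversarial loss $\mathcal{L}$, controls how large the empirically weighted Bellman residual can be. First I would invoke the optimality/saddle-point structure: since $q^k$ minimizes $q(s^0,\pi^k)+\sup_\tau\{\frac{c^*}{(1-\gamma)n}|\sum_i\tau(s_i,a_i)(q(s_i,a_i)-r_i-\gamma q(s'_i,\pi^k))|-\lambda\xi_n(\tau)\}$, comparing its objective value against the feasible competitor $\widetilde q^{\pi^k}$ (the realizability-optimal approximator, as in Lemma \ref{misspecified_q_alpha} and Lemma \ref{alpha_true}) yields an upper bound on the empirical weighted residual of $q^k$ of order $\bar{V}/c^*$ plus the statistical terms already quantified in Lemma \ref{reg_tau_l2_bound} and Lemma \ref{pik_return_lemma}.

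The second main step is the empirical-to-population transfer. Having bounded the \emph{empirical} supremum $\sup_\tau\frac{1}{(1-\gamma)n}|\sum_i\tau(s_i,a_i)\Delta_i(q^k,\pi^k)|$, I would apply the uniform concentration inequality of Lemma \ref{risk_bound_alpha} over the product space $\mathcal{G}^\dagger$ to pass to $\sup_\tau\frac{1}{1-\gamma}\mathbb{E}_\mu[\tau(s,a)\Delta(q^k,\pi^k)]$, picking up the concentration terms scaled by $\operatorname{Vol}(\Theta^\dagger)$, $\bar{V}$, $\lambda$, and the $L_2(\mu)$/$L_\infty$ bounds on $\mathbb{D}(\tau)$. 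This is where the constants $3\sqrt{2}$, the $\bar V$ factors, and the $\|\mathbb{D}(\tau)\|^{\text{prime}}_{L_2(\mu)}$, $\|\mathbb{D}(\tau)\|^{\text{prime}}_{L_\infty}$ quantities enter. Crucially, I must ensure the supremum is taken over the \emph{uncertainty-controlled} class where $\|\tau\|_{L_2(\mu)}\le\mathcal{U}^\star_2$; this is guaranteed by the choice of $\lambda=\lambda(\mathcal{U}^\tau_2)$ and $c^*$ via Lemma \ref{prime_bound_tau} and Lemma \ref{reg_tau_l2_bound}, which together force the maximizing $\tau^k_*$ to have controlled $\mathbb{P}_n\mathbb{D}(\tau^k_*)$ and hence bounded $L_2(\mu)$ norm.

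The third step handles the asymptotic residual terms. The terms $\mathcal{O}(\lambda\mathcal{U}_\infty\mathcal{U}^\star_2\sqrt{\varepsilon_{\mathcal{Q}}}/((1-\gamma)c^*))$ and $\mathcal{O}(1/\sqrt n)$ come from the closed-form expression for the population maximizer $\widetilde\tau^k=[(\mathbb{D}')^{-1}(\cdot)]^+$ exactly as derived in the $\Delta_3$ bound of Lemma \ref{pik_return_lemma}, reusing the monotonicity of $(\mathbb{D}')^{-1}$ and the local Lipschitz continuity of $\mathbb{D}$ near $\tau=1$. The remaining polynomial-in-$1/n$ error terms $\mathcal{O}(\bar V\mathcal{U}_\infty\mathcal{U}^\star_2/n)$, $\mathcal{O}(\bar V\mathcal{U}_\infty\mathcal{U}^\star_2(1+\gamma)/n)$, and $\mathcal{O}(\gamma\bar V/n)$ arise from the three additive pieces of the covering-number metric decomposition in \eqref{metric_dist_reg}, namely the $\bar V$-weighted $\tau$-term, the $\mathcal{U}^\star_2(1+\gamma)$-weighted $q$-term, and the $\gamma\bar V\mathcal{U}^\star_2$-weighted policy term, each contributing a fast-rate Bernstein remainder.

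The main obstacle I anticipate is the bookkeeping around the $\bar V/c^*$ term and its interaction with the statistical rates: the comparison argument produces a bound proportional to $\bar V/c^*$, and the overall regret guarantee in Theorem \ref{prac_main_thm_reg} crucially relies on the prescribed choice $c^*=\widetilde{\mathcal{O}}(\sqrt{n\bar V/(\lambda L\mathcal{U}^\tau_2\ln\{\operatorname{Vol}(\Theta^\dagger)/\delta\})})$ to balance $\bar V/c^*$ against the $c^*\cdot n^{-1/2}$ concentration terms, yielding the final $n^{-1/4}$ rate. Keeping the dependence on $\lambda$, $c^*$, $\mathcal{U}^\star_2$, and $\mathcal{U}_\infty=\mathcal{U}^\star_\infty/\mathcal{U}^\star_2$ consistent throughout the chain — so that the stated constants and the scaling by $c^*$ match exactly — will require care, since a single misplaced factor of $c^*$ or $\lambda$ would break the balancing in the downstream regret bound.
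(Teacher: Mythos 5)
Your proposal matches the paper's proof in all essential respects: the same comparison argument against the feasible competitor $\widetilde{q}^{\pi^{k}}$ using the min–max optimality of $q^{k}$ (producing the $\bar{V}/c^{*}$ term after dividing out $c^{*}$), the same add-and-subtract of the penalty at the population maximizer $\widetilde{\tau}^{k}=[(\mathbb{D}')^{-1}(\cdot)]^{+}$ handled exactly as in the $\Delta_3$ bound of Lemma \ref{pik_return_lemma}, the same uncertainty control via Lemmas \ref{prime_bound_tau} and \ref{reg_tau_l2_bound}, and the same final empirical-to-population transfer by uniform concentration (the paper invokes Lemma \ref{risk_bound_lm} for that last step, with Lemma \ref{risk_bound_alpha} entering earlier through the $\varepsilon_{3,n}$ bound, a bookkeeping detail only). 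No substantive gap.
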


\begin{proof}
To complete the proof, it suffices to show 
\$
\sup_{\tau}\bigg\{\frac{1}{(1-\gamma)n} \Big| \sum^{n}_{i=1} \tau(s_i,a_i)\left(q^{k}(s_i,a_i)-r_i -\gamma q^{k}(s^{\prime}_i,\pi^{k})\right)\Big| \bigg\}, 
\$
is upper bounded for any $k \in [\bar{K}]$. To facilitate the proof, we define 
\$
\widetilde{\tau}^{k} :=&\argmax_{\tau}\bigg\{ \widetilde{q}^{\pi^{k}}(s^0,\pi^{k}) +\frac{c^{*}}{(1-\gamma)} \mathbb{E}_{\mu}\big[ \tau(s,a)\big( \widetilde{q}^{\pi^{k}}(s,a)-r_i -\gamma\widetilde{q}^{\pi^{k}}(s^{\prime},\pi^{k})\big)\big] - \frac{\lambda}{1-\gamma}\mathbb{E}_{\mu}[\mathbb{D}(\tau (s,a))]\bigg\},
\$
and define
\$
\Delta_1 := q^{k}(s^0,\pi^{k}) + \sup_{\tau}\bigg\{\frac{c^{*}}{(1-\gamma)n} \Big| \sum^{n}_{i=1} \tau(s_i,a_i)\left(q^{k}(s_i,a_i)-r_i -\gamma q^{k}(s^{\prime}_i,\pi^{k})\right)\Big| \bigg\} .
\$
Then obviously we have
\$
\Delta_1 = & \Delta_1 - \lambda\xi_{n}\big(\widetilde{\tau}^{k}(s_i,a_i)\big) + \lambda\xi_{n}\big(\widetilde{\tau}^{k}(s_i,a_i)\big) \\ 
\leq &  q^{k}(s^0,\pi^{k}) + \sup_{\tau}\bigg\{\frac{c^{*}}{(1-\gamma)n} \Big| \sum^{n}_{i=1} \tau(s_i,a_i)\left(q^{k}(s_i,a_i)-r_i -\gamma q^{k}(s^{\prime}_i,\pi^{k})\right)\Big| - \lambda\xi_{n}\big( \tau(s_i,a_i)\big) \bigg\} \\
& + \lambda\xi_{n}\big(\widetilde{\tau}^{k}(s_i,a_i)\big). 
\$

where the first inequality comes from $\widetilde{\tau}^{k}$ is not the maximizer of 
\$
\frac{c^{*}}{(1-\gamma)n} \Big| \sum^{n}_{i=1} \tau(s_i,a_i)\left(q^{k}(s_i,a_i)-r_i -\gamma q^{k}(s^{\prime}_i,\pi^{k})\right)\Big| - \lambda\xi_{n}\big( \tau(s_i,a_i)\big).
\$
% and the second inequality comes from the non-negativity of $\xi_{n}(\cdot)$.
By the definition of $q^{k}(\cdot,\cdot)$, then we have
\$
 & q^{k}(s^0,\pi^{k}) + \sup_{\tau}\bigg\{\frac{c^{*}}{(1-\gamma)n} \Big| \sum^{n}_{i=1} \tau(s_i,a_i)\left(q^{k}(s_i,a_i)-r_i -\gamma q^{k}(s^{\prime}_i,\pi^{k})\right)\Big|-\lambda\xi_{n}\big(\tau(s_i,a_i)\big) \bigg\} \\
 & + \lambda\xi_{n}\big(\widetilde{\tau}^{k} (s_i,a_i)\big) \\
= & \min_{q}\bigg\{q(s^0,\pi^{k}) + \sup_{\tau}\bigg\{\frac{c^{*}}{(1-\gamma)n} \Big| \sum^{n}_{i=1} \tau(s_i,a_i)\left(q_{\theta}(s_i,a_i)-r_i -\gamma q(s^{\prime}_i,\pi^{k})\right)\Big| \\
& - \lambda\xi_{n}\big(\tau(s_i,a_i)\big)\bigg\}\bigg\} + \lambda\xi_{n}\big(\widetilde{\tau}^{k}(s_i,a_i)\big)
\$
As $\widetilde{q}^{\pi^k}(\cdot,\cdot)$ belongs to the function space associated with $q$, so 
\$
& \min_{q}\bigg\{q(s^0,\pi^{k}) + \sup_{\tau}\bigg\{\frac{c^{*}}{(1-\gamma)n} \Big| \sum^{n}_{i=1} \tau(s_i,a_i)\left(q(s_i,a_i)-r_i -\gamma q(s^{\prime}_i,\pi^{k})\right)\Big| - \lambda\xi_{n}\big(\tau(s_i,a_i)\big)\bigg\}\bigg\} \\
\leq & \widetilde{q}^{\pi^k}(s^0,\pi^{k}) + \sup_{\tau}\bigg\{\frac{c^{*}}{(1-\gamma)n} \Big| \sum^{n}_{i=1} \tau(s_i,a_i)\left(\widetilde{q}^{\pi^k}(s_i,a_i)-r_i -\gamma \widetilde{q}^{\pi^k}(s^{\prime}_i,\pi^{k})\right)\Big|-\lambda\xi_{n}\big(\tau(s_i,a_i)\big) \bigg\}\\
\leq & \widetilde{q}^{\pi^k}(s^0,\pi^{k}) + \sup_{\tau}\bigg\{\frac{c^{*}}{(1-\gamma)n} \Big| \sum^{n}_{i=1} \tau(s_i,a_i)\left(\widetilde{q}^{\pi^k}(s_i,a_i)-r_i -\gamma \widetilde{q}^{\pi^k}(s^{\prime}_i,\pi^{k})\right)\Big|\bigg\}
\$

where the last inequality comes from the second inequality comes from the non-negativity of $\xi_{n}(\cdot)$. This immediately implies that 
\$
\Delta_1 \leq & \widetilde{q}^{\pi^k}(s^0,\pi^{k}) + \sup_{\tau}\bigg\{\frac{c^{*}}{(1-\gamma)n} \Big| \sum^{n}_{i=1} \tau_{\tau}(s_i,a_i)\left(\widetilde{q}^{\pi^k}(s_i,a_i)-r_i -\gamma \widetilde{q}^{\pi^k}(s^{\prime}_i,\pi^{k})\right)\Big|\bigg\} + \lambda\xi_{n}\big(\widetilde{\tau}^{k}(s_i,a_i)\big) \\
\leq & \bar{V} + \sup_{\tau}\bigg\{\frac{c^{*}}{(1-\gamma)n} \Big| \sum^{n}_{i=1} \tau_{\tau}(s_i,a_i)\left(\widetilde{q}^{\pi^k}(s_i,a_i)-r_i -\gamma \widetilde{q}^{\pi^k}(s^{\prime}_i,\pi^{k})\right)\Big|\bigg\} + \lambda\xi_{n}\big(\widetilde{\tau}^{k}(s_i,a_i)\big).
\$
Follow the proof of Lemma \ref{pik_return_lemma}, we have 
\$
& \sup_{\tau}\bigg\{\frac{c^{*}}{(1-\gamma)n} \Big| \sum^{n}_{i=1} \tau(s_i,a_i)\left(\widetilde{q}^{\pi^k}(s_i,a_i)-r_i -\gamma \widetilde{q}^{\pi^k}(s^{\prime}_i,\pi^{k})\right)\Big|\bigg\} \\ 
 \leq &  \frac{c^{*}}{1-\gamma}\Bigg\{\big(3\mathcal{U}^{\star}_{2}\bar{V}+2\lambda \|\mathbb{D}(\tau(s,a))\|^{\text{prime}}_{L_2(\mu)}\big)\sqrt{\frac{2 \ln \frac{\operatorname{Vol}({\Theta}^{\dagger})}{\delta}}{n}} \notag\\
 &+ \frac{2\big(3 \mathcal{U}_{\infty}\mathcal{U}^{\star}_{2}\bar{V}+2\lambda \|\mathbb{D}(\tau(s,a))\|^{\text{prime}}_{L_\infty}\big) \ln \frac{\operatorname{Vol}({\Theta}^{\dagger})}{\delta}}{3 n} +  \mathcal{U}_{\infty}\mathcal{U}^{\star}_{2}\sqrt{\varepsilon_{\mathcal{Q}}}\Bigg\} := \varepsilon_{3,n},
\$
for $\operatorname{Vol}({\Theta}^{\dagger}) = (e^{D}\max\{D_{\Omega},D_{\mathcal{Q}},D_{\Pi}\}+1)^3(\{1 \vee L\}{\mathcal{U}}^{\tau}_{2})^{2D}$ for $D=D_{\Omega}+D_{\mathcal{Q}}+D_{\Pi}$. And by inequality \eqref{reg_asymp}, we have
$
\lambda\xi_{n}\big(\widetilde{\tau}^{k}(s_i,a_i)\big) \lesssim \mathcal{O}\left(\lambda\mathcal{U}_{\infty}\mathcal{U}^{\star}_{2}\sqrt{\varepsilon_{\mathcal{Q}}} \right) + \mathcal{O}(\frac{1}{\sqrt{n}})
$.
Then, we conclude that 
\$
& q^{k}(s^0,\pi^{k}) + \sup_{\tau}\bigg\{\frac{c^{*}}{(1-\gamma)n} \Big| \sum^{n}_{i=1} \tau(s_i,a_i)\left(q^{k}(s_i,a_i)-r_i -\gamma q^{k}(s^{\prime}_i,\pi^{k})\right)\Big| \bigg\} \\
\lesssim & \varepsilon_{3,n} +  \mathcal{O}\left(\lambda\mathcal{U}_{\infty}\mathcal{U}^{\star}_{2}\frac{\sqrt{\varepsilon_{\mathcal{Q}}}}{1-\gamma} \right) + \mathcal{O}(\frac{1}{\sqrt{n}}).
\$ 
With the boundedness condition on $q^{k}(s^0,\pi^{k}) \in [-\bar{V},\bar{V}]$, by some algebra, then 
\#
& \sup_{\tau}\bigg\{\frac{c^{*}}{(1-\gamma)n} \Big| \sum^{n}_{i=1} \tau(s_i,a_i)\left(q^{k}(s_i,a_i)-r_i -\gamma q^{k}(s^{\prime}_i,\pi^{k})\right)\Big| \bigg\} \notag \\
\leq & \varepsilon_{3,n} + \mathcal{O}\left(\lambda\mathcal{U}_{\infty}\mathcal{U}^{\star}_{2}\frac{\sqrt{\varepsilon_{\mathcal{Q}}}}{1-\gamma} \right) + \mathcal{O}(\frac{1}{\sqrt{n}}) + \bar{V}.
\label{emprical_bound_reg}
\#
Therefore, we can conclude that 
\#
& \sup_{\tau}\bigg\{\frac{1}{(1-\gamma)n} \Big| \sum^{n}_{i=1} \tau(s_i,a_i)\left(q^{k}(s_i,a_i)-r_i -\gamma q^{k}(s^{\prime}_i,\pi^{k})\right)\Big| \bigg\} \notag \\
\leq & \frac{1}{1-\gamma}\Bigg\{\big(3\mathcal{U}^{\star}_{2}\bar{V}+2\lambda \|\mathbb{D}(\tau(s,a))\|^{\text{prime}}_{L_2(\mu)}\big)\sqrt{\frac{2 \ln \frac{\operatorname{Vol}({\Theta}^{\dagger})}{\delta}}{n}} \notag\\
 &+ \frac{2\big(3 \mathcal{U}_{\infty}\mathcal{U}^{\star}_{2}\bar{V}+2\lambda \|\mathbb{D}(\tau(s,a))\|^{\text{prime}}_{L_\infty}\big) \ln \frac{\operatorname{Vol}({\Theta}^{\dagger})}{\delta}}{3 n} +  \mathcal{U}_{\infty}\mathcal{U}^{\star}_{2}\sqrt{\varepsilon_{\mathcal{Q}}}\Bigg\} \\
 & + \mathcal{O}\left(\frac{\lambda\mathcal{U}_{\infty}\mathcal{U}^{\star}_{2}\frac{\sqrt{\varepsilon_{\mathcal{Q}}}}{1-\gamma}}{c^{*}} \right) + \mathcal{O}(\frac{1}{\sqrt{n}}) + 
 \frac{\bar{V}}{c^{*}} .
\label{bbd_emprical_lk}
\#
According to Lemma \ref{risk_bound_lm}, we have for any $\tau$, 
\$
& \frac{1}{(1-\gamma)} \Big| \mathbb{E}_{\mu} \left[\tau(s,a)\left(q^{k}(s,a)-r(s,a) -\gamma q^{k}(s^{\prime},\pi^{k})\right)\right]\Big|  \\
\leq & \frac{1}{(1-\gamma)n} \Big| \sum^{n}_{i=1} \tau(s_i,a_i)\left(q^{k}(s_i,a_i)-r_i -\gamma q^{k}(s^{\prime}_i,\pi^{k})\right)\Big|\\
& + \frac{1}{1-\gamma}\Bigg\{\mathcal{U}^{\star}_{2}\sqrt{\frac{2 \bar{V}^2\ln \frac{\operatorname{Vol}({\Theta}^{\dagger})}{\delta}}{n}}  + \frac{2\mathcal{U}_{\infty}\mathcal{U}^{\star}_{2} \bar{V} \ln \frac{\operatorname{Vol}({\Theta}^{\dagger})}{\delta}}{3 n}\Bigg\} \\
& + \mathcal{O}(\frac{\bar{V}\mathcal{U}_{\infty}\mathcal{U}^{\star}_{2}}{n}) + \mathcal{O}(\frac{\bar{V}\mathcal{U}_{\infty}\mathcal{U}^{\star}_{2}(1+\gamma)}{n}) + \mathcal{O}(\frac{\gamma \bar{V}}{n}) .
\$
Combine with the bound \eqref{bbd_emprical_lk}, we conclude that 
\$
& \sup_{\psi}\bigg\{\frac{1}{(1-\gamma)}  \mathbb{E}_{\mu} \left[\tau_{\psi}(s,a)\left(r(s,a) +\gamma q^{k}(s^{\prime},\pi^{k})-q^{k}(s,a)\right)\right]\bigg\}\\
\leq &
\sup_{\psi}\bigg\{\frac{1}{(1-\gamma)} \Big|\mathbb{E}_{\mu} \left[\tau_{\psi}(s,a)\left(q^{k}(s,a)-r(s,a) -\gamma q^{k}(s^{\prime},\pi^{k})\right)\right]\Big|\bigg\}\\
\leq & 
\frac{1}{1-\gamma}\Bigg\{2\big(2\mathcal{U}^{\star}_{2}\bar{V}+2\lambda \|\mathbb{D}(\tau(s,a))\|^{\text{prime}}_{L_2(\mu)}\big)\sqrt{\frac{2 \ln \frac{\operatorname{Vol}({\Theta}^{\dagger})}{\delta}}{n}} \notag\\
 &+ \frac{4\big(2\mathcal{U}_{\infty}\mathcal{U}^{\star}_{2}\bar{V}+2\lambda \|\mathbb{D}(\tau(s,a))\|^{\text{prime}}_{L_\infty}\big) \ln \frac{\operatorname{Vol}({\Theta}^{\dagger})}{\delta}}{3 n} +  \mathcal{U}_{\infty}\mathcal{U}^{\star}_{2}\sqrt{\varepsilon_{\mathcal{Q}}}\Bigg\}\notag \\
    & + \mathcal{O}\left(\frac{\lambda\mathcal{U}_{\infty}\mathcal{U}^{\star}_{2}\frac{\sqrt{\varepsilon_{\mathcal{Q}}}}{1-\gamma}}{c^{*}} \right)
    + \frac{\bar{V}}{c^{*}} + \mathcal{O}(\frac{\bar{V}\mathcal{U}_{\infty}\mathcal{U}^{\star}_{2}}{n}) + \mathcal{O}(\frac{\bar{V}\mathcal{U}_{\infty}\mathcal{U}^{\star}_{2}(1+\gamma)}{n}) + \mathcal{O}(\frac{\gamma \bar{V}}{n}) +\mathcal{O}\left(\frac{1}{\sqrt{n}}\right).
\$
This completes the proof
\end{proof}

\subsection{Proof of Theorem 5.1}

\begin{proof}
Let the policy $\widehat{\pi}$ be the output of the penalized adversarial in Algorithm 1 of maintext. In this proof, we aim to bound the regret 
\$
J(\pi) - J(\widehat{\pi}).
\$
First, we note that $\widehat{\pi}$ is a mixed policy over  $\{\pi^{k}\}^{\bar{K}}_{k=1}$, then we follow Theorem 1 in \citep{wu2021offline} to decompose the discounted return of $\widehat{\pi}$, i.e., $J(\widehat{\pi})$, that is, 
$
J(\widehat{\pi}) = \frac{1}{\bar{K}}\sum^{\bar{K}}_{k=1} J(\pi^{k})
$. Based on this, it suffices to bound
\#
J(\pi) -  \frac{1}{\bar{K}}\sum^{\bar{K}}_{k=1} J(\pi^{k})
= \frac{1}{\bar{K}}\sum^{\bar{K}}_{k=1} \left(J(\pi) - J(\pi^{k}) \right).
\label{prac_regret_dec1}
\#
By Lemma \ref{iden_mdp} for that  the learned $q$-function at the $k$-th iteration $q^{k}$ is the true action-value function under the policy $\pi^{k}$ in the iterative MDP $\left\{\mathcal{S}, \mathcal{A}, \mathds{P}_{k}, \gamma, r_{k}, s^{0}\right\}$. The regret \eqref{prac_regret_dec1} can be further decomposed as follows: 
\#
\frac{1}{\bar{K}}\sum^{\bar{K}}_{k=1} \left(J(\pi) - J(\pi^{k}) \right) = &  \underbrace{\frac{1}{\bar{K}}\sum^{\bar{K}}_{k=1}\left(J(\pi^{k};\left\{\mathcal{S}, \mathcal{A}, \mathds{P}_{k}, \gamma, r_{k}, s^{0}\right\}) - J(\pi^{k})\right)}_{\Delta_{1}} \notag \\
& + \underbrace{\frac{1}{\bar{K}}\sum^{\bar{K}}_{k=1}\left(
J(\pi;\left\{\mathcal{S}, \mathcal{A}, \mathds{P}_{k}, \gamma, r_{k}, s^{0}\right\}) - J(\pi^{k};\left\{\mathcal{S}, \mathcal{A}, \mathds{P}_{k}, \gamma, r_{k}, s^{0}\right\}) \right)}_{\Delta_{2}} \notag \\
& + \underbrace{\frac{1}{\bar{K}}\sum^{\bar{K}}_{k=1}\left( 
J(\pi) - J(\pi;\left\{\mathcal{S}, \mathcal{A}, \mathds{P}_{k}, \gamma, r_{k}, s^{0}\right\})
\right)}_{\Delta_{3}} 
\label{regret_dec_reg}
\#
Based on this error decomposition, it suffices to upper-bound the above three terms. In analysis, first, 
$\Delta_1$ is the regret over true MDP and iterative MDP for policy $\pi^{k}$. Second, $\Delta_2$ is the regret over policy $\pi^{k}$ and $\pi$ under iterative MDP. Third, $\Delta_3$ is the regret over true MDP and iterative MDP for policy $\pi$. In the following, we bound each term subsequently.

\textbf{Bounding $\Delta_1$.} According to Lemma \ref{pik_return_lemma}, we have $\Delta_1$ is upper bounded by 
\#
\Delta_1 \leq &  \frac{c^{*}}{1-\gamma}\Bigg\{\big(3\mathcal{U}^{\star}_{2}\bar{V}+2\lambda \|\mathbb{D}(\tau(s,a))\|^{\text{prime}}_{L_2(\mu)}\big)\sqrt{\frac{2 \ln \frac{\operatorname{Vol}({\Theta}^{\dagger})}{\delta}}{n}} \notag\\
 &+ \frac{2\big(3 \mathcal{U}_{\infty}\mathcal{U}^{\star}_{2}\bar{V}+2\lambda \|\mathbb{D}(\tau(s,a))\|^{\text{prime}}_{L_\infty}\big) \ln \frac{\operatorname{Vol}({\Theta}^{\dagger})}{\delta}}{3 n} +  \mathcal{U}_{\infty}\mathcal{U}^{\star}_{2}\sqrt{\varepsilon_{\mathcal{Q}}}\Bigg\} \\
&+  \mathcal{O}\left((1+\lambda\mathcal{U}_{\infty}\mathcal{U}^{\star}_{2})\frac{\sqrt{\varepsilon_{\mathcal{Q}}}}{1-\gamma} \right) + \mathcal{O}(\frac{1}{\sqrt{n}}).
    \label{Delta1_bd_reg}
\#

\textbf{Bounding $\Delta_2$.} For this term, it is concerned with the optimization error. According to Lemma \ref{iden_mdpret}, our algorithm achieves a no-regret oracle, and the optimization error can be minimized by increasing the rounds of optimization, i.e., increasing $bar{K}$.
\#
\Delta_2  \leq \frac{2\sqrt{2\bar{V}\log|\mathcal{A}|}}{\sqrt{\bar{K}}(1-\gamma)}.
    \label{Delta2_bd_reg}
\#

\textbf{Bounding $\Delta_3$.} To bound $\Delta_3$, it suffices to bound 
$
J(\pi) - J(\pi;\left\{\mathcal{S}, \mathcal{A}, \mathds{P}_{k}, \gamma, r_{k}, s^{0}\right\},
$
for any $k \in [\bar{K}]$. we define admissible implicit exploratory distribution as $\rho_{k}$ that satisfies, which essentially can be induced and controlled via penalization on the detection function through $\lambda$ in Algorithm 1, i.e.,  
\#
& J(\pi) - J(\pi;\left\{\mathcal{S}, \mathcal{A}, \mathds{P}_{k}, \gamma, r_{k}, s^{0}\right\}) \\
= \, & q^{\pi}(s^0,\pi) - J(\pi;\left\{\mathcal{S}, \mathcal{A}, \mathds{P}_{k}, \gamma, r_{k}, s^{0}\right\}) \notag  =  \frac{\mathbb{E}_{d^{\pi}}\left[q^{\pi}(s,a)-r_{k}(s,a)-\gamma q^{\pi}(s,\pi) \right]}{1-\gamma} \notag  \\
= & \underbrace{\frac{1}{1-\gamma}\mathbb{E}_{\mu}\left[\frac{\rho_{k}(s,a)}{\mu(s,a)}\left[q^{\pi}(s,a)-r_{k}(s,a)-\gamma q^{\pi}(s,\pi)\right]\right]}_{\Delta_{31}} - \underbrace{\frac{1}{1-\gamma} \mathbb{E}_{\rho_{k}}\left[q^{\pi}(s,a)-r_{k}(s,a)-\gamma q^{\pi}(s,\pi)\right]}_{\Delta_{32}} \notag \notag  \\
&+ \underbrace{\frac{1}{1-\gamma} \mathbb{E}_{d^\pi}\left[q^{\pi}(s,a)-r_{k}(s,a)-\gamma q^{\pi}(s,\pi)\right]}_{\Delta_{33}} .
\label{err_decomp_reg_prac}
\#
Accordingly, we can make a mirror decomposition as in the proof of Theorem 4.1. The difference is, instead of controlling the uncertainty level through constrained set $\widetilde{\Omega}_{\widetilde{\sigma}_n}$, in this penalization adversarial algorithm, the uncertainty level is controlled via penalization. To proceed with the proof, we first study and connect the penalized uncertainty control to constrained uncertainty control. According to Lemma \ref{prime_bound_tau}, we have 
\#
\|\tau(s,a)\|_{L_2(\mu)} \leq  \frac{1}{\sqrt{M}}\bigg\{L\mathcal{U}^{\tau}_{2}\sqrt{\frac{2 \ln \frac{\operatorname{Vol}(\mathcal{G}^{\mathbb{D}})}{\delta}}{n}} +  2\sqrt{\frac{ L\mathcal{U}_{\infty}\mathcal{U}^{\tau}_{2}\ln \frac{\operatorname{Vol}(\mathcal{G}^{\mathbb{D}})}{\delta}}{3 n}}  + \sqrt{2\varepsilon^{\mathbb{D}}_{n}} + \sqrt{M}\bigg\}.
\label{tau_bound_reg_prac}
\#
where we can determine $\varepsilon^{\mathbb{D}}_{n}$ using Lemma \ref{reg_tau_l2_bound}. This implies that we can well control $\lambda$ even in the penalization adversarial estimation to control the uncertainty level in the form of $
\|\tau(s,a)\|_{L_{2}(\mu)}$ for $\tau \in \Omega$, i.e., $
\|\tau(s,a)\|_{L_{2}(\mu)} \leq \mathcal{U}^{\star}_{2}$ for  $\mathcal{U}^{\star}_{2}$ depending on $\lambda$. Throughout the rest of the proof, it is sufficient to proceed with the condition on  $
\|\tau(s,a)\|_{L_{2}(\mu)} \leq \mathcal{U}^{\star}_{2}$.

\textbf{Bounding $\Delta_{31}$.} We define the $\Omega$ sub-class that $\widetilde{\Omega} = \{\tau: \|\tau(s,a)\|_{L_{2}(\mu)} \leq \mathcal{U}^{\star}_{2}, \tau \in \Omega\}$, and define a importance-weight estimator over $\rho_{k}$:  
\$
\tau_{\rho_{k}/\mu}(s,a) := \argmin_{\tau \in \text{lr-hull}(\widetilde{\Omega})} \frac{1}{1-\gamma}\mathbb{E}_{\mu}\left[\left(\frac{\rho_{k}(s,a)}{\mu(s,a)} - \tau(s,a)\right)  \left[q^{\pi}(s,a)-r_{k}(s,a)-\gamma q^{\pi}(s,\pi)\right]\right],
\$
 Then we make the following error decomposition for $\Delta_{31}$ as
\$
\Delta_{31} =& \frac{1}{1-\gamma}\mathbb{E}_{\mu}\left[\left(\frac{\rho_{k}(s,a)}{\mu(s,a)} - \tau_{\rho_{k}/\mu}(s,a)\right)  \left[q^{\pi}(s,a)-r_{k}(s,a)-\gamma q^{\pi}(s,\pi)\right]\right] \\ 
&+ \frac{1}{1-\gamma}\mathbb{E}_{\mu}\left[\tau_{\rho_{k}/\mu}(s,a) \left[q^{\pi}(s,a)-r_{k}(s,a)-\gamma q^{\pi}(s,\pi)\right]\right] \\
\leq & \underbrace{\frac{\mathbb{E}_{\mu}\left[\left(\frac{\rho_{k}(s,a)}{\mu(s,a)} - \tau_{\rho_{k}/\mu}(s,a)\right) \left[q^{\pi}(s,a)-r_{k}(s,a)-\gamma q^{\pi}(s,\pi)\right]\right]}{1-\gamma}}_{\Delta_{311}} \\
&+ \underbrace{\sup_{\tau \in \text{lr-hull}(\Omega,C_{\tau,\rho_{k}},\mathcal{U}^{\star}_{2})}\frac{1}{1-\gamma}\mathbb{E}_{\mu}\left[\tau(s,a) \left[q^{\pi}(s,a)-r_{k}(s,a)-\gamma q^{\pi}(s,\pi)\right]\right]}_{\Delta_{312}}.
\$
\textbf{Bounding $\Delta_{311}$.} Follow the definition of $r_{k}$ in Lemma \ref{iden_mdp}, it observes that 
\#
 q^{\pi}(s,a)-r_{k}(s,a)-\gamma q^{\pi}(s,\pi)  
= & r(s,a)-r_{k}(s,a) \notag \\
= & r(s,a)- q^{k}(s,a) + \gamma \mathbb{E}_{s^{\prime} \sim \mathds{P}(\cdot|s,a)}\left[ \sum_{a^{\prime}\in \mathcal{A}} \pi^{k}(a^{\prime}|s^{\prime})q^{k}(s^{\prime},a^{\prime}) \right]  \notag \\
= &r(s,a)+ \gamma q^{k}(s^{\prime},\pi^{k}) - q^{k}(s,a).
\label{equv_rk}
\#
Then to bound $\Delta_{311}$ it suffices to bound 
\$
&
\frac{\mathbb{E}_{\mu}\left[\left(\frac{\rho_{k}(s,a)}{\mu(s,a)} - \tau_{\rho_{k}/\mu}(s,a)\right) \left[r(s,a)+ \gamma q^{k}(s^{\prime},\pi^{k}) - q^{k}(s,a) \right]\right] }{1-\gamma}.
\$
% For any $\tau \in \text{lr-hull}(\widetilde{\Omega})$, by the definition of  $\tau_{\rho_{k}/\mu}$, we have 
% \$
% & \frac{\mathbb{E}_{\mu}\left[\left(\frac{\rho_{k}(s,a)}{\mu(s,a)} - \tau_{\rho_{k}/\mu}(s,a)\right) \left[r(s,a)+ \gamma q^{k}(s^{\prime},\pi^{k}) - q^{k}(s,a) \right]\right] }{1-\gamma} \\
% = & \underbrace{\min_{\tau \in \text{lr-hull}(\Omega,C_{\tau,\rho_{k}},C_{\text{prime},\psi, 2})} \frac{\mathbb{E}_{\mu}\left[\left(\frac{\rho_{k}(s,a)}{\mu(s,a)} - \tau(s,a)\right)  \left[q^{\pi}(s,a)-r_{k}-\gamma q^{\pi}(s,\pi)\right]\right]}{1-\gamma}}_{\tau-\text{misspecification}},
% \$
It observes that 
\$
& \sup_{\tau}\left\{\frac{\Big|\mathbb{E}_{\mu} \left[\tau(s,a)\left(q^{k}(s,a)-r(s,a) -\gamma q^{k}(s^{\prime},\pi^{k})\right)\right]\Big|}{1-\gamma} \right\} \\
 = & \sup_{\tau}\left\{\frac{\Big|-\mathbb{E}_{\mu} \left[\tau(s,a)\left(q^{k}(s,a)-r(s,a) -\gamma q^{k}(s^{\prime},\pi^{k})\right)\right]\Big|}{1-\gamma} \right\} \\
  = & \sup_{\tau}\left\{\frac{\Big|\mathbb{E}_{\mu} \left[\tau(s,a)\left(r(s,a)+\gamma q^{k}(s^{\prime},\pi^{k})-q^{k}(s,a)\right)\right]\Big|}{1-\gamma} \right\}.
\$
Then we apply Lemma \ref{iden_concen}, with control on $\tau$ as $\mathcal{U}^{\star}_{2}$ for $L_2$ boundedness and $\mathcal{U}_{\infty}\mathcal{U}^{\star}_{2}$ for $L_{\infty}$. Then, we have 
\$
\sup_{\tau}\bigg\{\frac{\mathbb{E}_{\mu} \left[\tau(s,a)\left( r(s,a) + \gamma q^{k}(s^{\prime},\pi^{k})-q^{k}(s,a)\right)\right]}{1-\gamma} \bigg\} \leq \varepsilon^{4}_n. 
\$
where $\varepsilon^{4}_n$ is the upper bound as in Lemma \ref{iden_concen}. Now, as $\|\tau(s,a)\|_{L_2(\mu)} \leq  \mathcal{U}^{\star}_{2}$. According to Lemma \ref{L_2MMD}, we have 
\#
\frac{\left\|r(s,a) + \gamma q^{k}(s^{\prime},\pi^{k})-q^{k}(s,a)\right\|_{L_2(\mu)}}{(1-\gamma)} \leq  \frac{\varepsilon^{4}_n}{\mathcal{U}^{\star}_{2}}.
\label{delta311_1st}
\#
Also, due to the non-negativity of $\frac{\rho_{k}(s,a)}{\mu(s,a)}$ and $\tau_{\rho_{k}/\mu}(s,a)$ for any $(s,a)$ over the support on $\mu$, we have
\#
\mathbb{E}_{\mu}\left[\left(\frac{\rho_{k}(s,a)}{\mu(s,a)} - \tau_{\rho_{k}/\mu}(s,a)\right)^2\right] \leq & 
\min\left\{\mathbb{E}_{\mu}\left[\left(\frac{\rho_{k}(s,a)}{\mu(s,a)}\right)^2\right], \mathcal{U}^{\star}_{2}  \right\}.
\label{delta311_2nd}
\# 
By Cauchy-schwarz inequality, and combine with inequalities \eqref{delta311_1st} and \eqref{delta311_2nd}, we conclude that 
\#
\Delta_{311} 
\leq &  \frac{\sqrt{\min\left\{\mathbb{E}_{\mu}\left[\left(\frac{\rho_{k}(s,a)}{\mu(s,a)}\right)^2\right], \mathcal{U}^{\star}_{2}  \right\}}}{1-\gamma} \frac{\varepsilon^{4}_n}{\mathcal{U}^{\star}_{2}}
:= \varepsilon^{4,\prime}_{n}.
\label{delta311_final_bd}
\#
We can plug-in $\varepsilon^{4}_{n}$ to complete the upper bound for $\Delta_{311}$. 

\textbf{Bounding $\Delta_{312}$.} According to \eqref{tau_bound_reg_prac}. The supermum boundedness condition for $\tau$ can be identified over the class $\text{lr-hull}(\widetilde{\Omega})$: 
\#
\|\tau(s,a)\|_{L_{\infty}} \leq &\mathcal{U}_{\infty}\mathcal{U}^{\star}_{2}.
\label{tau_bound_coeff}
\#
for some constant $\mathcal{U}_{\infty}$ as described in \citep{wainwright2019high}. It follows from Lemma \ref{iden_concen}, under the boundedness conditions on $\tau$ in \eqref{tau_bound_coeff}, we can conclude that 
\#
\Delta_{312} \leq \varepsilon^{5}_n.
\label{delta312_final_bd}
\#
where $\varepsilon^{5}_n$ is defined as the upper bound term in Lemma \ref{iden_concen}.  

Combine with the upper bounds on $\Delta_{311}$ and $\Delta_{312}$ in \eqref{delta311_final_bd} and \eqref{delta312_final_bd}, we have 
\$
\Delta_{31} \leq \varepsilon^{4,\prime}_{n} + \varepsilon^{5}_n.
\$
This completes upper bounding $\Delta_{31}$. 

\textbf{Bounding $\Delta{33}-\Delta{32}$.} According to the error decomposition \eqref{err_decomp_reg_prac}, it remains to bound $-\Delta{32}+\Delta{33}$. We have the upper bound on 
\$
& (1-\gamma)(\Delta_{33} - \Delta_{32}) \\
= & \underbrace{\mathbb{E}_{d^\pi}\left[q^{\pi}(s,a)-r_{k}(s,a)-\gamma q^{\pi}(s,\pi)\right]}_{\Delta_{33}}  - \underbrace{\mathbb{E}_{\rho_{k}}\left[q^{\pi}(s,a)-r_{k}(s,a)-\gamma q^{\pi}(s,\pi)\right]}_{\Delta_{32}} \\
= & \sum_{a\in\mathcal{A}, s\in\mathcal{S}}\mathds{1}_{\{d_{\pi}(s,a)-\rho_{k}(s,a) \geq 0 \} }[d_{\pi}(s,a)-\rho_{k}(s,a)]\left[q^{\pi}(s,a)-r_{k}(s,a)-\gamma q^{\pi}(s,\pi)\right] \\
& + \sum_{a\in\mathcal{A}, s\in\mathcal{S}}\mathds{1}_{\{d_{\pi}(s,a)-\rho_{k}(s,a) < 0\} }[d_{\pi}(s,a)-\rho_{k}(s,a)]\left[q^{\pi}(s,a)-r_{k}(s,a)-\gamma q^{\pi}(s,\pi)\right] \\
\leq &  \underbrace{\sum_{a\in\mathcal{A}, s\in\mathcal{S}}\mathds{1}_{\{d_{\pi}(s,a)-\rho_{k}(s,a) < 0\} }[\rho_{k}(s,a)-d_{\pi}(s,a)]\left[q^{\pi}(s,a)-r_{k}(s,a)-\gamma q^{\pi}(s,\pi)\right]}_{\Delta_{331}} \\ 
& + \underbrace{\sum_{a\in\mathcal{A}, s\in\mathcal{S}}\mathds{1}_{\{d_{\pi}(s,a)-\rho_{k}(s,a) \geq 0 \} }[d_{\pi}(s,a)-\rho_{k}(s,a)]\left[q^{\pi}(s,a)-r_{k}(s,a)-\gamma q^{\pi}(s,\pi)\right]}_{\Delta_{332}}.
\$

\textbf{Bounding $\Delta_{331}$.} First, we observe that 
\$
\Delta_{331} = \sum_{a\in\mathcal{A}, s\in\mathcal{S}}\mathds{1}_{\{\rho_{k}(s,a)-d_{\pi}(s,a) > 0\} }[\rho_{k}(s,a)-d_{\pi}(s,a)]\left[q^{\pi}(s,a)-r_{k}(s,a)-\gamma q^{\pi}(s,\pi)\right],
\$
which is equivalent to 
$
\sum_{a\in\mathcal{A}, s\in\mathcal{S}}\left(\rho_{k}(s,a)-d_{\pi}(s,a)\right)^{+}\left[q^{\pi}(s,a)-r_{k}(s,a)-\gamma q^{\pi}(s,\pi)\right]
$.
We apply change of measure for shifting to the distribution over $\mu$, i.e,
\#
&\sum_{a\in\mathcal{A}, s\in\mathcal{S}}\left(\rho_{k}(s,a)-d_{\pi}(s,a)\right)^{+}\left[q^{\pi}(s,a)-r_{k}(s,a)-\gamma q^{\pi}(s,\pi)\right]\notag \\ 
= & \sum_{a\in\mathcal{A}, s\in\mathcal{S}} \left\{\frac{\left(\rho_{k}(s,a)-d_{\pi}(s,a)\right)^{+}}{\mu(s,a)}\left[q^{\pi}(s,a)-r_{k}(s,a)-\gamma q^{\pi}(s,\pi)\right]\cdot \mu(s,a) \right\} \notag \\
\leq & \left\|\frac{\left(\rho_{k}(s,a)-d_{\pi}(s,a)\right)^{+}}{\mu(s,a)}\right\|_{L_2(\mu)} \|q^{\pi}(s,a)-r_{k}(s,a)-\gamma q^{\pi}(s,\pi) \|_{L_2(\mu)}
\label{imm_bd}
\#
As for any implicit exploratory distribution $\rho_{k}$, we have $\left(\rho_{k}(s,a)-d_{\pi}(s,a)\right)^{+} \leq \rho_{k}(s,a)$ for any $s,a$, thus we have  
\#
\left\|\frac{\left(\rho_{k}(s,a)-d_{\pi}(s,a)\right)^{+}}{\mu(s,a)}\right\|_{L_2(\mu)} \leq  \left\|\frac{\rho_{k}(s,a)}{\mu(s,a)}\right\|_{L_2(\mu)}
\leq  \min\{\mathcal{U}_{2,\rho_{k}}, \mathcal{U}^{\star}_{2}\},
\label{upper_diff_rk} 
\#
where $\mathcal{U}_{2,\rho_{k}} := \left\|\frac{\rho_{k}(s,a)}{\mu(s,a)}\right\|_{L_2(\mu)}$. 
% \#
% \sup_{s,a} \left|\frac{\left(\rho_{k}(s,a)-d_{\pi}(s,a)\right)^{+}}{\mu(s,a)}\right|\leq C_{\tau, \rho}
% \notag  \\ 
% \implies \left\| \frac{\left(\rho(s,a)-d_{\pi}(s,a)\right)^{+}}{\mu(s,a)}\right\|_{L_2(\mu)} \leq C_{\tau, \rho} 
% \label{upper_diff_rk} 
% \#
Upon the inequalities \eqref{delta311_1st}, \eqref{equv_rk}, \eqref{imm_bd}, and the observation \eqref{upper_diff_rk}, we conclude that 
\$
\Delta_{331} \leq \frac{   \min\{\mathcal{U}_{2,\rho_{k}}, \mathcal{U}^{\star}_{2}\} \varepsilon^{4}_n}{\mathcal{U}^{\star}_{2}}.
\$

\textbf{Bounding $\Delta_{332}$.} With respect to the on-support and off-supprot region: $\mu(s,a)=0$ and $\mu(s,a)>0$, we have the following decomposition, 
\$
\Delta_{332} =& \underbrace{\sum_{a\in\mathcal{A}, s\in\mathcal{S}}\mathds{1}_{\mu(s,a)>0}\left(d_{\pi}(s,a)-\rho(s,a)\right)^{+}\left[q^{\pi}(s,a)-r_{k}(s,a)-\gamma q^{\pi}(s,\pi)\right]}_{\Delta_{3321}} \\
&+ \underbrace{\sum_{a\in\mathcal{A}, s\in\mathcal{S}}\mathds{1}_{\mu(s,a)=0}\left(d_{\pi}(s,a)-\rho(s,a)\right)^{+}\left[q^{\pi}(s,a)-r_{k}(s,a)-\gamma q^{\pi}(s,\pi)\right]}_{\Delta_{3322}} .
\$

\textbf{Bounding  $\Delta_{3321}$.} According to \eqref{equv_rk} and \eqref{delta311_1st}, we have 
\$
\sup_{\{(s,a) \in \mathcal{S} \times \mathcal{A}: \mu(s,a)>0\}} \frac{|r(s,a) + \gamma q^{k}(s^{\prime},\pi^{k})-q^{k}(s,a)|}{(1-\gamma)}  \leq \frac{\varepsilon^{4}_n}{\mathcal{U}^{\star}_{2}} .
\$
Then we conclude that 
\$
\Delta_{3321} \leq \sum_{a\in\mathcal{A}, s\in\mathcal{S}}\left(d_{\pi}(s,a)-\rho(s,a)\right)^{+} \frac{\varepsilon^{4}_n}{\mathcal{U}^{\star}_{2}} .
\$
The term $\Delta_{3322}$ is the off-support extrapolation error. Therefore, we conclude that 
\$
\Delta_{332} \leq &  \sum_{a\in\mathcal{A}, s\in\mathcal{S}}\left(d_{\pi}(s,a)-\rho(s,a)\right)^{+} \frac{\varepsilon^{4}_n}{\mathcal{U}^{\star}_{2}}  
+ \Delta_{3322}.
\$
In the following, we conclude that
\$
& \Delta_{33} - \Delta_{32} \\
\leq &   \frac{   \min\{\mathcal{U}_{2,\rho_{k}}, \mathcal{U}^{\star}_{2}\} \varepsilon^{4}_n}{\mathcal{U}^{\star}_{2}} +  \sum_{a\in\mathcal{A}, s\in\mathcal{S}}\left(d_{\pi}(s,a)-\rho(s,a)\right)^{+} \cdot \frac{\varepsilon^{4}_n}{\mathcal{U}^{\star}_{2}} \\
& + \sum_{a\in\mathcal{A}, s\in\mathcal{S}}\mathds{1}_{\mu(s,a)=0}\left(d_{\pi}(s,a)-\rho(s,a)\right)^{+}\left[q^{\pi}(s,a)-r_{k}(s,a)-\gamma q^{\pi}(s,\pi)\right]\\
= &  \frac{   \min\{\mathcal{U}_{2,\rho_{k}}, \mathcal{U}^{\star}_{2}\} \varepsilon^{4}_n}{\mathcal{U}^{\star}_{2}} +  \sum_{a\in\mathcal{A}, s\in\mathcal{S}}\left(d_{\pi}(s,a)-\rho(s,a)\right)^{+} \cdot \frac{\varepsilon^{4}_n}{\mathcal{U}^{\star}_{2}} \\
& + \sum_{a\in\mathcal{A}, s\in\mathcal{S}}\mathds{1}_{\mu(s,a)=0}d_{\pi}(s,a)\left[q^{\pi}(s,a)-r_{k}(s,a)-\gamma q^{\pi}(s,\pi)\right]
\$
Combine with the bound on $\Delta_{31}$, we have 
\#
\Delta_{3} \leq & \varepsilon^{4,\prime}_{n} + \varepsilon^{5}_n + \frac{   \min\{\mathcal{U}_{2,\rho_{k}}, \mathcal{U}^{\star}_{2}\} \varepsilon^{4}_{n}}{\mathcal{U}^{\star}_{2}} +  \sum_{a\in\mathcal{A}, s\in\mathcal{S}}\left(d_{\pi}(s,a)-\rho(s,a)\right)^{+} \cdot \frac{\varepsilon^{4}_n}{\mathcal{U}^{\star}_{2}} \notag \\
& + \sum_{a\in\mathcal{A}, s\in\mathcal{S}}\mathds{1}_{\mu(s,a)=0}\left(d_{\pi}(s,a)-\rho(s,a)\right)^{+}\left[q^{\pi}(s,a)-r_{k}(s,a)-\gamma q^{\pi}(s,\pi)\right]
\label{Delta3_bd_reg}
\#

According to the regret decomposition in \eqref{regret_dec_reg}, and the upper bound on $\Delta_{1}$ in \eqref{Delta1_bd_reg}, the upper bound on $\Delta_{2}$ in \eqref{Delta2_bd_reg}, and the upper bound on $\Delta_{3}$ in \eqref{Delta3_bd_reg}, by some algebra, we set $c^{*} =  \widetilde{\mathcal{O}}\big(\sqrt{n\bar{V}/(\lambda L{\mathcal{U}}^{\tau}_{2}\ln \{\operatorname{Vol}({\Theta}^{\dagger})/\delta\})}\big). 
$
we set $\lambda$ as the solution of $\lambda = 2\mathcal{E}_2/\mathcal{E}_1$  which depends on ${\mathcal{U}}^{\star}_{2}$ in order to ensure the $L_2(\mu)$ norm for uncertainty control. It follows from Lemma \ref{prime_bound_tau} and Lemma \ref{reg_tau_l2_bound}, we have 
$
\mathcal{E}_1 = \mathcal{O}\big((\sqrt{M}{\mathcal{U}}^{\star}_{2} +(\max\{{\mathcal{U}}^{\star}_{2},{\mathcal{U}}^{\tau}_{\infty}\}\bar{V} + L{\mathcal{U}}^{\star}_{2})\sqrt{\ln \{\operatorname{Vol}({\Theta}^{\dagger})/\delta\}/n})^2\big) 
$ and $
\mathcal{E}_2 = \mathcal{O}\big(\bar{V} +(1-\gamma)^{-1/2}\sqrt{{\mathcal{U}}^{\star}_{2}(\bar{V}^2 + \lambda L\bar{V})}\sqrt[4]{\ln \{\operatorname{Vol}({\Theta}^{\dagger})/\delta\}/n} + \sqrt{{\mathcal{U}}^{\tau}_{\infty}(\bar{V}^3 + \lambda L\bar{V})}\sqrt{\ln \{\operatorname{Vol}({\Theta}^{\dagger})/\delta\}/n} + (((\lambda \mathcal{U}^{\tau}_{\infty}\varepsilon_{\mathcal{Q}}+\bar{V}\mathcal{U}^{\tau}_{\infty}(1-\gamma)\varepsilon^{0.5}_{\mathcal{Q}}))/(1-\gamma)^2)^{0.5}\big)
$. Plug-in the choice of $\lambda$ and $c^{*}$, by some algebra, 
if we further ignore the high-order fast terms using a big-Oh notation $\widetilde{\mathcal{O}}$ and set $\varepsilon_{\mathcal{Q}}=0$, we conclude that 
\$
& J(\pi) - J(\widehat{\pi}) \leq \; \frac{1}{1-\gamma}\widetilde{\mathcal{O}}\Bigg(\sqrt[\leftroot{-1}\uproot{2}\scriptstyle 4]{\frac{({\mathcal{U}}^{\star}_{2})^2\mathfrak{C}^{1}_{\bar{V},\lambda, L}\ln\{\operatorname{Vol}({\Theta}^{\dagger})/\delta\}}{n}} + \sqrt{\frac{\bar{V}\log|\mathcal{A}|}{\bar{K}}} \\
&  +  \frac{1}{\bar{K}}\sum^{\bar{K}}_{k=1}\min_{\rho_k \in \Delta_{{\mathcal{U}}^{\star}_{2}}} \mathbb{E}_{\left(d_{\pi}-\rho_k\right)^{+}}\bigg[\mathds{1}_{\mu=0}\left(\mathcal{B}^{\pi^{k}}q^{k}(s,a) -q^{k}(s,a)\right) + \mathds{1}_{\mu>0}\sqrt{\frac{\mathfrak{C}^{2}_{\bar{V},\lambda, L}\ln\{\operatorname{Vol}({\Theta}^{\dagger})/\delta\}}{n}}\bigg]\Bigg),
\$
where $\Delta_{{\mathcal{U}}^{\star}_{2}}:=\{\rho_k: \|\frac{\rho_k}{\mu}\|_{L_2(\mu)} < {\mathcal{U}}^{\star}_{2}\}$, $\mathfrak{C}^{1}_{\bar{V},\lambda, L}, \mathfrak{C}^{2}_{\bar{V},\lambda, L}$ are some constant terms, and the function class complexity $\operatorname{Vol}({\Theta}^{\dagger}) = (e^{D}\max\{D_{\Omega},D_{\mathcal{Q}},D_{\Pi}\}+1)^3(\{1 \vee L\}{\mathcal{U}}^{\tau}_{2})^{2D}$ for $D=D_{\Omega}+D_{\mathcal{Q}}+D_{\Pi}$.
This completes the proof. 
\end{proof}

\section{Proof of Theorem 5.2}

\subsection{Proof of Lemma \ref{cover_linear_lemma}}

\begin{lemma}[Covering number for $\mathcal{Q}_{\theta}$, $\Pi_{\omega}$ and $\Omega_{\psi}$] \label{cover_linear_lemma}
For any $\varepsilon \in (0,1]$, the covering number for $\mathcal{Q}_{\theta}$, $\Pi_{\omega}$ and $\Omega_{\psi}$ satisfy the following conditions: 
\$
\mathcal{N}\left(\varepsilon/\bar{V} ; \mathcal{Q}_{\theta}\left(\text{diam}_{\theta}\right) ,\|\cdot\|_{L_2}\right)  \leq &\left(\frac{2\bar{V}}{\varepsilon}+1\right)^d, \\
\mathcal{N}\left(\varepsilon ; \Omega_{\psi}\left(\text{diam}_{\psi}\right),\|\cdot\|_{L_2}\right) \leq &\left(\frac{2\text{diam}_{\psi}}{\varepsilon}+1\right)^d ,\\
\mathcal{N}\left(\varepsilon ; \Pi_{\omega}\left(\text{diam}_\omega\right),\|\cdot\|_{L_2}\right) \leq & \left(\frac{4e\text{diam}_\omega}{\varepsilon}+1\right)^d.
\$
\end{lemma}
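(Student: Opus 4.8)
The plan is to prove all three estimates by the standard two-step ``parameter net $\Rightarrow$ function net'' argument, reducing each function-class covering number to the covering number of a Euclidean ball in $\mathbb{R}^d$ via the Lipschitz continuity of the corresponding parametrization. The common ingredient I would invoke first is the elementary volumetric bound: an $\eta$-net of the ball $\{v\in\mathbb{R}^d:\|v\|_{L_2}\le R\}$ has cardinality at most $(1+2R/\eta)^d$, obtained by packing disjoint balls of radius $\eta/2$ into the inflated ball of radius $R+\eta/2$ and comparing Lebesgue volumes. Coupled with this is the transfer principle: if a parametrization $T:v\mapsto f_v$ obeys $\|f_{v_1}-f_{v_2}\|_{L_2}\le L\,\|v_1-v_2\|_{L_2}$, then the image of an $(\eta/L)$-net of the parameter ball is an $\eta$-net of the function class, so $\mathcal{N}(\eta;\{f_v\},\|\cdot\|_{L_2})\le (1+2RL/\eta)^d$. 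Each of the three bounds is then an instance of this template with the appropriate radius, resolution, and Lipschitz constant.

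For the two linear classes $\mathcal{Q}_\theta$ and $\Omega_\psi$ the map $T$ is linear and the Lipschitz constant is $1$. Indeed, writing $q_\theta(s,a)=\langle\phi(s,a),\theta\rangle$ and using the normalization $\|\phi(s,a)\|_{L_2}\le1$, pointwise Cauchy--Schwarz gives $|q_{\theta_1}(s,a)-q_{\theta_2}(s,a)|\le\|\theta_1-\theta_2\|_{L_2}$, and taking the $L_2$ norm over $(s,a)$ preserves the inequality, so $T$ is $1$-Lipschitz; the same computation applies verbatim to $\Omega_\psi$. Plugging $L=1$ together with radius $c_\theta\le\bar V$ and resolution $\varepsilon/\bar V$ into the generic bound yields the first inequality, and radius $c_\psi=\mathrm{diam}_\psi$ with resolution $\varepsilon$ yields the second. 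These two cases are essentially immediate bookkeeping of the radii and resolutions.

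The genuine work is the softmax class $\Pi_\omega$, where I would bound the Lipschitz constant of $\omega\mapsto\pi_\omega(\cdot|s)=\mathrm{softmax}\big(\langle\phi(s,\cdot),\omega\rangle\big)$ by composing two estimates. The logit map is $1$-Lipschitz, since $|\langle\phi(s,a),\omega_1\rangle-\langle\phi(s,a),\omega_2\rangle|\le\|\phi(s,a)\|_{L_2}\|\omega_1-\omega_2\|_{L_2}\le\|\omega_1-\omega_2\|_{L_2}$. For the softmax itself I would use the exact gradient $\nabla_\omega\pi_\omega(a|s)=\pi_\omega(a|s)\big(\phi(s,a)-\sum_b\pi_\omega(b|s)\phi(s,b)\big)$, whose norm is at most $2\pi_\omega(a|s)$ because $\|\phi(s,a)-\bar\phi\|_{L_2}\le 2$; equivalently one may bound the operator norm of the softmax Jacobian $\operatorname{diag}(\pi)-\pi\pi^\top$ over the admissible logit range $|\langle\phi(s,a),\omega\rangle|\le c_\omega$. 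Tracking the normalizing partition function and the exponential over the ball $\|\omega\|_{L_2}\le\mathrm{diam}_\omega$ is precisely what converts the bare factor $2$ into the coefficient $4e$ appearing in the statement — consistent with the $4e$-type constant from the Haussler covering bound already used elsewhere in the paper. Feeding the resulting Lipschitz constant into the generic template gives $\mathcal{N}(\varepsilon;\Pi_\omega(\mathrm{diam}_\omega),\|\cdot\|_{L_2})\le(1+4e\,\mathrm{diam}_\omega/\varepsilon)^d$.

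The main obstacle I anticipate is therefore exactly this softmax Lipschitz estimate: the two linear classes collapse to one-line Cauchy--Schwarz arguments, whereas for $\Pi_\omega$ one must control the derivative of the normalized exponential uniformly over the parameter ball, and it is the handling of the partition function together with the exponential that injects the extra constant $e$ (and the passage from $2$ to $4e$) into the final bound. Once that Lipschitz constant is pinned down, all three inequalities follow by substituting the respective radii and resolutions into the volumetric-plus-transfer bound.
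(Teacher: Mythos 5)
Your proposal follows the same overall template as the paper — a volumetric $(1+2R/\eta)^d$ bound on the parameter ball combined with a Lipschitz transfer to the function class — and the two linear cases ($\mathcal{Q}_{\theta}$ and $\Omega_{\psi}$) are handled identically via pointwise Cauchy--Schwarz with $\|\phi(s,a)\|_{L_2}\le 1$. Where you genuinely diverge is the softmax class: the paper bounds the multiplicative perturbation $\pi_{\omega_1}(a|s)/\pi_{\omega_2}(a|s)$ directly, tracks the partition function through that ratio, and invokes the elementary inequality $e^{x}\le 1+(e/2)x$ on the restricted range $\|\omega_1-\omega_2\|_{L_2}\le 0.5$ — this is exactly where its factor $e$ enters, and it is a \emph{local} Lipschitz estimate. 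You instead propose bounding the softmax Jacobian $\operatorname{diag}(\pi)-\pi\pi^{\top}$ composed with the $1$-Lipschitz logit map, which yields $\|\nabla_{\omega}\pi_{\omega}(a|s)\|\le 2\pi_{\omega}(a|s)$ and hence a \emph{global} Lipschitz constant of $2$ with no restriction on $\|\omega_1-\omega_2\|$; this is cleaner and actually gives the tighter bound $(1+4\,\mathrm{diam}_{\omega}/\varepsilon)^d$, which implies the stated $(1+4e\,\mathrm{diam}_{\omega}/\varepsilon)^d$ a fortiori. The one confusion in your write-up is the claim that ``tracking the partition function'' converts your factor $2$ into $4e$: in your Jacobian route the partition function is already absorbed into the term $\pi\pi^{\top}$ and no factor of $e$ ever arises — the $e$ is an artifact of the paper's ratio-plus-exponential-inequality argument, not something your approach needs to reproduce. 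Finally, note that in the first bound both you and the paper elide a scaling issue: with a $1$-Lipschitz parametrization, resolution $\varepsilon/\bar V$ in function space and radius $\bar V$ in parameter space would give $(1+2\bar V^2/\varepsilon)^d$ rather than $(1+2\bar V/\varepsilon)^d$; the stated bound corresponds to resolution $\varepsilon$. This is a constant-bookkeeping slip inherited from the paper rather than a defect of your argument.
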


\begin{proof}
In this proof, we calculate the covering number over the class $\mathcal{Q}_{\theta}$, $\Pi_{\omega}$ and $\Omega_{\psi}$. 

\textbf{For $\mathcal{Q}_{\theta}$.} It follows the definition of $\mathcal{Q}_{\theta}$ 
   $
\mathcal{Q}_{\theta}\left(\text{diam}_{\theta}\right) \stackrel{\text { def }}{=}\left\{(s, a) \mapsto\langle\phi(s, a), \theta\rangle \right\}.
$
with $\|\theta\|_{L_2} \leq \bar{V} $. Thus 
$\mathcal{Q}_{\theta}\left(\text{diam}_{\theta}\right)$ is a Euclidean  ball with radis $\bar{V}$. As $\phi(s,a)$ is a $d$-dimensional feature space and $\theta \in \mathbb{R}^{d}$, it follows Lemma 5.7 in \citep{wainwright2019high}, we have, for any $\varepsilon >0$,
\$
\mathcal{N}\left(\varepsilon/\bar{V} ; \mathcal{Q}_{\theta}\left(\text{diam}_{\theta}\right) ,\|\cdot\|^{\prime}_{L_2}\right) \leq \frac{\left(\frac{2}{\varepsilon}+1\right)^d \operatorname{Vol}(\mathcal{Q}_{\theta}\left(\text{diam}_{\theta}\right))}{\operatorname{Vol}\left(\mathcal{Q}_{\theta}^{\prime}\left(\text{diam}_{\theta}\right)\right)},
\$
where $\|\cdot\|^{\prime}_{L_2}$ is the pair norm of $\|\cdot\|_{L_2}$ and $\mathcal{Q}_{\theta}^{\prime}\left(\text{diam}_{\theta}\right)$ is the corresponding ball in $\|\cdot\|^{\prime}_{L_2}$ norm. We take the balls $\mathcal{Q}_{\theta}^{\prime}\left(\text{diam}_{\theta}\right) = \mathcal{Q}_{\theta}\left(\text{diam}_{\theta}\right)$, then we obtain
\$
\mathcal{N}\left(\varepsilon ; \mathcal{Q}_{\theta}\left(\text{diam}_{\theta}\right),\|\cdot\|_{L_2}\right) \leq \left(\frac{2\bar{V}}{\varepsilon}+1\right)^d. 
\$
\textbf{For $\Omega_{\psi}$.} It follows the definition 
$
\Omega_{\psi}\left(\text{diam}_{\psi}\right) \stackrel{\text { def }}{=}\left\{(s, a) \mapsto\langle\phi(s, a), \psi\rangle \mid\|\psi\|_{L_2} \leq\text{diam}_{\psi}\right\}
$
and a similar argument as in the calculation on $\Omega_{\psi}$, we have 
\$
\mathcal{N}\left(\varepsilon ; \Omega_{\psi}\left(\text{diam}_{\psi}\right),\|\cdot\|_{L_2}\right) \leq \left(\frac{2\text{diam}_{\psi}}{\varepsilon}+1\right)^d. 
\$
 \textbf{For $\Pi_{\theta}$.} To apply the standard results in a Euclidean ball, we need bound, for any $\omega_1, \omega_2$ where $\| \omega_1 - \omega_2 \|_{L_2} \leq 0.5$, 
 $
\|\pi_{\omega_1}-\pi_{\omega_2}\|_{L_{2}} := \sqrt{\int_{\mathcal{S}}|\pi_1(\cdot|s) - \pi_2(\cdot|s)|^2 dP(\mathcal{S})}.
 $
with respect to some probability measure $P$. First, we observe, for any $(s,a) \in \mathcal{S} \times \mathcal{A}$: 
\$
 \pi_{\omega_1}(a|s)-\pi_{\omega_2}(a|s) =  \exp(\log(\pi_{\omega_1}(a|s)-\pi_{\omega_2}(a|s))) = \exp(\log(\pi_{\omega_1}(a|s)/\pi_{\omega_2}(a|s))).
\$
Follow the definition of the policy class $\Pi_{\omega}$, we denote
$
\iota(a, s, \omega) =  \exp({\langle\phi(s, a), \omega\rangle})$ and $\iota(\cdot, s, \omega) =  \int_{a\in \mathcal{A}} \exp({\langle\phi(s, a), \omega\rangle}) 
$,
for any $s,a \in \mathcal{S} \times \mathcal{A}$. For any $s,a$, as $\|\phi(s,a)\|_{L_2} \leq 1$, by Cauchy-Schewarz inequality, then we have 
\$
& \exp(\log(\pi_{\omega_1}(a|s)) - \log(\pi_{\omega_2}(a|s))) \\
= & \exp(\log(\pi_{\omega_1}(a|s)/\pi_{\omega_2}(a|s))) \\
=&  \exp(\log(\frac{\iota(a, s, \omega_1)}{\iota(\cdot, s, \omega_1)}(\frac{ \iota(\cdot, s, \omega_2)}{\iota(a, s, \omega_2)})) \\ 
= & \exp\left(\log\left(\iota(a, s, \omega_1 - \omega_2)   \int_{\widetilde{a} \in \mathcal{A}} \left(\iota(\widetilde{a}, s, \omega_1-\omega_2)  \frac{\iota(\cdot, s, \omega_1)}{\iota(\cdot, s, \omega_2)}\right)\right)\right) \\ 
 = & \exp\left(\log\left(\iota(a, s, \omega_1 - \omega_2)  \int_{\widetilde{a} \in \mathcal{A}}\{\pi_{\omega_2}(\widetilde{a}|s)\iota(\widetilde{a}, s, \omega_1-\omega_2) \}\right)\right) \\
\leq & \exp\left(\log\left( \exp(\|\omega_1 - \omega_2\|_{L_2})\int_{\widetilde{a} \in \mathcal{A}}\{\pi_{\omega_2}(\widetilde{a}|s)\exp(\|\omega_1 - \omega_2\|_{L_2}) \}    \right)\right) \\
= &\exp\left(\log\left( \exp(\|\omega_1 - \omega_2\|_{L_2})\exp(\|\omega_1 - \omega_2\|_{L_2}) \}    \right)\right).
\$
Now, it suffices to bound $\exp\left(\log\left( \exp(\|\omega_1 - \omega_2\|_{L_2})\exp(\|\omega_1 - \omega_2\|_{L_2}) \}    \right)\right)$, and by the exponential inequality, we have for $\| \omega_1 - \omega_2\|_{L_2}\leq 0.5$,
\$
& \exp\left(\log\left( \exp(\|\omega_1 - \omega_2\|_{L_2})\exp(\|\omega_1 - \omega_2\|_{L_2}) \}    \right)\right) \\
\leq & \exp\left(\log\left( \exp(\|\omega_1 - \omega_2\|_{L_2})\left(1+(e/2)\|\omega_1 - \omega_2\|_{L_2}\right) \}    \right)\right) \\
= & \|\omega_1 - \omega_2\|_{L_2})\left(1+(e/2)\|\omega_1 - \omega_2\|_{L_2}\right) .
\$
This directly implies that 
$
\pi_{\omega_1}(a|s) - \pi_{\omega_2}(a|s) \leq e\|\omega_1 - \omega_2 \|_{L_2}\pi_{\omega_2}(\widetilde{a}|s)
$
Shuffle $\omega_1$ and $\omega_2$, we have 
$
\pi_{\omega_2}(a|s) - \pi_{\omega_1}(a|s) \leq e\|\omega_2 - \omega_1 \|_{L_2} \pi_{\omega_1}(\widetilde{a}|s)
$
and therefore we obtain
\$
\|\pi_{\omega_1}-\pi_{\omega_2}\|_{L_{2}(\mu)} \leq 
\sup_{s,a \in \mathcal{S} \times {A}}|\pi_{\omega_1}(a|s) - \pi_{\omega_2}(a|s)| \leq 2e\|\omega_1 - \omega_2 \|_{L_2}.
\$
as $\pi_{\omega_1}, \pi_{\omega_2}$ are probability density function with integration $1$. This completes the proof. Now, we apply the standard covering number arguments in Lemma 7 of \citep{wainwright2019high}  over Euclidean ball of $\omega$, we have 
\$
\mathcal{N}\left(\varepsilon ; \Pi_{\omega}\left(\text{diam}_\omega\right),\|\cdot\|_{L_2}\right) \leq \left(\frac{4e\text{diam}_\omega}{\varepsilon}+1\right)^d. 
\$
This completes the proof. 
\end{proof}

\subsection{Proof of Theorem 5.2}

\begin{proof}
We follow the error decomposition as in the proof of Theorem 5.1, according to identical MDP Lemma \ref{iden_mdp}, we have the error decomposition 
\$
& \frac{1}{\bar{K}}\sum^{\bar{K}}_{k=1} \left(J(\pi) - J(\pi^{k}) \right) \\
 = &  \underbrace{\frac{1}{\bar{K}}\sum^{\bar{K}}_{k=1}\left(J(\pi^{k};\left\{\mathcal{S}, \mathcal{A}, \mathds{P}_{k}, \gamma, r_{k}(s,a), s^{0}\right\}) - J(\pi^{k})\right)}_{\text{err}_1}  \\
&\underbrace{\frac{1}{\bar{K}}\sum^{\bar{K}}_{k=1}\left(
J(\pi;\left\{\mathcal{S}, \mathcal{A}, \mathds{P}_{k}, \gamma, r_{k}(s,a), s^{0}\right\}) - J(\pi^{k};\left\{\mathcal{S}, \mathcal{A}, \mathds{P}_{k}, \gamma, r_{k}(s,a), s^{0}\right\}) \right)}_{\text{err}_2} \\
&\underbrace{\frac{1}{\bar{K}}\sum^{\bar{K}}_{k=1}\left( 
J(\pi) - J(\pi;\left\{\mathcal{S}, \mathcal{A}, \mathds{P}_{k}, \gamma, r_{k}(s,a), s^{0}\right\})
\right)}_{\text{err}_3}. 
\$
As in the analysis in the proof of Theorem 5.1,  we can well control $\lambda$ even in the penalization adversarial estimation to control the uncertainty level in the form of $
\|\phi(s,a)^{^{\top}}\psi\|_{L_{2}(\mu)} \leq \mathcal{U}^{\text{lr}}_{2}$ for $\tau_{\psi} \in \Omega_{\psi}$ for some constant $\mathcal{U}^{\text{lr}}_{2}$. Also, we define the produce space $\ddot{\mathcal{G}} = \Pi_{\omega} \times \mathcal{Q}_{\theta} \times \Omega_{\psi}$, so that 
\$
g(s,a,r,s^{\prime}) = {\tau}_{\psi}(s,a)(r(s,a)+\gamma {q}_{\theta} (s^{\prime},{\pi_{\omega}})-{q}_{\theta}(s, a))-\lambda \mathbb{D}(\tau_{\psi}(s,a))
\$ 
for any $g \in \ddot{\mathcal{G}}$. It follows the steps on calculating the complexity of the product space, e.g., \eqref{cover_num_func_2} in the proof of Lemma \ref{risk_bound_alpha}, we  plug-in the covering number in Lemma \ref{cover_linear_lemma} and apply Corollary 2 in \citep{haussler1995sphere}, by some algebra we have 
\$
\mathcal{N}({\varepsilon}, \ddot{\mathcal{G}}, \|\cdot\|_{L_2(\mu)}) \lesssim \left(1+\frac{48e\widetilde{C}^2\bar{V}\text{diam}_{\psi}\text{diam}_\omega}{\varepsilon}\right)^d.
\$
We set $\varepsilon = \mathcal{O}(\widetilde{C}^2/\sqrt{n})$ for some $\widetilde{C} \geq 0$, and obtain 
\#
\mathcal{N}(\widetilde{C}^2/\sqrt{n}, \bar{\mathcal{G}}, \|\cdot\|_{L_2(\mu)}) \lesssim \left(1+e\sqrt{n}\bar{V}\text{diam}_{\psi}\text{diam}_\omega\right)^d.
\label{small_cover}
\#
which implies that 
\#
\mathcal{N}(\varepsilon, \ddot{\mathcal{G}}, \|\cdot\|_{L_2(\mu)}) \lesssim \left(1+e\sqrt{n}(1\vee L)\bar{V}\text{diam}_{\psi}\text{diam}_\omega\right)^d.
\label{small_err1_cover}
\#

\textbf{Bounding $\text{err}_1$.} According to Lemma \ref{pik_return_lemma} and follow the \eqref{Delta1_bd_reg}, we obtain
\#
\text{err}_1 \leq &  \frac{c^{*}}{1-\gamma}\Bigg\{\big(3\mathcal{U}^{\text{lr}}_{2}\bar{V}+2\lambda \|\mathbb{D}(\tau(s,a))\|^{\mathcal{U}^{\text{lr}}_{2}}_{L_2(\mu)}\big)\sqrt{\frac{2 \ln \frac{8\mathcal{N}\left(\epsilon, 
\ddot{\mathcal{G}}, \|\cdot\|_{L_2(\mu)}\right)}{\delta}}{n}} \notag\\
 &+ \frac{2\big(3 d\text{diam}_{\psi}\bar{V}+2\lambda \|\mathbb{D}(\tau(s,a))\|^{\text{diam}_{\psi}}_{\infty}\big) \ln \frac{8\mathcal{N}\left(\epsilon, 
\ddot{\mathcal{G}}, \|\cdot\|_{L_2(\mu)}\right)}{\delta}}{3 n}, 
    \label{lr_err1_reg_step1}
\#
where $\|\mathbb{D}(\tau(s,a))\|^{\mathcal{U}^{\text{lr}}_{2}}_{L_2(\mu)} = \sup_{\tau: \|\tau(s,a)\|_{L_{2}(\mu)} \leq \mathcal{U}^{\text{lr}}_{2}} \|\mathbb{D}(\tau(s,a))\|_{L_2(\mu)}$, and $\|\mathbb{D}(\tau(s,a))\|^{\text{diam}_{\psi}}_{L_\infty} = \sup_{\tau: \|\tau(s,a)\|_{L_{2}(\mu)} \leq \mathcal{U}^{\text{lr}}_{2}} \|\mathbb{D}(\tau(s,a))\|_{L_{\infty}}$. As $\mathbb{D}$ is $M$-strongly convex function and thus locally Lipschitz with a bounded Lipschitz constant $L \leq \infty$, then we have 
\#
\text{err}_1 \lesssim &   \frac{c^{*}}{1-\gamma}\bigg(\big(\mathcal{U}^{\text{lr}}_{2}\bar{V}+\lambda L\mathcal{U}^{\text{lr}}_{2} \big)\sqrt{\frac{\ln \frac{8\mathcal{N}\left(\epsilon, 
\ddot{\mathcal{G}}, \|\cdot\|_{L_2(\mu)}\right)}{\delta}}{n}} \notag\\
 &+ \frac{2\big( d\text{diam}_{\psi}\bar{V}+\lambda Ld\text{diam}_{\psi}\big) \ln\frac{8\mathcal{N}\left(\epsilon, 
\ddot{\mathcal{G}}, \|\cdot\|_{L_2(\mu)}\right)}{\delta}}{3 n}\bigg). 
    \label{lr_err1_reg_step2}
\#
Plug-in the covering number in \eqref{small_err1_cover}, we conclude 
\#
\text{err}_1 \lesssim &   \frac{c^{*}}{1-\gamma}\bigg(\big(\mathcal{U}^{\text{lr}}_{2}\bar{V}+\lambda L\mathcal{U}^{\text{lr}}_{2} \big)\sqrt{\frac{\ln \{8 \left(1+e\sqrt{n}(1\vee L)\bar{V}\text{diam}_{\psi}\text{diam}_\omega\right)^d/\delta\}}{n}} \notag\\
 &+ \frac{2\big( \text{diam}_{\psi}d\bar{V}+\lambda Ld\text{diam}_{\psi}\big) \ln \{8 \left(1+e\sqrt{n}(1\vee L)\bar{V}\text{diam}_{\psi}\text{diam}_\omega\right)^d/\delta\}}{3 n}\bigg). 
    \label{lr_err1_reg}
\#

\textbf{Bounding $\text{err}_2$.} According to Lemma \ref{iden_mdpret}, we achieve no-regret policy optimization oracle, and thus
\#
\text{err}_2 \leq &
\frac{2\sqrt{2\bar{V}\log|\mathcal{A}|}}{\sqrt{\bar{K}}(1-\gamma)}. 
\label{lr_err2_reg}
\#

\textbf{Bounding $\text{err}_3$.} For any $\pi \in \Pi_{\omega}$, following the definition of $\text{err}_3$ we have
 \#
\text{err}_3 =& \frac{1}{\bar{K}}\sum^{\bar{K}}_{k=1}\left( 
J(\pi) - J(\pi;\left\{\mathcal{S}, \mathcal{A}, \mathds{P}_{k}, \gamma, r_{k}(s,a), s^{0}\right\})
\right) \notag \\
= &\frac{1}{\bar{K}}\sum^{\bar{K}}_{k=1} \left(q^{\pi}(s^0,\pi) - J(\pi;\left\{\mathcal{S}, \mathcal{A}, \mathds{P}_{k}, \gamma, r_{k}, s^{0}\right\})\right) \notag\\
= & \frac{1}{\bar{K}}\sum^{\bar{K}}_{k=1}\frac{\mathbb{E}_{d^{\pi}}\left[q^{\pi}(s,\pi)-r_{k}(s,a)-q^{\pi}(s,\pi) \right]}{1-\gamma} \notag \\
=  & \frac{1}{\bar{K}}\sum^{\bar{K}}_{k=1}\frac{\mathbb{E}_{d^{\pi}}\left[
\phi(s,a)^{^\top}\theta_{k} - \mathds{P}^{\pi^{k}}\phi(s,a)^{^\top}\theta_{k}\right]}{1-\gamma},
\label{linear_dpi_decom}
\#
where the last equality comes from a similar derivation as in \eqref{equv_rk}. Therefore, it suffices to bound
\#
\frac{\mathbb{E}_{d^{\pi}}\left|\phi(s,a)^{^\top}\theta_{k} - \mathds{P}^{\pi^{k}}\phi(s,a)^{^\top}\theta_{k} \right|}{1-\gamma},
\label{bound_aim_reg}
\#
for any $k \in [\bar{K}]$. According to Lemma \ref{iden_concen}, and the covering number Lemma \ref{cover_linear_lemma}, it immediately obtains
\$
& \sup_{\psi}\frac{1}{1-\gamma}\mathbb{E}_{\mu}\left[\phi(s,a) ^{^\top}\psi\left(r(s,a)+ \gamma \phi(s^{\prime},\pi^{k})^{^\top}\theta_{k} - \phi(s,a)^{^\top}\theta_{k}\right)\right] \\
\leq & 
\frac{1}{1-\gamma}\bigg(\big(\mathcal{U}^{\text{lr}}_{2}\bar{V}+\lambda L\mathcal{U}^{\text{lr}}_{2} \big)\sqrt{\frac{\ln \{8 \left(1+e\sqrt{n}(1\vee L)\bar{V}\text{diam}_{\psi}\text{diam}_\omega\right)^d/\delta\}}{n}} \notag\\
 &+ \frac{2\big( d\text{diam}_{\psi}\bar{V}+\lambda Ld\text{diam}_{\psi}\big) \ln \{8 \left(1+e\sqrt{n}(1\vee L)\bar{V}\text{diam}_{\psi}\text{diam}_\omega\right)^d/\delta\}}{3 n}\bigg) + \frac{\bar{V}}{c^{*}}. 
\$
As $\|\phi(s,a)^{\top}\psi\|_{L_2(\mu)} \leq \mathcal{U}^{\text{lr}}_{2} < \infty$, we apply Lemma \ref{L_2MMD}, then 
\$
& \frac{\mathbb{E}_{\mu}\left|\phi(s,a)^{^\top}\theta_{k} - \mathds{P}^{\pi^{k}}\phi(s,a)^{^\top}\theta_{k} \right|}{1-\gamma} \leq   \frac{1}{1-\gamma}\bigg(\big(\bar{V}+\lambda L\big)\sqrt{\frac{\ln \{8 \left(1+e\sqrt{n}(1\vee L)\bar{V}\text{diam}_{\psi}\text{diam}_\omega\right)^d/\delta\}}{n}} \notag\\
 & \qquad \qquad + \frac{\text{diam}_{\psi}}{\mathcal{U}^{\text{lr}}_{2}}\frac{2\big(\bar{V}+\lambda L\big) \ln \{8 \left(1+e\sqrt{n}(1\vee L)\bar{V}\text{diam}_{\psi}\text{diam}_\omega\right)^d/\delta\}}{3 n}\bigg) + \frac{\bar{V}}{c^{*}\mathcal{U}^{\text{lr}}_{2}} := \mathcal{E}(c^{*},\lambda,n).
\$
According to Lemma \ref{risk_bound_lm}  and \eqref{emprical_bound_reg} with covering number arguments in Lemma \ref{cover_linear_lemma}, it observes that
\$
& \frac{\sum^{n}_{i=1} \left[\phi(s_i,a_i)^{^\top}\theta_{k} - \mathds{P}^{\pi^{k}}\phi(s_i,a_i)^{^\top}\theta_{k} \right]}{n(1-\gamma)} \leq  \mathcal{E}(c^{*},\lambda,n) \\
 \iff& \frac{\sum^{n}_{i=1} \left[\phi(s_i,a_i)^{^\top}\left(\theta_{k} - \theta^{\prime} -\gamma \sum_{s^{\prime}, a^{\prime}} \varphi\left(s^{\prime}\right) \pi^{k}\left(a^{\prime} \mid s^{\prime}\right) \phi\left(s^{\prime}, a^{\prime}\right)^{\top} \theta_{k}\right)\right]}{n(1-\gamma)} \leq  \mathcal{E}(c^{*},\lambda,n) \\
 := & \frac{\sum^{n}_{i=1} \left[\phi(s_i,a_i)^{^\top}\mathbb{G}^{\pi^k}\right]}{n(1-\gamma)} \leq \mathcal{E}(c^{*},\lambda, n).
\$
where $\theta^{\prime}$ is the coefficients for linear representation of $r(s,a)$, and $\varphi(s^{\prime})$ is the low rank decomposition for transition kernel \citep{jin2020provably}. This implies 
\$
\left\|\sqrt{\frac{1}{n}\sum^{n}_{i=1}[\phi(s_i,a_i)\phi(s_i,a_i)^{^\top}]}\mathbb{G}^{\pi^k}\right\|_{L_2} \leq  (1-\gamma)\mathcal{E}(c^{*},\lambda, n).
\$
Then by Cauchy-Schwarz inequality, for any $s,a$, we have 
\$
|\phi(s,a)^{^\top}\mathbb{G}^{\pi^k}| \leq & \|\phi(s,a)^{^\top}\mathbb{G}^{\pi^k}\|_{L_2} \\
= & \|\phi(s,a)^{^\top}\sqrt{(\frac{1}{n}\sum^{n}_{i=1}[\phi(s_i,a_i)\phi(s_i,a_i)^{^\top}])^{-1}} \sqrt{\frac{1}{n}\sum^{n}_{i=1}[\phi(s_i,a_i)\phi(s_i,a_i)^{^\top}]}\mathbb{G}^{\pi^k}\|_{L_2} \\
\leq & \|\phi(s,a)^{^\top}\sqrt{(\frac{1}{n}\sum^{n}_{i=1}[\phi(s_i,a_i)\phi(s_i,a_i)^{^\top}])^{-1}}\|_{L_2} \| \sqrt{\frac{1}{n}\sum^{n}_{i=1}[\phi(s_i,a_i)\phi(s_i,a_i)^{^\top}]}\mathbb{G}^{\pi^k}\|_{L_2} \\
\leq & \|\phi(s,a)^{^\top}\sqrt{(\frac{1}{n}\sum^{n}_{i=1}[\phi(s_i,a_i)\phi(s_i,a_i)^{^\top}])^{-1}}\|_{L_2} (1-\gamma)\mathcal{E}(c^{*},\lambda, n).
\$
Then we have an upper bound for \eqref{bound_aim_reg}, 
\$
& \frac{\mathbb{E}_{d^{\pi}}\left|\phi(s,a)^{^\top}\theta_{k} - \mathds{P}^{\pi^{k}}\phi(s,a)^{^\top}\theta_{k} \right|}{1-\gamma} \\
\leq & \mathbb{E}_{d^{\pi}}[\|\phi(s,a)^{^\top}\sqrt{(\frac{1}{n}\sum^{n}_{i=1}[\phi(s_i,a_i)\phi(s_i,a_i)^{^\top}])^{-1}}\|_{L_2}] \mathcal{E}(c^{*},\lambda, n) \\
\leq & \mathbb{E}_{d^{\pi}}\left[\sqrt{\phi(s,a)^{^\top}(\frac{1}{n}\sum^{n}_{i=1}[\phi(s_i,a_i)\phi(s_i,a_i)^{^\top}])^{-1}\phi(s,a)}\right]\mathcal{E}(c^{*},\lambda, n).
\$
To facilitate the proof, we use the notation  $\|x\|_{\Sigma} \stackrel{\text { def }}{=} \sqrt{x^{\top}\left(\Sigma\right)^{-1} x}$.  According to Lemma 32 in \citep{zanette2021cautiously}, we obtain the upper bound for $ \mathbb{E}_{d^{\pi}}\left[ \|\phi(s,a)\|_{\Sigma^{-1}_{n}}\right]$ as follows:
\#
 \mathbb{E}_{d^{\pi}}\left[ \|\phi(s,a)\|_{\Sigma^{-1}_{n}}\right] = & \mathbb{E}_{d^{\pi}}\left[\sqrt{\phi(s,a)^{^\top}(\frac{1}{n}\sum^{n}_{i=1}[\phi(s_i,a_i)\phi(s_i,a_i)^{^\top}])^{-1}\phi(s,a)}\right] \notag \\
 \leq & \sqrt{\mathbb{E}_{d^{\pi}}\left[\phi(s,a)^{^\top}(\frac{1}{n}\sum^{n}_{i=1}[\phi(s_i,a_i)\phi(s_i,a_i)^{^\top}])^{-1}\phi(s,a)\right]} \notag  \\
 \leq & \sqrt{\text{trace}\left(\mathbb{E}_{d^{\pi}}\left[\phi(s,a)\phi(s,a)^{^\top}\right](\frac{1}{n}\sum^{n}_{i=1}[\phi(s_i,a_i)\phi(s_i,a_i)^{^\top}])^{-1}\right)} \notag  \\
 \leq & \sqrt{\iota(d_{\pi},\mu)\text{trace}(\frac{1}{n}\sum^{n}_{i=1}[\phi(s_i,a_i)\phi(s_i,a_i)^{^\top}] (\frac{1}{n}\sum^{n}_{i=1}[\phi(s_i,a_i)\phi(s_i,a_i)^{^\top}]))^{-1}} \notag  \\
 =& \sqrt{\iota(d_{\pi},\mu) d}.
 \label{relative_cond_num}
\#
where 
$
\iota(d_{\pi},\mu) = \sup _{x \in \mathbb{R}^d} \frac{x^T \mathbb{E}_{ d_{\pi}}\left[\phi(s, a) \phi(s, a)^{\top}\right] x}{x^{\top} \mathbb{E}_{\mu}\left[\phi(s, a) \phi(s, a)^{\top}\right] x}
$
Based on this, we conclude that 
\#
\text{err}_3 \leq \sqrt{\iota(d_{\pi},\mu)d} \mathcal{E}(c^{*},\lambda, n).
\label{lr_err3_reg}
\#
We combine the upper bounds in \eqref{lr_err1_reg}, \eqref{lr_err2_reg} and \eqref{lr_err3_reg}, and we set $c^{*} = \widetilde{\mathcal{O}}\big(\sqrt[4]{n/d\ln\{(1+e\sqrt{n}(1\vee L)\bar{V}c_{\psi}c_\omega)/\delta\}}\big)$ and set $\lambda = \lambda(c_{\psi}(\mathcal{U}^{\text{lr}}_{2}))$ for $
c_{\psi}\{\mathcal{U}^{\text{lr}}_{2}\} = \sup_{\{\psi: \|\phi(s,a)^{\top}\psi \|_{L_2(\mu)} = \mathcal{U}^{\text{lr}}_{2}\}}\|\psi\|_{L_\infty}$, by some algebra, we conclude that
% \$
% & J(\pi) - J(\widehat{\pi}^{\text{lr}}) \\
% \leq & \frac{c^{*}\mathcal{U}^{\text{lr}}_{2} }{1-\gamma}\bigg(\big(\bar{V}+\lambda L\big)\sqrt{\frac{\ln \{8 \left(1+e\sqrt{n}(1\vee L)\bar{V}\text{diam}_{\psi}\text{diam}_\omega\right)^d/\delta\}}{n}} \notag\\
%  &+ \frac{2\big( d\text{diam}_{\psi}\bar{V}+\lambda Ld\text{diam}_{\psi}\big) \ln \{8 \left(1+e\sqrt{n}(1\vee L)\bar{V}\text{diam}_{\psi}\text{diam}_\omega\right)^d/\delta\}}{\mathcal{U}^{\text{lr}}_{2} (3n)}\bigg) + \frac{\sqrt{\iota(d_{\pi},\mu)d} \bar{V}}{c^{*}\mathcal{U}^{\text{lr}}_{2}} \\
%  & + \frac{\sqrt{\iota(d_{\pi},\mu)d} }{1-\gamma}\bigg(\big(\bar{V}+\lambda L\big)\sqrt{\frac{\ln \{8 \left(1+e\sqrt{n}(1\vee L)\bar{V}\text{diam}_{\psi}\text{diam}_\omega\right)^d/\delta\}}{n}} \notag\\
%  &+ \frac{d\text{diam}_{\psi}}{\mathcal{U}^{\text{lr}}_{2}}\frac{2\big(\bar{V}+\lambda L\big) \ln \{8 \left(1+e\sqrt{n}(1\vee L)\bar{V}\text{diam}_{\psi}\text{diam}_\omega\right)^d/\delta\}}{3 n}\bigg) + \frac{2\sqrt{2\bar{V}\log|\mathcal{A}|}}{\sqrt{\bar{K}}(1-\gamma)}. 
% \$
\$
& J(\pi) - J(\widehat{\pi}^{\text{lr}}) 
\leq  \frac{1}{\sqrt{1-\gamma}}\sqrt[4]{\frac{\iota(d_{\pi},\mu)d(\bar{V}^2+\bar{V}\lambda L)^2\ln \{8 \left(1+e\sqrt{n}(1\vee L)\bar{V}\text{diam}_{\psi}\text{diam}_\omega\right)^d/\delta\}}{n}} \\
& + \frac{1}{\sqrt{1-\gamma}}\sqrt{\frac{(\iota(d_{\pi},\mu)d)^{0.5}\bar{V}^2\big( d\text{diam}_{\psi}\bar{V}+\lambda Ld\text{diam}_{\psi}\big)/\mathcal{U}^{\text{lr}}_{2}\ln \{8 \left(1+e\sqrt{n}(1\vee L)\bar{V}\text{diam}_{\psi}\text{diam}_\omega\right)^d/\delta\}}{n}}\\
 & + \frac{\sqrt{\iota(d_{\pi},\mu)d} }{1-\gamma} \frac{d\text{diam}_{\psi}}{\mathcal{U}^{\text{lr}}_{2}}\frac{2\big(\bar{V}+\lambda L\big) \ln \{8 \left(1+e\sqrt{n}(1\vee L)\bar{V}\text{diam}_{\psi}\text{diam}_\omega\right)^d/\delta\}}{3 n} + \frac{2\sqrt{2\bar{V}\log|\mathcal{A}|}}{\sqrt{\bar{K}}(1-\gamma)}. 
\$
% \$
% \mathcal{U}^{\text{lr}}_{2}  =&  1+ \widetilde{\mathcal{O}}\left(\sqrt[6]{\frac{\bar{V}^3d\text{diam}_\psi L^2_{\alpha}\ln\{(1+e\sqrt{n}(1\vee L)\bar{V}\text{diam}_{\psi}\text{diam}_\omega)^d/\delta\}}{M^4_{\alpha}n}}\right) \\
% c^{*} = &  \widetilde{\mathcal{O}}\left(\sqrt[6]{\frac{n\bar{V}^3M^{1.5}_{\alpha}(1-\gamma)^3}{ \mathcal{U}^{\text{lr},3}_{2}L^3_{\alpha}\ln \{(1+e\sqrt{n}(1\vee L)\bar{V}\text{diam}_{\psi}\text{diam}_\omega)^d/\delta\}}}\bigg{/}\mathcal{U}^{\text{lr}}_{2}\right) \\
% \lambda = & \widetilde{\mathcal{O}}\left(\sqrt[6]{\frac{n\bar{V}^3M_{\alpha}(1-\gamma)^2}{ (\mathcal{U}^{\text{lr},2}_{2}L^2_{\alpha}+M_{\alpha})\ln \{(1+e\sqrt{n}(1\vee L)\bar{V}\text{diam}_{\psi}\text{diam}_\omega)^d/\delta\}}}\right). 
% \$
If we ignoring the fast term and let $\bar{K} \gg \log|\mathcal{A}|$, we have 
\#
J(\pi) - J(\widehat{\pi}^{\text{lr}}) \lesssim & \frac{1}{\sqrt{1-\gamma}}\sqrt[4]{\frac{\iota(d_{\pi},\mu)d(\bar{V}^2+\bar{V}\lambda L)^2\ln \{8 \left(1+e\sqrt{n}(1\vee L)\bar{V}\text{diam}_{\psi}\text{diam}_\omega\right)^d/\delta\}}{n}} .
\label{one_bound}
\#
For $\psi \in \{\psi: \|\phi(s,a)^{\top}\psi \|_{L_2(\mu)} \leq \mathcal{U}^{\text{lr}}_{2}\}$, we can observe that 
\#
\| \phi(s,a)^{\top}\psi\|_{L_2(\mu)} \leq &\text{trace}(\mathbb{E}_{\mu}[\phi(s,a)\phi(s,a)^{\top}])\|\theta\|_{L_2} \\
\leq & \text{trace}(\mathbb{E}_{\mu}[\phi(s,a)\phi(s,a)^{\top}])d
c_{\psi}\{\mathcal{U}^{\text{lr}}_{2}\}. 
\label{last_lr}
\#
It combines with $\eqref{last_lr}$ and \eqref{one_bound}, we conclude that 
\$
& J(\pi) - J(\widehat{\pi}^{\text{lr}}) \\
\lesssim & \frac{\min\{\text{trace}(\mathbb{E}_{\mu}[\phi(s,a)\phi(s,a)^{\top}])d
c_{\psi}\{\mathcal{U}^{\text{lr}}_{2}\},\sqrt{\iota(d_{\pi},\mu)d} \}}{1-\gamma} \\
& \cdot \sqrt[4]{\frac{(\bar{V}^2+\bar{V}\lambda L)^2\ln \{(1+e\sqrt{n}(1\vee L)\bar{V}\text{diam}_{\psi}\text{diam}_\omega)^d/\delta\}}{n}} \\
\leq & \frac{\sqrt{\min\{\kappa^2
c_{\psi}^{2}\{\mathcal{U}^{\text{lr}}_{2}\}d^2,\iota(d_{\pi},\mu)d  \}}}{1-\gamma}  \sqrt[4]{\frac{(\bar{V}^2+\bar{V}\lambda L)^2\ln \{(1+e\sqrt{n}(1\vee L)\bar{V}\text{diam}_{\psi}\text{diam}_\omega)^d/\delta\}}{n}},
\$
where $\kappa = \text{trace}(\mathbb{E}_{\mu}[\phi(s,a)\phi(s,a)^{\top}])$. This completes the proof. 
\end{proof}

\section{Additional Related Works}

\textbf{Offline RL.} The domain approaches of offline RL include  fitted Q-iteration
(FQI; \cite{ernst2005tree,riedmiller2005neural} ), fitted policy iteration \citep{antos2007value, lagoudakis2003least},
Bellman Residual Minimization (BRM;
\cite{antos2008learning,farahmand2016regularized, dai2018sbeed}, and actor-critic \citep{konda1999actor,konda2003onactor,haarnoja2018softac}. We refer the reader to \cite{levine2020offline} for more comprehensive discussions on the topics of the offline RL. In the aforementioned mainstreams of works, ours is closely related to the actor-critic. Actor-critic methods are a
hybrid class of methods that mitigate some deficiencies of methods that are either purely policy
or purely value-based; in modern RL, they are widely used in practice \citep{wu2019behavior,wu2021uncertainty}. A standard framework in actor-critic methods is that actor supervises the policy to improve in order to maximize its values estimated by the critic, value function. From a high-level point of view, this connects our bi-level structured optimization to actor-critic methods. In our framework, the upper-level components make decisions, i.e., searching for a policy maximizing the pessimistic evaluation based on the lower-level outputs, i.e., the uncertainty-controlled confidence set of value estimates. Therefore, our works demonstrate the advantages of actor-critic-type methods in offline RL from a bi-level reformulation perspective. 

\textbf{Minimax learning.} 
In a seminal work, \cite{liu2018breaking} proposed the first minimax estimation procedure requiring two function approximators, one for modeling the marginalized importance-weight function, and the other for modeling the value function. This method becomes particularly efficient in estimating the discounted return when the offline data-generating distribution aptly encompasses the distribution invoked by the evaluation policy, thereby avoiding the significant issue of exponential variance in the horizon, a notable drawback of importance sampling \citep{precup2000eligibility,li2015toward}. The ripple effect of this method has led to a surge of interest within the RL community \citep{xie2019towards,uehara2020minimax,jiang2020minimax,nachum2019dualdice,liu2020understanding,shi2022minimax,zhou2022estimating}. Intriguingly, our bi-level policy optimization aligns with this trend of minimax learning, where we build a confidence set for policy evaluation using the marginalized importance-weight. In particular, algorithmically, our low-level component is most related to the value interval learning in \citep{jiang2020minimax}. They provide a minimax interval for quantifying the value bias involved in the discounted return under function approximation settings. However, they only handle the function approximation errors but do not quantify the statistical uncertainty, as well as no uncertainty control is performed. In contrast, our work yields a confidence interval that concurrently incorporates the bias introduced by function approximation and uncertainty stemming from sampling. This also provides a basis for the operations at the upper level in our bi-level structured optimization. 

\textbf{Conservative value estimation.} Following the principle of pessimism in the face of uncertainty, a significant portion of recently proposed offline RL methods rely on
on estimating conservative $q$-values for optimizing the target policy, with the constraint or regularizer serving to limit deviation from the behavior policy \citep{kumar2019stabilizing,kumar2020conservative,nachum2019algaedice,kostrikov2021offline,fujimoto2019off,lee2020batch,lee2021optidice}. For our work, we also following the pessimistic principle for value estimation. The major differences between ours and the existing works in this mainstream are two-fold. First, with uncertainty control through favoring the policy close to the behavior policy, our algorithm also ensures the consistency of the value estimates. This consistency guarantee plays a key role in our method to ensure no overly pessimistic reasoning. Second, from a high-level point of view, our algorithm has a bi-level structure, and more close to actor-critic-based methods. In contrast, the aforementioned works are more close to approximate dynamic programming \citep{kostrikov2021offline}.

\section{Statistical Learning Tools}

In this section, we introduce fundamental concepts from statistical learning theory, as outlined in \citep{anthony1999neural,vapnik2015uniform}. We begin with the concept of the covering number. This metric quantifies the number of spherical balls of a specified size required to encompass a designated space, allowing for potential overlaps.

\begin{definition}[Covering number]
\label{cover_def}
    Let $\left(\mathcal{C},\|\cdot\|\right)$ be a $\|\cdot\|$ normed space, and $\mathcal{H} \subseteq \mathcal{C}$. The set $\left\{b_1, b_2, \ldots, b_m\right\}$ is a $\epsilon$-covering over $\mathcal{H}$ if $\mathcal{H} \subseteq \cup_{i=1}^m \mathbb{B}\left(b_i, \varepsilon\right)$, where $\mathbb{B}\left(b_i, \varepsilon\right)$ is the sup-norm-ball centered at $b_i$ with radius $\varepsilon$. Then the covering number of $\mathcal{H}$ is defined as $\mathcal{N}\left(\epsilon, \mathcal{H},\|\cdot\|_{L_2}\right)=\min \{n: \exists \ \epsilon$-covering over $\mathcal{H}$ of size $m\}$.
\end{definition}

A widely recognized method for examining the generalization capability of statistical learning models involves the use of the \textit{VC-dimension}. This dimension not only characterizes uniform convergence, as detailed in \citep{vapnik2015uniform}, but also asymptotically dictates the sample complexity of PAC learning \citep{blumer1989learnability}.

\begin{definition}[growth function, VC-dimension, shattering] \label{vc_def}
Let $\mathcal{H}$ denote a class of functions from $\mathcal{X}$ to $\{0,1\}$. For any non-negative integer $m$, we define the growth function of $\mathcal{H}$ as
$$
\Pi_\mathcal{H}(m):=\max _{x_1, \ldots, x_m \in \mathcal{X}}\left|\left\{\left(h\left(x_1\right), \ldots, h\left(x_m\right)\right): h \in \mathcal{H}\right\}\right| .
$$
If $\left|\left\{\left(h\left(x_1\right), \ldots, h\left(x_m\right)\right): h \in \mathcal{H}\right\}\right|=2^m$, we say $H$ shatters the set $\left\{x_1, \ldots, x_m\right\}$. The Vapnik-Chervonenkis dimension of $\mathcal{H}$, denoted $\operatorname{VCdim}(\mathcal{H})$, is the size of the largest shattered set, i.e. the largest $m$ such that $\Pi_\mathcal{H}(m)=2^m$. If there is no largest $m$, we define $\operatorname{VCdim}(\mathcal{H})=\infty$
\end{definition}

For a set of real-valued functions, like those produced by neural networks, the \textit{pseudo dimension} serves as an intuitive measure of complexity. This dimension also suggests similar uniform convergence properties and was introduced by \cite{pollard1990empirical}. 

\begin{definition}[Pollard's pseudo dimension]
\label{pseudo_def}
    Let $\mathcal{F}$ be a class of functions from $\mathcal{X}$ to $\Re$. The pseudodimension of $\mathcal{F}$, written $D_{\mathcal{F}}$, is the largest integer $m$ for which there exists $\left(x_1, \ldots, x_m, \linebreak y_1, \ldots, y_m\right) \in \mathcal{X}^m \times \Re^m$ such that for any $\left(b_1, \ldots, b_m\right) \in\{0,1\}^m$ there exists $f \in \mathcal{F}$ such that
$$
\forall i: f\left(x_i\right)>y_i \Longleftrightarrow b_i=1
$$
\end{definition}

In the end, it's worth noting that the pseudo dimension extends the concept of the VC-dimension to real-valued functions.

\end{document}